\documentclass[letterpaper,journal]{IEEEtran}

\usepackage{amsmath,amsfonts}
\usepackage{array}
\usepackage[caption=false,font=normalsize,labelfont=sf,textfont=sf]{subfig}
\usepackage{textcomp}
\usepackage{stfloats}
\usepackage{url}
\usepackage{verbatim}
\usepackage{graphicx}
\usepackage[utf8]{inputenc} 
\usepackage[T1]{fontenc}    
\usepackage{booktabs}       
\usepackage{nicefrac}       
\usepackage{microtype}      
\usepackage{xcolor}         

\usepackage{algpseudocode}
\usepackage[ruled,vlined]{algorithm2e}
\usepackage{booktabs}   
\usepackage{mathabx}
\usepackage{bm}
\usepackage{mathtools}
\usepackage{multirow}
\usepackage{cuted}
\usepackage{amsthm}
\usepackage{scalerel,stackengine}
\usepackage{enumitem}
\usepackage{wrapfig}
\usepackage{etoc}
\usepackage{tikz}
\usepackage{pifont}
\usepackage{colortbl}
\usepackage{calc}
\usepackage{fp}
\usepackage{float}
\usepackage{placeins}
\usepackage[citecolor=blue,colorlinks]{hyperref}

\stackMath
\newcommand\reallywidehat[1]{%
\savestack{\tmpbox}{\stretchto{%
  \scaleto{%
    \scalerel*[\widthof{\ensuremath{#1}}]{\kern-.6pt\bigwedge\kern-.6pt}%
    {\rule[-\textheight/2]{1ex}{\textheight}}
  }{\textheight}%
}{0.5ex}}%
\stackon[1pt]{#1}{\tmpbox}%
}
\let\origref\ref
\renewcommand{\eqref}[1]{\textup{Eq.~{(\origref{#1})}}}
\newtheorem{definition}{Definition}

\newtheorem{proposition}{Proposition}

\newtheorem{remark}{Remark}

\definecolor{color1}{rgb}{0,0,0}  
\definecolor{color2}{rgb}{0.45,0.45,0.45}  
\definecolor{commentcolor}{RGB}{47, 157, 215
} 

\newcommand{\progressbar}[2][0.7cm]{%
    \textcolor{color1}{\rule{#1 * \real{#2} / 128}{1.5ex}}%
    \textcolor{color2!15}{\rule{#1 - #1 * \real{#2} / 128}{1.5ex}}}

\renewcommand{\vec}[1]{\mathbf{#1}}
\definecolor{paralinrow}{RGB}{245,235,255}

\newcommand{\norm}[1]{\left\|#1\right\|}

\def\BibTeX{{\rm B\kern-.05em{\sc i\kern-.025em b}\kern-.08em
    T\kern-.1667em\lower.7ex\hbox{E}\kern-.125emX}}

\begin{document}

\title{ParaLin++: Accelerating Parallel Diffusion Integrator via Intrinsic Partially Linear Structure}

\author{Jianrong Lu, Zhiyu Zhu, Hui Liu, and Junhui Hou
\thanks{This project was supported in part by the NSFC Excellent Young Scientists Fund 62422118, in part by the Hong Kong RGC under Grants RFS2627-1S02,  R1018-25F, N\_CityU1114/25, and 11219324, and in part by the Hong Kong UGC under Grant UGC/FDS11/E03/24 and Grant UGC/FDS11/E03/25.}
\thanks{J. Lu and J. Hou are with the Department of Computer Science, City University of Hong Kong,  Hong Kong SAR, China. (email:jrong.alvin@gmail.com; jh.hou@cityu.edu.hk)}
\thanks{Z. Zhu is with City University of Hong Kong (Dongguan), Dongguan, China, and also with the Department of Computer Science, City University of Hong Kong, Hong Kong SAR, China. (e-mail: zhiyuzhu2@my.cityu.edu.hk; )}%
\thanks{H. Liu is with the School of Computing and Information Sciences, Saint Francis University, Hong Kong SAR, China (e-mail: h2liu@sfu.edu.hk). }%

}

\markboth{Manuscript Under Review}%
{ParaLin++: Accelerating Parallel Diffusion Integrator via Intrinsic Partially Linear Structure}

\maketitle

\begin{abstract}
Training-free few-step samplers are often described as nearly quality-lossless under conventional FID/CLIP-style evaluation. It remains unclear whether that conclusion transfers to more realistic and complex generation-quality benchmarks. We first systematize the quality metrics used to evaluate training-free sampling acceleration for diffusion and flow-matching models.

We find that representative training-free step-shortening methods often remain nearly lossless under FID/CLIP-class metrics, yet can suffer substantial quality degradation when evaluated with semantic-composition, dense-instruction, and long-text-rendering benchmarks such as GenEval, DPG-Bench, and LongTextBench. Parallel sampling is a notable exception: by retaining the full-step discretization and removing only the solver's temporal dependency, it can preserve near-lossless quality while accelerating sampling. Building on this observation, we propose \textit{ParaLin++}, a step-parity parallel sampler that preserves the full-step discretization while accelerating the solver iterations. Experiments across Stable Diffusion-v2, LSUN Church, FLUX.1-dev, and Ming-Flash-Omni 2.0 show that ParaLin++ preserves the full-step Euler quality profile at $2.6\times$--$4.8\times$, $1.68\times$ on FLUX, and $1.21\times$ on Ming.  Project page: \url{https://paralinplus.github.io/}.
\end{abstract}

\begin{IEEEkeywords}
Diffusion Probabilistic Models, Parallel Sampling, Generative Models, Model Acceleration, Ordinary Differential Equations, System of Linear Equations.
\end{IEEEkeywords}

\section{Introduction}
\IEEEPARstart{D}{iffusion} probabilistic models (DPMs)~\cite{DDPM,ScoreBase} are now widely used in image and video generation, including large flow-matching models such as FLUX~\cite{bfl2024flux} and Ming-Flash-Omni 2.0~\cite{ai2025ming}, with applications from high-fidelity synthesis~\cite{SD,SVD} to molecular generation~\cite{Medsegdiff,medicalapp} and a broad range of generative tasks~\cite{graphapplication2,liu2023instaflow,lu2024depriving,chung2023parallel,lu2023preserving,Zhiyu2024nvs,liu2025prefpaint}. In practical use, sampling cost is a dominant inference bottleneck: each high-resolution sample requires a long sequence of neural function evaluations (NFEs), and the latency of this \emph{sequential} trajectory constrains serving throughput, inference cost, and on-device feasibility.

The reference discretization of a DPM is a fixed $N$-step ODE/SDE integrator, typically Euler with $N{=}30$ steps for flow-matching models. Accelerating this pipeline has followed two structurally different routes, alongside a broader landscape of training-free speedups~\cite{gong2024semi,zheng2023fast,gonzalez2024seeds,equilibriummodels,liu2023instaflow,luo2023latent}. The first route~\emph{shortens the discretization}: high-order or few-step training-free solvers such as DDIM~\cite{DDIM}, DPMSolver~\cite{dpmsolver,dpmsolverpp}, and UniPC~\cite{zhao2023unipc}, together with distillation/consistency approaches~\cite{Progressivedistillation,consistencymodel,lipman2022flow,liu2022flow}, cut the step count at the cost of changing the discretized trajectory. The second route~\emph{removes solver latency instead}: parallel sampling refines many time points concurrently, beginning with Picard iteration in ParaDiGMS~\cite{shih2024parallel} and progressing through triangular~\cite{tang2024accelerating} and banded~\cite{parasolver} nonlinear solvers, recently made NFE-efficient by the partial-linear formulation of ParaLin~\cite{paralin2026}. These two routes buy speed from different places: the first spends \emph{fewer integration steps}, the second spends \emph{parallel compute to erase the solver's temporal dependency}.

The decisive axis of a training-free accelerator is the \emph{metric} that certifies it ``quality-lossless,'' not the speed lever it uses. Much of the literature uses distributional and coarse text-alignment proxies---FID measures closeness of the generated \emph{distribution} to a reference, while CLIP measures coarse prompt--image embedding similarity. Post-trained text-to-image models, however, are often judged on a broader generation-quality profile: semantic composition (GenEval~\cite{ghosh2024geneval}), dense instruction following (DPG-Bench~\cite{hu2024ella}), and long in-image text rendering (LongTextBench~\cite{xomni2025}). A step-shortening strategy that looks lossless under FID can still alter the prompt-conditioned trajectory and produce a failed generation, because it changes the discrete trajectory on which prompt and text fidelity depend even when the marginal image distribution is preserved. We therefore focus on \emph{step-parity} parallelization: keep the complete fixed-step system, but solve its coupled integration equations in parallel so that latency is removed without changing the discretization on which prompt-level and in-image text fidelity depend.

\textbf{Our contributions.} We make the following contributions.

\textbf{1}. \emph{A metric systemization and motivating finding.} We systematize the quality metrics used to evaluate training-free sampling acceleration for diffusion and flow-matching models. Table~\ref{tab:systemization} enumerates the evaluation protocols of a broad set of samplers---DDPM, DDIM, DPM-Solver, DPMSolver++, DPM-Solver-v3, UniPC, distillation and consistency methods, ParaDiGMS, ParaSolver, ParaLin, and ours---and shows that acceleration claims are usually certified by FID/CLIP-class metrics. These metrics measure statistical alignment or coarse prompt--image similarity; they do not directly measure semantic composition, dense instruction following, or in-image text rendering. Throughout, ``complex generation quality'' refers specifically to these three measurable axes rather than to an exhaustive definition of generation quality. Prior FID-oriented evaluations of DDIM, DPMSolver, DPMSolver++, and UniPC often report that few-step configurations remain close to the reference distribution~\cite{DDIM,dpmsolver,dpmsolverpp,zhao2023unipc}. Together with Table~\ref{Table:Industrial_Motivation_FLUX}, this literature motivates a stricter audit: a 16-step DPMSolver++ configuration loses $21\%$--$24\%$ GenEval, $6.5$--$8.2$ DPG-Bench points, and $67\%$--$70\%$ LongTextBench. Thus, the usual quality-lossless claim does not automatically transfer from FID/CLIP-style evidence to complex generation-quality benchmarks.

\begin{table}[!t]
\centering
\caption{Quality metrics used to evaluate training-free diffusion/flow-matching acceleration in the literature and in this work. ``Dist.'' = distributional, reference-based metrics of the FID class (FID, sFID, clean-FID, IS, Precision--Recall); ``T2I-align.'' = text--image alignment metrics (CLIP score, CLIP-FID); ``Complex'' = generation-quality benchmarks that assess semantic composition, dense instruction following, or in-image text rendering. The acceleration literature usually stops at distributional/text-alignment proxies, whereas this work adds a quality-focused audit. \ding{51}/\ding{55}: metric reported / not reported by the method's own evaluation.}
\label{tab:systemization}
\setlength{\tabcolsep}{4pt}
\resizebox{0.9\columnwidth}{!}{%
\begin{tabular}{lccc}
\toprule
Method & Dist. & T2I-align. & Complex\\
\midrule
DDPM~\cite{DDPM} & \ding{51} & \ding{55} & \ding{55}\\
DDIM~\cite{DDIM} & \ding{51} & \ding{55} & \ding{55}\\
DPM-Solver~\cite{dpmsolver} & \ding{51} & \ding{55} & \ding{55}\\
DPMSolver++~\cite{dpmsolverpp} & \ding{51} & \ding{51} & \ding{55}\\
DPM-Solver-v3~\cite{zheng2023dpm} & \ding{51} & \ding{55} & \ding{55}\\
UniPC~\cite{zhao2023unipc} & \ding{51} & \ding{51} & \ding{55}\\
Euler (flow reference) & \ding{51} & \ding{55} & \ding{55}\\
Prog.~distillation~\cite{Progressivedistillation} & \ding{51} & \ding{51} & \ding{55}\\
Consistency model~\cite{consistencymodel} & \ding{51} & \ding{55} & \ding{55}\\
InstaFlow~\cite{liu2023instaflow} & \ding{51} & \ding{51} & \ding{55}\\
ParaDiGMS~\cite{shih2024parallel} & \ding{51} & \ding{55} & \ding{55}\\
ParaTAA~\cite{tang2024accelerating} & \ding{51} & \ding{55} & \ding{55}\\
ParaSolver~\cite{parasolver} & \ding{51} & \ding{51} & \ding{55}\\
ParaLin~\cite{paralin2026} & \ding{51} & \ding{51} & \ding{55}\\
\rowcolor{paralinrow} ParaLin++ (ours) & \ding{51} & \ding{51} & \ding{51}\\
\bottomrule
\end{tabular}}
\end{table}

\textbf{2}. \emph{A stronger step-parity parallel solver, ParaLin++.} We build the strongest available realization of the step-parity route to test how far this route reaches. Starting from ParaLin's $\varrho$-nonlinear partial-linear system, we add an adaptive nonlinearization iteration, a successive-over-relaxation (SOR) update for both nonlinear and linear blocks, a sliding window, and an adaptive stopping rule based on relative historical errors and an exponential moving average. We prove that the SOR update is stable for $0<\omega<2/(1+L)$ (contraction factor $L<1$) and that the system admits the sequential trajectory as its unique convergent root.

\textbf{3}. \emph{A quality--speed audit on realistic generation benchmarks.} We evaluate Stable Diffusion-v2, LSUN Church, FLUX.1-dev, Ming-Flash-Omni 2.0, five CFG values, three complex-generation benchmarks, visual comparisons, and ablations. Our key lessons are:
\begin{enumerate}
\item \emph{FID/CLIP agreement does not imply generation-quality preservation.} On FLUX, a $1.8\times$ few-step claim that is valid under FID  coincides with a $21\%$--$24\%$ GenEval drop, a $6.5$--$8.2$ DPG-Bench point drop, and a $67\%$--$70\%$ LongTextBench drop---an operational failure the standard evaluation hides.
\item \emph{Step-parity keeps the reference profile.} Keeping all 30 steps and parallel-solving gives $1.68\times$ on FLUX and $1.21\times$ on Ming-Flash-Omni 2.0 with an Euler-level generation-quality profile ($0.64$ vs.~$0.66$ GenEval, $84.24$ vs.~$83.97$ DPG, $0.26$ vs.~$0.27$ LTB), and $2.6\times$--$4.8\times$ on Stable Diffusion-v2/LSUN Church.
\item \emph{Guidance sharpens the quality gap.} Across CFG $0$--$14$, the completed DPG-Bench matrix shows ParaLin++ tracking Euler closely ($82.18$--$84.24$ vs.~$82.25$--$83.97$) while 20-step UniPC/DPMSolver++ drop to $68.85$--$79.67$.
\item \emph{Report iterations and generation-quality metrics, not only FID.} A speedup claim should separate discrete steps from solver iterations/NFE and be measured against the same-CFG 30-step Euler reference on a generation-quality benchmark; FID-only evidence is not sufficient to establish quality preservation.
\end{enumerate}

\noindent\textbf{Implications of our study.} Representative training-free step-shortening methods can appear nearly lossless under FID/CLIP yet show substantial degradation on more realistic and complex quality benchmarks. This argues for reporting generation-quality results alongside FID and for treating step-parity parallel sampling as an exception that can preserve near-lossless quality while accelerating sampling.

The remainder of this paper is organized as follows. Section~\ref{sec:background} provides the necessary background and reviews existing training-free accelerators. Section~\ref{sec:systemization} systematizes the quality metrics used to evaluate training-free acceleration and shows that FID-matched step shortening can incur large quality degradation on complex-generation benchmarks. Section~\ref{sec:proposed_method} first reviews ParaLin's partial-linear reformulation and then designs the stronger parallel solver ParaLin++ (Section~\ref{sec:solving_ES}). Section~\ref{sec:experiments} reports the quality--speed audit on complex-generation benchmarks and ablations. Section~\ref{sec:conclusion} concludes this paper. Complete derivations and proofs of the propositions in Section~\ref{sec:proposed_method} are provided in Appendices A--D of the separate supplementary material.
\vspace{-2mm}

\section{Background}\label{sec:background}\label{sec:Preliminary}
\textbf{Sequential Integral Solvers.} The forward diffusion process degrades a clean image $\mathbf{X}_T$ into random noise $\mathbf{X}_0$ according to the stochastic differential equation (SDE)
\begin{equation}
\label{eq:SDE}
   d\mathbf{X}_t=f(t)\mathbf{X}_t\,dt+g(t)\,d\mathbf{W},
\end{equation}
where $d\mathbf{W}$ is the standard Wiener process. This induces a marginal distribution $\mathbf{X}_t\sim\mathcal{N}(\alpha_t\mathbf{X}_T,\sigma_t\mathbf{I})$ with noise schedules $\alpha_t$ and $\sigma_t$. Generation reverses the process through the reverse-time SDE
\begin{equation}
\label{eq:RevSDE}
d\mathbf{X}_t=\underbrace{\left[f(t)\mathbf{X}_t-g^2(t)\nabla_{\mathbf{X}_t}\log p(\mathbf{X}_t)\right]}_{\varphi(\mathbf{X}_t,t)}dt+\underbrace{g(t)}_{\sigma_t}d\mathbf{W},
\end{equation}
where a neural network $\mathbf{S}_\theta(\cdot)$ approximates the score. Let $\Phi(t,s,\mathbf{X}_s)$ denote the integral solution of~\eqref{eq:RevSDE} over $[s,t]$:
\begin{equation}
\label{eq:Integral}
\Phi(t,s,\mathbf{X}_s)=\mathbf{X}_s+\int_s^t\varphi(\mathbf{X}_{\tau},\tau)\,d\tau+\int_s^t\sigma_\tau\,d\mathbf{W}.
\end{equation}

\textbf{Parallel Integral Solvers.} ParaSolver~\cite{parasolver} provides a unified framework for sequential and parallel sampling. Given a trajectory divided into $N$ sub-intervals $0=t_0<t_1<\cdots<t_N=T$, it forms the banded nonlinear system
\begin{align}
\label{eq:BNEs}
\begin{cases}
\hat{\mathbf{X}}_{t_0}-\mathbf{X}_{t_0}=\mathbf{0},\\
\hat{\mathbf{X}}_{t_{n+1}}-\Phi(t_{n+1},t_n,\hat{\mathbf{X}}_{t_n})=\mathbf{0}, & n\in\{0,1,\ldots,N-1\}.
\end{cases}
\end{align}
Starting from an initial estimate, the concatenated state is updated by
\begin{equation}
\label{eq:paraiters}
\hat{\mathbf{X}}_{t_0:t_N}^{(k+1)}
=\hat{\mathbf{X}}_{t_0:t_N}^{(k)}
-\mathcal{G}^{(k)}\mathcal{R}_{t_0:t_N}^{(k)},
\end{equation}
where $\mathcal{R}_{t_0}^{(k)}=\mathbf{0}$ and
$\mathcal{R}_{t_{n+1}}^{(k)}=\hat{\mathbf{X}}_{t_{n+1}}^{(k)}-\Phi(t_{n+1},t_n,\hat{\mathbf{X}}_{t_n}^{(k)})$.
The solver matrix $\mathcal{G}^{(k)}$ selects the algorithm: $\mathcal{G}^{(k)}=\mathbf{I}$ gives fixed-point iteration~\cite{Fixed_point_iteration}, while $\mathcal{G}^{(k)}=(\mathcal{J}^{(k)})^{-1}$ gives Newton--Raphson iteration~\cite{Newtonmethod}. ParaLin~\cite{paralin2026} reduces the NFE burden of these fully nonlinear alternatives by exploiting the partial linearity of the integral solver.
\subsection{Existing Training-Free Accelerators}\label{sec:related_work}
\textbf{Step-parity temporal parallelization.} Parallel sampling removes the temporal dependency of a fixed-step diffusion integrator by refining multiple time states concurrently. ParaDiGMS~\cite{shih2024parallel} introduced this direction through Picard--Lindel\"of fixed-point iteration; Self-Refining Diffusion Samplers~\cite{selvam2024selfrefining} use Parareal iterations; ParaTAA~\cite{tang2024accelerating} formulates a triangular nonlinear system; and ParaSolver~\cite{parasolver} improves efficiency through banded and hierarchical parallelization. CHORDS~\cite{han2025chords} further treats multi-core sampling as a hierarchy of increasingly accurate solvers, whereas ParaAnya~\cite{yu2026paraanya} uses a plug-and-play output cache to avoid repeated evaluations at overlapping timesteps. These methods reinforce the same trade-off as ParaDiGMS: additional parallel compute replaces sequential time, rather than coarsening the integration grid. ParaLin~\cite{paralin2026} reduces the NFE overhead of fully nonlinear iterations by exploiting partial linearity; ParaLin++ strengthens this solver with adaptive nonlinearization, over-relaxation, and adaptive stopping.

A closely related but distinct solver is EPD~\cite{zhu2025parallelgradients}, which evaluates multiple gradient directions concurrently inside each ODE step and learns a small set of solver parameters through distillation. Unlike step-parity methods, it changes the local integration rule and requires a lightweight training stage. Recent theory also cautions against an unconstrained parallelization story: parallel-round lower bounds~\cite{kou2026parallelbounds} show that some distributions necessarily retain polynomial sequential dependence. ParaLin++ is therefore formulated as a practical fixed-step parallel integrator, not as a claim that all diffusion sampling can be made depth-free.

\textbf{Speculative and asynchronous parallelism.} A second parallel family drafts approximate future states and verifies or refines them in parallel. Diffusion speculative sampling~\cite{debdortoli2025speculative} preserves the target distribution by accepting or correcting draft proposals; FREE~\cite{wen2025free} extends this idea to feature-level autoregression in DiTs; SpecDiff~\cite{pan2026specdiff} combines self-speculation with feature caching; and DRiffusion~\cite{bai2026driffusion} uses a draft-and-refine process. AsyncDiff~\cite{chen2024asyncdiff} instead breaks component dependencies through asynchronous denoising. These methods exploit drafts, verifiers, or stale hidden states, and thus change the execution graph; ParaLin++ solves the coupled fixed-step integration system directly and does not introduce an auxiliary drafter.

\textbf{Step-shortening solvers and schedules.} Classical high-order ODE samplers shorten the timestep grid~\cite{DDIM,dpmsolver,dpmsolverpp,zhao2023unipc}. Recent training-free variants improve this route through adaptive flow paths~\cite{jin2026aflops}, multirate integration for the guidance branch~\cite{lee2025tortoise}, stiffness-aware integration~\cite{tan2025stork}, and hierarchical timestep-schedule optimization~\cite{zhu2026hso}. These methods are relevant baselines for the quality audit, but they still alter the discretization on which prompt-conditioned trajectories depend.

\textbf{Per-step compute reduction and model parallelism.} Orthogonal accelerators reduce the cost of each denoising evaluation. They include feature and block caching~\cite{ma2024deepcache,Learning2cache,CacheQuant}, DiT attention or token sparsification~\cite{zhao2025pab,jiang2025sada}, temporal feature forecasting~\cite{liu2025teacache,han2026spectrum}, path-planned cache scheduling~\cite{cui2026dpcache}, and distributed or asynchronous execution~\cite{li2024distrifusion,wang2024pipefusion,chen2024asyncdiff}. Guidance acceleration~\cite{PFDiff}, trajectory stitching~\cite{pan2024t}, nested diffusion~\cite{elata2024nested}, and minimum-step prediction~\cite{yu2024step} are likewise complementary. These systems change the executed network path or decide when a cheaper computation is acceptable; ParaLin++ changes neither the timestep grid nor the full-step denoiser evaluation.

\textbf{Quality-focused evaluation.} The claim that FID/CLIP evidence is insufficient is consistent with the evaluation literature: TIFA uses visual question answering to test fine-grained prompt faithfulness~\cite{hu2023tifa}; VQAScore and GenAI-Bench target compositional prompts~\cite{lin2024vqascore}; and T2I-CompBench++ evaluates attributes, relations, and numeracy~\cite{huang2025t2icompbench}. GenEval~2 additionally warns that static benchmarks can drift as generation models improve~\cite{kamath2025geneval2}. Our audit therefore interprets GenEval, DPG-Bench, and LongTextBench as a fixed, reproducible quality profile rather than as an absolute or exhaustive definition of generation quality.

\section{Systemization of Training-Free Diffusion Acceleration}\label{sec:systemization}
We systematize training-free diffusion acceleration through the quality metrics on which its ``quality-lossless'' claims rest, rather than through the speed lever it uses, because FID/CLIP-style evidence does not by itself establish preservation on complex prompts. We first enumerate the evaluation protocols of the academic acceleration literature (Table~\ref{tab:systemization}) and identify what these methods actually measure; we then test representative few-step quality claims under a generation-quality audit and find substantial degradation (Section~\ref{sec:experiments}, summarized in Table~\ref{Table:Industrial_Motivation_FLUX}).

\subsection{Quality Metrics in the Training-Free Acceleration Literature}
Table~\ref{tab:systemization} surveys the quality metrics used to evaluate a broad set of training-free samplers and their close relatives, from DDPM and DDIM through high-order solvers (DPM-Solver, DPMSolver++, DPM-Solver-v3, UniPC) and distillation/consistency methods, to parallel samplers (ParaDiGMS, ParaTAA, ParaSolver, ParaLin) and our ParaLin++. Two patterns are immediate. \emph{First}, nearly every method is evaluated with distributional and text-alignment metrics only: FID (and its variants sFID, clean-FID) measures closeness of the generated \emph{distribution} to a reference distribution, while CLIP score/CLIP-FID measures coarse prompt--image embedding similarity, and a handful additionally report Inception-based diversity (IS, Precision--Recall). These quantify \emph{statistical alignment}---whether the marginal image distribution or the prompt--image embedding stays close to a reference---not whether a specific generated sample succeeds at semantic composition, dense instruction following, or long-text rendering. \emph{Second}, few methods explicitly audit these generation-quality dimensions. Recent large-scale models such as FLUX~\cite{bfl2024flux} and Ming-Flash-Omni 2.0~\cite{ai2025ming}, in contrast, report GenEval-, DPG-Bench-, and LongTextBench-style numbers, whereas FID/CLIP are scarce or absent in their evaluation reports. A sampler certified only by FID/CLIP-style metrics can therefore appear lossless while remaining fragile on complex prompts.
\subsection{A Realistic Quality Audit of Training-Free Accelerators}
The metrics gap is not cosmetic. We further test the same training-free accelerators under the generation-quality profile---GenEval, DPG-Bench, and LongTextBench---on the high-capacity FLUX.1-dev~\cite{bfl2024flux}, against the same-CFG 30-step Euler reference; the result is summarized in Table~\ref{Table:Industrial_Motivation_FLUX} and elaborated in Section~\ref{sec:experiments}. Methods advertised as nearly quality-lossless under FID show substantial degradation on the generation-quality metrics. Prior FID-oriented evaluations support the general expectation that reduced-step configurations can remain close to the reference distribution~\cite{DDIM,dpmsolver,dpmsolverpp,zhao2023unipc}; in our audit, however, 16-step UniPC and DPMSolver++ cause GenEval to fall from $0.66$ to $0.50$--$0.52$, DPG-Bench from $83.97$ to $75.75$--$77.46$, and LongTextBench from $0.27$ to $0.08$--$0.09$. An image distribution can stay FID-plausible while the model omits requested objects, breaks spatial relations, or renders unreadable text. \emph{Hence, representative training-free few-step solvers can appear nearly lossless under FID/CLIP yet lose substantially under realistic, complex quality metrics. Parallel sampling is an important exception: by preserving the full-step trajectory, it can retain near-lossless quality while removing solver latency.}

This failure is structural rather than incidental. Step-shortening solvers (DDIM, DPMSolver++, UniPC) buy fewer discrete steps by altering the prompt-conditioned trajectory, and the altered trajectory is what breaks dense-prompt and text fidelity; FID merely fails to reflect this break because it overlooks sample-level semantics. Parallel sampling instead keeps all $N$ discrete steps and removes only the solver's temporal dependency, preserving the prompt-conditioned trajectory and the quality that depends on it. We call this the \emph{step-parity} route, and ParaLin++ (Section~\ref{sec:solving_ES}) is designed as its strongest realization. The rest of the paper extends the generation-quality audit (Section~\ref{sec:experiments}) and strengthens the step-parity solver (Sections~\ref{sec:proposed_method}--\ref{sec:solving_ES}).

\section{Exploring the Space: From ParaLin to ParaLin++}\label{sec:proposed_method}
\subsection{The ParaLin Framework}
Parallel solvers reformulate sequential diffusion integration into the nonlinear system in~\eqref{eq:BNEs} and~\eqref{eq:paraiters}, but denoising many states in every iteration causes large NFE overhead. The accepted ParaLin framework~\cite{paralin2026} addresses this by reducing fully nonlinear score computation to partially linear operations. This section briefly recalls that formulation, which serves as the foundation of ParaLin++.
\subsubsection{Reformulation of the Sequential Integral Solver}
Existing approaches typically develop parallel solvers based on the sequential integral formulation in~\eqref{eq:Integral}, which results in a fully nonlinear system of equations in ~\eqref{eq:BNEs} when employing the nonlinear  score function $\mathbf{S}_{\theta}(\cdot)$. We reveal that the sequential integral solver in~\eqref{eq:Integral} can be expressed in a more elementary form and reduced to a linear solver, as formally stated in the following Proposition~\ref{proposition:Linear_SQ_Solver}.
\begin{proposition}[Linear Approximation (Proof in Supplementary Material, Appendix A)]\label{proposition:Linear_SQ_Solver}
The sequential integral solver $\Phi(t,s,\mathbf{X}_s)$ from~\eqref{eq:Integral} can be decomposed into a linear solver $\mathcal{H}$ and a negligible error term $\mathcal{Z}_s$:
\begin{equation}
\Phi(t,s,\mathbf{X}_s) = \mathcal{H}\left(t,s,\mathbf{X}_s,\mathbf{X}_T(\mathbf{X}_s)\right) + \mathcal{Z}_s,
\end{equation}
where the linear sequential integral solver $\mathcal{H}(\cdot)$ is defined as:
\begin{footnotesize}
\begin{equation}
\begin{aligned}
\label{eq:linear_SQ_solver}
\mathcal{H}(\cdot) = &\frac{\alpha_{t}}{\alpha_{s}}\mathbf{X}_s - \alpha_{t}\int_{\lambda_s}^{\lambda_t}e^{-\lambda} \frac{\mathbf{X}_\lambda - \alpha_\lambda\mathbf{X}_T(\mathbf{X}_s)}{\sigma_\lambda} d\lambda  + \int_{\lambda_s}^{\lambda_t} g(\lambda) d\mathbf{W},
\end{aligned}
\end{equation}
\end{footnotesize}
Here, $h = s-t$ denotes the step size and $\lambda_t = \log(\alpha_t/\sigma_t)$ . Under the assumption that the model prediction error is bounded by $\zeta$, the decomposition error satisfies $\|\mathcal{Z}_s\| \leq \zeta \alpha_t (\frac{\sigma_s}{\alpha_s} - \frac{\sigma_t}{\alpha_t})$, which is negligible for $h\leq1$.
\end{proposition}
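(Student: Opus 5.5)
The plan is to follow the standard exponential-integrator derivation used for DPM-Solver and its variants. The drift is written in data-prediction form, the linear part of the ODE is integrated exactly, and the remaining integral is split into an idealized term plus a model-error term.

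\textbf{Step 1: rewrite the drift.} Using the Gaussian marginal $\mathbf{X}_t\sim\mathcal{N}(\alpha_t\mathbf{X}_T,\sigma_t^2\mathbf{I})$, the score is
\begin{equation*}
\nabla\log p(\mathbf{X}_t)\approx-(\mathbf{X}_t-\alpha_t\hat{\mathbf{X}}_T)/\sigma_t^2,
\end{equation*}
where $\hat{\mathbf{X}}_T=\mathbf{X}_\theta(\mathbf{X}_t,t)$ is the network's clean-data prediction. The coefficients satisfy the standard identities $f(t)=\frac{d\log\alpha_t}{dt}$ and $g^2(t)=\frac{d\sigma_t^2}{dt}-2f(t)\sigma_t^2$.

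\textbf{Step 2: change variables.} Switch to $\lambda_t=\log(\alpha_t/\sigma_t)$. Applying variation of constants to the linear term $f(t)\mathbf{X}_t$ gives the factor $\frac{\alpha_t}{\alpha_s}\mathbf{X}_s$. The nonlinear contribution becomes an integral over $\lambda$ with weight $\alpha_t e^{-\lambda}$ acting on the residual $(\mathbf{X}_\lambda-\alpha_\lambda\mathbf{X}_T)/\sigma_\lambda$. The stochastic integral is carried through unchanged, since it is linear in the noise.

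\textbf{Step 3: freeze the clean-data estimate.} Replace the time-varying $\hat{\mathbf{X}}_T(\mathbf{X}_\tau,\tau)$ along the path by the single estimate $\mathbf{X}_T(\mathbf{X}_s)$ computed at the start of the interval. The resulting expression is exactly $\mathcal{H}$ in \eqref{eq:linear_SQ_solver}. It is linear in $\mathbf{X}_s$, in $\mathbf{X}_\lambda$, and in $\mathbf{X}_T(\mathbf{X}_s)$; the only nonlinearity is hidden inside the single network call. The remainder $\mathcal{Z}_s$ is then defined as
\begin{equation*}
\mathcal{Z}_s=\alpha_t\int_{\lambda_s}^{\lambda_t} e^{-\lambda}\,\bigl(\hat{\mathbf{X}}_T(\mathbf{X}_\lambda,\lambda)-\mathbf{X}_T(\mathbf{X}_s)\bigr)\,d\lambda,
\end{equation*}
up to the sign and coefficient bookkeeping produced in Step 2.

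\textbf{Step 4: bound the remainder.} Under the assumption $\|\hat{\mathbf{X}}_T(\cdot)-\mathbf{X}_T(\mathbf{X}_s)\|\le\zeta$ uniformly on the interval, bring the norm inside the integral:
\begin{equation*}
\|\mathcal{Z}_s\|\le\zeta\,\alpha_t\int_{\lambda_s}^{\lambda_t}e^{-\lambda}\,d\lambda=\zeta\,\alpha_t\bigl(e^{-\lambda_s}-e^{-\lambda_t}\bigr)=\zeta\,\alpha_t\left(\frac{\sigma_s}{\alpha_s}-\frac{\sigma_t}{\alpha_t}\right).
\end{equation*}
For small steps, $\lambda_t-\lambda_s=O(h)$, so the bound is $O(\zeta h)$, which gives the claimed negligibility for $h\le1$.

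\textbf{Main obstacle.} I expect Step 2 to be the hardest part, specifically the exact coefficient bookkeeping. The difficulty is getting the factor $e^{-\lambda}/\sigma_\lambda$ and the sign conventions to match the stated $\mathcal{H}$, given that the paper's time direction is reversed ($\mathbf{X}_0$ is noise and $\mathbf{X}_T$ is data). I would first verify this for the deterministic ODE by differentiating $\mathbf{X}_t/\alpha_t$ with respect to $\lambda$. After that, I would append the diffusion term, stating explicitly that the stochastic integral is left in its generic form.
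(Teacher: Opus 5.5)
Your Step~4 does not follow from Step~3 once the coefficient bookkeeping you postponed is carried out. Start from the practical solver in noise form, $\frac{\alpha_t}{\alpha_s}\mathbf{X}_s-\alpha_t\int_{\lambda_s}^{\lambda_t}e^{-\lambda}\mathbf{S}_\theta\,d\lambda$ plus the noise integral. The paper quotes this directly from DPM-Solver's Proposition~3.1, so your Step~2 needs no re-derivation. Substitute $\mathbf{S}_\theta=(\mathbf{X}_\lambda-\alpha_\lambda\hat{\mathbf{X}}_T(\mathbf{X}_\lambda,\lambda))/\sigma_\lambda$ and subtract $\mathcal{H}$ with the frozen $\mathbf{X}_T(\mathbf{X}_s)$:
\[
\begin{aligned}
\mathcal{Z}_s&=\alpha_t\int_{\lambda_s}^{\lambda_t}e^{-\lambda}\,\frac{\alpha_\lambda}{\sigma_\lambda}\bigl(\hat{\mathbf{X}}_T(\mathbf{X}_\lambda,\lambda)-\mathbf{X}_T(\mathbf{X}_s)\bigr)\,d\lambda\\
&=\alpha_t\int_{\lambda_s}^{\lambda_t}\bigl(\hat{\mathbf{X}}_T(\mathbf{X}_\lambda,\lambda)-\mathbf{X}_T(\mathbf{X}_s)\bigr)\,d\lambda.
\end{aligned}
\]
Because $e^{-\lambda}\alpha_\lambda/\sigma_\lambda=1$, the exponential weight cancels, so the $e^{-\lambda}$ in your formula for $\mathcal{Z}_s$ should not be there. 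Your data-space hypothesis $\|\hat{\mathbf{X}}_T-\mathbf{X}_T(\mathbf{X}_s)\|\le\zeta$ therefore yields only $\|\mathcal{Z}_s\|\le\zeta\alpha_t(\lambda_t-\lambda_s)$. This is essentially sharp: take a difference that stays close to a fixed vector of norm $\zeta$ over most of the interval. The claimed bound is $\zeta\alpha_t\int_{\lambda_s}^{\lambda_t}e^{-\lambda}\,d\lambda$, which is strictly smaller wherever $\lambda>0$, i.e.\ on the low-noise part of the trajectory where $\sigma<\alpha$. So your hypotheses do not give the proposition there.

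The missing idea is that $\zeta$ must bound a \emph{noise-space} residual, so that the weight $e^{-\lambda}$ stays attached to it. This is what the paper does. It splits $\mathbf{S}_\theta=\boldsymbol{\epsilon}_\lambda+\boldsymbol{\delta}$, where $\boldsymbol{\epsilon}_\lambda=(\mathbf{X}_\lambda-\alpha_\lambda\mathbf{X}_T)/\sigma_\lambda$ is the ground-truth noise and $\|\boldsymbol{\delta}\|<\zeta$. It puts the $\boldsymbol{\epsilon}_\lambda$ part into $\mathcal{H}$ and sets $\mathcal{Z}=-\alpha_t\int_{\lambda_s}^{\lambda_t}e^{-\lambda}\boldsymbol{\delta}\,d\lambda$; the triangle inequality then gives exactly $\zeta\alpha_t(e^{-\lambda_s}-e^{-\lambda_t})$. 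Within your freezing route, the analogous repair is to assume $\frac{\alpha_\lambda}{\sigma_\lambda}\|\hat{\mathbf{X}}_T(\mathbf{X}_\lambda,\lambda)-\mathbf{X}_T(\mathbf{X}_s)\|\le\zeta$. That is a statement about the noise predictor, not about drift of the clean estimate.

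The two routes also put different errors into $\mathcal{Z}_s$. The paper's remainder contains only model-versus-truth error. The error from substituting a network estimate for the clean sample is deferred to Proposition~2 as $\boldsymbol{\xi}_n$, with coefficient $\mathcal{B}_n\approx\alpha_{t_{n+1}}\Delta\lambda_n$. That is the same $\alpha_t\,\Delta\lambda$ weight you obtain above, confirming that a data-space error does not inherit the $e^{-\lambda}$ factor.
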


\begin{remark}
The small step size $h$ is well-founded in practice. Theoretically, convergence guarantees for diffusion solvers strictly require bounded steps ($h \leq 1$~\cite{chen2023sampling}), while practically high-fidelity generation involves more fine discretization (e.g., $N \ge 5$ steps), ensuring $h = 1/N\leq 0.2$ remains small to render the error negligible. As detailed in Tab.~\ref{tab:error_comparison_edm_main_text}, this error term is small for practical discretization. Specifically, for the EDM framework, the bound simplifies to $\|\mathcal{Z}_s\| \leq \zeta h$; for the VE and VP schedules, the corresponding bounds shrink to first order as $h=s-t$ becomes small. This ensures that the approximation error remains negligible under high-fidelity sampling.
\end{remark}

\begin{table}[!t]
\centering
\caption{Practical error bounds for diffusion frameworks in the EDM design space. Parameters follow Table~1 of \cite{EDM}, with $\beta_{\min}=0.1$, $\beta_d=19.9$, and $\norm{\vec{S}_{\theta^*}(\vec{X}_t,t)-\vec{\epsilon}_t}<\zeta$ for a small step size $h=s-t<1$. Rows give the schedule settings, the $\sigma_t/\alpha_t$ substitution, and the resulting bound; columns are the diffusion frameworks.}
\label{tab:error_comparison_edm_main_text}
\renewcommand{\arraystretch}{1.15}
\resizebox{\columnwidth}{!}{%
\begin{tabular}{lccc}
\toprule
 & \textbf{EDM}~\cite{EDM} & \textbf{VE}~\cite{ScoreBase} & \textbf{VP}~\cite{ScoreBase}\\
\midrule
\textbf{Schedule settings}\rule{0pt}{2.4ex} & $\begin{aligned}\alpha_t&=1\\\sigma_t&=t\end{aligned}$ & $\begin{aligned}\alpha_t&=1\\\sigma_t&=\sqrt{t}\end{aligned}$ & $\begin{aligned}\alpha_t&=1/\sqrt{e^{\frac12\beta_dt^2+\beta_{\min}t}}\\\sigma_t&=\alpha_t\sqrt{e^{\frac12\beta_dt^2+\beta_{\min}t}-1}\end{aligned}$\\
\textbf{Substitution\,}$\sigma_t/\alpha_t=$ & $t$ & $\sqrt{t}$ & $\sqrt{e^{\frac12\beta_dt^2+\beta_{\min}t}-1}$\\
\textbf{Final error bound} & $\norm{\mathcal{Z}_s}\le\zeta(s-t)=\zeta h$ & $\norm{\mathcal{Z}_s}\le\zeta(\sqrt{s}-\sqrt{t})$ & $\norm{\mathcal{Z}_s}\le\zeta\alpha_t(\sigma_s/\alpha_s-\sigma_t/\alpha_t)$\\
\bottomrule
\end{tabular}}
\end{table}

\noindent Now, following ParaSolver~\cite{parasolver}, the linear sequential integral solver in~\eqref{eq:linear_SQ_solver} can be decomposed into many integral functions on different time subintervals, written as
\begin{equation}
\label{eq:N_Solutions}
\mathbf{X}_{t_{n+1}} = \mathcal{H}(t_{n+1},t_{n},\mathbf{X}_{t_n},\mathbf{X}_{t_N}), n \in \{0,1,\cdots,N-1\}.
\end{equation}
Motivated by recent parallel algorithms~\cite{parasolver,song2021accelerating}, this inherently sequential problem can be addressed in a parallel framework. Specifically, such a sequence of cascaded linear functions can be interpreted as a system of fully linear equations. Denote by $\{\mathbf{X}_{t_{n}}, n=0,~\cdots,~N \}$ a set of points exactly on the diffusion trajectory, and $\left\{\hat{\mathbf{X}}_{t_{n}}, n=0,~\cdots,~N \right\}$ the unknown variables to be optimized towards the trajectory. The cascade of integral formulations as \eqref{eq:N_Solutions} can then be reformulated as the solutions to the following \textit{fully linear equations}.
\begin{definition}[Fully Linear Equation System]
\label{def:FLE}
We define the system of  fully linear equations for the sequential sampling process in~\eqref{eq:N_Solutions} as
\begin{small}
\begin{equation}
\label{eq:FLE}
\left\{
\begin{aligned}
&\hat{\mathbf{X}}_{t_0} - \mathbf{X}_{t_0} = \mathbf{0},\\
&\hat{\mathbf{X}}_{t_{n+1}} - \mathcal{H}(t_{n+1},t_{n},\hat{\mathbf{X}}_{t_n},\hat{\mathbf{X}}_{t_N}) = \mathbf{0},
\quad n \in \{0,1,\cdots,N-1\}.
\end{aligned}
\right.
\end{equation}
\end{small}
\end{definition}
\noindent The complete linearity of the above system permits exceptionally efficient solving with negligible computational cost. However, this very full linearity prevents them from achieving the sample quality of the sequential integral solvers in~\eqref{eq:Integral}, as they lack information regarding the target data distribution. To bridge this quality gap, we introduce the target distribution information into the linear system by supplying the estimated clean sample $\hat{\mathbf{X}}_{t_N}$. Recall that in the forward process of DPMs, we can compute a clean sample  predicted  $\hat{\mathbf{X}}_{t_N}$ as
\begin{equation}
    \label{eq:predicted_clean_sample}
    \hat{\mathbf{X}}_{t_N}(\hat{\mathbf{X}}_{t_{n}},\theta) = \frac{\hat{\mathbf{X}}_{t_{n}} - \sigma_{t_{n}}\mathbf{S}_{\theta}(\hat{\mathbf{X}}_{t_{n}},t_{n})}{\alpha_{t_{n}}},
\end{equation}
where  $\mathbf{X}_{t_n} \sim \mathcal{N} (\alpha_{t_{n}}\mathbf{X}_{t_N},\sigma_{t_{n}}\mathbf{I})$ where $\alpha$ and $\sigma$ are the noise schedule. Note that the diffusion model consumes most computational resources on the score-matching process, i.e., estimating corresponding noise or clean samples by a neural network. In this paper, to lessen such burden of score matching, we explore the consistency of reverse diffusion process and propose to divide the NEs to a partial-linear equation system, namely $\varrho$-nonlinear equation system. Specifically, due to the diffusion models make progressive transition from pure noise to clean data distribution, the score function is highly unstable on the high noise region, which is also evidenced by recent research~\textcolor{blue}{\cite{pan2024t,ma2024deepcache}}. For compactness, let \(\mathcal{P}_{i}^{(j)}=\hat{\mathbf{X}}_{t_N}(\hat{\mathbf{X}}_{t_i}^{(j)},\theta)\) denote the clean-sample prediction formed from state \(i\) at iteration \(j\), and write \(\mathcal{H}_i(x,y)=\mathcal{H}(t_{i+1},t_i,x,y)\). Based on such an insight, to further balance the score estimation accuracy and computational consumption, we divide the original NEs shown as Eq.~\ref{eq:BNEs} into $\varrho$-Nonlinear Equation System in Definition~\ref{def:varrho_NES}, where for the unstable and stable score region, we apply the nonlinear and linear solvers, respectively.
\begin{definition}[$\varrho$-Nonlinear Equation System]
\label{def:varrho_NES}
We define a  $\varrho$-nonlinear equation system   for the sequential sampling process in~\eqref{eq:N_Solutions} as
\begin{equation}
\begin{small} 
\label{eq:varpho_NES}
    \begin{cases}
    \hat{\mathbf{X}}_{t_0} - \mathbf{X}_{t_0} = \mathbf{0},\\
     \hat{\mathbf{X}}_{t_{n+1}} - \mathcal{H}\Big(t_{n+1},t_{n},\hat{\mathbf{X}}_{t_n},\hat{\mathbf{X}}_{t_N}(\hat{\mathbf{X}}_{t_{n}},\theta)\Big) = \mathbf{0} , n \in [0,\varrho-1],\\
     \hat{\mathbf{X}}_{t_{n+1}} - \mathcal{H}\Big(t_{n+1},t_{n},\hat{\mathbf{X}}_{t_n},\hat{\mathbf{X}}_{t_N}(\hat{\mathbf{X}}_{t_{\varrho-1}},\theta)\Big) = \mathbf{0} , n \in [\varrho,N-1].\\
\end{cases}
\end{small}
\end{equation}
\end{definition}
\begin{remark}
The scalar $\varrho$ serves as a critical transition point between nonlinear (score) and linear (fixed score) regimes in the system. The first $\varrho$ equations evolve into computationally costly nonlinear equations, reducing into exiting nonlinear parallel system in~\eqref{eq:BNEs}. Subsequent equations (for $n \geq \varrho$) retain computational efficiency as linear equations, as they depend only on the fixed prediction from the $(\varrho-1)^{th}$ time step.
\end{remark}
\begin{remark}
The above framework establishes a unified paradigm, subsuming all existing parallel methods such as ParaDiGMS, ParaTAA, and ParaSolver  as special cases. In particular, existing methods are fully nonlinear equation systems, corresponding to the limiting case $\varrho = N-1$.
\end{remark}


\begin{proposition}[Accumulated Error Bound (Proof in Supplementary Material, Appendix B)]
\label{proposition:Unique_Solution}
For any $\varrho$, let $\{\hat{\mathbf{X}}_{t_n}\}_{n=0}^N$ be the solution to the $\varrho$-nonlinear surrogate system in~\eqref{eq:varpho_NES} and $\{\mathbf{X}_{t_n}\}_{n=0}^N$ the exact sequential integral trajectory, with deviation $\boldsymbol{\Delta}_{t_n}=\hat{\mathbf{X}}_{t_n}-\mathbf{X}_{t_n}$ and clean-sample estimation error $\boldsymbol{\xi}_n=\mathbf{E}_{t_n}-\mathbf{X}_T$. Under the high-quality sampling regime $h\le 1$, the deviation obeys the per-step recurrence
\begin{equation}
    \|\boldsymbol{\Delta}_{t_{n+1}}\| \leq |\mathcal{A}_n| \|\boldsymbol{\Delta}_{t_n}\| + |\mathcal{B}_n| \|\boldsymbol{\xi}_n\| + \|\mathcal{Z}_n\|,
\end{equation}
with $\mathcal{A}_n = \frac{\alpha_{t_{n+1}}}{\alpha_{t_n}} (1 - \Delta \lambda_n)$ and $\mathcal{B}_n = \alpha_{t_{n+1}} \Delta \lambda_n \left( 1 - \frac{\Delta \lambda_n}{2} \right)$. Starting from $\boldsymbol{\Delta}_{t_0}=\mathbf{0}$ and using $|\mathcal{A}_n|\le 1$, $|\mathcal{B}_n|=\mathcal{O}(h)$, and $\|\mathcal{Z}_n\|=\zeta\,\mathcal{O}(h)$ (Proposition~\ref{proposition:Linear_SQ_Solver}), unrolling over the $N$ steps yields the \emph{accumulated error bound}
\begin{equation}
\begin{aligned}
\|\boldsymbol{\Delta}_{t_N}\|
&\leq \sum_{n=0}^{N-1}\!\Bigl(\prod_{j=n+1}^{N-1}|\mathcal{A}_j|\Bigr)\bigl(|\mathcal{B}_n|\|\boldsymbol{\xi}_n\|+\|\mathcal{Z}_n\|\bigr)\\
&\leq \sum_{n=0}^{N-1}\!\bigl(|\mathcal{B}_n|\|\boldsymbol{\xi}_n\|+\|\mathcal{Z}_n\|\bigr) \leq N\,\mathcal{O}(h)\bigl(\|\boldsymbol{\xi}\|_{\max}+\zeta\bigr).
\end{aligned}
\end{equation}
The bound is structurally independent of $\varrho$: $\varrho$ only changes the source of the estimator $\mathbf{E}_{t_n}$ (and hence $\|\boldsymbol{\xi}_n\|$), not the coefficients $\mathcal{A}_n$ and $\mathcal{B}_n$.
\end{proposition}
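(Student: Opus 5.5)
The plan is to compare the surrogate update with the exact trajectory step by step, expanding both through the linear solver $\mathcal{H}$ of Proposition~\ref{proposition:Linear_SQ_Solver}. The exact trajectory satisfies $\mathbf{X}_{t_{n+1}}=\Phi(t_{n+1},t_n,\mathbf{X}_{t_n})=\mathcal{H}(t_{n+1},t_n,\mathbf{X}_{t_n},\mathbf{X}_T)+\mathcal{Z}_n$. The surrogate solution of \eqref{eq:varpho_NES} satisfies $\hat{\mathbf{X}}_{t_{n+1}}=\mathcal{H}(t_{n+1},t_n,\hat{\mathbf{X}}_{t_n},\mathbf{E}_{t_n})$. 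Here $\mathbf{E}_{t_n}$ is whatever clean-sample estimator the system uses at step $n$: $\hat{\mathbf{X}}_{t_N}(\hat{\mathbf{X}}_{t_n},\theta)$ for $n<\varrho$, and the frozen $\hat{\mathbf{X}}_{t_N}(\hat{\mathbf{X}}_{t_{\varrho-1}},\theta)$ for $n\ge\varrho$. Subtracting the two relations gives
\[
\boldsymbol{\Delta}_{t_{n+1}}=\mathcal{H}(\cdot,\hat{\mathbf{X}}_{t_n},\mathbf{E}_{t_n})-\mathcal{H}(\cdot,\mathbf{X}_{t_n},\mathbf{X}_T)-\mathcal{Z}_n .
\]
Because $\mathcal{H}$ is affine in both its state and clean-sample arguments, the difference splits into a term linear in $\boldsymbol{\Delta}_{t_n}$ and a term linear in $\boldsymbol{\xi}_n=\mathbf{E}_{t_n}-\mathbf{X}_T$. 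The stochastic integral cancels, since the same Wiener path is used in both relations.

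To obtain explicit coefficients, I will discretize the integral in \eqref{eq:linear_SQ_solver} over $[\lambda_{t_n},\lambda_{t_{n+1}}]$ in the usual exponential-integrator way. The factor $\mathbf{X}_\lambda$ inside the integrand is frozen at the left endpoint, rescaled through $\alpha_\lambda/\alpha_{t_n}$, and the weights $\int e^{-\lambda}\,d\lambda$ are expanded to second order in $\Delta\lambda_n=\lambda_{t_{n+1}}-\lambda_{t_n}$. The coefficient of $\boldsymbol{\Delta}_{t_n}$ then collects to $\frac{\alpha_{t_{n+1}}}{\alpha_{t_n}}(1-\Delta\lambda_n)$. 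The coefficient of $\boldsymbol{\xi}_n$ collects to $\alpha_{t_{n+1}}\Delta\lambda_n(1-\Delta\lambda_n/2)$, which arises from the Taylor expansion of $e^{\Delta\lambda_n}-1$ times the $\sigma$-to-$\alpha$ conversion. Taking norms and applying the triangle inequality yields the stated one-step recurrence, with $\|\mathcal{Z}_n\|$ bounded directly by Proposition~\ref{proposition:Linear_SQ_Solver}.

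For the accumulated bound, I will unroll the recurrence by a discrete Grönwall induction from $\boldsymbol{\Delta}_{t_0}=\mathbf{0}$. This produces the sum of products $\prod_{j>n}|\mathcal{A}_j|$ multiplying the source terms. Next I verify $|\mathcal{A}_n|\le1$: the ratio $\alpha_{t_{n+1}}/\alpha_{t_n}$ is at most $1+\mathcal{O}(h)$ along the sampling direction, while $1-\Delta\lambda_n\in[0,1]$ once $h\le1$ makes $\Delta\lambda_n$ small and positive. This lets every product be bounded by $1$. The estimates $|\mathcal{B}_n|=\mathcal{O}(h)$ and $\|\mathcal{Z}_n\|=\zeta\,\mathcal{O}(h)$ follow because $\Delta\lambda_n=\mathcal{O}(h)$ and $\alpha$ is bounded, as in Table~\ref{tab:error_comparison_edm_main_text}. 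Bounding each $\|\boldsymbol{\xi}_n\|$ by $\|\boldsymbol{\xi}\|_{\max}$ and summing the $N$ terms gives $N\,\mathcal{O}(h)(\|\boldsymbol{\xi}\|_{\max}+\zeta)$. Independence from $\varrho$ is then immediate, since $\varrho$ enters only through the choice of $\mathbf{E}_{t_n}$.

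The main obstacle is the coefficient computation together with the claim $|\mathcal{A}_n|\le1$. The exact form of $\mathcal{A}_n$ depends on how $\mathbf{X}_\lambda$ inside the integral is approximated, and the bound requires $\Delta\lambda_n$ and the $\alpha$ ratios to have compatible signs under the paper's time orientation. I will first try the first-order, DDIM-like freezing of $\mathbf{X}_\lambda$ and check the sign conventions on the EDM schedule. If $|\mathcal{A}_n|\le1$ fails marginally, I will fall back on $|\mathcal{A}_n|\le1+\mathcal{O}(h)$. Since $(1+\mathcal{O}(h))^N=\mathcal{O}(1)$ for $Nh=\mathcal{O}(1)$, the final bound still holds up to constants.
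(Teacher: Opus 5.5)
Your skeleton matches Appendix~B: subtract the two $\mathcal{H}$-relations, split into a $\boldsymbol{\Delta}_{t_n}$-part and a $\boldsymbol{\xi}_n$-part, take norms, and unroll from $\boldsymbol{\Delta}_{t_0}=\mathbf{0}$. Your coefficient step, however, does not produce $\mathcal{B}_n$ as you describe it. Since $e^{-\lambda}/\sigma_\lambda=1/\alpha_\lambda$, the integrand of $\mathcal{H}$ is $\mathbf{X}_\lambda/\alpha_\lambda-\mathbf{X}_T$. Freezing $\mathbf{X}_\lambda=\frac{\alpha_\lambda}{\alpha_{t_n}}\mathbf{X}_{t_n}$ makes it constant. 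That gives $\mathcal{A}_n$ exactly, but the $\boldsymbol{\xi}_n$-coefficient is $\alpha_{t_{n+1}}\Delta\lambda_n$, with no factor $(1-\Delta\lambda_n/2)$ and no weight left to expand.

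The explicit clean-sample part of the integrand is just $-\mathbf{X}_T$. So the $-\Delta\lambda_n^2/2$ correction can only come from the dependence of the intermediate state $\mathbf{X}_\lambda$ on the clean-sample argument, which is exactly what left-endpoint freezing discards. The paper keeps it through its superposition step. It writes $\boldsymbol{\Delta}_\lambda=\frac{\alpha_\lambda}{\alpha_{t_n}}\boldsymbol{\Delta}_{t_n}+\alpha_\lambda(\lambda-\lambda_{t_n})\boldsymbol{\xi}_n$ and integrates $\boldsymbol{\Delta}_\lambda/\alpha_\lambda-\boldsymbol{\xi}_n$, so $(1-\Delta\lambda_n/2)$ comes from the forced response.

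Your ``$e^{\Delta\lambda_n}-1$ times the $\sigma$-to-$\alpha$ conversion'' belongs to a different closure, namely freezing the noise (a DDIM step). Used consistently, that closure is in fact exact here. If $\mathbf{X}_\lambda$ inside $\mathcal{H}$ is read as the state generated by $\mathcal{H}$ itself with a fixed clean-sample argument, as the superposition step also does, then the noise $(\mathbf{X}_\lambda-\alpha_\lambda\mathbf{X}_T)/\sigma_\lambda$ is constant along that flow. This yields $\boldsymbol{\Delta}_{t_{n+1}}=\tilde{\mathcal{A}}_n\boldsymbol{\Delta}_{t_n}+\tilde{\mathcal{B}}_n\boldsymbol{\xi}_n-\mathcal{Z}_n$ with $\tilde{\mathcal{A}}_n=\frac{\alpha_{t_{n+1}}}{\alpha_{t_n}}e^{-\Delta\lambda_n}=\frac{\sigma_{t_{n+1}}}{\sigma_{t_n}}$ and $\tilde{\mathcal{B}}_n=\alpha_{t_{n+1}}(1-e^{-\Delta\lambda_n})$.

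The stated $\mathcal{A}_n,\mathcal{B}_n$ are the first- and second-order truncations of these. Since $e^{-x}\ge 1-x$ and $1-e^{-x}\ge x-x^2/2$ for $x\ge 0$, both truncations undershoot. Taking norms therefore gives the stated inequality only up to higher-order terms in $\Delta\lambda_n$. The paper's ansatz has the same character: relative to the exact intermediate error, it replaces $e^{-(\lambda-\lambda_{t_n})}$ by $1$ and $1-e^{-(\lambda-\lambda_{t_n})}$ by $\lambda-\lambda_{t_n}$.

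The second gap is the one you flagged yourself. A ratio $\alpha_{t_{n+1}}/\alpha_{t_n}\le 1+\mathcal{O}(h)$ times a factor $1-\Delta\lambda_n\in[0,1]$ does not give $|\mathcal{A}_n|\le 1$. Your fallback $(1+\mathcal{O}(h))^N=\mathcal{O}(1)$ recovers the last bound of the chain only up to a constant, and it does not give the middle inequality.

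The missing comparison is $0\le\mathcal{A}_n\le\frac{\alpha_{t_{n+1}}}{\alpha_{t_n}}e^{-\Delta\lambda_n}=\frac{\sigma_{t_{n+1}}}{\sigma_{t_n}}\le 1$ for $0\le\Delta\lambda_n\le 1$, using that $\sigma$ is nonincreasing along sampling. This is the paper's VP argument, made exact by $1-x\le e^{-x}$.

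Better still, unroll the exact recurrence. The bounds $0\le\tilde{\mathcal{A}}_n\le 1$ and $0\le\tilde{\mathcal{B}}_n\le\alpha_{t_{n+1}}\Delta\lambda_n=\mathcal{O}(h)$ are the only properties the unrolling uses. The accumulated bound $N\,\mathcal{O}(h)(\|\boldsymbol{\xi}\|_{\max}+\zeta)$ and its independence of $\varrho$ then follow with no fallback and no truncation.
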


\begin{remark}
Prop.~\ref{proposition:Unique_Solution} guarantees that different $\varrho$ does not impact the quality of the generated samples. We note that $\varrho$ is less of a hyperparameter and more of a hardware configuration parameter. For good efficiency, $\varrho$ can simply be set to available processor counts.
\end{remark}

\vspace{-5mm}
\subsection{Designing ParaLin++: A Stronger Step-Parity Solver}\label{sec:solving_ES}
ParaLin++ accelerates the ParaLin solver from two complementary perspectives. First, it retains ParaLin's adaptive nonlinearization and immediate update strategy, which exploit the latest refined states within an iteration. Second, it introduces SOR and a relative adaptive stopping rule to reduce the remaining iterations. The sliding-window and initialization strategies are then integrated into the complete algorithm.
\subsubsection{Adaptive Nonlinearization Iteration}
\label{Sec:ANIter}
We investigate the update rule in~\eqref{eq:paraiters} to solve the prescribed $\varrho$-nonlinear system. Specializing~\eqref{eq:paraiters} to our system yields the following iterative scheme:
\begin{equation}
\label{eq:paraiters_for_varpho_NES}
\hat{\mathbf{X}}_{t_0:t_N}^{(k+1)} = \hat{\mathbf{X}}_{t_0:t_N}^{(k)} - \mathcal{G}^{(k)} \mathcal{R}_{t_0:t_N}^{(k)},
\end{equation}
where
the residual term $\mathcal{R}_{t_{n+1}}^{(k)}$ with $\mathcal{R}_{t_{0}}^{(k)} =\hat{\mathbf{X}}_{t_{0}}^{(k)} - \mathbf{X}_{t_{0}} =0 $ is defined as:
\begin{small}
\begin{equation}
\begin{aligned}
\label{eq:iteration_residual}
\mathcal{R}_{t_{n+1}}^{(k)} =
\begin{cases}
\hat{\mathbf{X}}_{t_{n+1}}^{(k)} - \mathcal{H}_n\bigl(\hat{\mathbf{X}}_{t_n}^{(k)},\mathcal{P}_{n}^{(k)}\bigr), & n\in [0,\varrho-1], \\
\hat{\mathbf{X}}_{t_{n+1}}^{(k)} - \mathcal{H}_n\bigl(\hat{\mathbf{X}}_{t_n}^{(k)},\mathcal{P}_{\varrho-1}^{(k)}\bigr), & n\in [\varrho, N-1].
\end{cases}
\end{aligned}
\end{equation}
\end{small}

Observe that this update rule applies a Jacobi-type iteration, where all components $\mathbf{X}_{t_1:t_{N}}^{(k+1)}$ are updated exclusively from the old components $\mathbf{X}_{t_1:t_{N}}^{(k)}$, resulting in delayed information propagation across time steps. Inspired by the classical Gauss-Seidel iteration, we propose a more efficient immediate update strategy wherein newly updated components are incorporated immediately into subsequent components' updating within the same iteration. Specifically, we first update the nonlinear-region components $\mathbf{X}_{t_n}^{(k)}$ for $n \in \{1,\cdots,\varrho\}$. Upon obtaining the refined nonlinear component $\mathbf{X}_{t_\varrho}^{(k+1)}$ , we instantly propagate this refinement forward to compute the remaining linear-region components (i.e., $\mathbf{X}_{t_{\varrho+1}}^{(k+1)}, \dots, \mathbf{X}_{t_N}^{(k+1)}$). This approach ensures faster convergence by propagating the latest information forward, leading to the following improved update rule:
\begin{equation}
\label{eq:improved_paraiters}
\hat{\mathbf{X}}_{t_0:t_N}^{(k+1)}  = \hat{\mathbf{X}}_{t_0:t_N} ^{(k)} - \mathcal{G}^{(k)} \mathcal{R}_{t_0:t_N}^{(k)},
\end{equation}
where the residual term $\mathcal{R}_{t_{n+1}}^{(k)}$ now incorporates the latest refined components:
\begin{small}
\begin{equation}
\begin{aligned}
\label{eq:refined_iteration_residual}
\mathcal{R}_{t_{n+1}}^{(k)} =
\begin{cases}
\hat{\mathbf{X}}_{t_{n+1}}^{(k)} - \mathcal{H}_n\bigl(\hat{\mathbf{X}}_{t_n}^{(k)},\mathcal{P}_{n}^{(k)}\bigr), & n\in [0,\varrho-1],\\
\hat{\mathbf{X}}_{t_{n+1}}^{(k)} - \mathcal{H}_n\bigl(\hat{\mathbf{X}}_{t_n}^{(k+1)},\mathcal{P}_{\varrho-1}^{(k)}\bigr), & n\in [\varrho,N-1].
\end{cases}
\end{aligned}
\end{equation}
\end{small}

We observe that once the nonlinear components converge, the predicted clean sample $\hat{\mathbf{X}}_{t_N}(\hat{\mathbf{X}}_{t_{\varrho-1}}^{(k)},\theta)$ becomes unchanged, implying that no further precise information can be propagated for the linear equations in subsequential iterations. To address this, we progressively convert the linear equations into nonlinear ones, such that the number of nonlinear equations remains unchanged as the iteration proceeds. Formally, at iteration $k$, suppose the unconverged components are $\hat{\mathbf{X}}_{t_{n}}^{(k)},\cdots,\hat{\mathbf{X}}_{t_{N}}^{(k)}$, we maintain nonlinearity in the residuals $\{\mathcal{R}_{t_{i+1}}^{(k)},i =n,n+1,\cdots,n+\varrho-1\}$ of the first $\varrho$ components through the following formulation:
\begin{equation}
\label{eq:nonliear_aware_paraiters_for_varpho_NES}
\hat{\mathbf{X}}_{t_{n}:t_N}^{(k+1)} = \hat{\mathbf{X}}_{t_{n}:t_N}^{(k)} - \mathcal{G}^{(k)} \mathcal{R}_{t_{n}:t_N}^{(k)},
\end{equation}
where the residual term $\mathcal{R}_{t_{i+1}}^{(k)}$ can adaptively convert the linear equations into non-linear ones:
\begin{small}
\begin{equation}
\begin{aligned}
\label{eq:nonliear_aware_iteration_residual}
\mathcal{R}_{t_{i+1}}^{(k)} =
\begin{cases}
\hat{\mathbf{X}}_{t_{i+1}}^{(k)} - \mathcal{H}_i\bigl(\hat{\mathbf{X}}_{t_i}^{(k)},\mathcal{P}_{i}^{(k)}\bigr), & i\in [n,n+\varrho-1],\\
\hat{\mathbf{X}}_{t_{i+1}}^{(k)} - \mathcal{H}_i\bigl(\hat{\mathbf{X}}_{t_i}^{(k+1)},\mathcal{P}_{n+\varrho-1}^{(k)}\bigr), & i\in [n+\varrho,N-1].
\end{cases}
\end{aligned}
\end{equation}
\end{small}
where $\mathcal{R}_{t_{n}}^{(k)} = \hat{\mathbf{X}}_{t_{n}}^{(k)} - \mathbf{X}_{t_{n}} = \mathbf{0}$. This approach ensures continuous refinement of $\hat{\mathbf{X}}_{t_N}(\hat{\mathbf{X}}_{t_{n+\varrho-1}}^{(k)},\theta)$, yielding increasingly accurate solutions for the linear equations.

\begin{proposition}[Convergence Analysis]
\label{proposition:Convergence}
The update rule in~\eqref{eq:nonliear_aware_paraiters_for_varpho_NES} converges, within $K \leq T$ iterations, to the \emph{partially linear surrogate system}, i.e., the $\varrho$-nonlinear system defined in~\eqref{eq:varpho_NES}. The surrogate trajectory $\{\hat{\mathbf{X}}_{t_n}\}_{n=0}^N$ differs from the exact sequential integral trajectory $\{\mathbf{X}_{t_n}\}_{n=0}^N$ by an accumulated error bounded in Proposition~\ref{proposition:Unique_Solution}. In other words, the iteration closes the layer-3 to layer-2 gap (iterative solve $\to$ surrogate system), while the layer-2 to layer-1 gap (surrogate $\to$ exact sequential sampler) is the bounded deviation studied separately in Proposition~\ref{proposition:Unique_Solution}.
\end{proposition}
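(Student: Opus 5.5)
The proof goes by induction on the iteration index $k$. The key structural fact is that the system in~\eqref{eq:varpho_NES} is lower block-triangular (causal): the equation for $\hat{\mathbf{X}}_{t_{n+1}}$ involves only $\hat{\mathbf{X}}_{t_n}$ and a clean-sample prediction from an earlier or equal index. Throughout, $\mathcal{G}^{(k)}$ is taken to be the identity or the block-triangular Newton inverse. Either choice preserves the following property: the update of component $n+1$ depends only on components with index $\le n+1$, and an exactly solved component stays fixed.

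\textbf{Step 1 (frontier advances).} Let $n_k$ be the smallest index of an unconverged component at iteration $k$, where $n_0=1$ because $\hat{\mathbf{X}}_{t_0}=\mathbf{X}_{t_0}$ is pinned by the first equation. By the definition of $n_k$, $\hat{\mathbf{X}}_{t_{n_k-1}}^{(k)}$ is already exact. In~\eqref{eq:nonliear_aware_iteration_residual} the residual $\mathcal{R}_{t_{n_k}}^{(k)}$ is then a function of $\hat{\mathbf{X}}_{t_{n_k}}^{(k)}$ and of exact, fixed data only. With $\mathcal{G}^{(k)}=\mathbf{I}$ the update gives
\[
\hat{\mathbf{X}}_{t_{n_k}}^{(k+1)}=\mathcal{H}_{n_k-1}\bigl(\hat{\mathbf{X}}_{t_{n_k-1}}^{(k)},\mathcal{P}_{n_k-1}^{(k)}\bigr),
\]
which is precisely the sequential value, so component $n_k$ becomes exact. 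Hence $n_{k+1}\ge n_k+1$. Because residuals of converged components vanish (the convention $\mathcal{R}_{t_n}^{(k)}=\mathbf{0}$ in the statement), the update leaves them unchanged, so converged components stay converged.

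\textbf{Step 2 (termination bound).} The frontier advances by at least one index per iteration, so after at most $N$ iterations every component satisfies its equation. This gives $K\le N$, which is the stated bound $K\le T$ once $T$ is read as the number of discretization steps. In practice the nonlinear window of size $\varrho$ is refined simultaneously, and Newton-type $\mathcal{G}^{(k)}$ often fixes several indices per sweep, so $K$ is typically much smaller; for the claim, the worst case of one index per iteration suffices.

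\textbf{Step 3 (identify the limit, the main obstacle).} The fixed point must be shown to solve the partially linear surrogate system~\eqref{eq:varpho_NES} rather than some shifted variant. The difficulty is that the adaptive nonlinearization moves the window $[n,n+\varrho-1]$ with the frontier, so the equations being enforced change across iterations. I would resolve this by showing two things at termination, when the frontier has passed $N$. First, every component was fixed while it was either inside the nonlinear window or computed from the frozen prediction $\mathcal{P}_{n+\varrho-1}$ with exact predecessors. Second, the resulting vector therefore satisfies a $\varrho$-nonlinear system of the form~\eqref{eq:varpho_NES}, whose linear block uses the last refined prediction. Stated this way, the surrogate is defined by the final window, and existence and uniqueness of its root follow from the triangular structure by forward substitution. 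The deviation from the exact trajectory then does not require a separate argument: it is controlled by Proposition~\ref{proposition:Unique_Solution}, since its accumulated bound holds for any $\varrho$ and any source of the estimator $\mathbf{E}_{t_n}$.
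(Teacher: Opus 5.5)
Your Steps 1--2 are the paper's argument. You induct on a convergence front with $\mathcal{G}^{(k)}=\mathbf{I}$ (or Newton, whose Jacobian block at the frontier is $\mathbf{I}$), so each iteration fixes at least one new index and $K\le N$. Handing the surrogate-to-exact gap to Proposition~\ref{proposition:Unique_Solution} is also what the paper does.

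The gap is in what the front converges \emph{to}. You never define ``converged'', and Step 1 settles it implicitly. It evaluates the frontier residual with the nonlinear branch of \eqref{eq:nonliear_aware_iteration_residual}, so every component is frozen at $\mathcal{H}_{n_k-1}\bigl(\hat{\mathbf{X}}_{t_{n_k-1}},\mathcal{P}_{n_k-1}\bigr)$. The resulting vector solves the \emph{fully} nonlinear system. That system generically departs from \eqref{eq:varpho_NES} from $\hat{\mathbf{X}}_{t_{\varrho+1}}$ onward, because the surrogate equation for $\hat{\mathbf{X}}_{t_{n+1}}$ with $n\ge\varrho$ uses $\mathcal{P}_{\varrho-1}$ rather than $\mathcal{P}_{n}$.

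Step 3 does not repair this, for two reasons. First, in the exact-arithmetic induction of Steps 1--2, a component generically reaches its final value only as the first entry of the nonlinear window. So the alternative ``computed from the frozen prediction $\mathcal{P}_{n+\varrho-1}$'' never occurs; it would arise only from tolerance-based acceptance of the stride, which you have not modelled. Second, declaring that ``the surrogate is defined by the final window'' replaces the fixed system \eqref{eq:varpho_NES} by a run-dependent system chosen a posteriori. That is not the statement being proved.

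The paper needs no Step 3, because it fixes the target before inducting:
\begin{enumerate}
\item Let $\hat{\mathbf{X}}^{\star}$ be the unique root of \eqref{eq:varpho_NES}, which exists by forward substitution, as you note.
\item Assume that at iteration $k$ the first $n$ components already equal $\hat{\mathbf{X}}^{\star}$; for $n\ge\varrho$ this includes the transition state $\hat{\mathbf{X}}_{t_{\varrho-1}}$.
\item Evaluate the frontier residual with the estimator of Definition~\ref{def:varrho_NES}, i.e.\ $\mathcal{P}_{n-1}$ for $n<\varrho$ and $\mathcal{P}_{\varrho-1}$ for $n\ge\varrho$.
\end{enumerate}
Then $\mathcal{H}\bigl(t_n,t_{n-1},\hat{\mathbf{X}}^{(k)}_{t_{n-1}},\mathbf{E}^{(k)}_{t_{n-1}}\bigr)$ coincides with its surrogate counterpart. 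One update gives $\hat{\mathbf{X}}^{(k+1)}_{t_n}=\hat{\mathbf{X}}^{\star}_{t_n}$, and the limit is the root of \eqref{eq:varpho_NES} by construction.

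Your remark that \eqref{eq:nonliear_aware_iteration_residual} literally uses $\mathcal{P}_{n_k-1}$ at the frontier is a fair point about the sliding window. The paper's proof tacitly works with the fixed-$\varrho$ estimator instead. But the remark cuts against your Step 3. To conclude convergence to \eqref{eq:varpho_NES}, you must pin the estimator to Definition~\ref{def:varrho_NES}, as the paper does. Otherwise your Step 1 establishes convergence to the fully nonlinear root, which is a different claim.
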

\begin{proof}
     See the supplementary material (Appendix C) for the complete derivation.
\end{proof}

\subsubsection{Convergence Acceleration via Successive Over-Relaxation}
\label{Sec:ConvergenceAcceleration}
\noindent While the immediate update strategy enhances information flow within each iteration, the solver may still require a considerable number of iterations ($K$) to converge, particularly for high-dimensional problems. To further reduce the iteration count, we incorporate the Successive Over-Relaxation (SOR) method, a well-established technique for accelerating the convergence of iterative solvers. The core idea is to introduce a relaxation factor $\omega$ to extrapolate the update step, pushing the solution more aggressively towards the root.

The standard Gauss-Seidel-like update can be viewed as a two-step process. First, an intermediate state $\hat{\mathbf{Z}}^{(k+1)}$ is computed based on the residual:
\begin{equation}
\hat{\mathbf{Z}}^{(k+1)} = \hat{\mathbf{X}}^{(k)} - \mathcal{G}^{(k)} \mathcal{R}^{(k)}.
\end{equation}
The SOR update then modifies this step by taking a weighted average of the current state $\hat{\mathbf{X}}^{(k)}$ and the intermediate state $\hat{\mathbf{Z}}^{(k+1)}$:
\begin{equation}
\label{eq:sor_update}
\hat{\mathbf{X}}^{(k+1)} = (1-\omega)\hat{\mathbf{X}}^{(k)} + \omega \hat{\mathbf{Z}}^{(k+1)},
\end{equation}
where $\omega $ is the relaxation factor. Setting $\omega=1$ recovers the original update rule, while $\omega > 1$ (over-relaxation) typically accelerates convergence for the systems encountered in our problem. We now integrate this SOR mechanism into our solver for both the nonlinear and linear components.

\noindent\textbf{Update for Nonlinear Components.} For the first $\varrho$ components within the current window, which are processed in parallel, we first compute their intermediate states based on the nonlinear residuals defined in \eqref{eq:nonliear_aware_iteration_residual}:
\begin{equation}
\hat{\mathbf{Z}}_{t_{n+1}:t_{n+\varrho}}^{(k+1)}  = \hat{\mathbf{X}}_{t_{n+1}:t_{n+\varrho}}^{(k)} - \mathcal{G}^{(k)} \mathcal{R}_{t_{n+1}:t_{n+\varrho}}^{(k)}.
\end{equation}
Subsequently, we apply the SOR update to obtain the refined states for the nonlinear block:
\begin{equation}
\hat{\mathbf{X}}_{t_{n+1}:t_{n+\varrho}}^{(k+1)}  = (1-\omega)\hat{\mathbf{X}}_{t_{n+1}:t_{n+\varrho}}^{(k)} +\omega \hat{\mathbf{Z}}_{t_{n+1}:t_{n+\varrho}}^{(k+1)}.
\end{equation}

\noindent\textbf{Update for Linear Components.} For the remaining components ($i \in [n+\varrho, N-1]$), we combine the SOR method with the immediate update strategy. These components are updated sequentially, allowing the most recent information to propagate immediately. For each linear component, we compute the intermediate state:
\begin{equation}
\hat{\mathbf{Z}}_{t_{i+1}}^{(k+1)} = \hat{\mathbf{X}}_{t_{i+1}}^{(k)} - \mathcal{G}^{(k)} \mathcal{R}_{t_{i+1}}^{(k)},
\end{equation}
where the residual $\mathcal{R}_{t_{i+1}}^{(k)}$ is computed using the already updated value $\hat{\mathbf{X}}_{t_i}^{(k+1)}$, as specified in \eqref{eq:nonliear_aware_iteration_residual}. The final updated state is then obtained via the SOR formulation:
\begin{equation}
\hat{\mathbf{X}}_{t_{i+1}}^{(k+1)}  = (1-\omega)\hat{\mathbf{X}}_{t_{i+1}}^{(k)} + \omega\hat{\mathbf{Z}}_{t_{i+1}}^{(k+1)}.
\end{equation}
By integrating SOR, our proposed solver, ParaLin++, significantly reduces the number of iterations $K$ required to reach the desired tolerance, thereby providing a substantial improvement in overall sampling efficiency without compromising the quality of the generated samples.

\begin{proposition}[Relaxation Factor Analysis]
\label{proposition:Optimal_Relaxation_Factor}
The successive over-relaxation update guarantees convergence if the relaxation factor $\omega$ is chosen within the range $0 < \omega < 2 / (1+L)$, where $L < 1$ is the contraction factor of the underlying Gauss-Seidel-like iterative solver.
\end{proposition}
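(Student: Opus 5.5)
We model the Gauss--Seidel-like update as a fixed-point map and bound the relaxed map by the triangle inequality. Write the underlying immediate-update iteration (the nonlinear block update followed by the sequential linear-block propagation of~\eqref{eq:nonliear_aware_iteration_residual}) as a single operator $\mathcal{T}$ on the concatenated state, so that $\hat{\mathbf{Z}}^{(k+1)}=\mathcal{T}(\hat{\mathbf{X}}^{(k)})$. The hypothesis is that $\mathcal{T}$ is a contraction with factor $L<1$ in some norm $\|\cdot\|$, i.e.\ $\|\mathcal{T}(\mathbf{x})-\mathcal{T}(\mathbf{y})\|\le L\|\mathbf{x}-\mathbf{y}\|$. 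By Proposition~\ref{proposition:Convergence}, $\mathcal{T}$ has a fixed point $\mathbf{X}^\star$, namely the solution of the $\varrho$-nonlinear system~\eqref{eq:varpho_NES}. The SOR update~\eqref{eq:sor_update} is then the map $\mathcal{T}_\omega(\mathbf{x})=(1-\omega)\mathbf{x}+\omega\,\mathcal{T}(\mathbf{x})$. Any fixed point of $\mathcal{T}$ is a fixed point of $\mathcal{T}_\omega$. Conversely, for $\omega\neq0$, $\mathcal{T}_\omega(\mathbf{x})=\mathbf{x}$ forces $\mathcal{T}(\mathbf{x})=\mathbf{x}$. So the two maps have exactly the same roots.

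Next, compare two states via the error $\mathbf{e}^{(k)}=\hat{\mathbf{X}}^{(k)}-\mathbf{X}^\star$. The error satisfies $\mathbf{e}^{(k+1)}=(1-\omega)\mathbf{e}^{(k)}+\omega\bigl(\mathcal{T}(\hat{\mathbf{X}}^{(k)})-\mathcal{T}(\mathbf{X}^\star)\bigr)$. The triangle inequality gives $\|\mathbf{e}^{(k+1)}\|\le\bigl(|1-\omega|+\omega L\bigr)\|\mathbf{e}^{(k)}\|$, so we need $q(\omega):=|1-\omega|+\omega L<1$. We split into two cases.
\begin{itemize}
\item For $0<\omega\le1$: $q=1-\omega(1-L)<1$ since $L<1$.
\item For $\omega>1$: $q=\omega(1+L)-1<1$ if and only if $\omega<2/(1+L)$.
\end{itemize}
Hence $q(\omega)<1$ on exactly $0<\omega<2/(1+L)$. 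Then $\|\mathbf{e}^{(k)}\|\le q^k\|\mathbf{e}^{(0)}\|\to0$, and by Banach's fixed-point theorem $\mathbf{X}^\star$ is the unique root. The same computation with two arbitrary states shows $\mathcal{T}_\omega$ is itself a $q$-contraction.

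The main obstacle is justifying that the composite Gauss--Seidel-like operator $\mathcal{T}$ is a genuine contraction on the whole state. It mixes nonlinear denoiser evaluations in the window with sequential linear propagation, and its window shifts across iterations. My approach is to treat the hypothesis ``contraction factor $L$'' as given, as the proposition's phrasing suggests, and to note where it comes from. The linear block $\mathcal{H}$ has coefficient $|\mathcal{A}_n|\le1$ and an $\mathcal{O}(h)$ coupling to the clean-sample prediction (Proposition~\ref{proposition:Unique_Solution}). With a Lipschitz score network this yields a Lipschitz bound $L$ that is small for $h\le1$, and a weighted norm can absorb the triangular propagation.

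The window-shifting issue should be handled per iteration. At each iteration the relaxed update acts on the currently unconverged components $t_n{:}t_N$, and the components already fixed contribute zero error. The per-iteration bound therefore still applies, with $L$ taken as the uniform worst case over windows.
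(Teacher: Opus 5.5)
Your proposal is correct and follows the paper's proof in Appendix D. Both write the relaxed update as $(1-\omega)\mathbf{x}+\omega\,\mathcal{F}(\mathbf{x})$ for the contractive Gauss--Seidel-like map, bound its Lipschitz constant by $|1-\omega|+\omega L$ with the triangle inequality, and split into the cases $0<\omega\le 1$ and $\omega>1$ to obtain $\omega<2/(1+L)$; your check that $\mathcal{T}_\omega$ and $\mathcal{T}$ have the same fixed points for $\omega\neq 0$ is a correct addition that the paper only asserts elsewhere.
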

\begin{proof}
See the supplementary material (Appendix D) for the complete derivation.
\end{proof}

\begin{remark}
 Empirically, selecting $\omega$ within the range of $(1.0, 1.08]$ effectively accelerates the process without compromising the generation quality.
\end{remark}

\subsubsection{Sliding Window Strategy}
\label{Sec:SlidingWindow}
To optimize computational efficiency, ParaSolver utilizes a sliding window of fixed size $p$, retaining only the most recent $p$ denoised points $\{\hat{\mathbf{X}}_{t_n}^{(k)} : n = 0, \dots, p-1\}$ for parallel processing. This approach enhances parallel performance while adhering to GPU memory limitations. Adapted to our framework, the update rule becomes:
\begin{equation}
\label{eq:nonliear_aware_paraiters_for_varpho_NES_window}
\hat{\mathbf{X}}_{t_{n}:t_{n+p}}^{(k+1)} = \hat{\mathbf{X}}_{t_{n}:t_{n+p}}^{(k)} - \mathcal{G}^{(k)} \mathcal{R}_{t_{n}:t_{n+p}}^{(k)},
\end{equation}
where the residual term $\mathcal{R}_{t_{i+1}}^{(k)}$ over a sliding window becomes:
\begin{small}
\begin{equation}
\begin{aligned}
\label{eq:nonliear_aware_iteration_residua_window}
\mathcal{R}_{t_{i+1}}^{(k)} =
\begin{cases}
\hat{\mathbf{X}}_{t_{i+1}}^{(k)} - \mathcal{H}_i\bigl(\hat{\mathbf{X}}_{t_i}^{(k)},\mathcal{P}_{i}^{(k)}\bigr), & i\in [n, n+\varrho-1], \\
\hat{\mathbf{X}}_{t_{i+1}}^{(k)} - \mathcal{H}_i\bigl(\hat{\mathbf{X}}_{t_i}^{(k+1)},\mathcal{P}_{n+\varrho-1}^{(k)}\bigr), & i\in [n+\varrho, p-1].
\end{cases}
\end{aligned}
\end{equation}
\end{small}

\subsubsection{Adaptive Stopping Criterion}
\label{Sec:AdaptiveStop}

Prior works~\cite{shih2024parallel,tang2024accelerating,parasolver} typically employ a single tolerance, denoted as $\delta$, to control the state difference across all timesteps. However, we find that this coarse-grained method cannot accurately identify states that have already converged, leading to superfluous iteration steps to update them. To address this inefficiency, we introduce a more fine-grained, adaptive stopping criterion.

Our approach is based on the principle of relative reduction to a historical average. Instead of comparing the current update error to a fixed threshold, we evaluate it against a fraction of the average historical error. This allows the criterion to dynamically adjust to the convergence progress.
\begin{figure*}
    \centering
    \includegraphics[width=\linewidth]{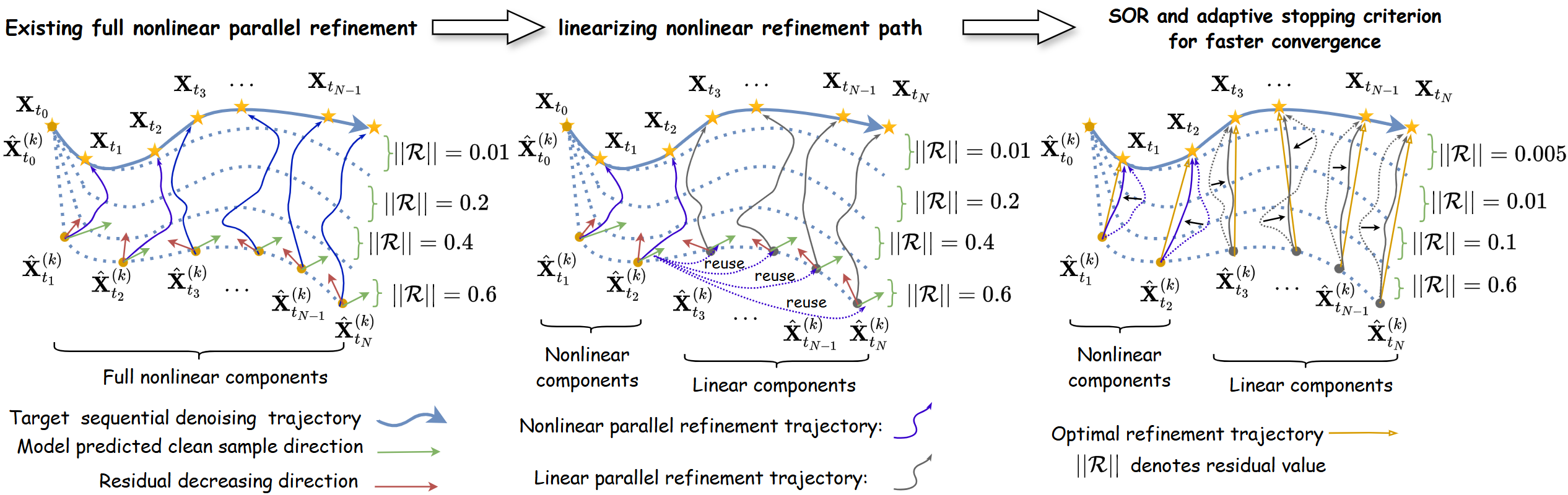}\vspace{-0.5cm}
 \caption{Illustration of  ParaLin++. During iteration $k$, ParaLin++ performs simultaneous denoising updates across multiple steps. From left to right: existing full nonlinear parallel methods, our partial linear method, and our method for faster convergence. }
    \label{fig:parasolver_Pipeline}
\end{figure*}

Specifically, following~\cite{shih2024parallel,tang2024accelerating,parasolver}, let the error of  state $\hat{\mathbf{X}}_{t_{i}}$ at  iteration $k \in[0,K]$ be defined as:
$$
\mathcal{E}^{(k)}_{t_i} = \frac{1}{D\sigma^2_{t_i}}\left\|\hat{\mathbf{X}}_{t_{i}}^{(k+1)} - \hat{\mathbf{X}}_{t_{i}}^{(k)}\right\|_F^2,
$$
where $D$ is the dimensionality of the state $\hat{\mathbf{X}}_{t_{i}}$ and $\|\cdot\|_F$ denotes the Frobenius norm. We maintain a running average of all historical errors, $\bar{\mathcal{E}}_{t_i}$, up to the current iteration $k$:
$$
\bar{\mathcal{E}}_{t_i} = \frac{1}{k+1} \sum_{j=0}^{k} \mathcal{E}^{(j)}_{t_i}.
$$
The stopping condition is then met when the current error, $\mathcal{E}^{(k)}_{t_i}$, is significantly smaller than this historical error average, scaled by a  tolerance $\delta$:
$$
\mathcal{E}^{(k)}_{t_i} \leq \delta \cdot \bar{\mathcal{E}}_{t_i}.
$$

Furthermore, the tolerance $\delta$ is not a fixed hyperparameter but is itself updated adaptively to refine the stopping condition. We introduce an update mechanism with an EDM (Exponential Decay Mean) decay rate, $\beta$, which allows the tolerance to respond to the recent convergence behavior. The tolerance for the next step, $\delta$, is updated as follows:
$$
\delta = \beta \delta + (1-\beta) \cdot \mathcal{M}\Big(\big\{\mathcal{E}^{(k)}_{t_i}  \mid i \in [n, n+p-1]\big\}\Big),
$$
where  \( \mathcal{M}(\cdot) \) computes the median value which provides a robust estimate of recent error.

This dual-adaptive strategy, comparing to the historical average and dynamically updating the tolerance threshold, enables a more efficient convergence. We empirically find it terminates iterations more precisely when a state has stabilized, thereby reducing the number of steps required while preserving sample quality.

\subsubsection{Initialization}
\label{Sec:Initialization}
To establish the diffusion trajectory, we adopt the ParaSolver methodology by performing $M$ steps of a sequential integral solver as preconditioning steps. The initialization process is formally expressed as:
\begin{small}
\begin{equation}
\label{eq:init}
\begin{cases}
\hat{\mathbf{X}}_{t_{n}}^{(0)} = \mathbf{X}_{t_{n}}, & \text{if } 0 \leq n < M, \\
\hat{\mathbf{X}}_{t_{n}}^{(0)} = \mathcal{H}\Big(t_{n}, t_{n-1}, \hat{\mathbf{X}}_{t_{n-1}}^{(0)}, \hat{\mathbf{X}}_{t_N}(\hat{\mathbf{X}}_{t_{M-1}}^{(0)}, \theta)\Big), & \text{if } M \leq n < N,
\end{cases}
\end{equation}
\end{small}
where the initial state $\mathbf{X}_{t_0}$ is sampled from a standard Gaussian distribution, $\mathbf{X}_{t_0} \sim \mathcal{N}(\mathbf{0}, \mathbf{I})$. When $M=0$, following \cite{shih2024parallel}, all samples are initialized with random noise.

\subsubsection{Complete Algorithm}
\label{Sec:Algorithm}
Algorithm~\ref{Alg:ParaSolver} and Figure~\ref{fig:parasolver_Pipeline} outlines the full procedure of the proposed method. Initially, an array of starting guesses $\{\hat{\mathbf{X}}_{t_n}^{(0)}: n = 0, \dots, p\}$ is prepared either through random noises or inexpensive preconditioning steps (Line~\ref{Line:1}). The parallel sampling loop then begins at Line~\ref{Line:while}.
In Lines~\ref{Line:nonlinear_R_start}--\ref{Line:nonlinear_R_end}, all nonlinear equations are solved simultaneously. Subsequently, Lines~\ref{Line:linear_R_start}--\ref{Line:linear_R_end} compute the solutions to the linear equations, leveraging the predicted clean samples obtained from the nonlinear solving phase. This step is computationally efficient since it avoids model evaluations. The discrepancy between the newly computed states and the current states is assessed at Line~\ref{Line:error_start}, followed by determining the feasible window sliding stride at Line~\ref{line:stride}. Line~\ref{Line:Initilize} initializes $s$ new points beyond the current window based on the computed stride. Finally, Lines~\ref{Line:adapt_strat}--\ref{Line:adapt_end} dynamically adjust the tolerance threshold $\delta$ based on the real-time residuals.
\begin{algorithm}[t]
\SetKwInOut{KIN}{Input}
\SetKwInOut{KOUT}{Output}
\SetKwInput{KwReturn}{Return}
\caption{ParaLin++: a partial nonlinear parallel integral solver for diffusion models}
\label{Alg:ParaSolver}
\footnotesize
\LinesNumbered
\KIN{Diffusion model $S_{\theta}$, noise schedule $\alpha_t$ and $\sigma_t$,  subinterval number $N$, preconditioning steps $M$,  tolerance $\delta$, batch window size $p$, sample dimension $D$, nonlinear degree $\varrho$, EMA decay rate $\beta$, relaxation factor  $\omega$.}
\KOUT{A sample.}

Initialize  $\{\mathbf{\hat{X}}_{t_{n}}^{(0)},n=0,\cdots,p\}$ by \eqref{eq:init} \label{Line:1} \hfill\hfill\textcolor{commentcolor}{// Initialize with a few  sampling  steps.}\\
Initialize $\{\mathcal{E}_{t_{n}}^{(0)} = 0,n=0,\cdots,N\}$ \label{Line:error_inital} \hfill\hfill\textcolor{commentcolor}{// Initialize error list.}\\
$n,k \leftarrow 0,0$, $k \in [0, K]$, and $n \in [0, N-1]$\label{line:two}\\
\label{Line:while}\While{$n < N$}{
$\hat{\mathbf{X}}_{t_N}(\hat{\mathbf{X}}_{t_{i}}^{(k)},\theta) = \frac{\hat{\mathbf{X}}_{t_{i}}^{(k)} - \sigma_{t_{i}}\mathbf{S}_{\theta}(\hat{\mathbf{X}}_{t_{i}}^{(k)},t_{i})}{\alpha_{t_{i}}}, \forall i \in \{n, \cdots, n+\varrho-1\}.$\hfill
\textcolor{commentcolor}{// Predict clean samples in parallel.} \label{Line:nonlinear_R_start} \\
$\mathcal{R}_{t_{i+1}}^{(k)} = \hat{\mathbf{X}}_{t_{i+1}}^{(k)} -\mathcal{H}\Big(t_{i+1},t_{i},\hat{\mathbf{X}}_{t_i}^{(k)},\hat{\mathbf{X}}_{t_N}(\hat{\mathbf{X}}_{t_{i}}^{(k)},\theta)\Big), \forall i \in \{n, \cdots, n+\varrho-1\}$ \hfill
\textcolor{commentcolor}{// Nonlinear residual.}  \\
$\hat{\mathbf{Z}}_{t_{n+1}:t_{n+\varrho}}^{(k+1)}  = \hat{\mathbf{X}}_{t_{n+1}:t_{n+\varrho}} ^{(k)} - \mathcal{G}^{(k)} \mathcal{R}_{t_{n+1}:t_{n+\varrho}}^{(k)}$ \label{Line:nonlinear_intermediate_states}\hfill
\textcolor{commentcolor}{//Obtain intermediate states.}  \\
$\hat{\mathbf{X}}_{t_{n+1}:t_{n+\varrho}}^{(k+1)}  = (1-\omega)\hat{\mathbf{X}}_{t_{n+1}:t_{n+\varrho}} ^{(k)} +\omega \hat{\mathbf{Z}}_{t_{n+1}:t_{n+\varrho}}^{(k+1)}$ \label{Line:nonlinear_R_end}\hfill
\textcolor{commentcolor}{// Update the nonlinear components.}  \\
\For{$i \in \{ n+\varrho,n+\varrho+1, \cdots,n+p-1\}$}{
\label{Line:linear_R_start}$\mathcal{R}_{t_{i+1}}^{(k)} = \hat{\mathbf{X}}_{t_{i+1}}^{(k)} -\mathcal{H}\Big(t_{i+1},t_{i},\hat{\mathbf{X}}_{t_i}^{(k+1)},\hat{\mathbf{X}}_{t_N}(\hat{\mathbf{X}}_{t_{n+\varrho-1}}^{(k)},\theta)\Big) $ in~\eqref{eq:refined_iteration_residual}\hfill
\textcolor{commentcolor}{// Linear residual.}  \\
$\hat{\mathbf{Z}}_{t_{i+1}}^{(k+1)}  = \hat{\mathbf{X}}_{t_{i+1}} ^{(k)} - \mathcal{G}^{(k)} \mathcal{R}_{t_{i+1}}^{(k)}$\label{Line:linear_intermediate_states}\hfill
\textcolor{commentcolor}{// Obtain intermediate states.}  \\
$\hat{\mathbf{X}}_{t_{i+1}}^{(k+1)}  = (1-\omega)\hat{\mathbf{X}}_{t_{i+1}} ^{(k)} + \omega\hat{\mathbf{Z}}_{t_{i+1}}^{(k+1)} $\label{Line:linear_R_end}\hfill
\textcolor{commentcolor}{// Update the linear components.}  \\
}
$\mathcal{E}_{t_{i+1}}^{(k)} = \frac{1}{D\sigma^2_{t_{i+1}}}||\mathbf{X}_{t_{i+1}}^{(k+1)}-\mathbf{X}_{t_{i+1}}^{(k)}||_F^2, \forall i \in \{n, \cdots, n+p-1\}$\label{Line:error_start}\hfill
\textcolor{commentcolor}{// Compute error for each component.}  \\
$\bar{\mathcal{E}}_{t_{i+1}} = \frac{1}{1+k}\sum_{j=0}^{k}\mathcal{E}_{t_{i+1}}^{(j)}, \forall i \in \{n, \cdots, n+p-1\}$\label{Line:error_mean}\hfill
\textcolor{commentcolor}{// Compute average error  for each component.}  \\
$s = \min\limits_{i+1}\big(\big\{i+1| \mathcal{E}_{t_{i+1}}^{(k)}>\delta^2\bar{\mathcal{E}}_{t_{i+1}} \forall i \in [n, n+p)\big\}\cup\big\{n+p\big\}\big)$\label{line:stride} \hfill \textcolor{commentcolor}{// The sliding stride.} \\
$\hat{\mathbf{X}}_{t_{i+1}}^{(k+1)}= \mathcal{H}\Big(t_{i+1},t_{i},\hat{\mathbf{X}}_{t_i}^{(k+1)},\hat{\mathbf{X}}_{t_N}(\hat{\mathbf{X}}_{t_{n+\varrho-1}}^{(k)},\theta)\Big), \forall i \in \{n+p, \cdots, n+p+s-1\}$\label{Line:Initilize} \hfill\textcolor{commentcolor}{// Initialize new components for  the next window.}\\
$e = \mathcal{M}\Big(\{\mathcal{E}_{t_{i+1}}^{(k)}|i = n,\cdots,n+p-1\}\Big)$\label{Line:adapt_strat}\hfill\textcolor{commentcolor}{// Compute the median value of the errors.}\\
$\delta = \beta\delta +  (1-\beta)e $\label{Line:adapt_end}\hfill\textcolor{commentcolor}{// Adaptive tolerance updating.}\\
$n \leftarrow n+ s, \quad k \leftarrow k+1,\quad p \leftarrow \min (p, N-n)$\hfill\textcolor{commentcolor}{// Update the iteration indexs.}\\
}
\KwReturn{$\hat{\mathbf{X}}_{t_{N}}^{(K)}$}
\end{algorithm}

\section{Quality--Speed Audit on Generation-Quality Benchmarks}\label{sec:experiments}
\subsection{Experiment Settings}
\noindent\textbf{Evaluation Metrics and Datasets.}
We report function evaluations (NFEs), iterations, wall-clock time, FID~\cite{fid}, and CLIP score~\cite{hessel2021clipscore}. For Stable Diffusion-v2 and LSUN Church, FID and CLIP are computed on 5000 random samples, using ImageNet-1k with classifier guidance $w=7.5$ and the LSUN model, respectively. For FLUX, FID is computed on 5000 FluxPrompting~\cite{FluxPrompting} samples for the 12B model~\cite{bfl2024flux}. The CIFAR-10 less-step study uses 50,000 samples, as reported in Table~\ref{tab:performance}. For generation-quality-focused evaluation, we additionally report GenEval~\cite{ghosh2024geneval}, DPG-Bench~\cite{hu2024ella}, and LongTextBench~\cite{xomni2025}, which assess semantic composition, dense instruction following, and long in-image text rendering, respectively.

\noindent\textbf{Models.} The evaluation covers latent-space Stable Diffusion-v2, CIFAR-10, the 12B FLUX flow-matching model, the pixel-space LSUN Church model, and VP/VE/EDM processes. The generation-quality study centers on FLUX and uses the Ming-Flash-Omni 2.0 generator~\cite{ai2025ming} as a second large-scale reference model.

\noindent\textbf{Algorithms.} ParaLin++ is applied to DDIM~\cite{DDIM} and DPMSolver~\cite{dpmsolver}. It is compared with ParaLin~\cite{paralin2026}, ParaDiGMS~\cite{shih2024parallel}, ParaSolver~\cite{parasolver}, and relevant few-step solvers. All compared baseline samplers are invoked through the official implementations integrated in the Diffusers library~\cite{diffusers}; we do not use custom re-implementations of the baseline solvers.

\noindent\textbf{Hyperparameters.} Following~\cite{parasolver,tang2024accelerating}, we use the best tolerances reported for ParaDiGMS and ParaSolver. For ParaLin++, we set $\varrho=8$ unless otherwise specified, use eight NVIDIA RTX~3090 GPUs for multi-GPU settings, and search $\delta$, $\beta$, and $\omega$ independently for each sampler. Additional details are given in the supplementary material.

\noindent\textbf{FLUX generation-quality benchmark protocol.}
The FLUX.1-dev study uses $1024{\times}1024$ generation with guidance scale $3.5$ on eight GPUs. The Euler sampler with 30 discrete diffusion steps is the reference. UniPC and DPM-Solver++ are evaluated with their standard sequential few-step samplers at $16,20,24,26$, and $30$ discrete diffusion steps. ParaLin++ keeps 30 discrete diffusion steps and uses the proposed parallel solver with tolerance $\delta=0.01$, nonlinear-transition parameter $\varrho=1$, parallel window size $p=100$, and a solver-step cap of $1000$. LongTextBench is the average of the English and Chinese text scores. Speedup is measured from the sampler/model forward-path time of the GenEval generation pass and excludes model loading and scoring; these timing measurements are used consistently across the detailed FLUX table.

\noindent\textbf{Ming-Flash-Omni 2.0 generation-quality benchmark protocol.}
The Ming-Flash-Omni 2.0 study uses the Ming-Flash-Omni 2.0 generator at $1024{\times}1024$ with guidance scale $5.0$ on eight GPUs. Euler, UniPC, and DPM-Solver++ are evaluated with their standard sequential samplers at the listed step settings. ParaLin++ keeps 30 discrete diffusion steps and uses the proposed parallel solver with tolerance $\delta=0.01$, nonlinear-transition parameter $\varrho=1$, parallel window size $p=100$, and a solver-step cap of $1000$. LongTextBench uses 320 prompts with 4 images each, seed 42, hybrid OCR, a bounding-box score threshold of $0.85$, and reports the average of the English and Chinese scores. DPG-Bench uses 1065 images with the mPLUG visual-question-answering model. GenEval uses 553 prompts with 4 images each, bf16 generation, and 8 evaluation workers. The table reports the complete $16$-, $20$-, $24$-, $26$-, and $30$-step quality matrix. All speedup values use the CFG-5 GenEval generation pass against the $30$-step Euler reference, excluding model loading and scoring.
\subsection{Experimental Results}
\noindent\textbf{Latent-space Speedup Comparison}.
Table~\ref{Table:SDv2_CMP} shows the experimental results on DDIM, DPMSolver, and their parallel variants when combined with ParaSolver, ParaDiGMS, ParaLin, and our ParaLin++, respectively. ParaLin++ demonstrates significant acceleration for DDIM and DPMSolver while maintaining comparable FID and CLIP scores. Notably, it achieves a remarkable wall-clock time speedup of up to $4.8\times$, setting a new benchmark for parallel diffusion methods. Although parallelization inherently trades computational efficiency for speed, leading to increased NFEs relative to sequential approaches, our method capitalizes on partial linearity to substantially reduce NFEs, thereby unleashing the parallel efficiency.
\begin{table}[!t]
  \caption{\label{Table:SDv2_CMP}Comparisons  on Stable Diffusion-v2, with classifier guidance w = 7.5.
  The visual comparisons are shown
  in the supplementary figures.
  }\vspace{-0.25cm}
  \centering
  \setlength{\tabcolsep}{2.5pt}
  \renewcommand{\arraystretch}{0.95}
  \resizebox{0.95\columnwidth}{!}{%
  \begin{tabular}{c|c|cccccc}
    \toprule[1.5pt]
     \multirow{2}{*}{Steps} &\multirow{2}{*}{Method} &\multicolumn{6}{c}{Stable Diffusion-v2}\\
    \cmidrule(lr){3-8}  
	&&~\begin{tabular}[c]{@{}c@{}}  Iters\end{tabular} $\downarrow$~&~\begin{tabular}[c]{@{}c@{}} NFE\end{tabular} $\downarrow$~   &~CLIP$\uparrow$~ &~FID$\downarrow$~ &~Time (s)$\downarrow$~ &~Speedup$\uparrow$~\\
    \midrule
    \multirow{5}{*}{50} &DDIM~\cite{DDIM}      &50 &50 &25.6 &57.2 &6.3\hfill\progressbar{63} & 1.0$\times$  \\
     &DDIM + ParaDiGMS~\cite{shih2024parallel}      &21 &132 &25.6 &56.9 &3.3 \hfill\progressbar{33}  & 1.9$\times$  \\
     &DDIM + ParaSolver~\cite{parasolver}     &13 &83 &25.6 &56.9 &1.9 \hfill\progressbar{19} & 3.3$\times$  \\
     &DDIM + ParaLin \cite{paralin2026}      &13&73&25.6 &57.1 &1.7\hfill\progressbar{17} & 3.7$\times$  \\
     \rowcolor{paralinrow} &DDIM + ParaLin++      &\textbf{10}&\textbf{69}&25.5 &57.0 &\textbf{1.3} \hfill\progressbar{13} & \textbf{4.8$\times$}  \\

    \midrule
    \multirow{5}{*}{25} &DDIM~\cite{DDIM}      &25 &25 &25.4 &62.9 &3.4 \hfill\progressbar{34} & 1.0$\times$  \\
     &DDIM + ParaDiGMS~\cite{shih2024parallel}     &18 &53  &25.4  &62.8 &3.2 \hfill\progressbar{32}  & 1.2$\times$ \\
     &DDIM + ParaSolver~\cite{parasolver}       &11 &49  &25.4  &61.9 &1.6 \hfill\progressbar{16}  & 2.1$\times$ \\
     &DDIM + ParaLin \cite{paralin2026}      &11&42  &25.4  &61.8 &1.55 \hfill\progressbar{15.5}  & 2.2$\times$ \\
     \rowcolor{paralinrow} &DDIM + ParaLin++      &\textbf{9}&\textbf{40}  &25.4  &61.9 &\textbf{1.3} \hfill\progressbar{13}  & \textbf{2.6}$\times$ \\
    \midrule
    \multirow{5}{*}{50} &DPMSolver~\cite{dpmsolver}      &50 &50 &25.6 &57.2 &6.3 \hfill\progressbar{63} & 1.0$\times$ \\
     &DPMSolver + ParaDiGMS~\cite{shih2024parallel}      &25 &132 &25.6 &57.2 &3.3 \hfill\progressbar{33} & 1.8$\times$  \\
     &DPMSolver + ParaSolver~\cite{parasolver}       &15 &96 &25.6 &57.1 &2.2 \hfill\progressbar{22} & 3.1$\times$  \\
     &DPMSolver + ParaLin \cite{paralin2026}      &14 &93 &25.6 &57.1 &1.9\hfill\progressbar{19} & 3.3$\times$  \\
     \rowcolor{paralinrow} &DPMSolver + ParaLin++      &\textbf{10} &\textbf{77} &25.6 &57.1 &\textbf{1.35} \hfill\progressbar{13.5} & \textbf{4.7$\times$}  \\
    \midrule
    \multirow{5}{*}{25} &DPMSolver~\cite{dpmsolver}      &25 &25 &25.4 &62.3 &3.2 \hfill\progressbar{32}& 1.0$\times$  \\
     &DPMSolver + ParaDiGMS~\cite{shih2024parallel}      &15 &81  &25.4 &62.3 &2.2 \hfill\progressbar{22} & 1.5$\times$  \\
     &DPMSolver + ParaSolver~\cite{parasolver}       &11 &58 &25.4 &62.2 &1.5\hfill\progressbar{15} & 2.1$\times$   \\
     &DPMSolver + ParaLin \cite{paralin2026}      &10&55 &25.4 &62.1 &1.4\hfill\progressbar{14} & 2.3$\times$   \\
     \rowcolor{paralinrow} &DPMSolver + ParaLin++      &\textbf{8}&\textbf{46} &25.4 &62.1 &\textbf{1.1}\hfill\progressbar{11} & \textbf{2.9}$\times$   \\
     \bottomrule[1.5pt]
  \end{tabular}}
  \vspace{-0.2cm}
\end{table}

\noindent\textbf{Pixel-space Speedup Comparison}.
Similarly, Table~\ref{Table:LSUNChurch_CMP} shows that ParaLin++ consistently surpasses existing methods, delivering wall-clock speedups of $3.1\times$--$4.5\times$ on the LSUN Church model across the tested step configurations. For the 50-step and 25-step DDIM settings, it achieves $4.5\times$ and $3.1\times$ speedups while preserving a nearly identical FID score ($15.7$ versus $15.5$), respectively. Notably, ParaLin++'s parallelism reduces iterations by $3.1\times$--$5.6\times$ relative to the sequential baselines and, among the parallel variants, keeps NFE low; in the 50-step DDIM case, it uses only 9 iterations and 64 NFEs versus the baseline's 50. This acceleration is important for real-time applications, such as interactive design tools, where latency directly impacts user experience.
\begin{table}[!t]
\caption{\label{Table:LSUNChurch_CMP}Quantitative comparisons of different methods on LSUN Church over $5000$  samples.  DPMSolver is not integrated with the LSUN model in the Diffusers library; thus we exclude it.}
\vspace{-0.25cm}
  \centering
  \setlength{\tabcolsep}{2.5pt}
  \renewcommand{\arraystretch}{1.0}
  \resizebox{0.95\columnwidth}{!}{%
  \begin{tabular}{c|c|ccccc}
    \toprule[1.5pt]
     \multirow{2}{*}{Steps} &\multirow{2}{*}{Method} &\multicolumn{5}{c}{LSUN Church}\\
    \cmidrule(lr){3-7}  
    &&~\begin{tabular}[c]{@{}c@{}} Iters\end{tabular} $\downarrow$~&~\begin{tabular}[c]{@{}c@{}}NFE\end{tabular} $\downarrow$~    &~FID$\downarrow$~ &~Time (s)$\downarrow$~ &~Speedup$\uparrow$~\\
    \midrule
    \multirow{5}{*}{50} &DDIM~\cite{DDIM}      &50 &50  &15.5 &1.8 \hfill\progressbar{6.8} &1.0$\times$ \\
     &DDIM + ParaDiGMS~\cite{shih2024parallel}      &23 &202  &15.8 &1.5 \hfill\progressbar{3.2} &1.2$\times$ \\
     &DDIM + ParaSolver~\cite{parasolver}      &14 &73  &15.7 &0.8 \hfill\progressbar{0.8} &2.2$\times$ \\
     &DDIM + ParaLin \cite{paralin2026}      &14 &67  &15.7 &0.6 \hfill\progressbar{0.6} &3.0$\times$ \\
     \rowcolor{paralinrow} &DDIM + ParaLin++      &9 &64  &15.7 &\textbf{0.4} \hfill\progressbar{0.4} &\textbf{4.5$\times$} \\
    \midrule
    \multirow{5}{*}{25} &DDIM~\cite{DDIM}      &25 &25  &15.6 &1.1 \hfill\progressbar{5.1} &1.0$\times$ \\
     &DDIM + ParaDiGMS~\cite{shih2024parallel}     &15 &96    &15.7 &0.8 \hfill\progressbar{3.0}  &1.4$\times$ \\
     &DDIM + ParaSolver~\cite{parasolver}      &10 &44    &15.7 &0.5 \hfill\progressbar{0.5}  &2.2$\times$ \\
     &DDIM + ParaLin \cite{paralin2026}      &10 &39    &15.7 &0.45 \hfill\progressbar{0.45}  &2.4$\times$ \\
     \rowcolor{paralinrow} &DDIM + ParaLin++      &8 &33    &15.7 &\textbf{0.36} \hfill\progressbar{0.36}  &\textbf{3.1$\times$} \\
    \bottomrule[1.5pt]
  \end{tabular}}
\end{table}

\noindent\textbf{Less-Step Sampling.} Table~\ref{tab:performance} evaluates 10-step sampling on CIFAR-10 with 50,000 samples. ParaLin++ combined with DPM-Solver-v3~\cite{zheng2023dpm} reaches an FID of $2.64$ and a $2.41\times$ speedup, matching the 25-step DPM-Solver-v3 quality while improving over the compared 10-step baselines.
\begin{table}[!t]
  \centering
  \caption{Less-step sampling on the CIFAR-10 model with 50k samples.}
  \label{tab:performance}
  \setlength{\tabcolsep}{3.5pt}
  \begin{tabular}{lccc}
    \toprule
    Method & Iters & FID$\downarrow$ & Speedup$\uparrow$\\
    \midrule
    DPM-Solver-v3~\cite{zheng2023dpm} & 25 & 2.64 & $1.00\times$\\
    DPM-Solver-v3~\cite{zheng2023dpm} & 10 & 3.40 & $2.43\times$\\
    UniPC~\cite{zhao2023unipc} & 10 & 3.40 & $2.41\times$\\
    PD~\cite{Progressivedistillation} & 10 & 3.10 & $2.48\times$\\
    CD~\cite{consistencymodel} & 10 & 2.90 & $2.46\times$\\
    \rowcolor{paralinrow} ParaLin++~+ DPM-Solver-v3 & 10 & \textbf{2.64} & $2.41\times$\\
    \bottomrule
  \end{tabular}
\end{table}


\noindent\textbf{Generation-Quality Benchmark Evaluation.}
\begin{table*}[!t]
\centering
\caption{Motivating comparison on FLUX. Few-step sequential solvers alter the 30-step discretization and sharply degrade semantic composition, complex prompt following, and long-text rendering. ParaLin++ keeps the same 30 discrete diffusion steps and parallelizes the solver iterations. For sequential solvers, ``Iters'' equals the number of discrete steps; for ParaLin++, it reports the average number of parallel solver iterations.}
\vspace{-0.25cm}
\label{Table:Industrial_Motivation_FLUX}
\resizebox{0.9\textwidth}{!}{%
\begin{tabular}{llrccccc}
\toprule
Method & Sampling regime & Discrete steps & Iters$\downarrow$ & GenEval$\uparrow$ & DPG-Bench$\uparrow$ & LongTextBench$\uparrow$ & Speedup$\uparrow$\\
\midrule
Euler~\cite{bfl2024flux} & Sequential reference & 30 & 30 & \textbf{0.66} & 83.97 & \textbf{0.27} & $1.00\times$\\
UniPC~\cite{zhao2023unipc} & Few-step & 16 & 16 & 0.52 & 77.46 & 0.09 & $1.81\times$\\
DPMSolver++~\cite{dpmsolverpp} & Few-step & 16 & 16 & 0.50 & 75.75 & 0.08 & $1.85\times$\\
\rowcolor{paralinrow} ParaLin++ & Step-parity parallel & 30 & \textbf{16.37} & 0.64 & \textbf{84.24} & 0.26 & $1.68\times$\\
\bottomrule
\end{tabular}}
\end{table*}
Prior FID-oriented evaluations of DDIM, DPMSolver, DPMSolver++, and UniPC support the common expectation that few-step solvers can preserve distributional statistics~\cite{DDIM,dpmsolver,dpmsolverpp,zhao2023unipc}. Table~\ref{Table:Industrial_Motivation_FLUX} therefore asks whether this expectation transfers to GenEval, DPG-Bench, and LongTextBench for the same FLUX solvers. UniPC and DPMSolver++ with 16 discrete steps lose $21\%$--$24\%$ relative GenEval score, $6.5$--$8.2$ DPG-Bench points, and $67\%$--$70\%$ of the LongTextBench score. ParaLin++ instead keeps all 30 discrete diffusion steps and parallelizes the iterative solve: it closely preserves the Euler-level generation-quality profile ($0.64$ versus $0.66$ GenEval, $84.24$ versus $83.97$ DPG-Bench, and $0.26$ versus $0.27$ LongTextBench) and achieves a $1.68\times$ GenEval-forward speedup. Hence, the acceleration is obtained by removing latency in the solver iteration rather than by shortening the diffusion discretization on which prompt-level and in-image text fidelity depends.

This evaluation protocol also reflects the objectives of recent large-scale generators. Ming-Flash-Omni 2.0~\cite{ai2025ming}, the sparse unified successor to Ming-Omni~\cite{Mingomni2025}, reports $0.94$ GenEval and $86.98$ DPG-Bench, and its post-training explicitly optimizes realism, instruction adherence, and aesthetic quality together with task-specific rewards. These are generation-quality axes that FID does not reliably capture. Accordingly, the FLUX study and Ming-Flash-Omni 2.0 reference establish a more realistic quality test bed for judging whether a fast diffusion sampler merely matches a distributional statistic or actually preserves generation quality.
In addition to the motivating 16-step comparison, Table~\ref{Table:Industrial_Benchmark_Detailed} records the complete 16-, 20-, 24-, 26-, and 30-step generation-quality benchmark matrices for FLUX and Ming-Flash-Omni 2.0. The table separates the number of discrete diffusion steps from the number of solver iterations: sequential samplers use one iteration per step, whereas ParaLin++ keeps 30 discrete steps and reports its average number of parallel solver iterations separately. For Ming-Flash-Omni 2.0, ParaLin++ reaches a $1.21\times$ GenEval generation-pass speedup over the 30-step Euler reference while retaining the Euler-level profile: GenEval is $0.87$ versus $0.88$, DPG-Bench is $86.99$ versus $86.86$, and LongTextBench is $0.95$ versus $0.95$.

\begin{table*}[!t]
\centering
\caption{Detailed generation-quality benchmark results on FLUX and Ming-Flash-Omni 2.0. Within each model block, speedup is measured against that model's 30-step Euler baseline on the same GenEval generation pass, excluding model loading and scoring. For sequential samplers, ``Iters'' equals the number of discrete steps; for ParaLin++, it denotes the number of parallel solver iterations.}
\vspace{-0.25cm}
\label{Table:Industrial_Benchmark_Detailed}
\resizebox{0.9\textwidth}{!}{%
\begin{tabular}{llcccccc}
\toprule
\multicolumn{8}{c}{\textbf{FLUX.1-dev~\cite{bfl2024flux}}}\\
\midrule
Method & Sampling regime & Steps & Iters & GenEval$\uparrow$ & DPG-Bench$\uparrow$ & LongTextBench$\uparrow$ & Speedup$\uparrow$\\
\midrule
Euler~\cite{bfl2024flux} & Sequential reference & 30 & 30 & 0.66 & 83.97 & 0.27 & $1.00\times$\\
\midrule
\multirow{5}{*}{UniPC~\cite{zhao2023unipc}} & \multirow{5}{*}{Few-step}
 & 16 & 16 & 0.52 & 77.46 & 0.09 & $1.81\times$\\
 & & 20 & 20 & 0.55 & 79.67 & 0.13 & $1.49\times$\\
 & & 24 & 24 & 0.57 & 80.98 & 0.15 & $1.26\times$\\
 & & 26 & 26 & 0.58 & 81.35 & 0.16 & $1.16\times$\\
 & & 30 & 30 & 0.60 & 82.14 & 0.19 & $1.01\times$\\
\midrule
\multirow{5}{*}{DPMSolver++~\cite{dpmsolverpp}} & \multirow{5}{*}{Few-step}
 & 16 & 16 & 0.50 & 75.75 & 0.08 & $1.85\times$\\
 & & 20 & 20 & 0.54 & 78.41 & 0.11 & $1.50\times$\\
 & & 24 & 24 & 0.58 & 80.06 & 0.14 & $1.27\times$\\
 & & 26 & 26 & 0.58 & 80.56 & 0.15 & $1.17\times$\\
 & & 30 & 30 & 0.62 & 81.43 & 0.17 & $1.02\times$\\
\midrule
\rowcolor{paralinrow} ParaLin++ & Step-parity parallel & 30 & 16.37 & 0.64 & 84.24 & 0.26 & $1.68\times$\\
\midrule
\multicolumn{8}{c}{\textbf{Ming-Flash-Omni 2.0~\cite{ai2025ming}}}\\
\midrule
Method & Sampling regime & Steps & Iters & GenEval$\uparrow$ & DPG-Bench$\uparrow$ & LongTextBench$\uparrow$ & Speedup$\uparrow$\\
\midrule
Euler~\cite{bfl2024flux} & Sequential reference & 30 & 30 & 0.88 & 86.86 & 0.95 & $1.00\times$\\
\midrule
\multirow{5}{*}{UniPC~\cite{zhao2023unipc}} & \multirow{5}{*}{Few-step}
 & 16 & 16 & 0.69 & 84.38 & 0.82 & $1.80\times$\\
 & & 20 & 20 & 0.75 & 85.29 & 0.91 & $1.47\times$\\
 & & 24 & 24 & 0.74 & 85.55 & 0.94 & $1.20\times$\\
 & & 26 & 26 & 0.81 & 85.07 & 0.94 & $1.13\times$\\
 & & 30 & 30 & 0.83 & 85.77 & 0.95 & $0.98\times$\\
\midrule
\multirow{5}{*}{DPMSolver++~\cite{dpmsolverpp}} & \multirow{5}{*}{Few-step}
 & 16 & 16 & 0.73 & 84.20 & 0.80 & $1.81\times$\\
 & & 20 & 20 & 0.70 & 85.63 & 0.87 & $1.48\times$\\
 & & 24 & 24 & 0.70 & 85.91 & 0.92 & $1.23\times$\\
 & & 26 & 26 & 0.86 & 85.93 & 0.93 & $1.13\times$\\
 & & 30 & 30 & 0.85 & 86.62 & 0.95 & $0.99\times$\\
\midrule
\rowcolor{paralinrow} ParaLin++ & Step-parity parallel & 30 & 24.69 & 0.87 & 86.99 & 0.95 & $1.21\times$\\
\bottomrule
\end{tabular}}
\end{table*}

Table~\ref{Table:FLUX_CFG_Sweep} extends the FLUX audit to different guidance strengths. Across the five CFG values, ParaLin++ obtains GenEval speedups of $1.55\times$--$1.86\times$ over the same-CFG 30-step Euler reference while preserving $97\%$--$99\%$ of the Euler GenEval score. Its GenEval scores are $0.59$--$0.66$ and its LongTextBench scores are $0.15$--$0.26$, close to Euler's respective ranges of $0.60$--$0.67$ and $0.17$--$0.27$. In contrast, the 20-step UniPC and DPMSolver++ configurations yield only $0.43$--$0.55$ GenEval and $0.06$--$0.13$ LongTextBench across the same CFG values. The DPG-Bench scores, now complete at all five CFG values, follow the same pattern: ParaLin++ tracks the Euler reference closely across CFG $0$--$14$ ($82.18$--$84.24$ versus Euler's $82.25$--$83.97$), whereas the 20-step UniPC and DPMSolver++ configurations drop to $68.85$--$79.67$. The reference scores also show that guidance affects benchmarks differently: Euler's GenEval score peaks at CFG $10.5$, whereas its LongTextBench score peaks at CFG $3.5$. ParaLin++ remains close to the reference at every guidance setting. \textbf{(Takeaway)} Across CFG $0$--$14$, the completed DPG-Bench matrix confirms that step-parity survives rising guidance while 20-step few-step solvers do not.

\begin{table}[!t]
\centering
\caption{FLUX.1-dev CFG-sweep results. All benchmark cells (GenEval, DPG-Bench, and LongTextBench) are complete for all five CFG values; LTB abbreviates LongTextBench and DPM++ abbreviates DPMSolver++. ``Iters'' equals the discrete-step count for sequential baselines and denotes the average measured parallel solver iterations for ParaLin++. Speedup is measured against the 30-step Euler reference at the same CFG on the GenEval generation pass, excluding model loading and scoring.} \vspace{-0.2cm}
\label{Table:FLUX_CFG_Sweep}
\setlength{\tabcolsep}{2pt}
\renewcommand{\arraystretch}{0.98}
\resizebox{0.85\columnwidth}{!}{%
\begin{tabular}{ccccccc}
\toprule
CFG & Method & Iters & GenEval$\uparrow$ & DPG$\uparrow$ & LTB$\uparrow$ & Speedup$\uparrow$\\
\midrule
\multirow{4}{*}{0}
 & Euler~\cite{bfl2024flux} & 30 & 0.60 & 82.25 & 0.17 & $1.00\times$\\
 & UniPC~\cite{zhao2023unipc} & 20 & 0.46 & 78.97 & 0.08 & $1.49\times$\\
 & DPM++~\cite{dpmsolverpp} & 20 & 0.43 & 77.71 & 0.06 & $1.49\times$\\
\rowcolor{paralinrow} & ParaLin++ & 14.79 & 0.59 & 82.18 & 0.15 & $1.86\times$\\
\midrule
\multirow{4}{*}{3.5}
 & Euler~\cite{bfl2024flux} & 30 & 0.66 & 83.97 & 0.27 & $1.00\times$\\
 & UniPC~\cite{zhao2023unipc} & 20 & 0.55 & 79.67 & 0.13 & $1.49\times$\\
 & DPM++~\cite{dpmsolverpp} & 20 & 0.54 & 78.41 & 0.11 & $1.50\times$\\
\rowcolor{paralinrow} & ParaLin++ & 16.92 & 0.64 & 84.24 & 0.26 & $1.68\times$\\
\midrule
\multirow{4}{*}{7}
 & Euler~\cite{bfl2024flux} & 30 & 0.66 & 83.29 & 0.25 & $1.00\times$\\
 & UniPC~\cite{zhao2023unipc} & 20 & 0.49 & 72.60 & 0.07 & $1.49\times$\\
 & DPM++~\cite{dpmsolverpp} & 20 & 0.48 & 68.85 & 0.06 & $1.49\times$\\
\rowcolor{paralinrow} & ParaLin++ & 17.85 & 0.64 & 83.22 & 0.23 & $1.55\times$\\
\midrule
\multirow{4}{*}{10.5}
 & Euler~\cite{bfl2024flux} & 30 & 0.67 & 83.21 & 0.22 & $1.00\times$\\
 & UniPC~\cite{zhao2023unipc} & 20 & 0.52 & 77.89 & 0.07 & $1.48\times$\\
 & DPM++~\cite{dpmsolverpp} & 20 & 0.52 & 76.54 & 0.06 & $1.49\times$\\
\rowcolor{paralinrow} & ParaLin++ & 17.03 & 0.66 & 83.23 & 0.21 & $1.62\times$\\
\midrule
\multirow{4}{*}{14}
 & Euler~\cite{bfl2024flux} & 30 & 0.65 & 82.52 & 0.25 & $1.00\times$\\
 & UniPC~\cite{zhao2023unipc} & 20 & 0.54 & 75.88 & 0.08 & $1.49\times$\\
 & DPM++~\cite{dpmsolverpp} & 20 & 0.54 & 74.17 & 0.07 & $1.49\times$\\
\rowcolor{paralinrow} & ParaLin++ & 17.13 & 0.64 & 82.75 & 0.23 & $1.61\times$\\
\bottomrule
\end{tabular}}
\end{table}

Figures~\ref{fig:industrial_visual_flux} provide benchmark-specific visual inspections of the same quality axes. The FLUX examples use the exact 30-step reference/few-step settings compared above: Euler and ParaLin++ retain 30 discrete steps, while UniPC and DPMSolver++ use 16 steps. The examples show typical failures hidden by an FID-only criterion, such as missing or malformed objects, weaker dense-instruction compliance, and unreadable in-image text. The step-parity ParaLin++ outputs remain visually close to the 30-step Euler reference. 

\begin{figure}[!t]
\centering
\includegraphics[width=\columnwidth]{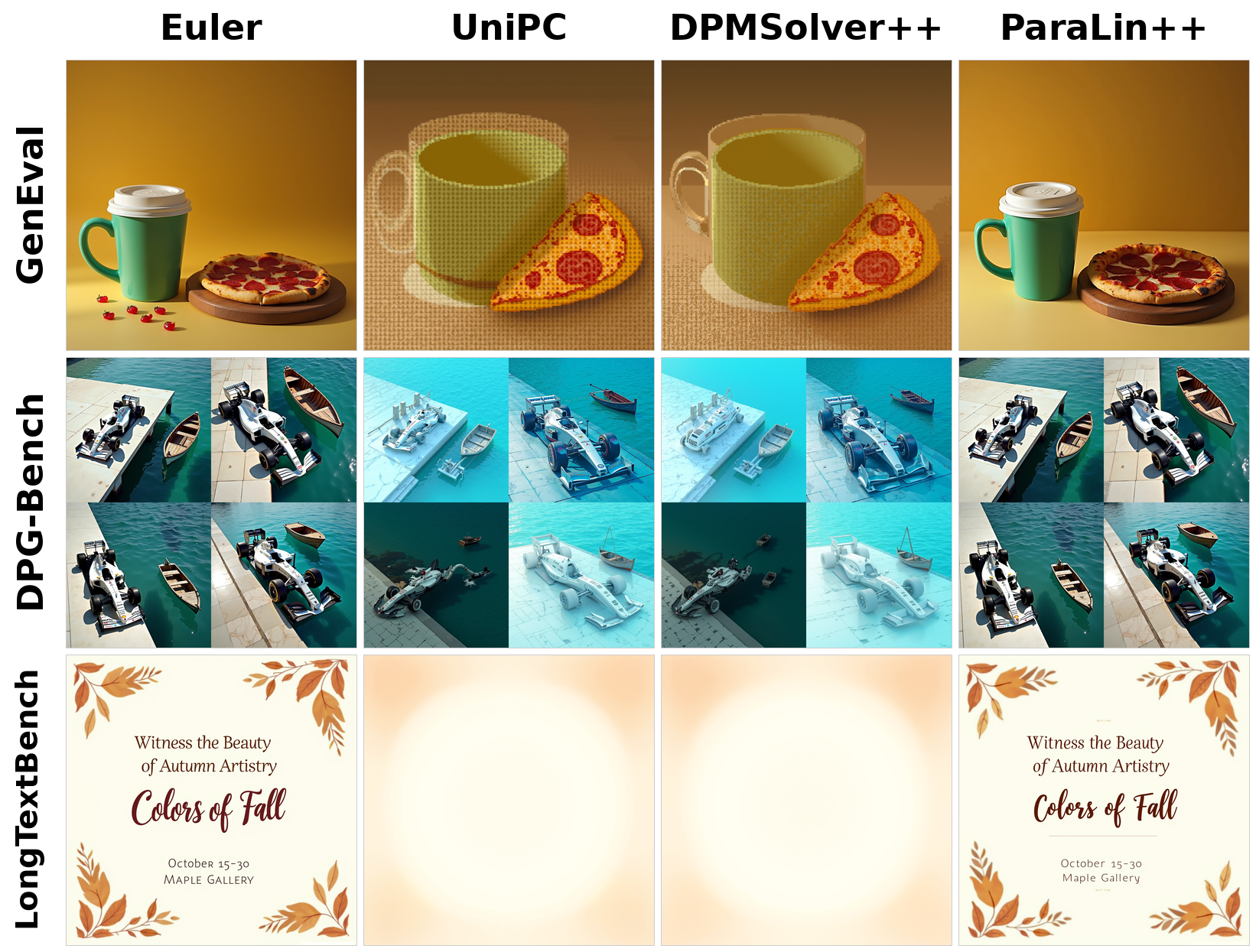}
\caption{Generation-quality benchmark visual comparison on FLUX.1-dev. Each row is one representative benchmark prompt, and the columns show 30-step Euler, 16-step UniPC, 16-step DPMSolver++, and 30-step ParaLin++ (17 parallel solver iterations). Few-step examples retain plausible low-level appearance but can violate semantic composition, dense instructions, or in-image text requirements; ParaLin++ remains visually aligned with the 30-step Euler reference.}
\label{fig:industrial_visual_flux}
\end{figure}


Figures~\ref{fig:flux_cfg_sweep_geneval} and~\ref{fig:flux_cfg_sweep_longtextbench} show fixed-prompt comparisons over all five CFG values. The same prompt and sample index are used in every column and row, so differences come from the method and CFG rather than from selecting a different example for each method. LongTextBench is represented by an English prompt because its reported metric averages the English and Chinese subsets. Figure~\ref{fig:flux_cfg_sweep_dpg} gives corresponding DPG-Bench samples across all five guidance settings for a representative prompt; the complete CFG-$0$--$14$ DPG-Bench matrix is reported in Table~\ref{Table:FLUX_CFG_Sweep}.

\begin{figure}[!t]
\centering
\includegraphics[width=0.96\columnwidth]{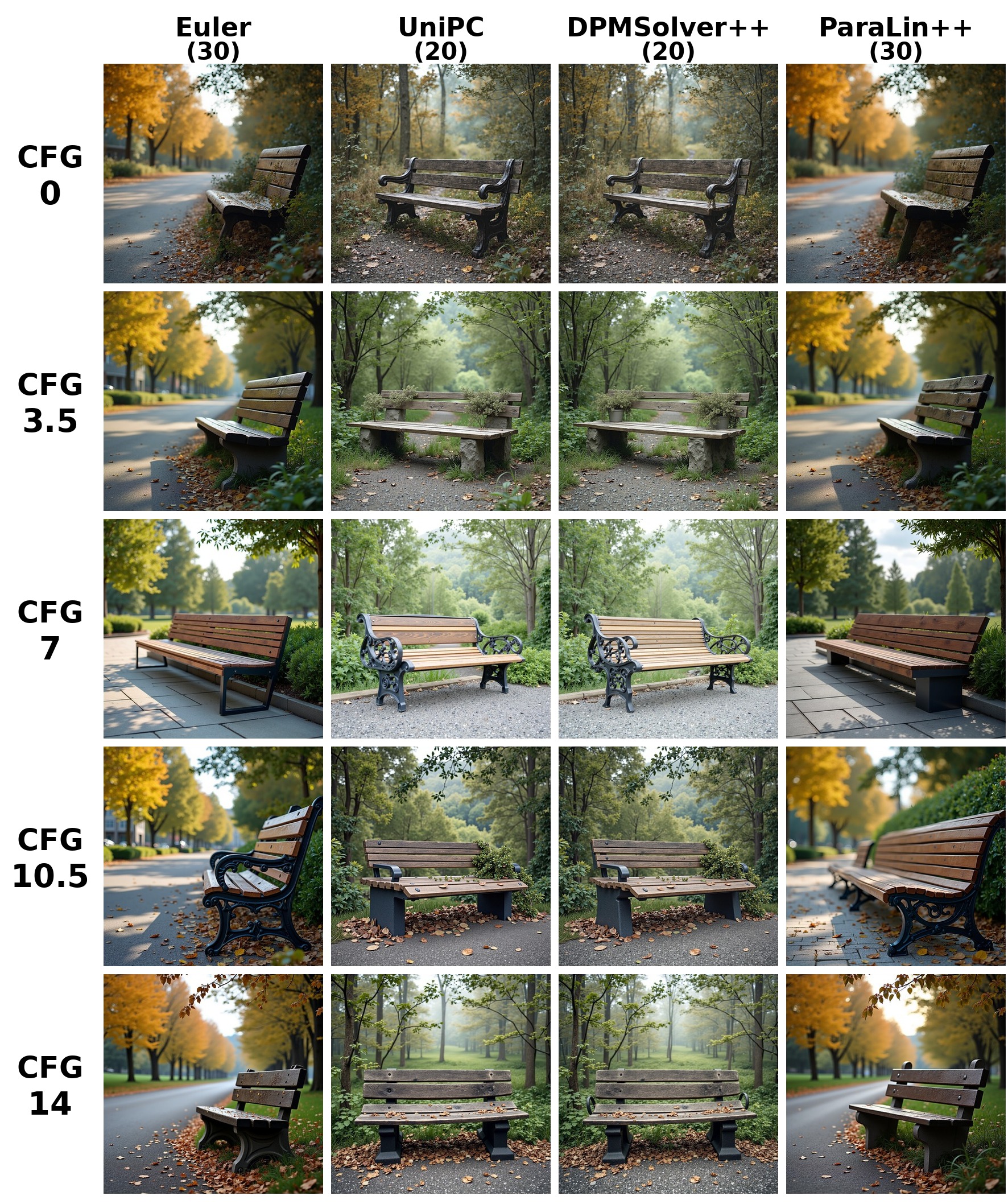}
\vspace{-0.25cm}
\caption{GenEval CFG-sweep visual comparison on FLUX.1-dev. Rows are CFG $0$, $3.5$, $7$, $10.5$, and $14$; columns are 30-step Euler, 20-step UniPC, 20-step DPMSolver++, and 30-step ParaLin++. All cells use prompt 00000 and sample 0000.}
\label{fig:flux_cfg_sweep_geneval}
\end{figure}

\begin{figure}[!t]
\centering
\includegraphics[width=0.96\columnwidth]{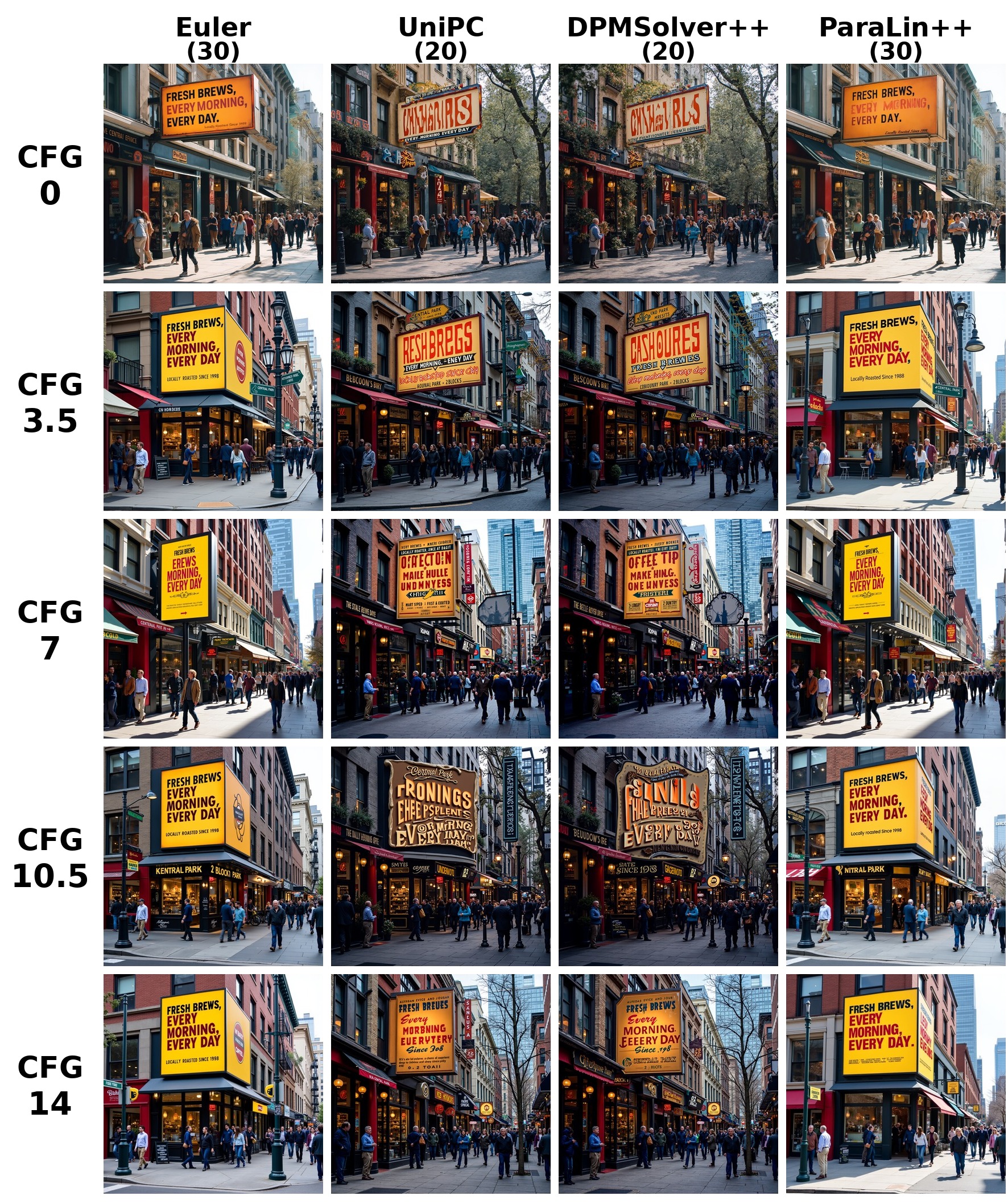}\vspace{-0.25cm}
\caption{LongTextBench CFG-sweep visual comparison on FLUX.1-dev, using the English prompt 0000 with variant 1. Rows are CFG $0$, $3.5$, $7$, $10.5$, and $14$; columns are 30-step Euler, 20-step UniPC, 20-step DPMSolver++, and 30-step ParaLin++. The reported LongTextBench metric averages the English and Chinese scores.}
\label{fig:flux_cfg_sweep_longtextbench}
\end{figure}

\begin{figure}[!t]
\centering
\includegraphics[width=0.96\columnwidth]{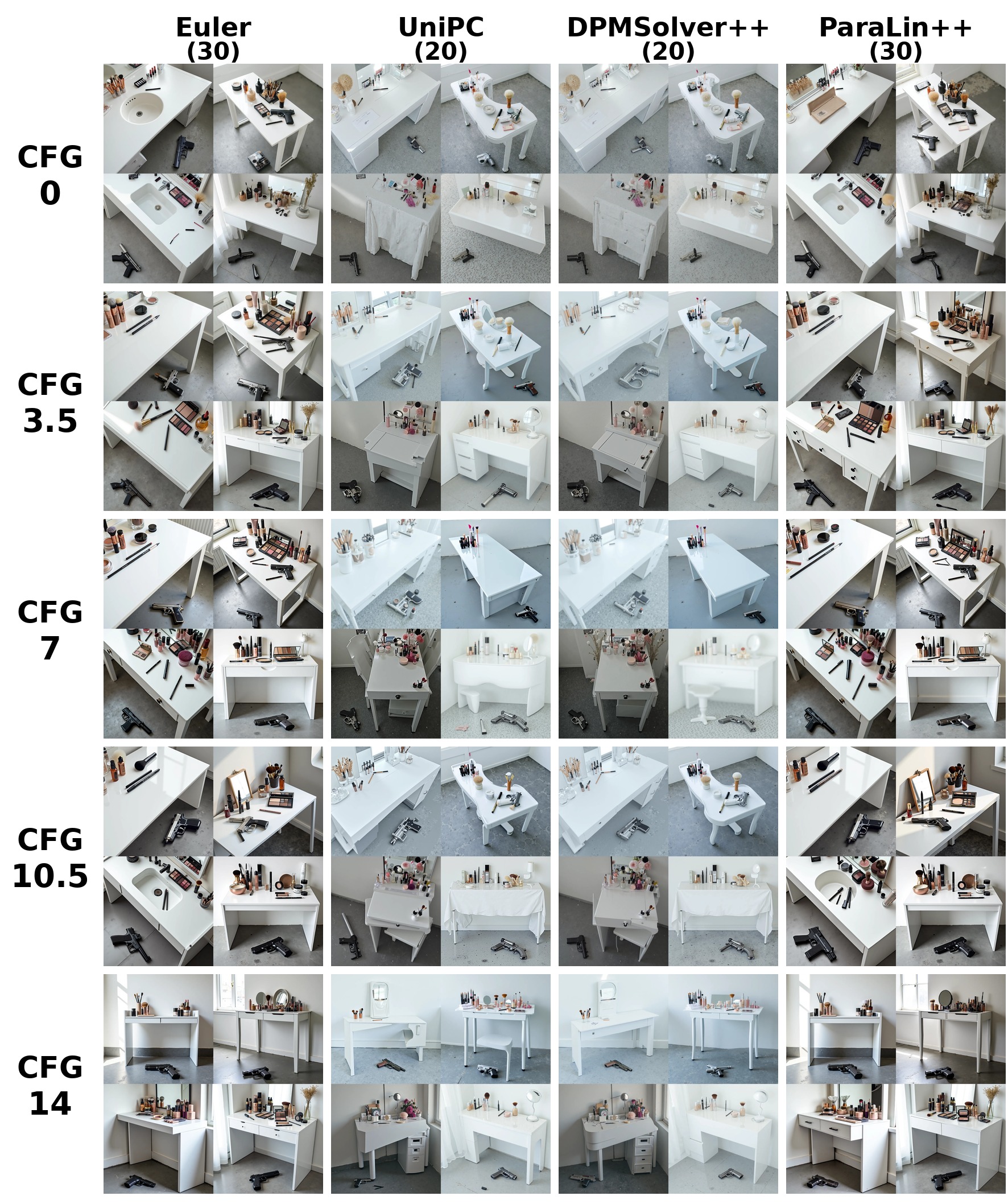}
\vspace{-0.25cm}
\caption{DPG-Bench CFG-sweep visual comparison on FLUX.1-dev. Rows are CFG $0$, $3.5$, $7$, $10.5$, and $14$; columns are 30-step Euler, 20-step UniPC, 20-step DPMSolver++, and 30-step ParaLin++. All cells use prompt 100. The complete CFG-$0$--$14$ DPG-Bench matrix is reported in Table~\ref{Table:FLUX_CFG_Sweep}.}
\label{fig:flux_cfg_sweep_dpg}
\end{figure}

\noindent\textbf{Single GPU for Acceleration}
Parallel sampling methods are often criticized for trading excessive computational cost for speed, which is impractical for users with limited resources. However, our experiments in Table \ref{Table:SDv2_CMP_Single_GPU} demonstrate the real-world viability of our approach on a single GPU. Unlike competing parallel methods like ParaDiGMS, which often fail to provide any speedup or even decelerate the process in this setting, our ParaLin++ achieves consistent and significant wall-clock acceleration across all baseline samplers. For instance, it speeds up the 100-step DDIM by $4.2\times$ and the 100-step DPMSolver by $3.7\times$. This is accomplished while maintaining virtually identical generation quality (FID and CLIP scores).  This validates ParaLin++ as a practical and potent accelerator for everyday users, effectively overcoming the typical computational barriers of parallel sampling even in resource-constrained, single-GPU environments.
\begin{table}[!t]
  \caption{Comparisons on Stable Diffusion-v2 when only using one GPU, with classifier guidance $w=7.5$. ``DPM'' denotes DPM-Solver; ParaDiGMS, ParaSolver, ParaLin, and ParaLin++ indicate the corresponding parallel wrappers of the sequential sampler.}
  \label{Table:SDv2_CMP_Single_GPU}
  \vspace{-0.25cm}
  \centering
  \scriptsize
  \setlength{\tabcolsep}{2.1pt}
  \renewcommand{\arraystretch}{1.0}
  \resizebox{0.98\columnwidth}{!}{%
  \begin{tabular}{cc|cccccc}
    \toprule[1.5pt]
    Sampler & Method & Iters$\downarrow$ & NFE$\downarrow$ & CLIP$\uparrow$ & FID$\downarrow$ & Time (s)$\downarrow$ & Speedup$\uparrow$\\
    \midrule
    \multirow{5}{*}{DDIM--50}
     & Sequential &50 &50 &25.6 &57.2 &6.3 & 1.0$\times$\\
     & ParaDiGMS &34 &67 &25.6 &56.9 &7.8 & 0.8$\times$\\
     & ParaSolver &18 &53 &25.6 &56.9 &6.0 & 1.05$\times$\\
     & ParaLin &14 &51 &25.6 &57.1 &5.7 & 1.1$\times$\\
     \rowcolor{paralinrow} & ParaLin++ &\textbf{8} &\textbf{32} &25.6 &57.1 &\textbf{3.2} & \textbf{2.0$\times$}\\
    \midrule
    \multirow{5}{*}{DDIM--100}
     & Sequential &100 &100 &25.6 &56.3 &10.1 & 1.0$\times$\\
     & ParaDiGMS &66 &131 &25.5 &56.2 &12.0 & 0.8$\times$\\
     & ParaSolver &35 &102 &25.5 &56.3 &8.5 & 1.2$\times$\\
     & ParaLin &18 &70 &25.6 &56.3 &6.1 & 1.7$\times$\\
     \rowcolor{paralinrow} & ParaLin++ &\textbf{11} &\textbf{36} &25.5 &56.4 &\textbf{2.4} & \textbf{4.2$\times$}\\
    \midrule
    \multirow{5}{*}{DPM--50}
     & Sequential &50 &50 &25.6 &57.2 &5.9 & 1.0$\times$\\
     & ParaDiGMS &27 &78 &25.6 &57.2 &6.1 & 0.9$\times$\\
     & ParaSolver &33 &65 &25.6 &57.2 &6.0 & 0.9$\times$\\
     & ParaLin &31 &56 &25.5 &56.9 &5.4 & 1.1$\times$\\
     \rowcolor{paralinrow} & ParaLin++ &\textbf{11} &\textbf{28} &25.6 &57.1 &\textbf{2.3} & \textbf{2.5$\times$}\\
    \midrule
    \multirow{5}{*}{DPM--100}
     & Sequential &100 &100 &25.5 &56.7 &9.9 & 1.0$\times$\\
     & ParaDiGMS &66 &131 &25.6 &56.6 &11.9 & 0.8$\times$\\
     & ParaSolver &64 &127 &25.5 &56.6 &10.7 & 0.9$\times$\\
     & ParaLin &64 &121 &25.6 &56.7 &10.0 & 1.0$\times$\\
     \rowcolor{paralinrow} & ParaLin++ &\textbf{14} &\textbf{40} &25.6 &56.5 &\textbf{2.7} & \textbf{3.7$\times$}\\
    \bottomrule[1.5pt]
  \end{tabular}}
  \vspace{-0.3cm}
\end{table}

\noindent\textbf{Ablation Study}. Table~\ref{Table:Ablation_Study} isolates the value of the ParaLin++ components on a 100-step DDIM task. Full nonlinearity reduces iterations but multiplies NFEs and becomes slower than the sequential baseline. Partial nonlinearity removes $93.3\%$ of these NFEs relative to the fully nonlinear variant and already restores wall-clock speedup. Adaptive stopping slightly reduces both iterations and NFEs, while SOR gives the strongest reduction, reaching 15 iterations, 117 NFEs, and a $6.5\times$ speedup. The trend confirms that the largest gain comes from ParaLin's partial-linear formulation, while ParaLin++ further improves convergence by lowering iterations with negligible extra computation.

\noindent\textbf{Generalization on VP, VE, and EDM  Process.}
We investigated the impact of the VP, VE, and EDM diffusion processes by converting the sequential sampler in Alg. 1 of work~\cite{EDM} into our parallel version. Adopting the Tab. 1 configurations specified in work~\cite{EDM}, we then analyzed the effect of the nonlinear degree $\varrho$ on FID performance across these three processes using the provided ImageNet dataset. As shown in Tab.~\ref{tab:combined_impact_merged}, our parallel sampling method demonstrates its applicability and effectiveness across all three diffusion processes. For each process, our parallel approach consistently reduces the number of sampling iterations while achieving comparable or improved FID scores compared to the sequential baseline. This highlights the robustness and efficiency of our method, making it a viable accelerator for various diffusion model formulations. We notice that our method yields a better FID score for every schedule and a substantially larger gain in the VP process. This improvement likely stems from the inherently stiff nature of the VP-ODE, combined with the step-parallel updates of our parallel solver. The step-parallel updates sever the temporal dependence chain across steps, thus effectively curtail the accumulation of truncation error across steps~\cite{EDM}. As a result, our solver acts as an implicit numerical stabilizer for the stiff VP-ODE, which typically exhibits significant truncation error that degrades the performance of conventional VP samplers.
\begin{table*}[t]
\caption{Ablation study on Stable Diffusion v2. The table isolates nonlinearity, adaptive stopping, and SOR in the 100-step DDIM task; speedup is measured relative to the sequential DDIM baseline.}
\label{Table:Ablation_Study}
\centering
\setlength{\tabcolsep}{5pt}
\renewcommand{\arraystretch}{1.18}
\resizebox{0.98\textwidth}{!}{%
\begin{tabular}{lccccccc}
\toprule[1.5pt]
\multirow{2}{*}{Configuration} &
\multicolumn{3}{c}{Solver components} &
\multicolumn{4}{c}{100-step DDIM results}\\
\cmidrule(lr){2-4}\cmidrule(lr){5-8}
 & Nonlinearity & Stopping criterion & SOR &
Iters$\downarrow$ & NFE$\downarrow$ & Time (s)$\downarrow$ & Speedup$\uparrow$\\
\midrule
Sequential DDIM & -- & -- & -- &
100 & 100 & 10.6 & $1.0\times$\\
\midrule
Full nonlinearity &
Full ($\varrho=100$) & Fixed ($\beta=1.0$) & No ($\omega=1.0$) &
72 & 2100 & 24.4 & $0.43\times$\\
Partial nonlinearity &
Partial ($\varrho=8$) & Fixed ($\beta=1.0$) & No ($\omega=1.0$) &
18 & 140 & 2.1 & $5.0\times$\\
Partial + adaptive stopping &
Partial ($\varrho=8$) & Adaptive ($\beta=0.999$) & No ($\omega=1.0$) &
17 & 139 & 2.0 & $5.3\times$\\
\rowcolor{paralinrow} ParaLin++ (partial + adaptive + SOR) &
Partial ($\varrho=8$) & Adaptive ($\beta=0.999$) & Yes ($\omega=1.04$) &
\textbf{15} & \textbf{117} & \textbf{1.6} & $\mathbf{6.5\times}$\\
\bottomrule[1.5pt]
\end{tabular}%
}
\end{table*}
\begin{table}[h!]
\centering
\caption{The impact of nonlinear degree $\varrho$ on FID across EDM, VE, and VP processes over  ImageNet dataset. We set $\delta=0.01, \beta=9e-4$ for the EDM  process, $\delta=0.01, \beta=9e-7$ for VP processes, and $\delta=0.1, \beta=9e-5$ for the VE process.}
\label{tab:combined_impact_merged}
\begin{tabular}{lcccccc}
\toprule
\multirow{2}{*}{\textbf{Method}} & \multicolumn{2}{c}{\textbf{EDM Process}} & \multicolumn{2}{c}{\textbf{VE Process}} & \multicolumn{2}{c}{\textbf{VP Process}} \\
\cmidrule(lr){2-3} \cmidrule(lr){4-5} \cmidrule(lr){6-7}
& \textbf{FID} & \textbf{Iters} & \textbf{FID} & \textbf{Iters} & \textbf{FID} & \textbf{Iters} \\
\midrule
Baseline & 11.1649 & 25 & 423.0628 & 25 & 104.14673   & 25 \\
$\varrho=4$  & 10.8811 & 22 & 422.9794 & 14 & 35.8963  & 12 \\
$\varrho=6$  & 10.8718 & 21 & 422.9293 & 14 & 41.1196  & 10 \\
$\varrho=8$  & 10.8621 & 20 & 422.8247 & 13 & 58.5588  & 8  \\
$\varrho=10$ & 10.8626 & 15 & 422.7672 & 10 & 58.5215  & 7  \\
$\varrho=12$ & 10.8618 & 12 & 422.7651 & 9  & 58.8340  & 7  \\
\bottomrule
\end{tabular}
\end{table}

\noindent\textbf{Nonlinear Degree $\varrho$ Analysis}: Table~\ref{tab:rho_ablation} conducted an ablation study by accelerating a 50-step DPMSolver on the Stable Diffusion v2 model using 8 GPUs.
As shown in the table, the FID and CLIP scores remain remarkably stable across a wide range of $\varrho$ values. This stability confirms our theoretical finding in Prop.~\ref{proposition:Unique_Solution} that $\varrho$ does not compromise the fidelity of the generated samples. Furthermore, the results reveal a clear trade-off between $\varrho$ and efficiency. While increasing $\varrho$ initially yields significant speedups, the gains diminish and eventually reverse. This is because a larger $\varrho$ requires more NFEs, as seen in the table. The increased NFE count results in a greater computational overhead for each GPU (proportional to NFEs / GPUs), which leads to a rapid increase in runtime after an optimal point (around $\varrho = 15$).
\begin{table}[t]
    \centering
    \caption{Impact of varying $\varrho$ on generation quality (FID, CLIP) and performance (NFEs, Speedup).}
    \label{tab:rho_ablation}
    \begin{tabular}{@{}lccccc@{}}
        \toprule
        \textbf{Method} & \textbf{FID}  & \textbf{CLIP}  & \textbf{Iters} & \textbf{NFEs} & \textbf{Speedup} \\
        \midrule
        Baseline & 57.23 & 25.64 & 50 & 50 & 1$\times$ \\
        \midrule
        $\varrho = 5$   & 57.18 & 25.55 & 18 & 82 & 2.9$\times$ \\
        $\varrho = 10$  & 57.25 & 25.59 & 12 & 101 & 3.0$\times$ \\
        $\varrho = 15$  & 57.14 & 25.61 & 11 & 123 & 3.1$\times$ \\
        $\varrho = 20$  & 57.22 & 25.60 & 12 & 160 & 2.6$\times$ \\
        $\varrho = 25$  & 57.14 & 25.53 & 12 & 202 & 2.1$\times$ \\
        $\varrho = 30$  & 57.06 & 25.58 & 12 & 223 & 2.1$\times$ \\
        $\varrho = 35$  & 57.18 & 25.62 & 12 & 245 & 1.8$\times$ \\
        $\varrho = 40$  & 57.15 & 25.61 & 12 & 264 & 1.8$\times$ \\
        $\varrho = 45$  & 57.16 & 25.59 & 12 & 275 & 1.6$\times$ \\
        $\varrho = 50$  & 57.14 & 25.57 & 12 & 284 & 1.6$\times$ \\
        \bottomrule
    \end{tabular}
\end{table}

\noindent\textbf{EMA Decay Rate $\beta$ and Tolerance $\delta$ Analysis.}
As shown in Table~\ref{Table:Effect_Of_Tor_on_FID}, the EMA decay rate ($\beta$) is critical for sample quality. A low value ($\beta=0.9$) causes noticeable FID degradation, whereas higher values ($\beta \geq 0.99$) consistently achieve FIDs comparable to or better than the baseline. Conversely, for the practical high-decay choices ($\beta\geq0.99$), the tolerance ($\delta$) has only a small impact on the final FID score. This reveals a key practical trade-off: $\delta$ can be relaxed (set higher) to significantly reduce the number of iterations and thus accelerate computation, for example, from 21 to 11 iterations when $\beta=1.0$, without a meaningful sacrifice in generation quality.
\begin{table}[h!]
\centering
  \caption{The effect of the tolerance $\delta$ and EMA decay rate $\beta$ on FID using Stable Diffusion-v2 over $5000$ random samples. Our ParaLin++ ($\varrho=8,p=40$) is employed for  DDIM.}
  \label{Table:Effect_Of_Tor_on_FID}
  \centering
\begin{tabular}{lcccccc}
\toprule
\multirow{2}{*}{\textbf{Method}} & \multicolumn{2}{c}{$\delta=0.01$} & \multicolumn{2}{c}{$\delta=0.05$} & \multicolumn{2}{c}{$\delta=0.1$} \\
\cmidrule(lr){2-3} \cmidrule(lr){4-5} \cmidrule(lr){6-7}
& \textbf{FID} & \textbf{Iters} & \textbf{FID} & \textbf{Iters} & \textbf{FID} & \textbf{Iters} \\
\midrule
DDIM & 57.2 & 50 & 57.2 & 50 & 57.2 & 50 \\
$\beta=0.9$  & 59.7 & 9 & 58.6 & 9 & 57.8  & 9 \\
$\beta=0.99$  & 57.8 & 13 & 57.1 & 13 & 56.9  & 11 \\
$\beta=0.999$  & 57.0 & 16 & 56.9 & 13 & 56.8  & 11  \\
$\beta=1.0$ & 57.0 & 21 & 57.1 & 13 & 57.2  &11  \\
\bottomrule
\end{tabular}
\end{table}

\noindent\textbf{Relaxation Factor $\omega$  Analysis.}
Table~\ref{tab:omega_ablation} demonstrates the trade-off between acceleration and stability controlled by the relaxation factor $\omega$. Increasing $\omega$ from 1.0 to 1.08 effectively reduces the number of iterations and, consequently, boosts the speedup from $3.81\times$ to a peak of $4.94\times$. However, this gain is bounded by stability. Sample quality, as measured by FID and CLIP, remains consistent with the baseline up to $\omega = 1.06$. At $\omega = 1.08$, a slight degradation in quality appears (FID 58.13), and by $\omega = 1.12$, the method becomes completely unstable, leading to a catastrophic collapse in generation quality (FID 289.54). This suggests an optimal value around $\omega = 1.06$, which achieves a substantial $4.76\times$ speedup while fully preserving the original sample fidelity.
\begin{table}[t]
    \centering
    \caption{Impact of varying $\omega$ on generation quality and performance. We set $\varrho = 8,p=40,\beta = 0.999$.}
    \label{tab:omega_ablation}
    \begin{tabular}{@{}lccccc@{}}
        \toprule
        \textbf{Method} & \textbf{FID}  & \textbf{CLIP}  & \textbf{Iters} & \textbf{NFEs} & \textbf{Speedup} \\
        \midrule
        DDIM & 57.23 & 25.64 & 50 & 50 & 1$\times$ \\
        \midrule
        $\omega = 1$   & 57.20 & 25.59 & 11 & 83 & 3.81$\times$ \\
        $\omega = 1.02$  & 57.38 & 25.57 & 11 & 79 & 3.89$\times$ \\
        $\omega = 1.04$  & 57.38 & 25.57 & 10 & 70 & 4.34$\times$ \\
        $\omega = 1.06$  & 57.24 & 25.56 & 9 & 67 & 4.76$\times$ \\
        $\omega = 1.08$  & 58.13 & 25.12 & 9 & 67 & 4.94$\times$ \\
        $\omega = 1.12$  & 289.54 & 15.03 & 8 & 64 & 4.3$\times$ \\
        \bottomrule
    \end{tabular}
\end{table}
\noindent Figure~\ref{fig:param_comparison} provides qualitative samples for the corresponding $\varrho$, $\beta$, and $\omega$ ablations.

\begin{figure}[!htbp]
    \centering

    \begin{minipage}{\linewidth}
        \centering
        \begin{minipage}{0.19\linewidth}
            \centering
            \includegraphics[width=\linewidth]{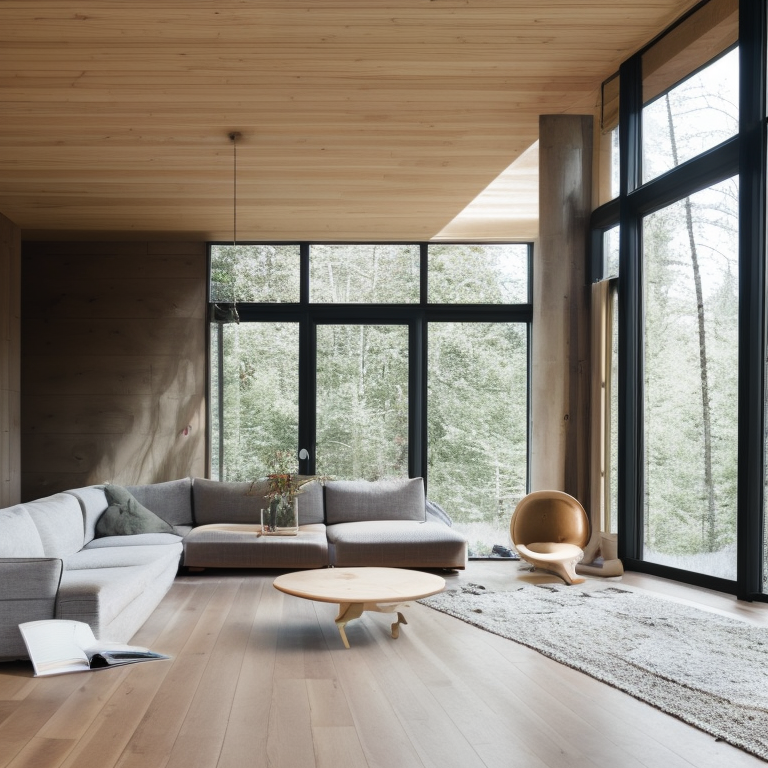}
            $\varrho=5$
        \end{minipage}
        \begin{minipage}{0.19\linewidth}
            \centering
            \includegraphics[width=\linewidth]{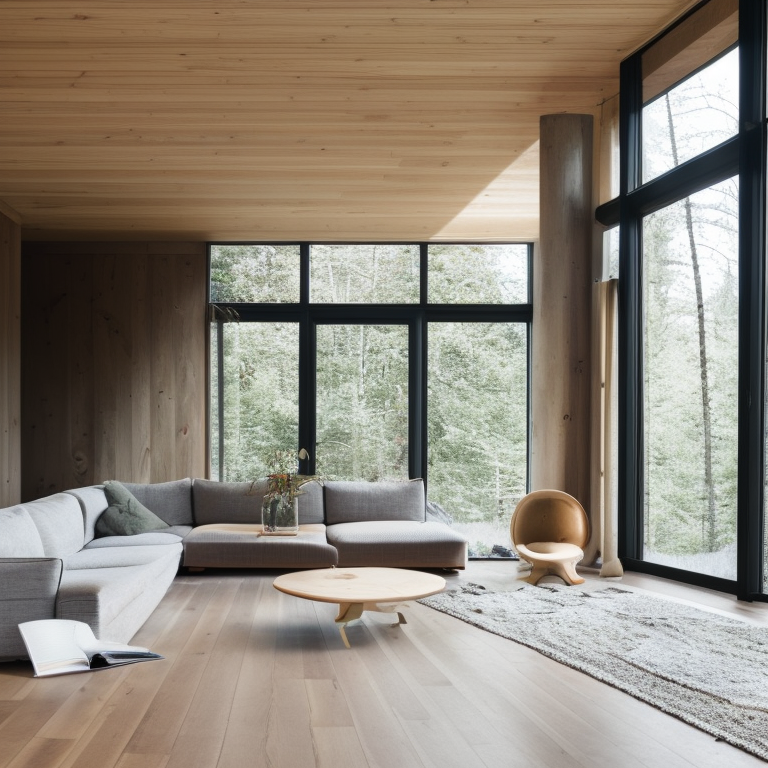}
            $\varrho=10$
        \end{minipage}
        \begin{minipage}{0.19\linewidth}
            \centering
            \includegraphics[width=\linewidth]{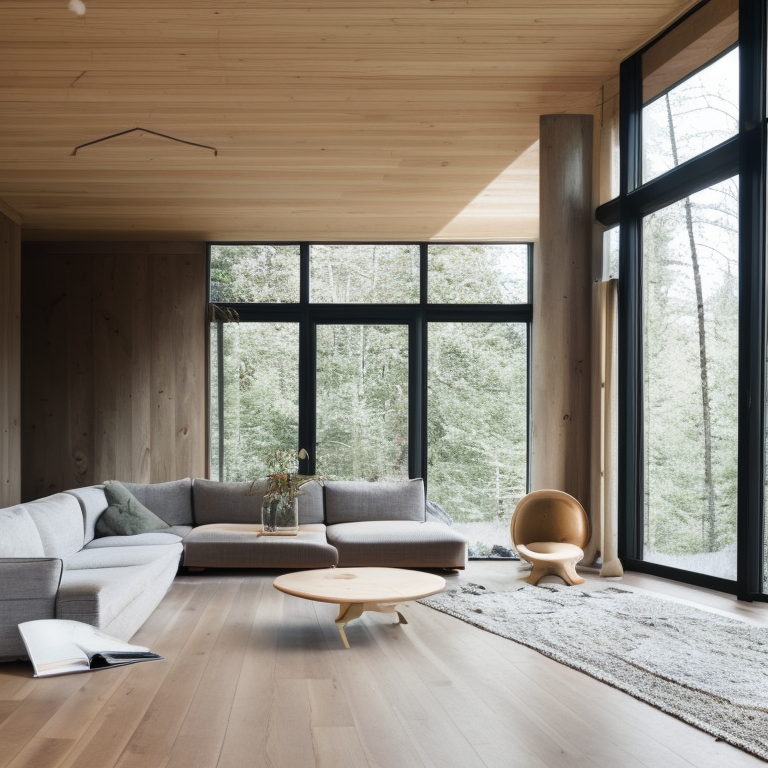}
             $\varrho=15$
        \end{minipage}
        \begin{minipage}{0.19\linewidth}
            \centering
            \includegraphics[width=\linewidth]{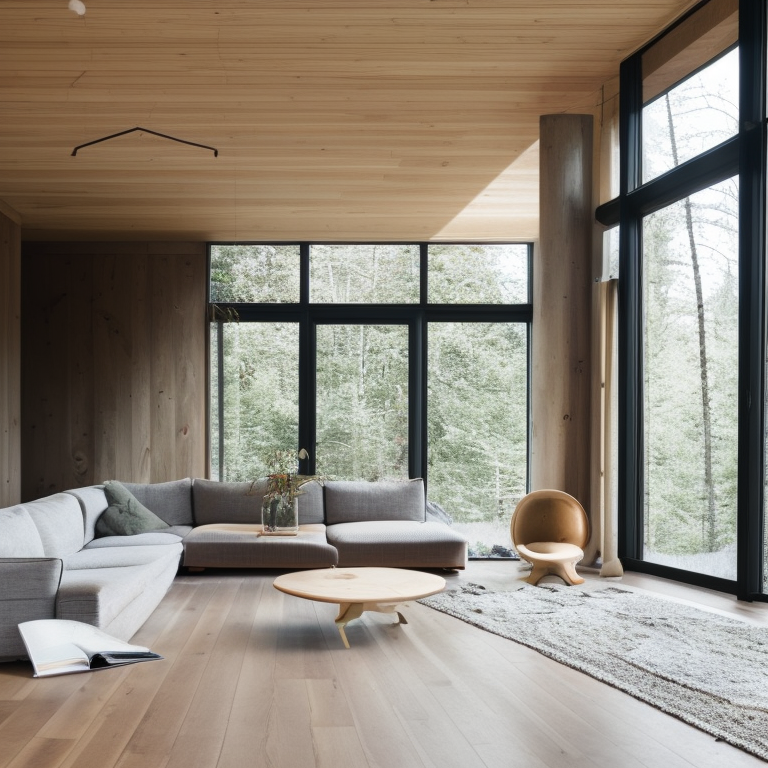}
            $\varrho=20$
        \end{minipage}
        \begin{minipage}{0.19\linewidth}
            \centering
            \includegraphics[width=\linewidth]{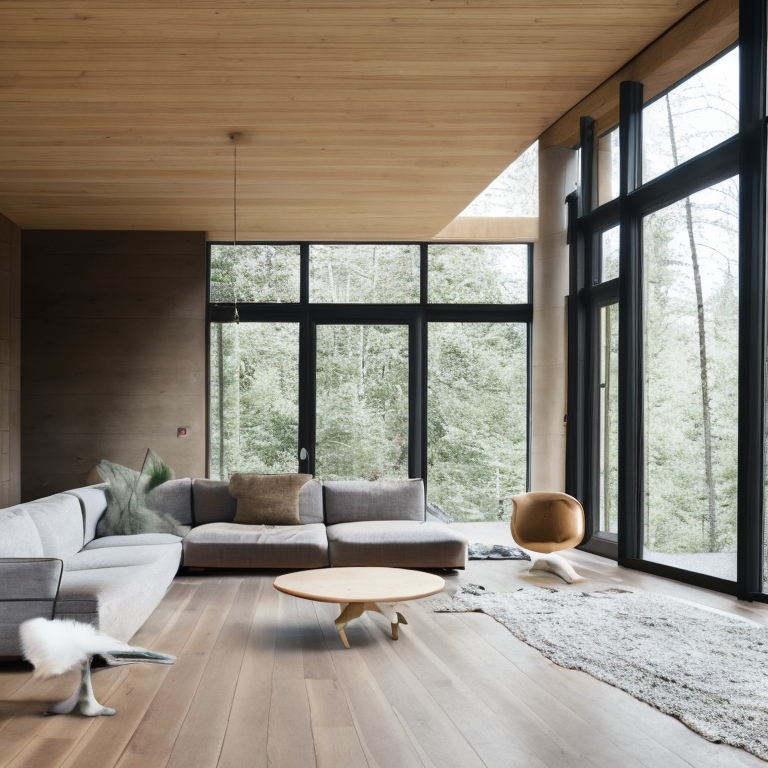}
            DDIM
        \end{minipage}
    \end{minipage}
    \vspace{0.1cm}

    \begin{minipage}{\linewidth}
        \centering
        \begin{minipage}{0.19\linewidth}
            \centering
            \includegraphics[width=\linewidth]{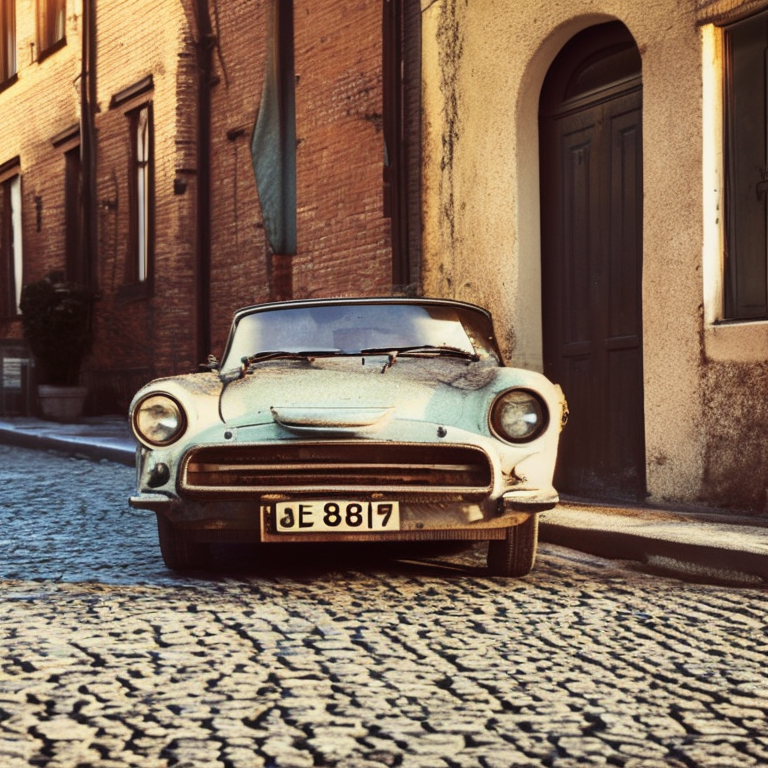}
            $\beta=0.9$
        \end{minipage}
        \begin{minipage}{0.19\linewidth}
            \centering
            \includegraphics[width=\linewidth]{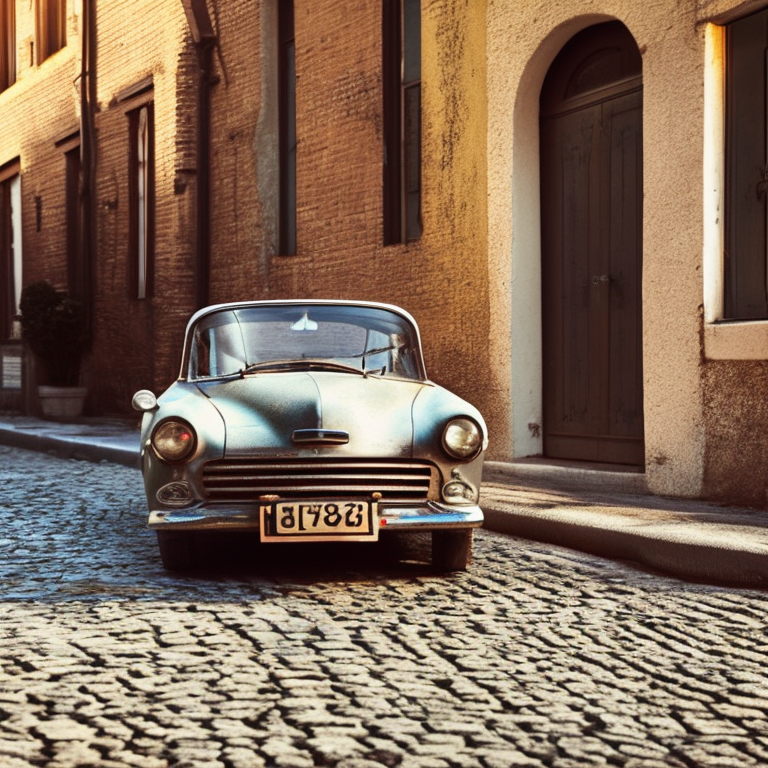}
            $\beta=0.99$
        \end{minipage}
        \begin{minipage}{0.19\linewidth}
            \centering
            \includegraphics[width=\linewidth]{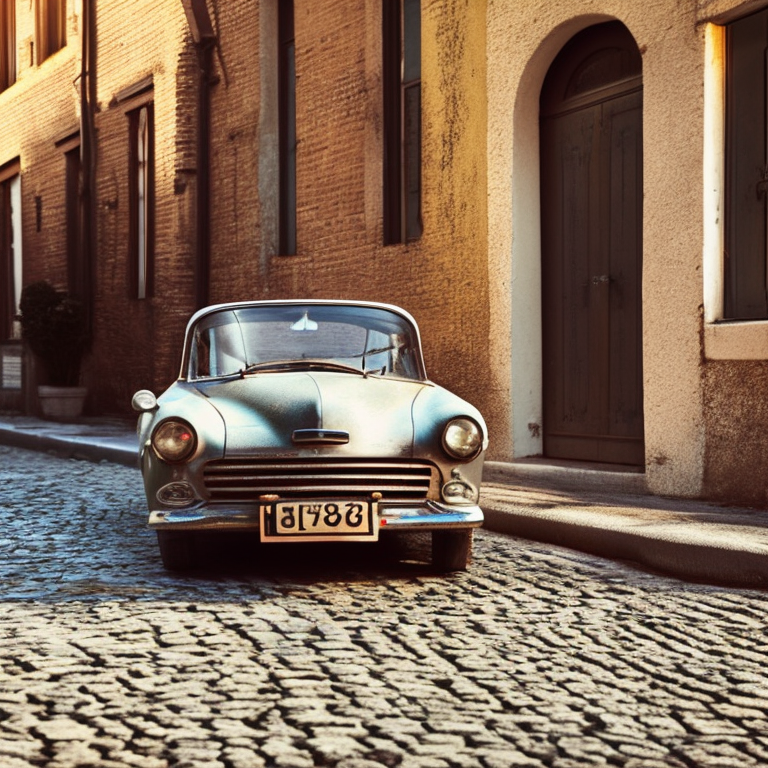}
            $\beta=0.999$
        \end{minipage}
        \begin{minipage}{0.19\linewidth}
            \centering
            \includegraphics[width=\linewidth]{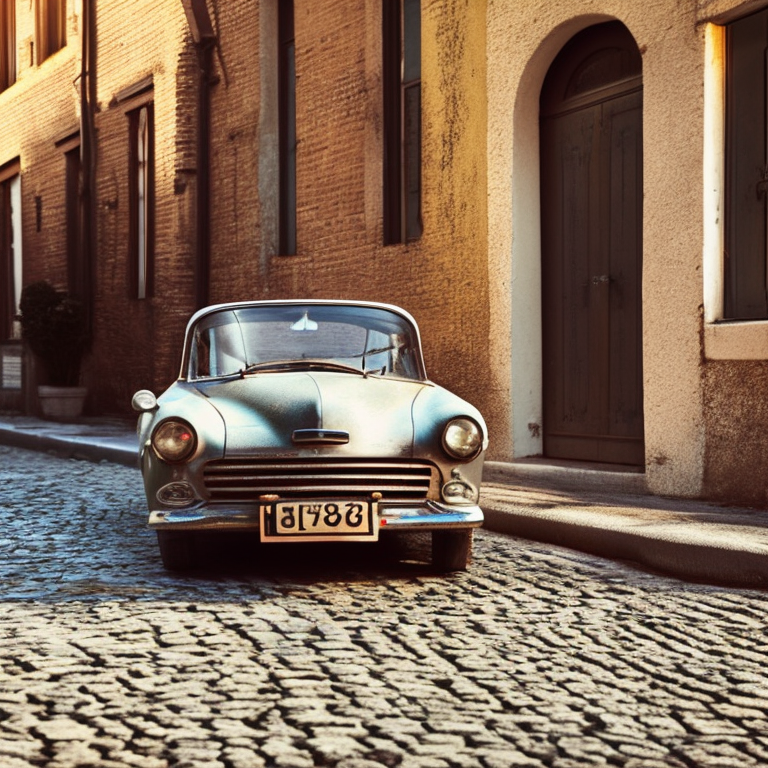}
            $\beta=1$
        \end{minipage}
        \begin{minipage}{0.19\linewidth}
            \centering
            \includegraphics[width=\linewidth]{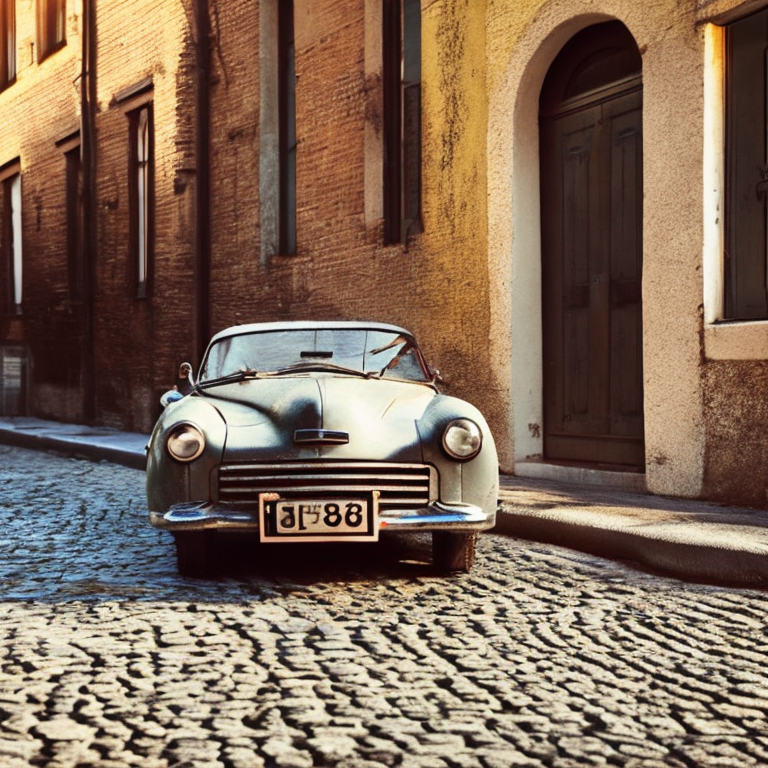}
            DDIM
        \end{minipage}
    \end{minipage}
    \vspace{0.1cm}

    \begin{minipage}{\linewidth}
        \centering
        \begin{minipage}{0.19\linewidth}
            \centering
            \includegraphics[width=\linewidth]{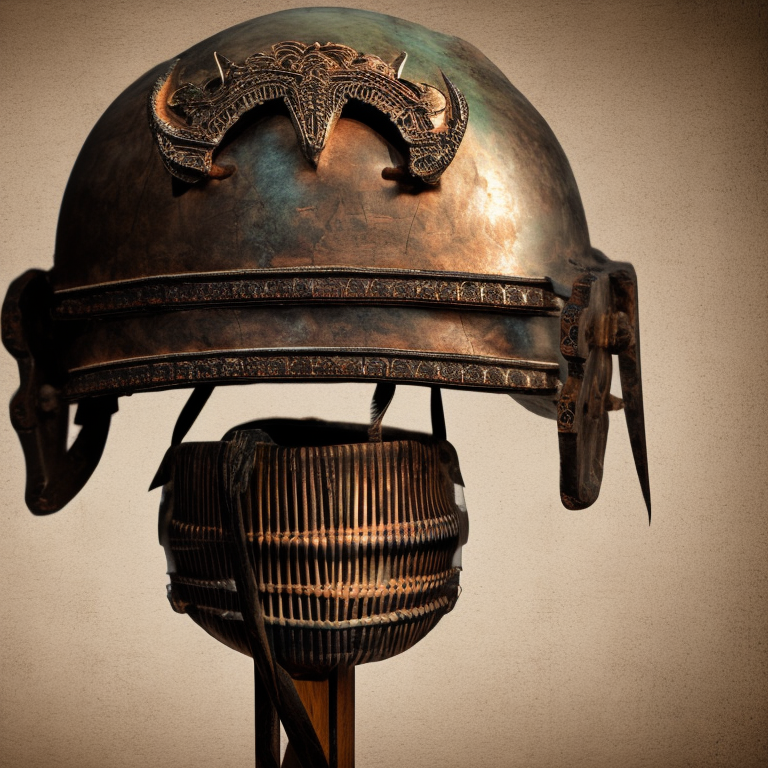}
            $\omega=1$
        \end{minipage}
        \begin{minipage}{0.19\linewidth}
            \centering
            \includegraphics[width=\linewidth]{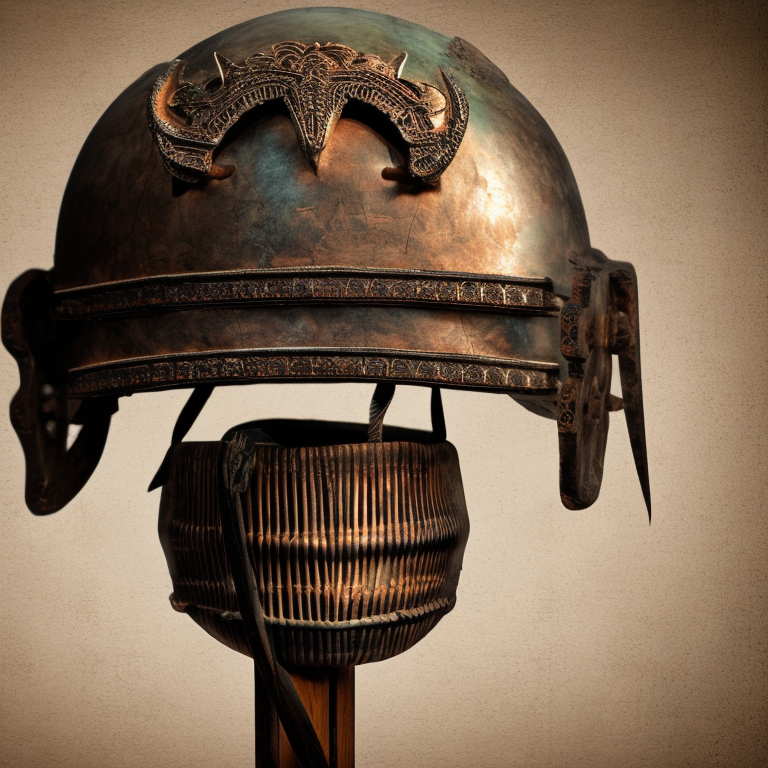}
            $\omega=1.02$
        \end{minipage}
        \begin{minipage}{0.19\linewidth}
            \centering
            \includegraphics[width=\linewidth]{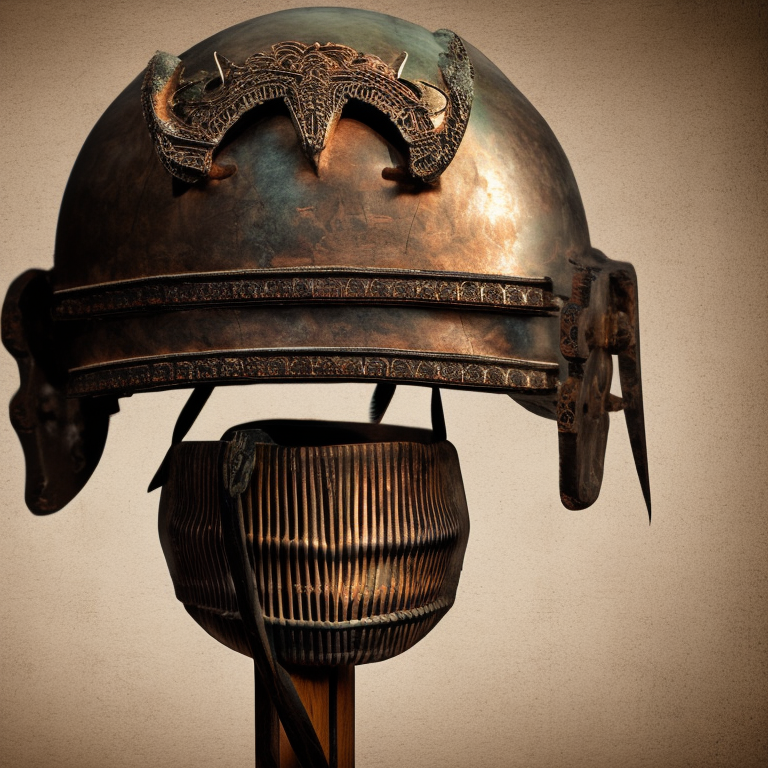}
            $\omega=1.04$
        \end{minipage}
        \begin{minipage}{0.19\linewidth}
            \centering
            \includegraphics[width=\linewidth]{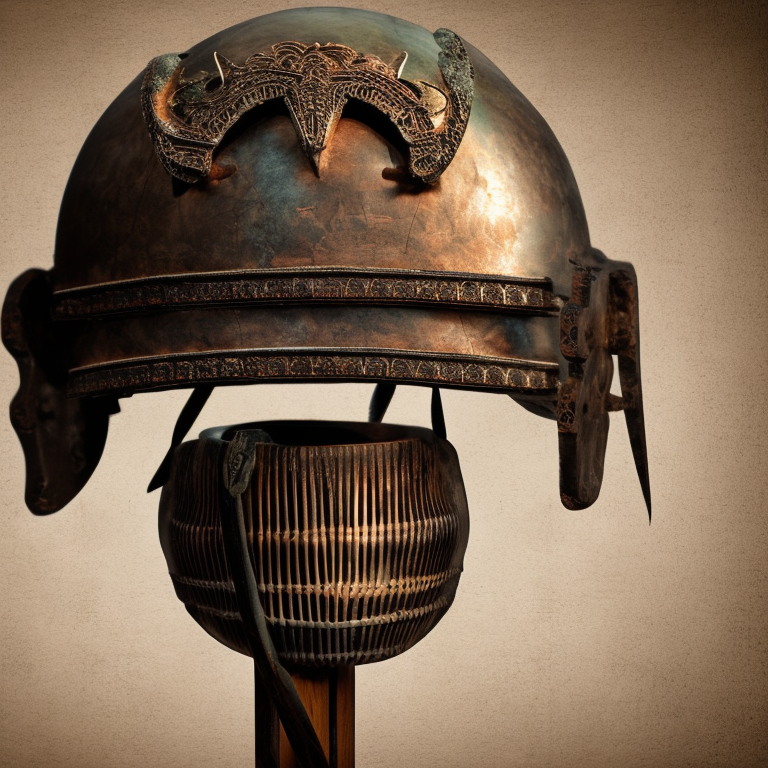}
            $\omega=1.06$
        \end{minipage}
        \begin{minipage}{0.19\linewidth}
            \centering
            \includegraphics[width=\linewidth]{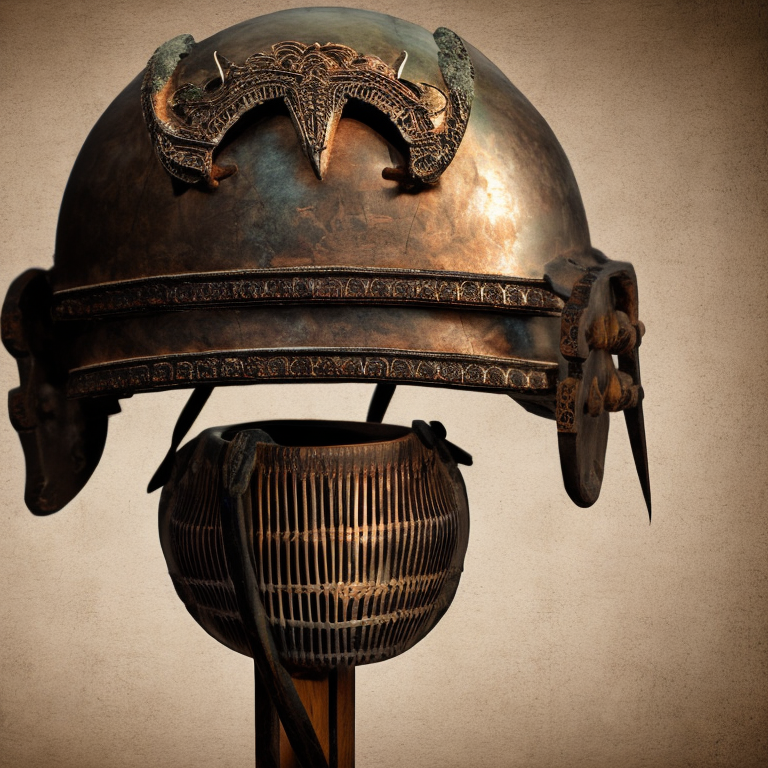}
            DDIM
        \end{minipage}
    \end{minipage}

     \caption{\label{fig:param_comparison}Ablation study on parameters.
     The first row  shows the results for different values of $\varrho$.
     The second row shows the results for different values of $\beta$.
     The third row  shows the results for different values of $\omega$.
     The last image in each row represents the baseline comparison.}
\end{figure}
\FloatBarrier
\section{Conclusion}\label{sec:conclusion}
We systematized training-free diffusion acceleration by the metrics that certify its quality-preservation claims, exposing heavy reliance on FID/CLIP-style evidence. Our generation-quality audit shows that representative few-step solvers can remain FID-close yet lose substantial semantic-composition, instruction-following, and long-text-rendering quality. To accelerate without shortening the trajectory, ParaLin++ strengthens ParaLin's partial-linear solver with adaptive nonlinearization, successive over-relaxation, sliding windows, and adaptive stopping, supported by stability and uniqueness guarantees. It preserves the full-step quality profile while reaching $1.68\times$ on FLUX.1-dev, $1.21\times$ on Ming-Flash-Omni 2.0, and $2.6\times$--$4.8\times$ on Stable Diffusion-v2/LSUN Church. Acceleration claims should therefore report generation-quality metrics alongside FID/CLIP and separate discrete steps from solver iterations.

\bibliographystyle{IEEEtranS}
\bibliography{paralin_plus}

\begin{IEEEbiography}[{\includegraphics[width=1in,height=1.25in,clip,keepaspectratio]{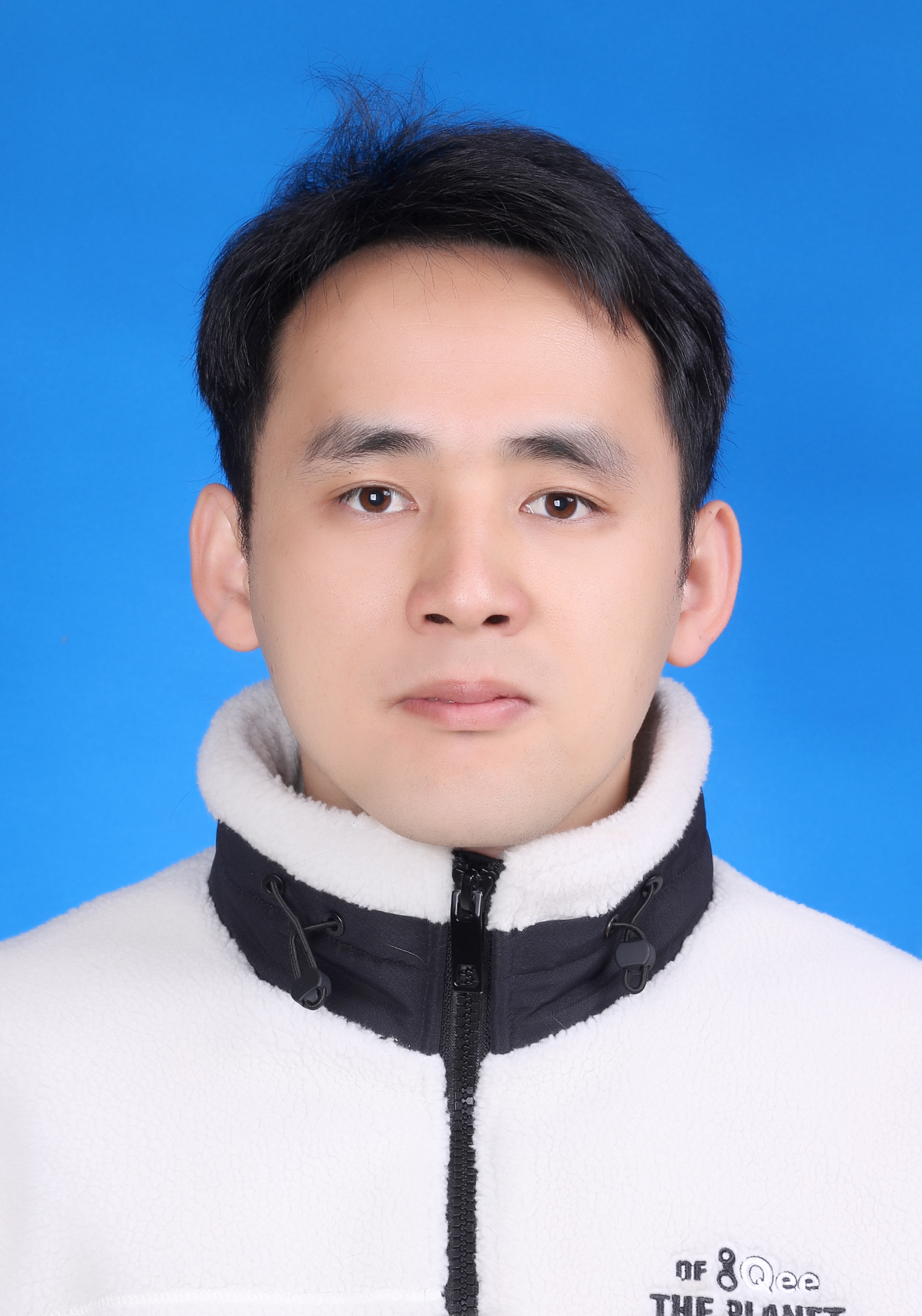}}]{Jianrong Lu} received the M.S degree in cyberspace security from Huazhong University of Science and Technology, Wuhan, China, in 2023. He is currently a Research Assistant at City University of Hong Kong. His research interests include diffusion model and stochastic optimization.
\end{IEEEbiography}

\vspace{-1.6cm}
\begin{IEEEbiography}[{\includegraphics[width=1in,height=1.25in,clip,keepaspectratio]{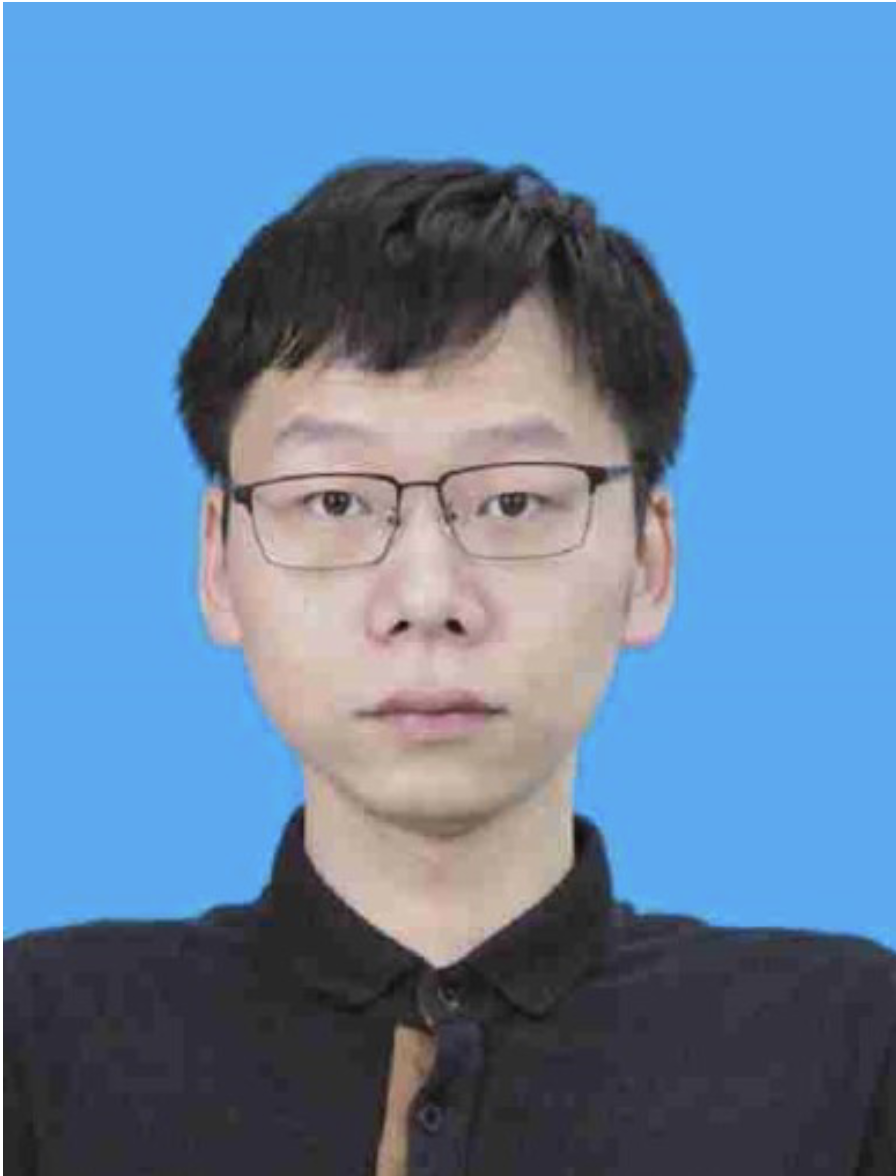}}]{Zhiyu Zhu}  is an Assistant Professor with City University of Hong Kong (Dongguan). He received the B.E. and M.E. degrees in Mechatronic Engineering, both from Harbin Institute of Technology, in 2017 and 2019, respectively, and a Ph.D. degree in Computer Science from City University of Hong Kong in 2023. 
His research interests include generative models and computer vision.
\end{IEEEbiography}

\vspace{-1.6cm}
\begin{IEEEbiography}[{\includegraphics[width=1in,height=1.25in,clip,keepaspectratio]{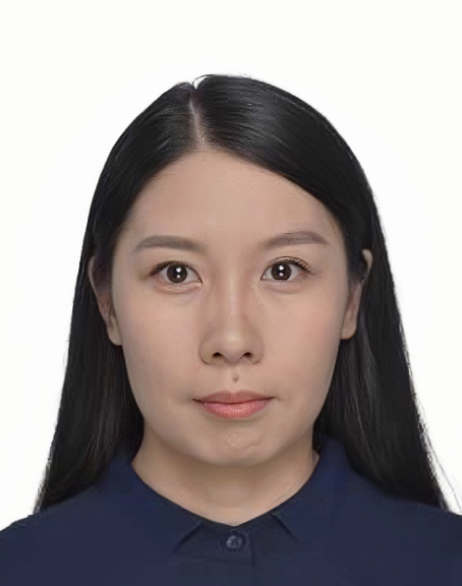}}]{Hui Liu} is currently an Assistant Professor with the School of Computing and Information Sciences at Saint Francis University, Hong Kong SAR, China. She received the B.Sc. degree in Communication Engineering from Central South University, Changsha, China, the M.Eng. degree in Computer Science from Nanyang Technological University, Singapore, and the Ph.D. degree from the Department of Computer Science, City University of Hong Kong, Hong Kong. Her research interests include image processing and machine learning.
\end{IEEEbiography}

\vspace{-1.6cm}
\begin{IEEEbiography}[{\includegraphics[width=1in,height=1.25in,clip,keepaspectratio]{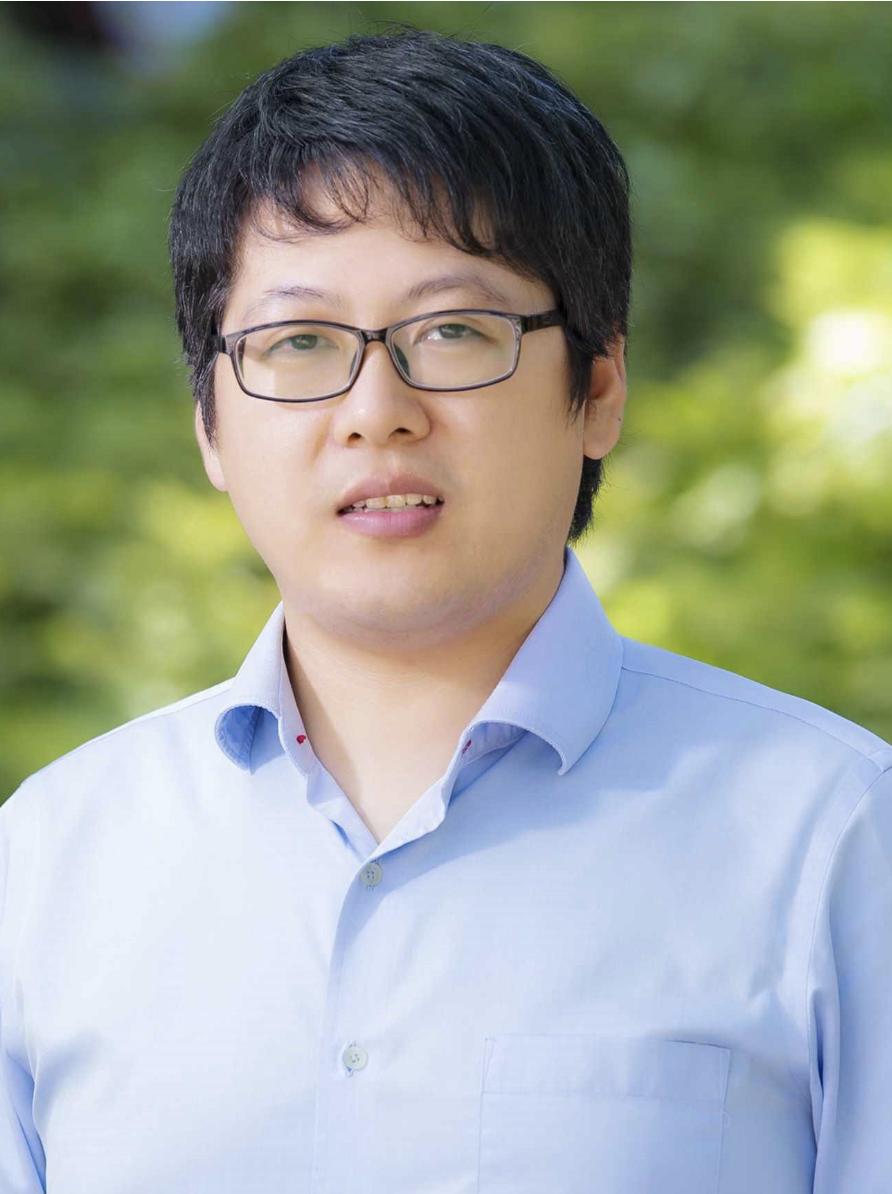}}]{Junhui Hou}
(Senior Member, IEEE)  is a Professor with the Department of Computer Science, City University of Hong Kong (CityUHK). His research interests include multidimensional visual computing and machine learning. 

He received the Early Career Award and the Research Fellow Award from the Hong Kong Research Grants Council, the Excellent Young Scientists Fund from the National Natural Science Foundation of China, the IEEE TIP Best Paper Award, and the CityUHK Presidential Research Excellence Award for Junior Faculty. He is serving as a Senior Area Editor for IEEE TIP and an Associate Editor for IEEE TVCG and TMM. He served as an Associate Editor for IEEE TIP and TCSVT.
\end{IEEEbiography}

\end{document}


\title{ParaLin++: Accelerating Parallel Diffusion Integrator via
Intrinsic Partially Linear Structure\\ (\textit{Supplementary Material})}



\markboth{Manuscript under Review}%
{ParaLin++: Accelerating Parallel Diffusion Integrator via Intrinsic Partially Linear Structure}

\maketitle



\section{Experimental Details}
\label{app_sec:setting}
This section records the experimental protocols used in the main manuscript. The purpose is to make the efficiency measurements and the generation-quality audit reproducible, while also clarifying the distinction between shortening a diffusion trajectory and parallelizing the solver iterations for a fixed trajectory.

\subsection{Evaluation Metrics}
\textbf{Efficiency.}
We report discrete diffusion steps, solver iterations (``Iters''), neural function evaluations (NFEs), wall-clock time, and speedup. For sequential samplers, one solver iteration corresponds to one discrete step; thus ``Steps'' and ``Iters'' coincide. For ParaLin++, the reference discretization is retained, and ``Iters'' denotes the average number of measured parallel solver iterations used to solve the coupled integration system. In the conventional timing experiments, wall-clock time is measured by \textit{torch.cuda.Event} from PyTorch~\cite{paszke2019pytorch}. In the large-scale generation-quality experiments, speedup is computed from the sampler/model forward-path time of the corresponding GenEval generation pass and excludes model loading and benchmark scoring.

\textbf{Quality.}
The conventional experiments report FID~\cite{fid} and CLIP score~\cite{hessel2021clipscore}. FID measures the distance between the generated distribution and a reference distribution, while CLIP score measures coarse text--image embedding similarity. These are useful distributional and alignment metrics, but they do not directly measure whether a generated sample satisfies semantic composition, dense instructions, or in-image text requirements. The generation-quality audit therefore additionally uses GenEval~\cite{ghosh2024geneval}, DPG-Bench~\cite{hu2024ella}, and LongTextBench~\cite{xomni2025}, which respectively assess semantic composition, dense instruction following, and long in-image text rendering. We treat these three benchmarks as a fixed, reproducible quality profile rather than as an absolute definition of generation quality.

\subsection{Datasets, Models, and Compute}
The conventional evaluation considers the latent-space model Stable Diffusion-v2~\cite{SD} and the pixel-space LSUN Church model~\cite{LSUNAndBedroom}. Stable Diffusion-v2 generates images at a resolution of $768\times768$ with a latent tensor size of $4\times96\times96$ and is evaluated with classifier guidance $w=7.5$ on 5{,}000 random samples using ImageNet-1k~\cite{imagenet15russakovsky}. The LSUN Church model is the pre-trained pixel-space model available on Hugging Face~\cite{DDPM,diffusers}, operates at $256\times256$, and is likewise evaluated on 5{,}000 samples. FID and CLIP are computed with the standard Hugging Face evaluation API. The CIFAR-10 less-step study uses 50{,}000 samples. The VP, VE, and EDM process study follows the sampler and schedule configurations specified in the EDM design space~\cite{EDM}.

The generation-quality study centers on the 12B FLUX.1-dev flow-matching model~\cite{bfl2024flux} and uses Ming-Flash-Omni 2.0~\cite{ai2025ming} as a second large-scale reference generator. For FLUX, FID is computed on 5{,}000 FluxPrompting~\cite{FluxPrompting} samples. Multi-GPU experiments use eight NVIDIA RTX 3090 GPUs; the single-GPU study is reported separately in the main manuscript.

\subsection{Algorithms and Official Implementations}
The sequential samplers used in the experiments include DDPM~\cite{DDPM}, DDIM~\cite{DDIM}, DPM-Solver~\cite{dpmsolver}, DPM-Solver-v3~\cite{zheng2023dpm}, DPMSolver++~\cite{dpmsolverpp}, UniPC~\cite{zhao2023unipc}, and Euler. All compared baseline samplers are invoked through the official implementations integrated in the Diffusers library~\cite{diffusers}; we do not use custom re-implementations of the baseline solvers. Parallel wrappers, including ParaDiGMS~\cite{shih2024parallel}, ParaSolver~\cite{parasolver}, ParaLin~\cite{paralin2026}, and ParaLin++, are built on the corresponding official sequential implementations. ParaTAA~\cite{tang2024accelerating} is discussed as related work but is not included in the quantitative benchmark tables because it is not available as an official Diffusers sampler under this comparison protocol. In addition, DPM-Solver is excluded from the LSUN Church table because it is not integrated with the LSUN model in the Diffusers library.

\subsection{ParaLin++ Hyperparameters}
Unless otherwise stated, ParaLin++ uses nonlinear degree $\varrho=8$ and searches $\delta$, $\beta$, and $\omega$ independently for each sampler. For Stable Diffusion-v2, the DDPM setting uses 1{,}000 discrete sampling steps, while DDIM and DPM-Solver use the listed 25-, 50-, and 100-step configurations. For LSUN Church, we evaluate the listed DDIM configurations. Recommended or best-reported tolerances from the prior papers are used for ParaDiGMS and ParaSolver. On Stable Diffusion-v2, ParaDiGMS uses tolerances $0.5$, $0.01$, and $0.01$ for DDPM, DDIM, and DPM-Solver, respectively; ParaSolver uses $0.55$, $0.01$, and $0.01$. On LSUN Church, ParaDiGMS uses $0.001$ for DDIM, while ParaSolver uses $0.005$. In ParaLin++, tolerance values are selected by grid search: DDPM uses $\delta\in\{0.5,0.6\}$; Stable Diffusion-v2 DDIM and DPM-Solver use $\delta\in\{0.05,0.04,0.03,0.02,0.01\}$; and LSUN DDIM uses $\delta\in\{0.005,0.004,0.003,0.002,0.001\}$.

\begin{figure*}[htbp]
    \centering

    \begin{minipage}{\linewidth}
        \centering
        \parbox[b][0.094\linewidth][t]{0.05\linewidth}{\rotatebox{90}{\small\textbf{DDPM}}}%
\includegraphics[width=0.094\linewidth]{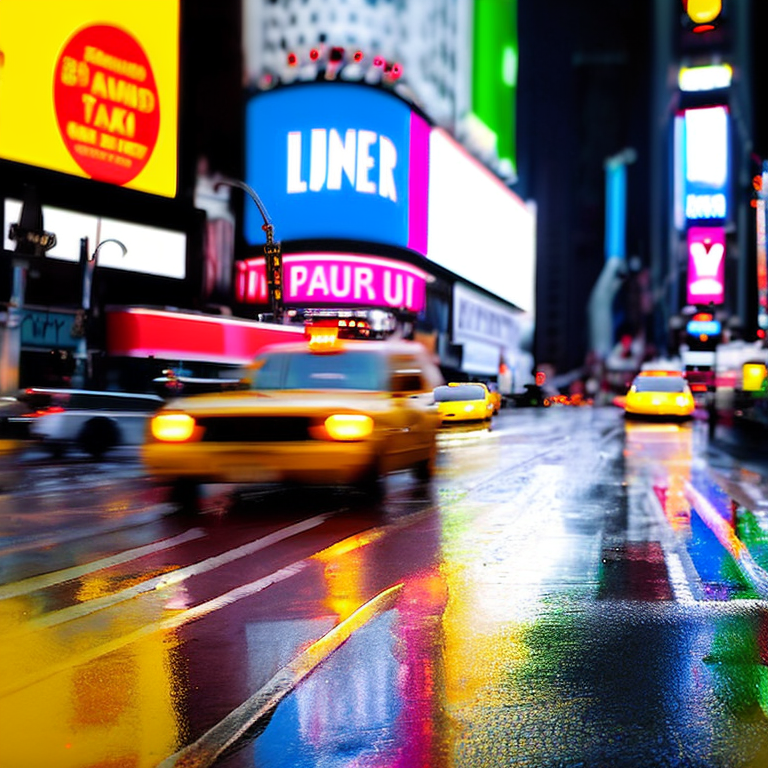}%
\includegraphics[width=0.094\linewidth]{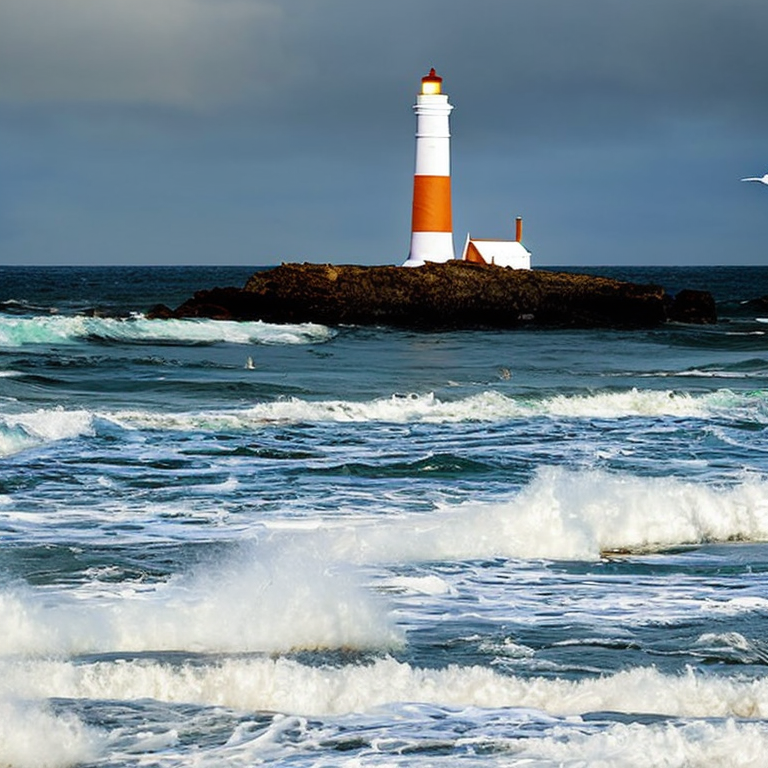}%
\includegraphics[width=0.094\linewidth]{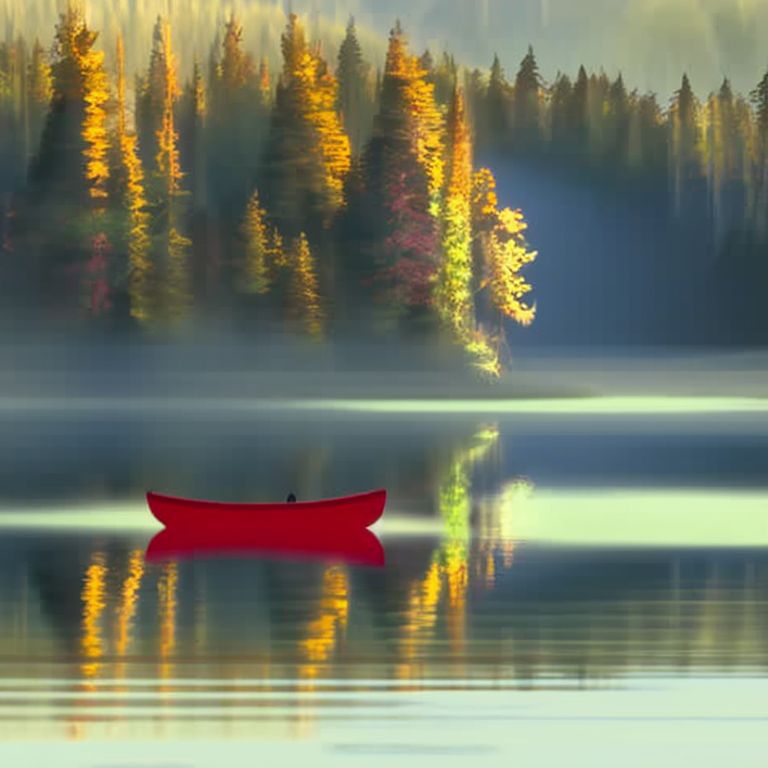}%
\includegraphics[width=0.094\linewidth]{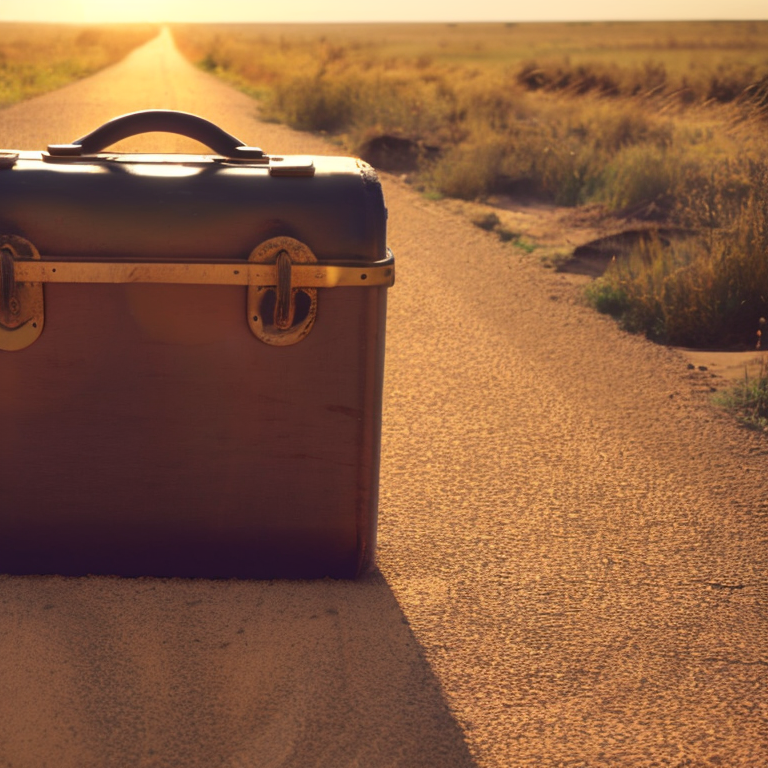}%
\includegraphics[width=0.094\linewidth]{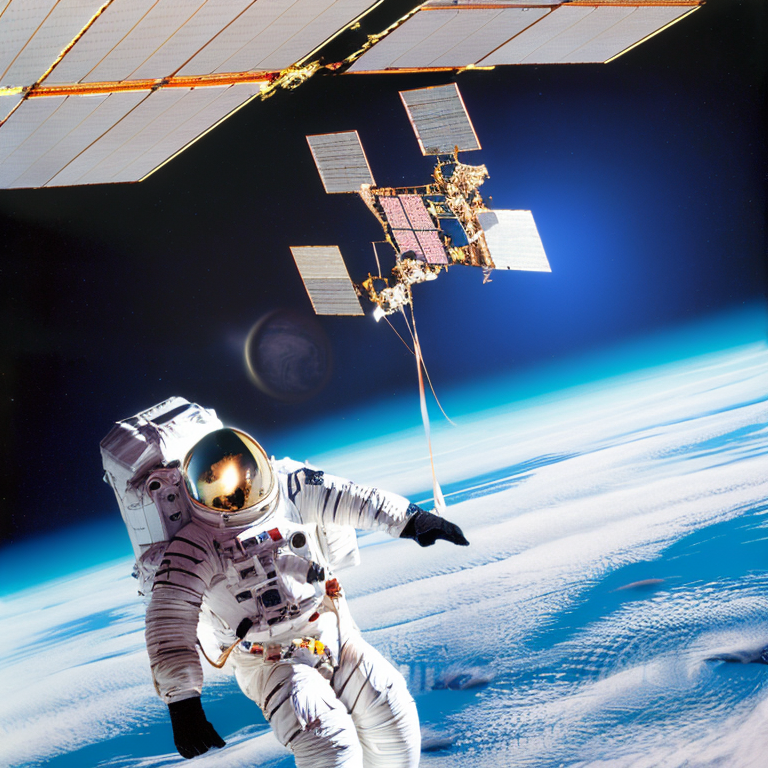}%
\includegraphics[width=0.094\linewidth]{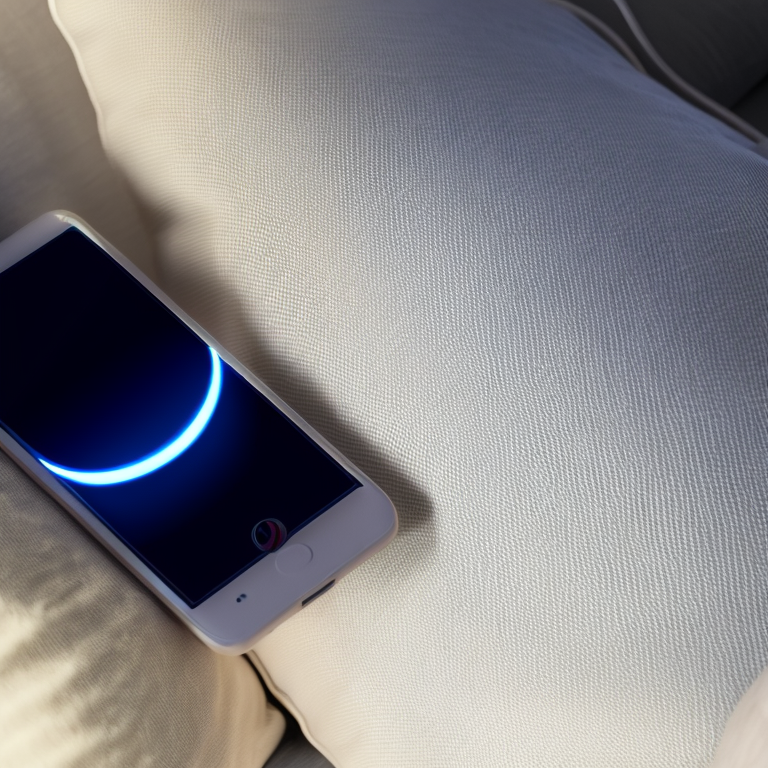}%
\includegraphics[width=0.094\linewidth]{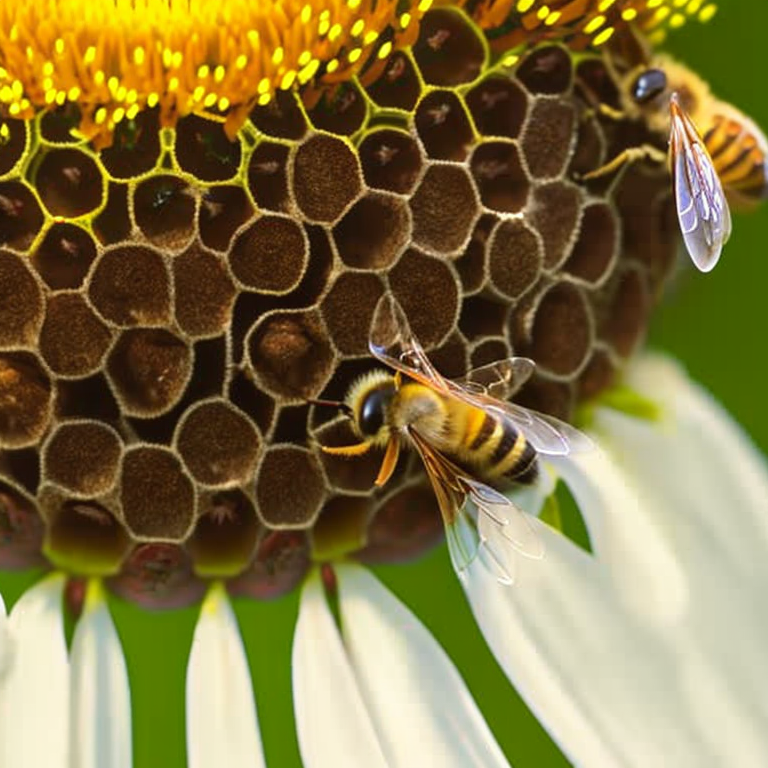}%
\includegraphics[width=0.094\linewidth]{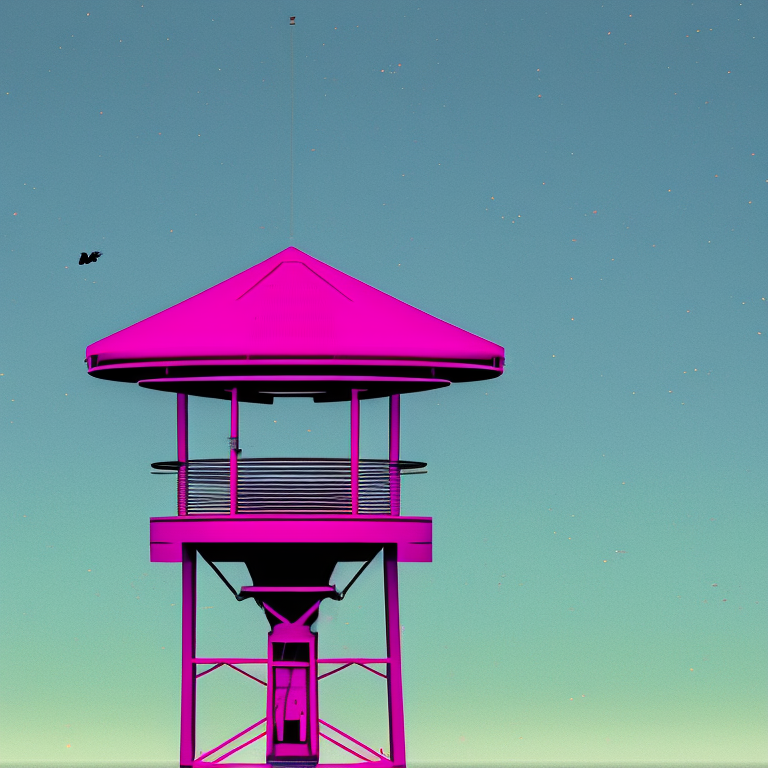}%
\includegraphics[width=0.094\linewidth]{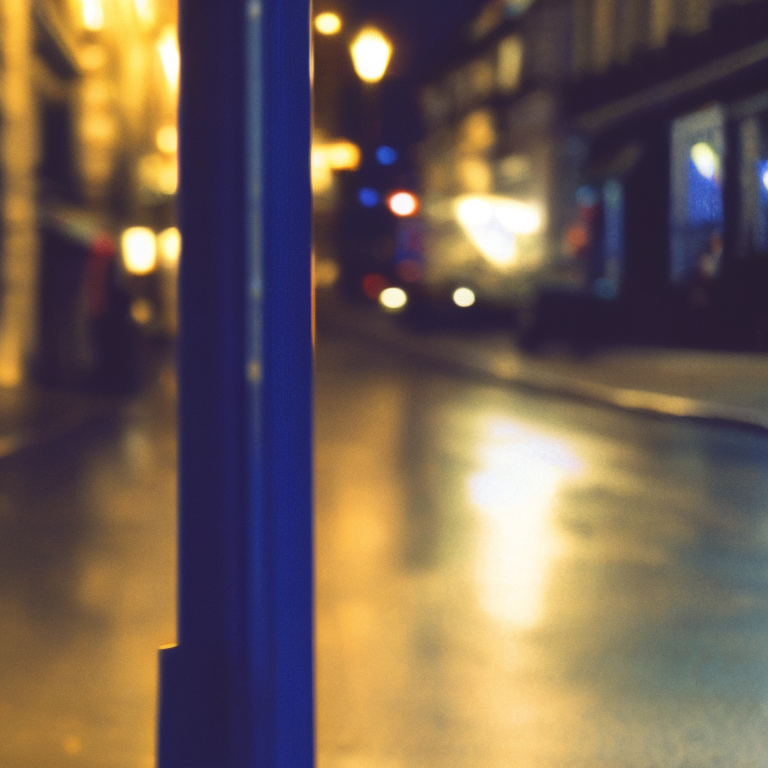}%
\includegraphics[width=0.094\linewidth]{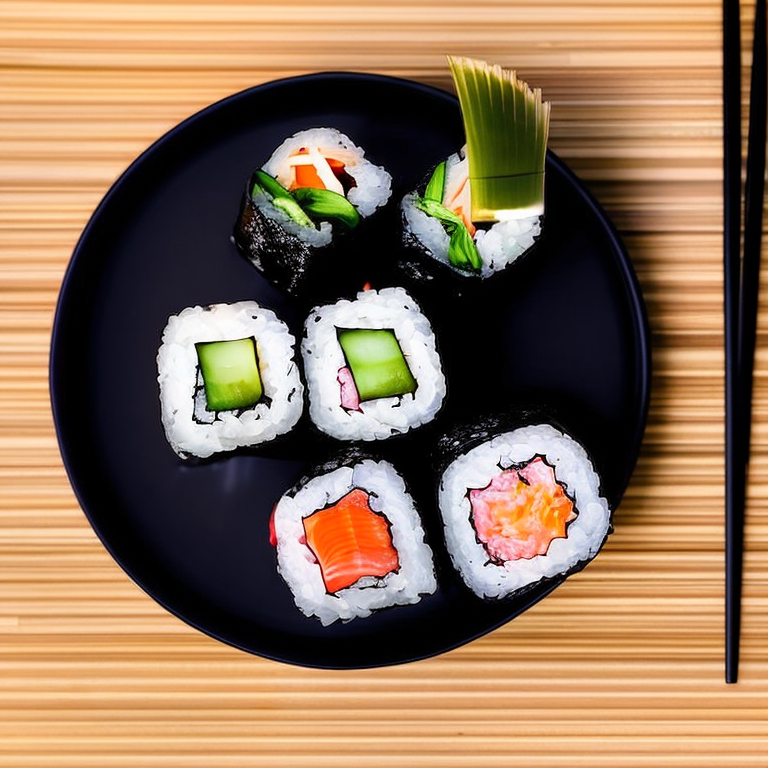}
    \end{minipage}
    \vspace{1mm} 

    \begin{minipage}{\linewidth}
        \centering
        \parbox[b][0.094\linewidth][t]{0.05\linewidth}{\rotatebox{90}{\small\textbf{ParaDiGMS}}}%
\includegraphics[width=0.094\linewidth]{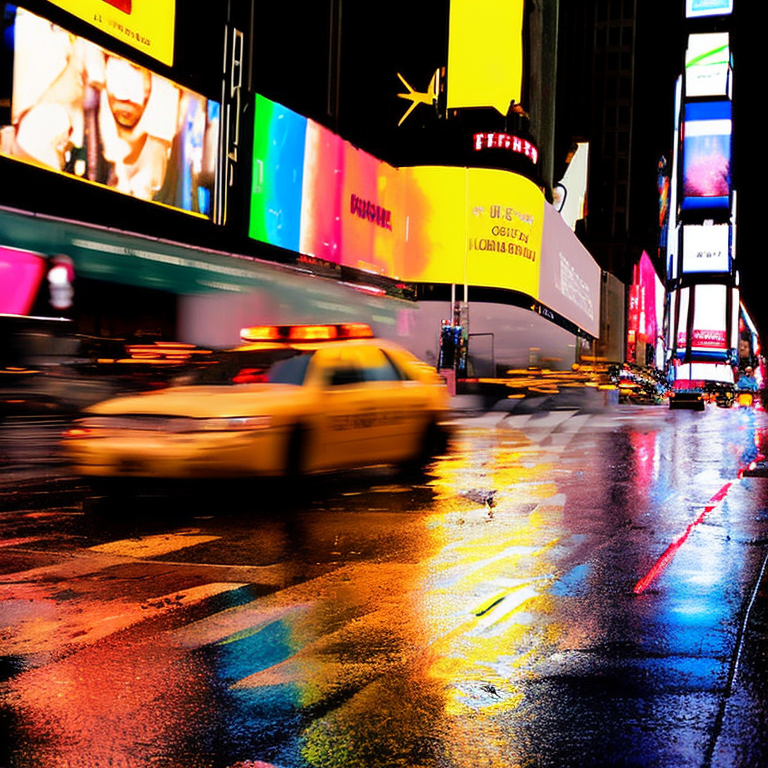}%
\includegraphics[width=0.094\linewidth]{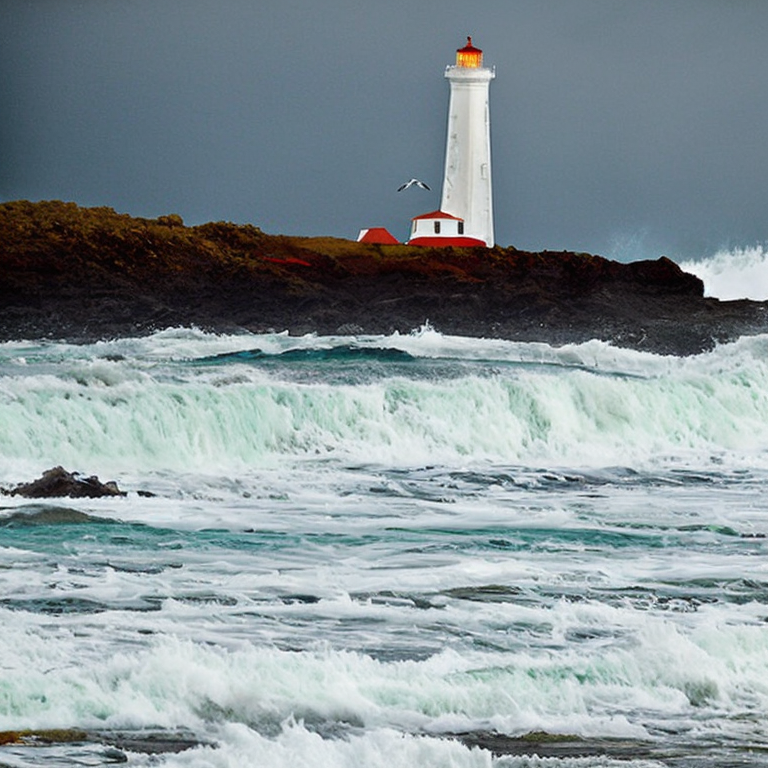}%
\includegraphics[width=0.094\linewidth]{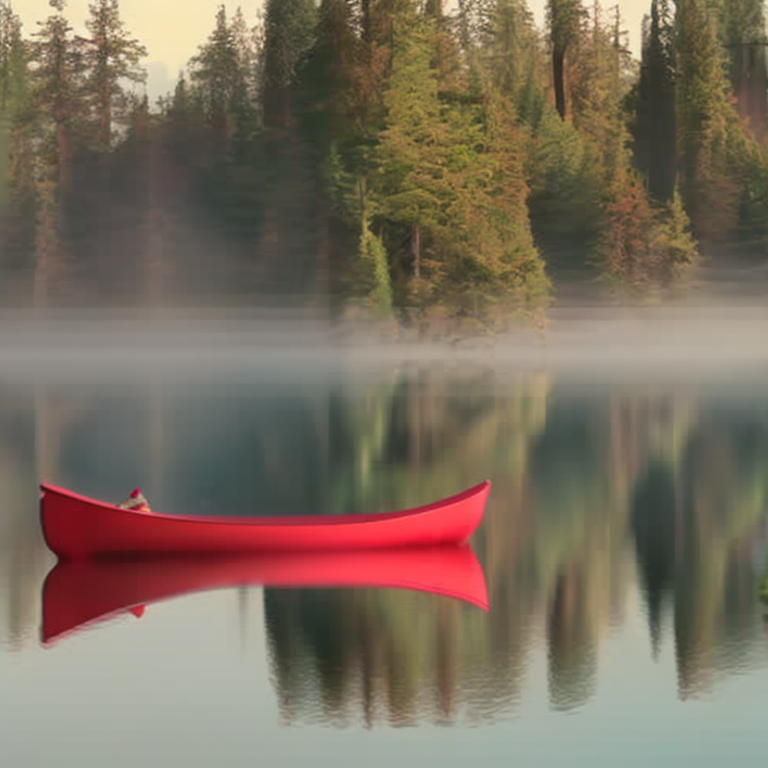}%
\includegraphics[width=0.094\linewidth]{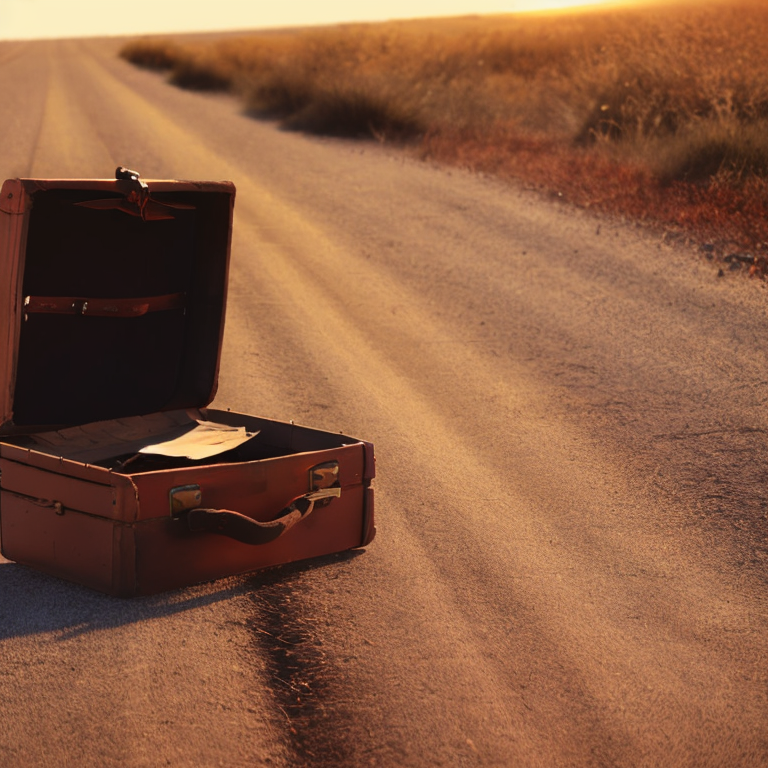}%
\includegraphics[width=0.094\linewidth]{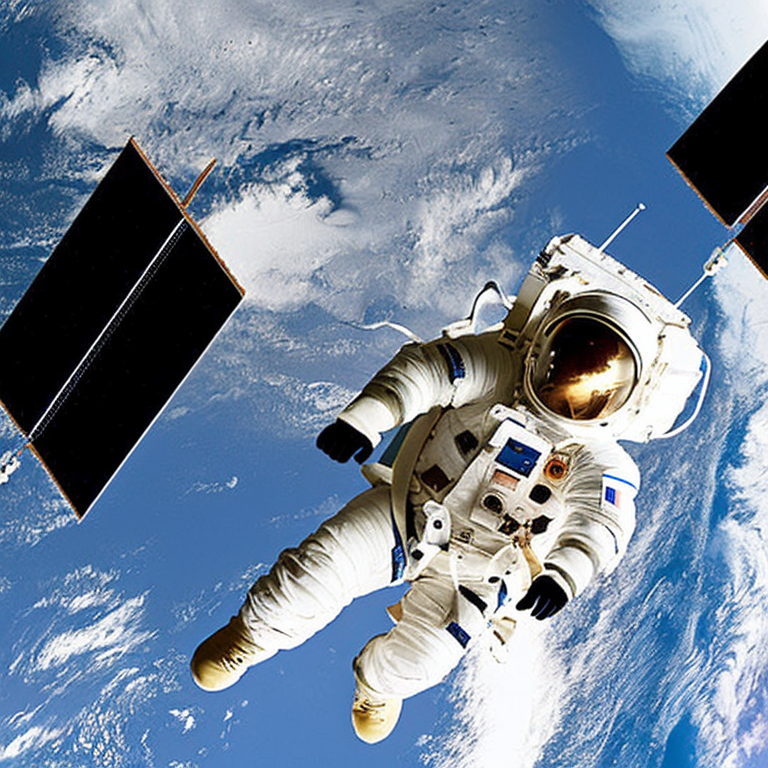}%
\includegraphics[width=0.094\linewidth]{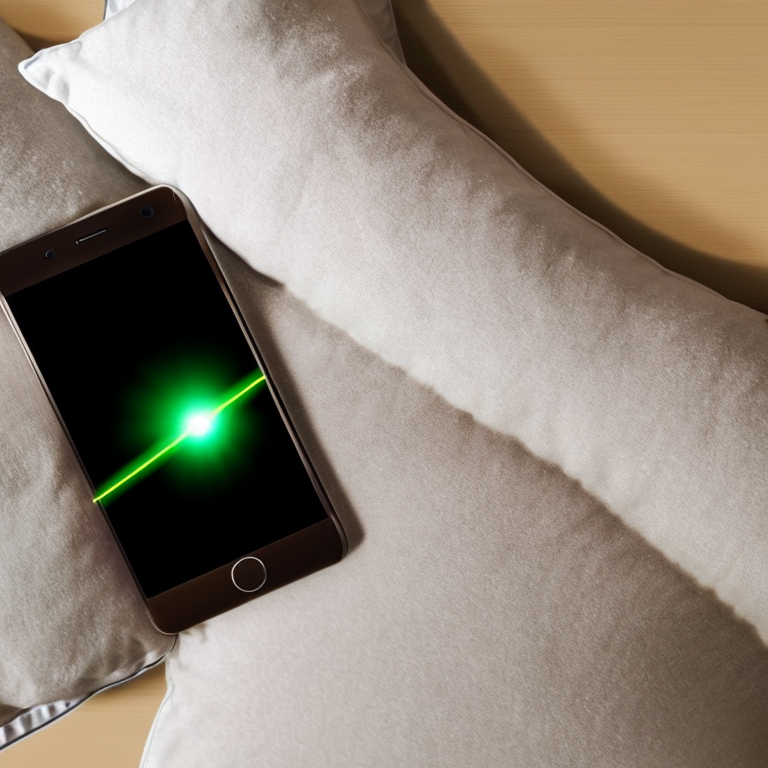}%
\includegraphics[width=0.094\linewidth]{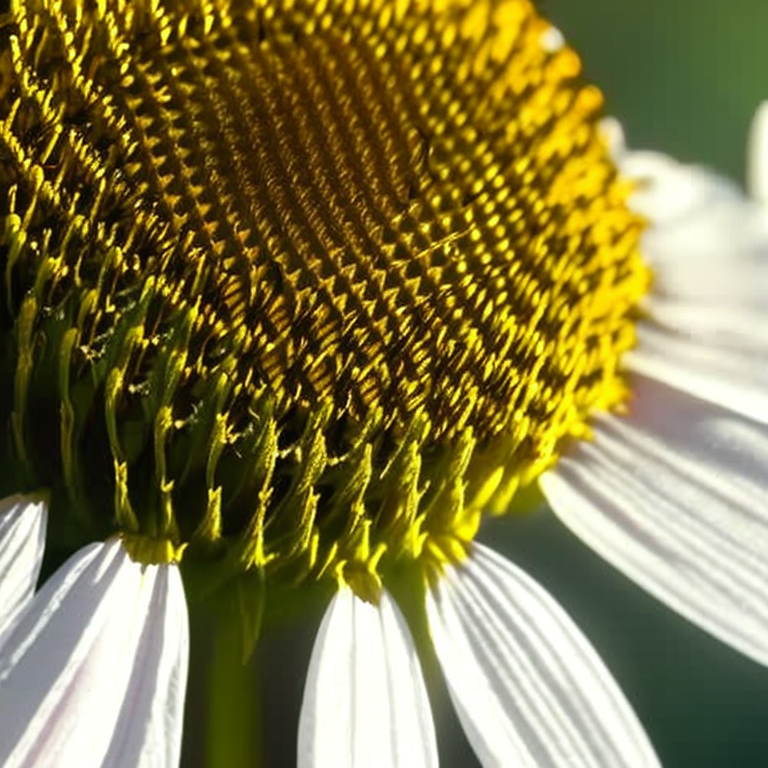}%
\includegraphics[width=0.094\linewidth]{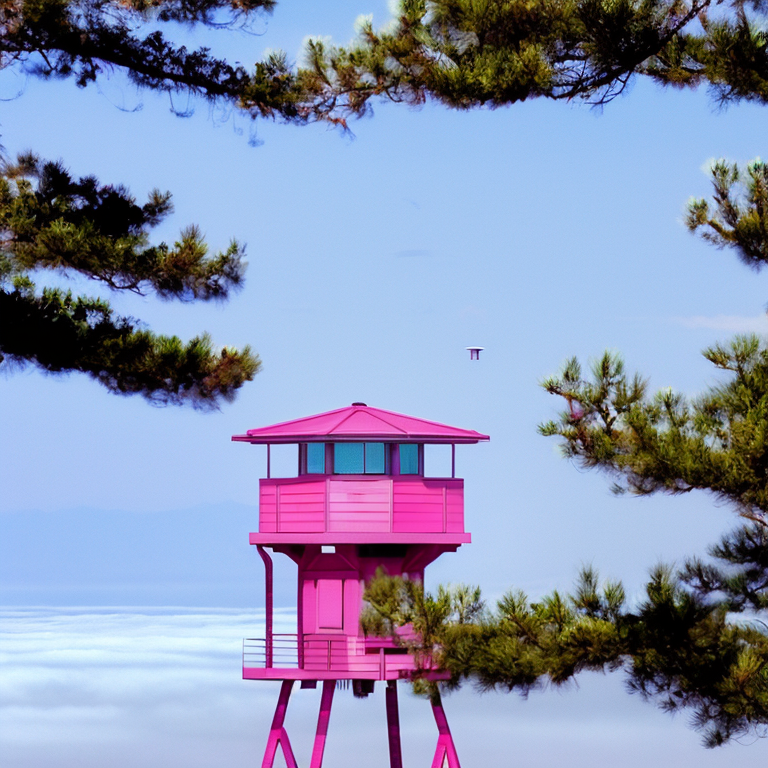}%
\includegraphics[width=0.094\linewidth]{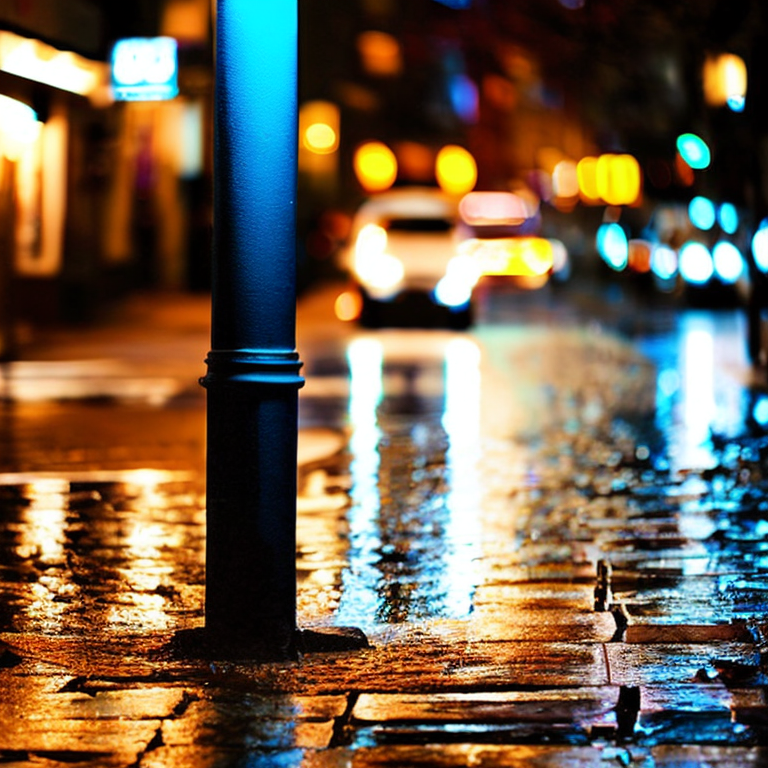}%
\includegraphics[width=0.094\linewidth]{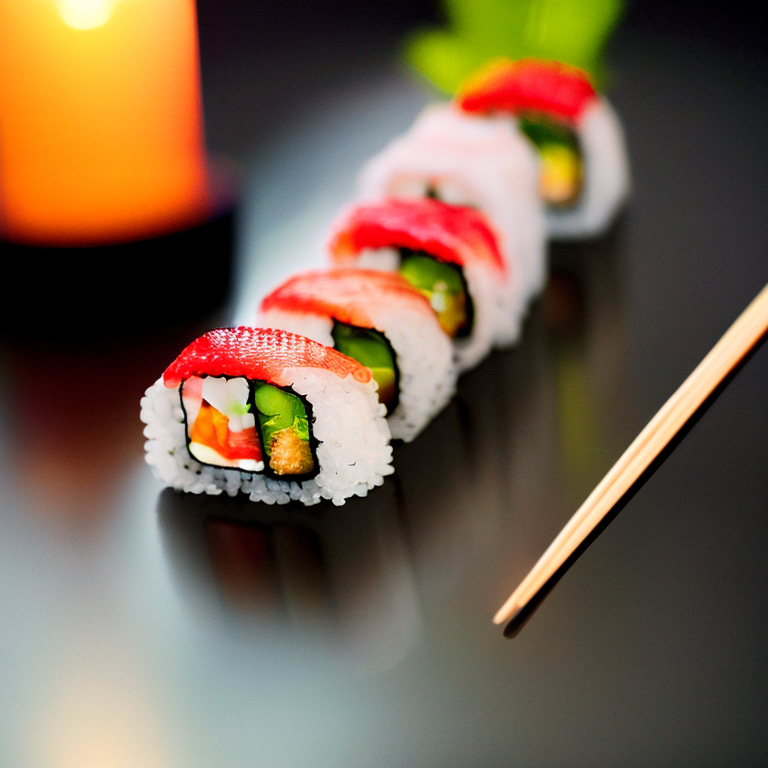}%
    \end{minipage}
    \vspace{1mm} 

    \begin{minipage}{\linewidth}
        \centering
        \parbox[b][0.094\linewidth][t]{0.05\linewidth}{\rotatebox{90}{\small\textbf{ParaSolver}}}%
\includegraphics[width=0.094\linewidth]{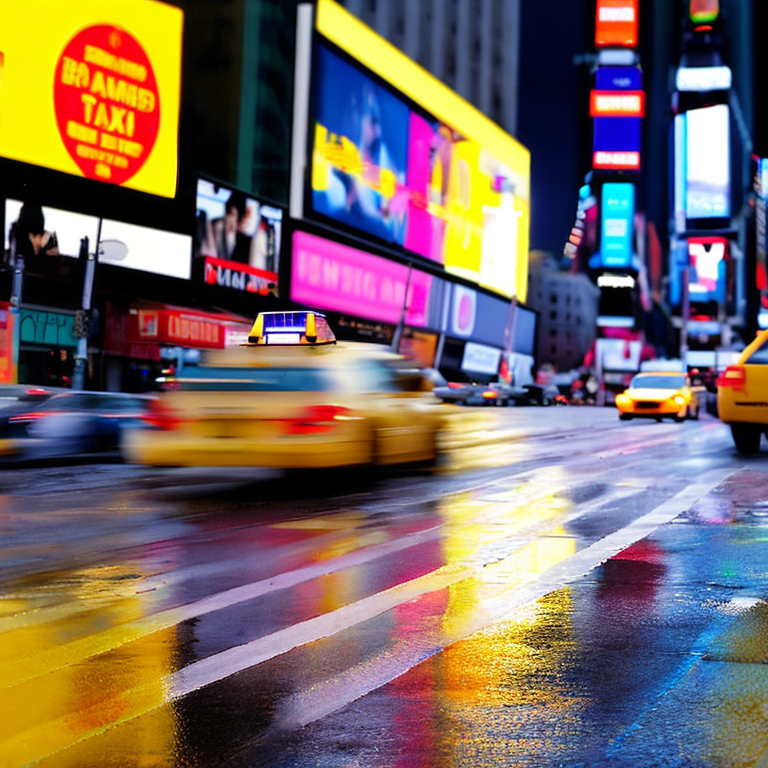}%
\includegraphics[width=0.094\linewidth]{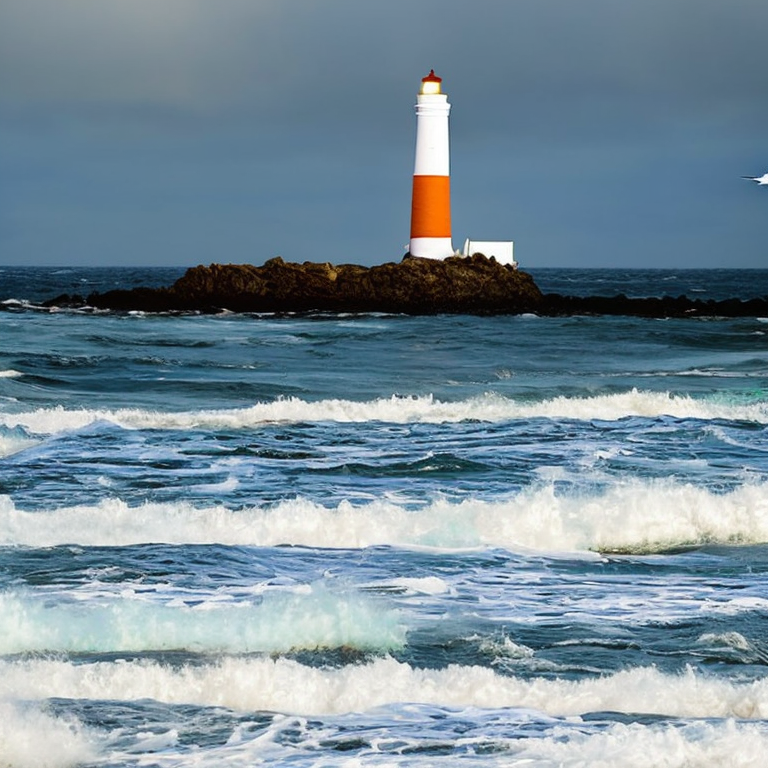}%
\includegraphics[width=0.094\linewidth]{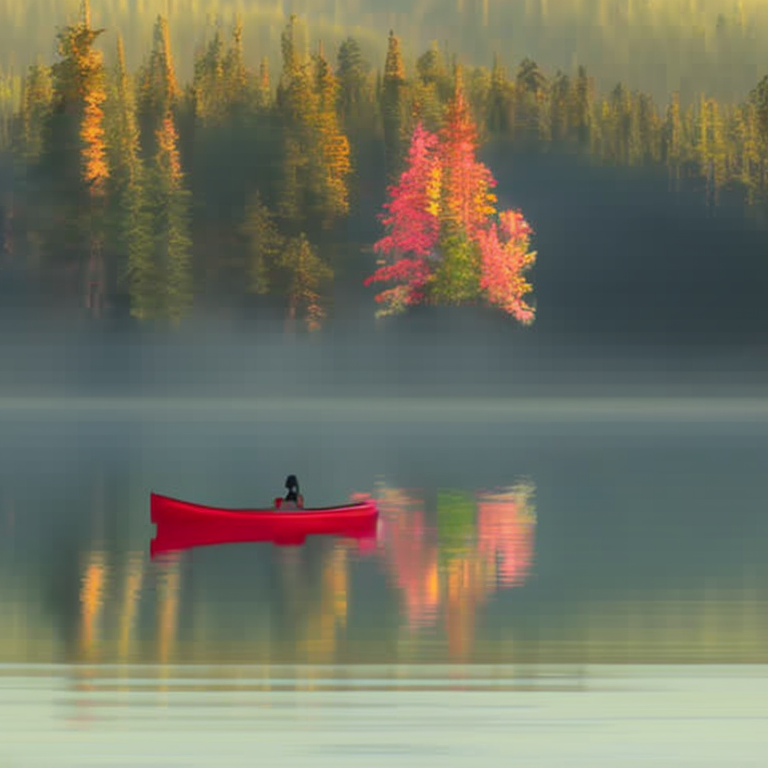}%
\includegraphics[width=0.094\linewidth]{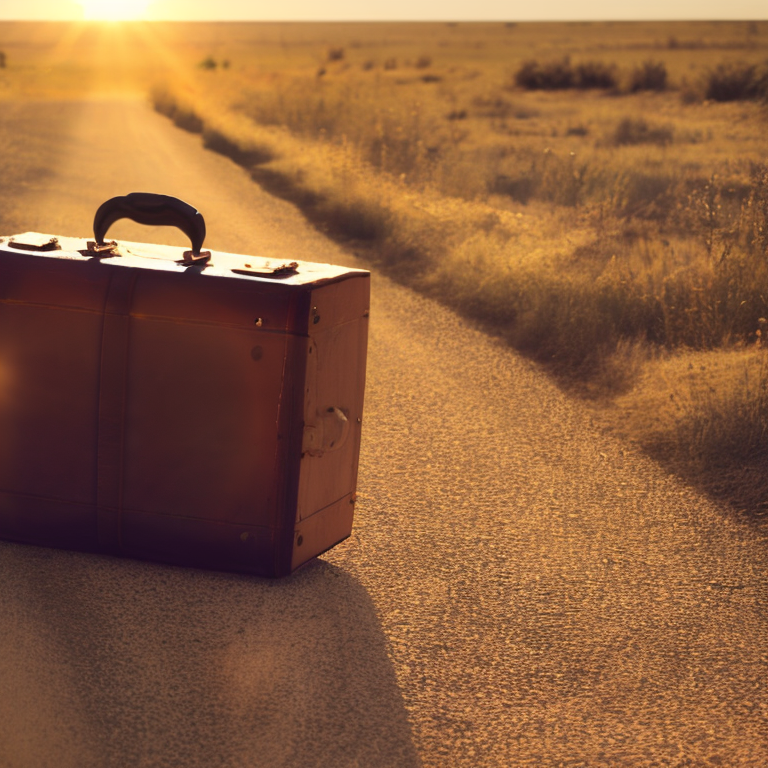}%
\includegraphics[width=0.094\linewidth]{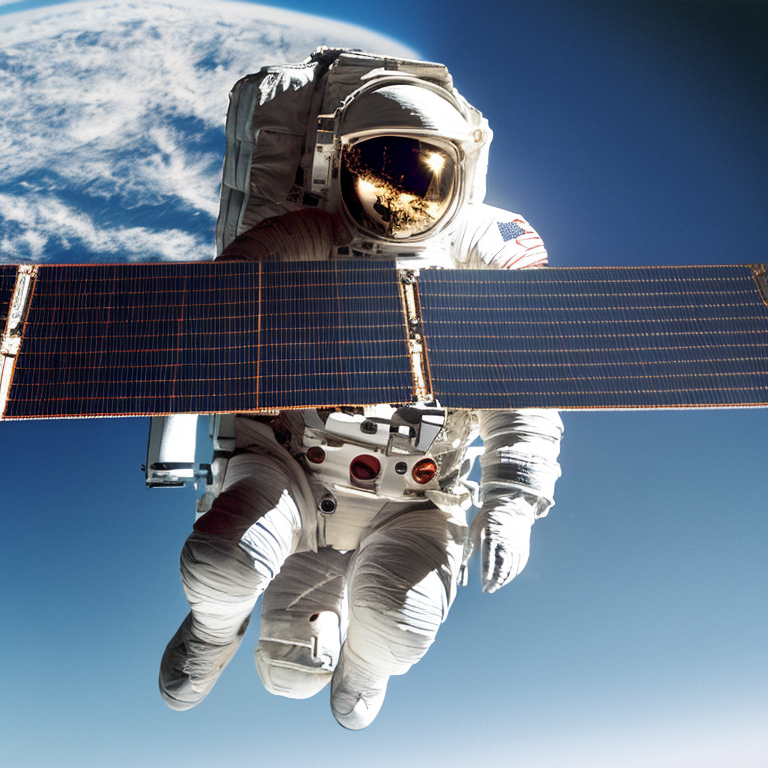}%
\includegraphics[width=0.094\linewidth]{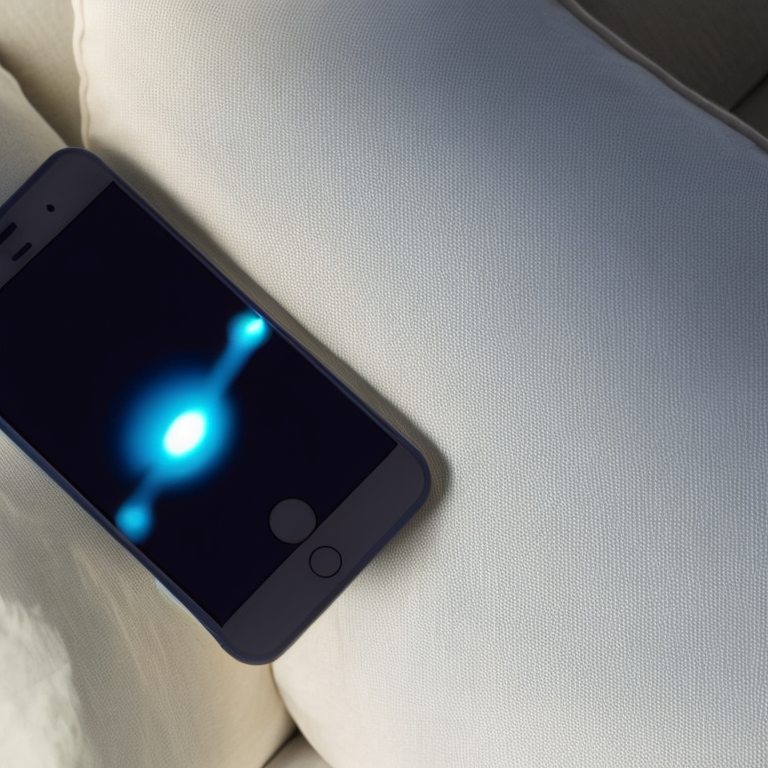}%
\includegraphics[width=0.094\linewidth]{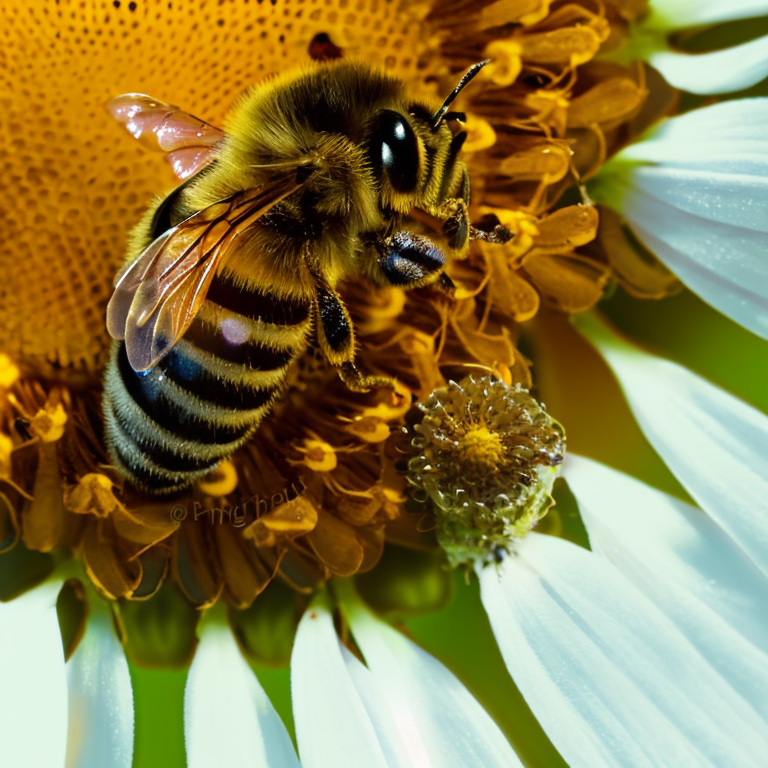}%
\includegraphics[width=0.094\linewidth]{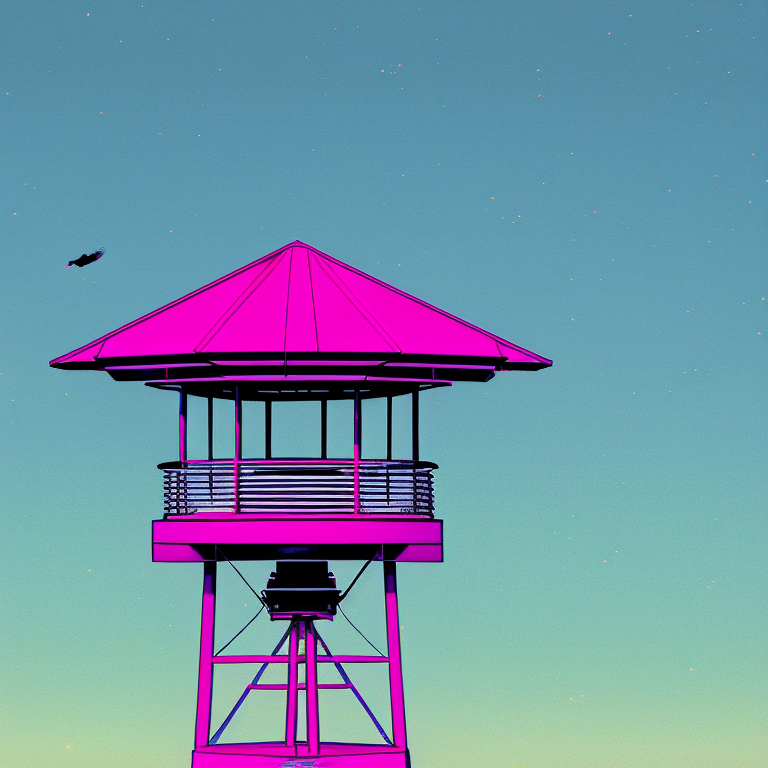}%
\includegraphics[width=0.094\linewidth]{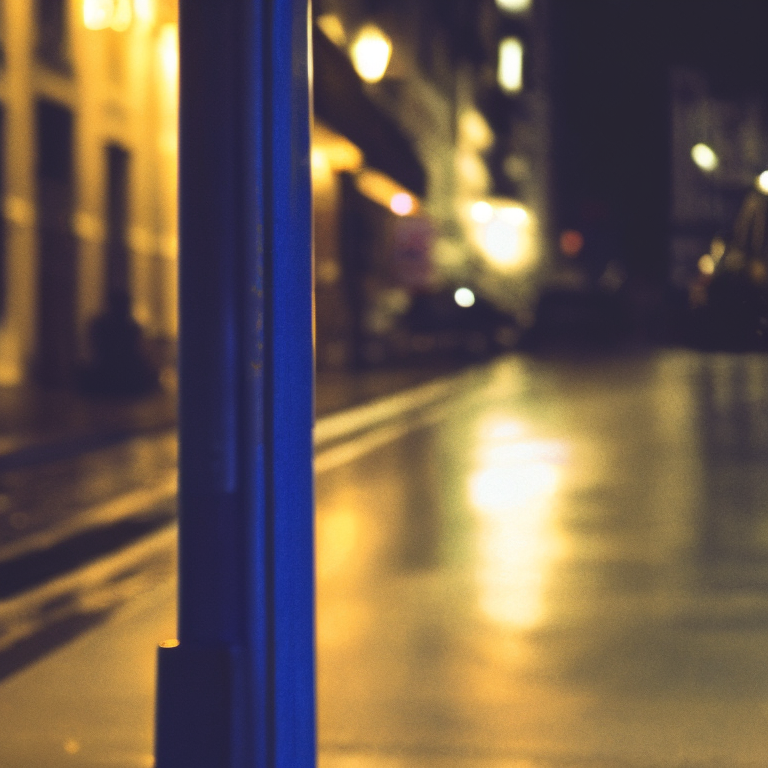}%
\includegraphics[width=0.094\linewidth]{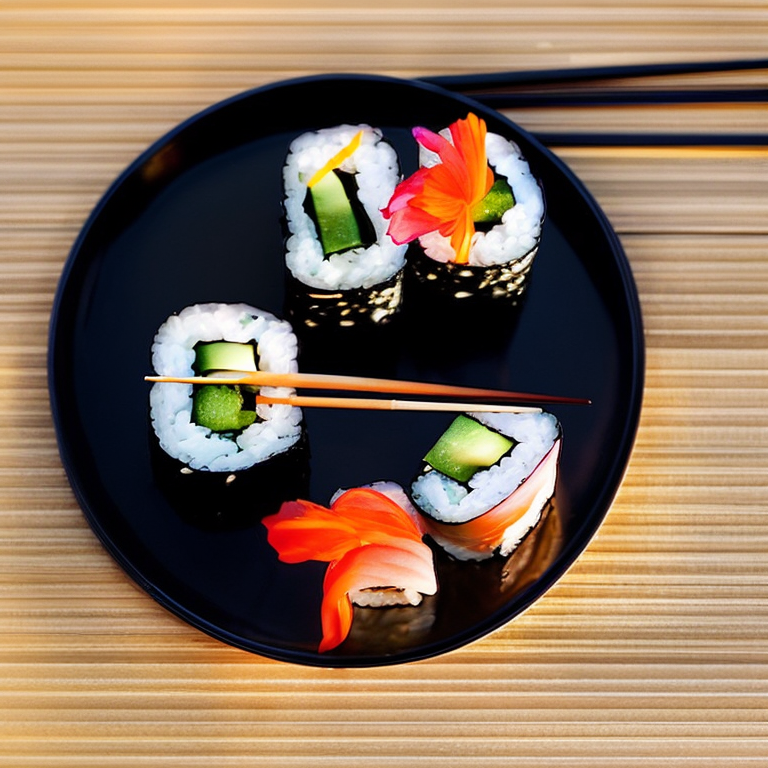}
    \end{minipage}
    \vspace{1mm} 

    \begin{minipage}{\linewidth}
        \centering
        \parbox[b][0.094\linewidth][t]{0.05\linewidth}{\rotatebox{90}{\small\textbf{ParaLin}}}%
\includegraphics[width=0.094\linewidth]{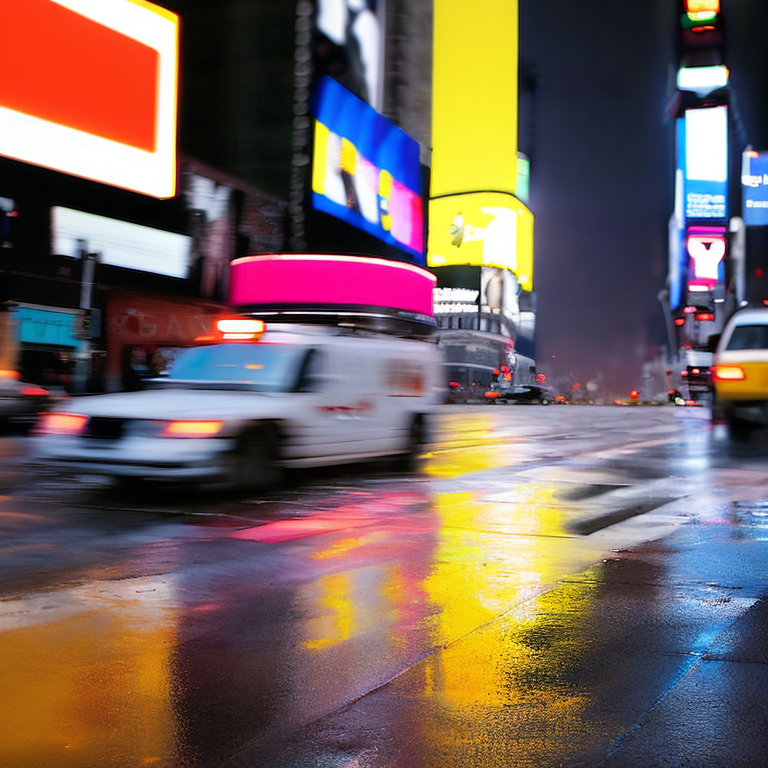}%
\includegraphics[width=0.094\linewidth]{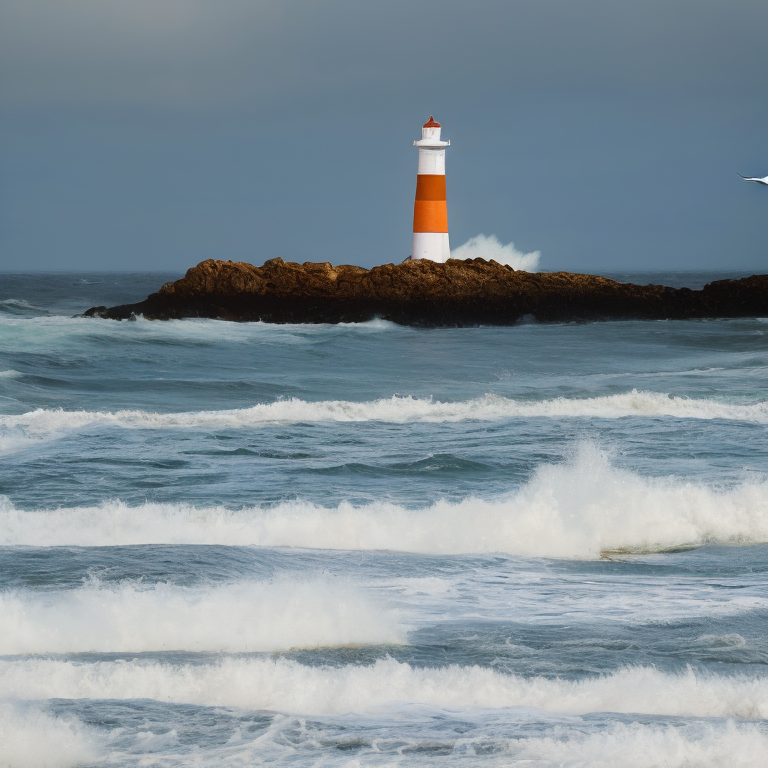}%
\includegraphics[width=0.094\linewidth]{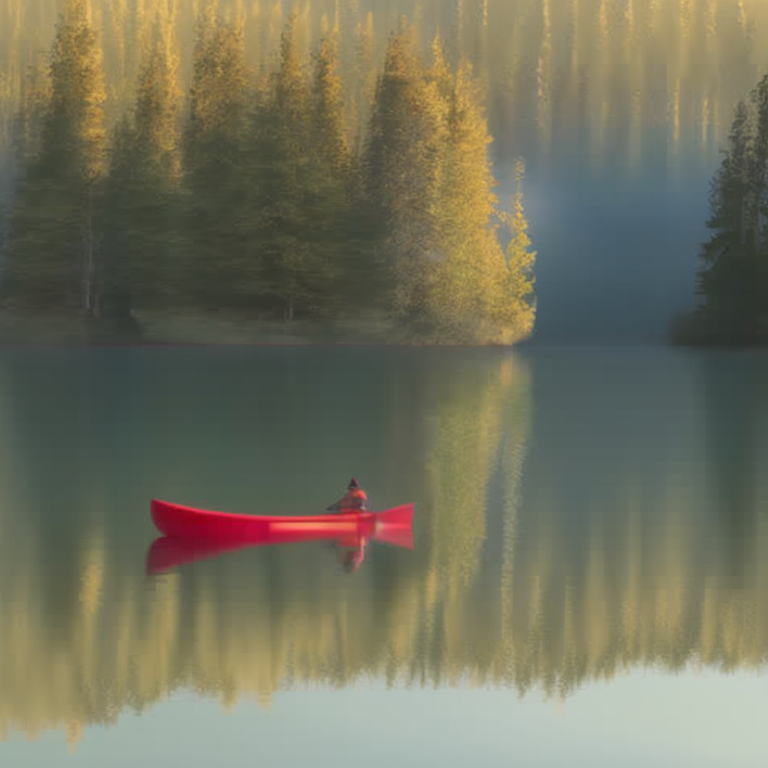}%
\includegraphics[width=0.094\linewidth]{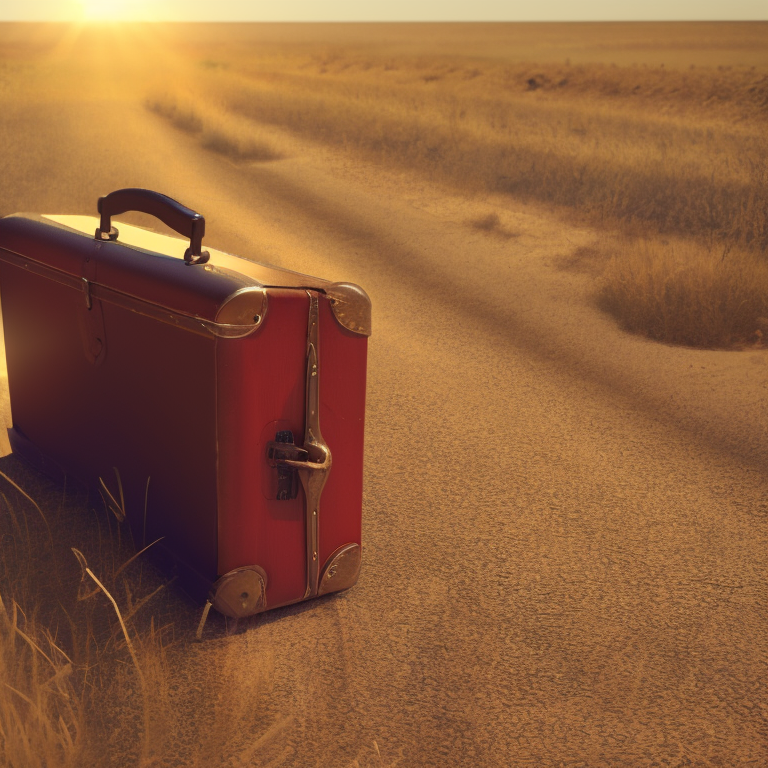}%
\includegraphics[width=0.094\linewidth]{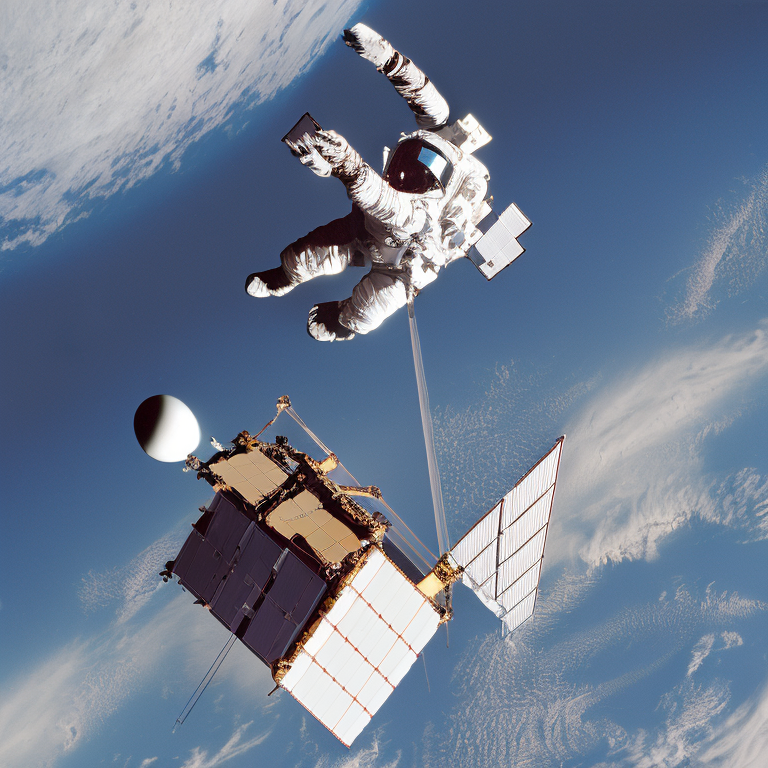}%
\includegraphics[width=0.094\linewidth]{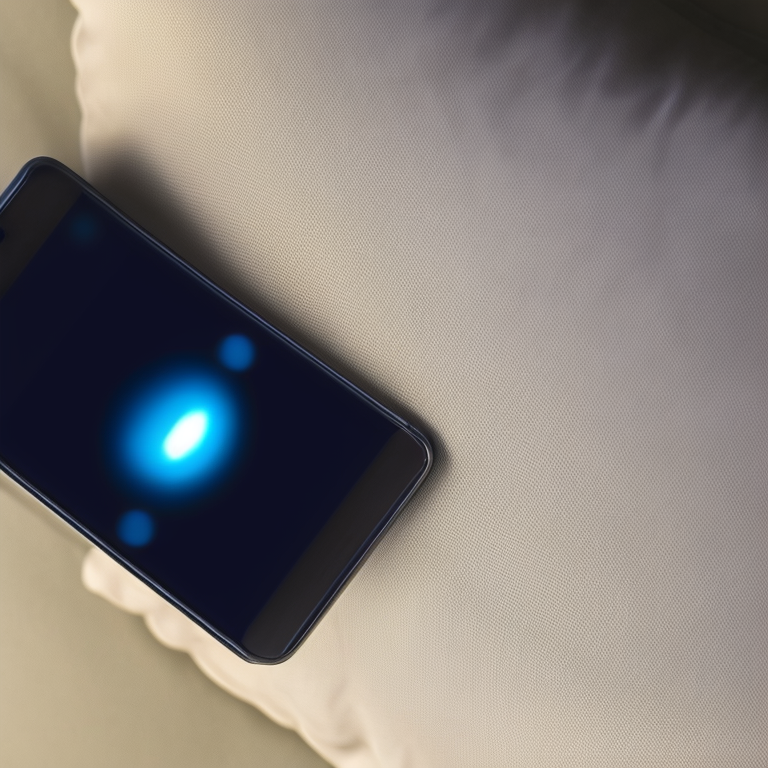}%
\includegraphics[width=0.094\linewidth]{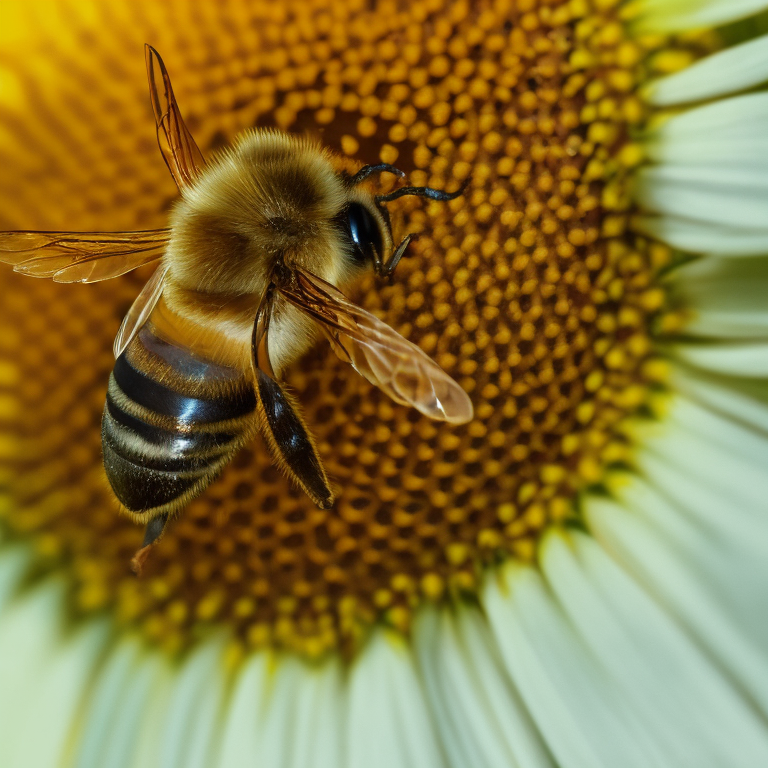}%
\includegraphics[width=0.094\linewidth]{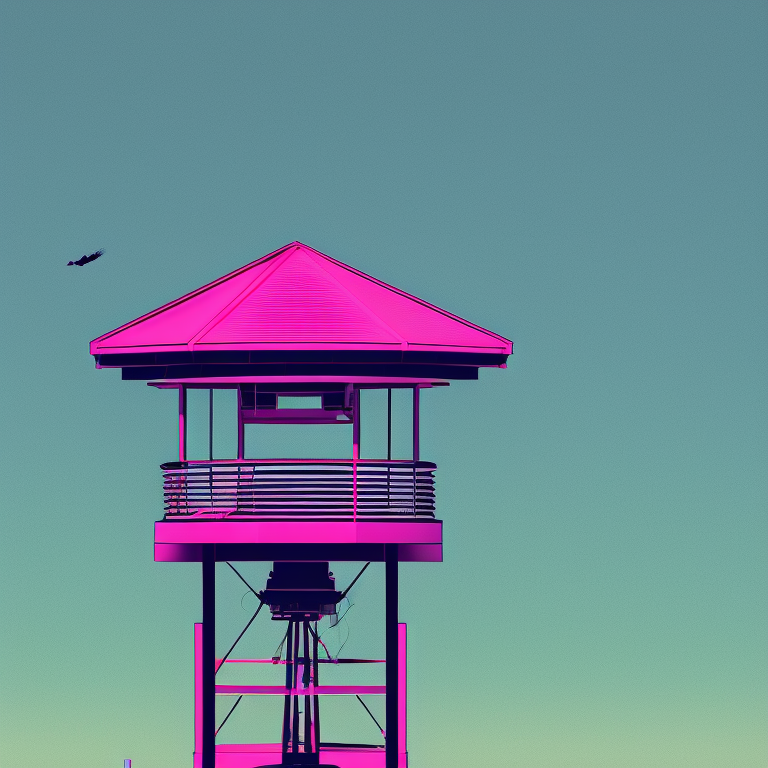}%
\includegraphics[width=0.094\linewidth]{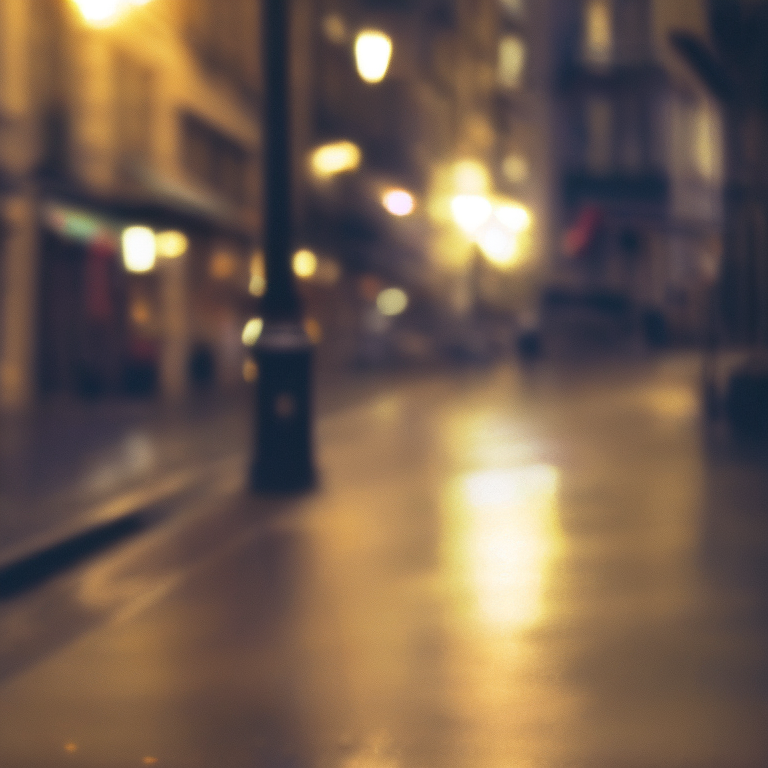}%
\includegraphics[width=0.094\linewidth]{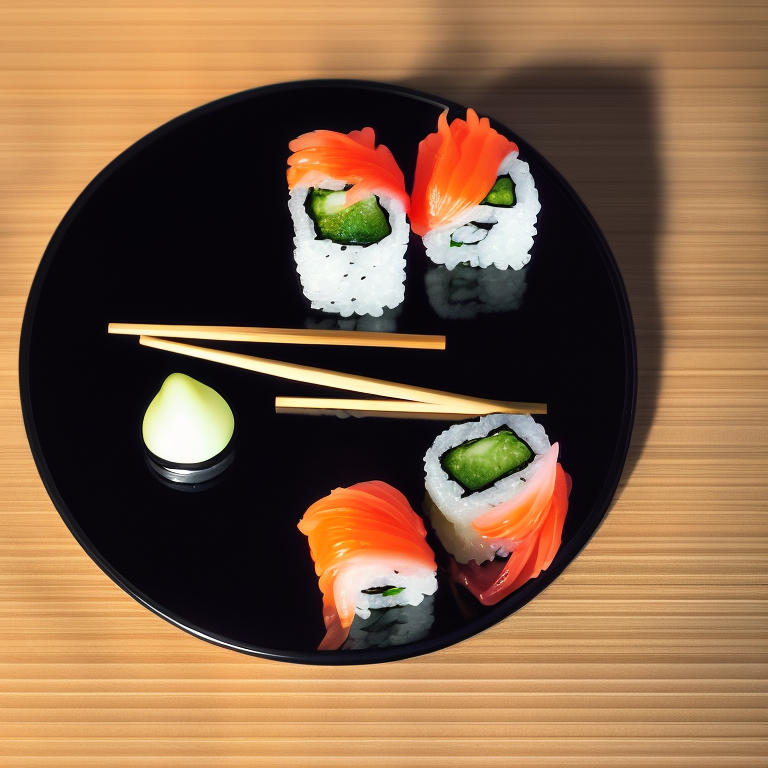}
    \end{minipage}
    \vspace{1mm} 

    \begin{minipage}{\linewidth}
        \centering
        \parbox[c][0.094\linewidth][t]{0.05\linewidth}{\rotatebox{90}{\small\textbf{Ours}}}%
\includegraphics[width=0.094\linewidth]{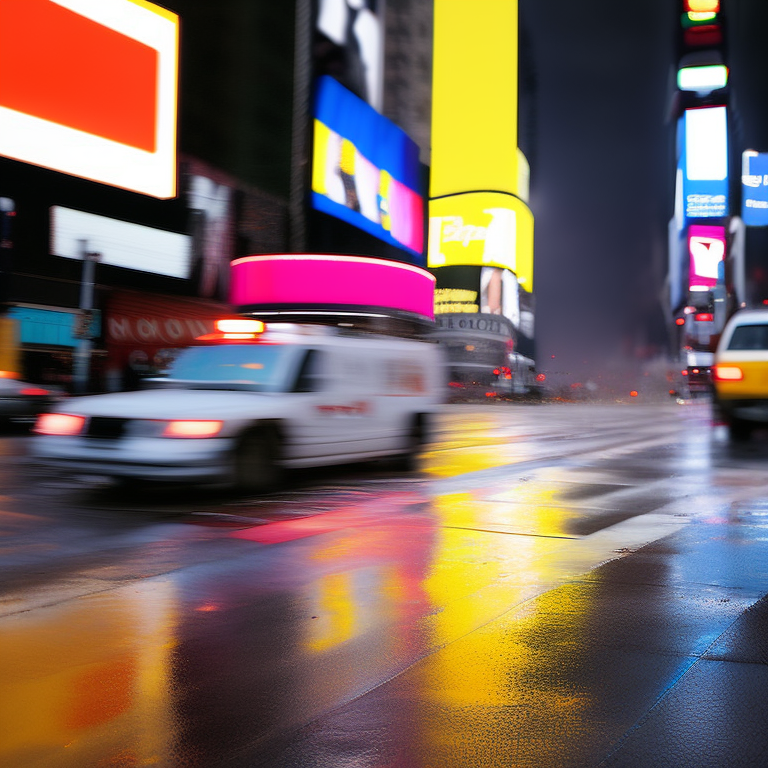}%
\includegraphics[width=0.094\linewidth]{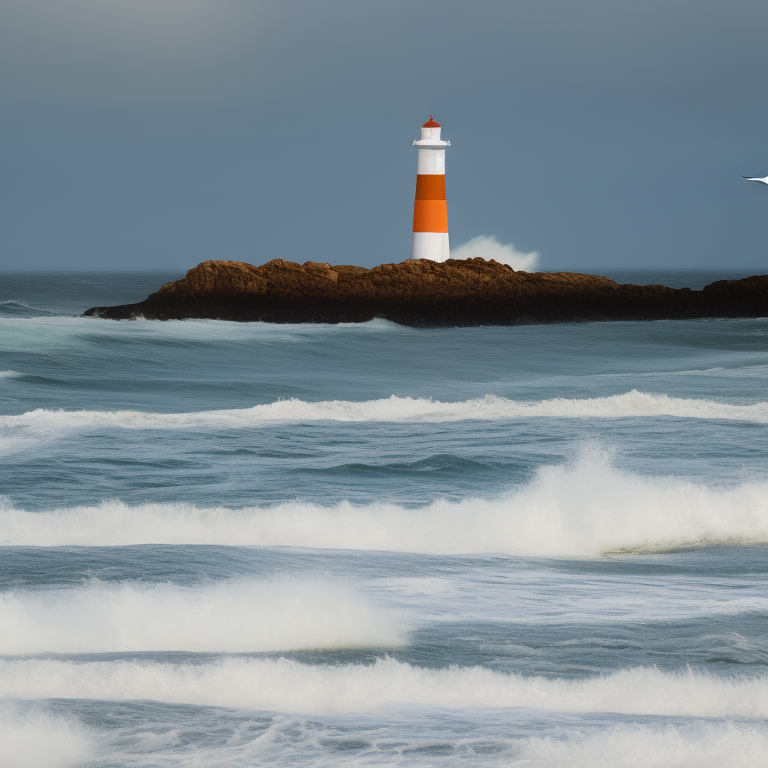}%
\includegraphics[width=0.094\linewidth]{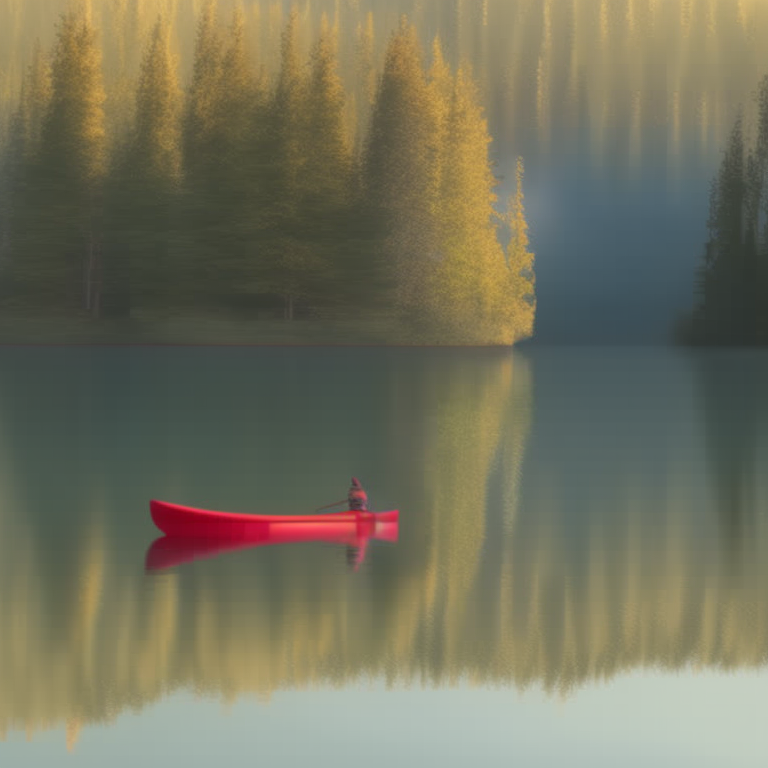}%
\includegraphics[width=0.094\linewidth]{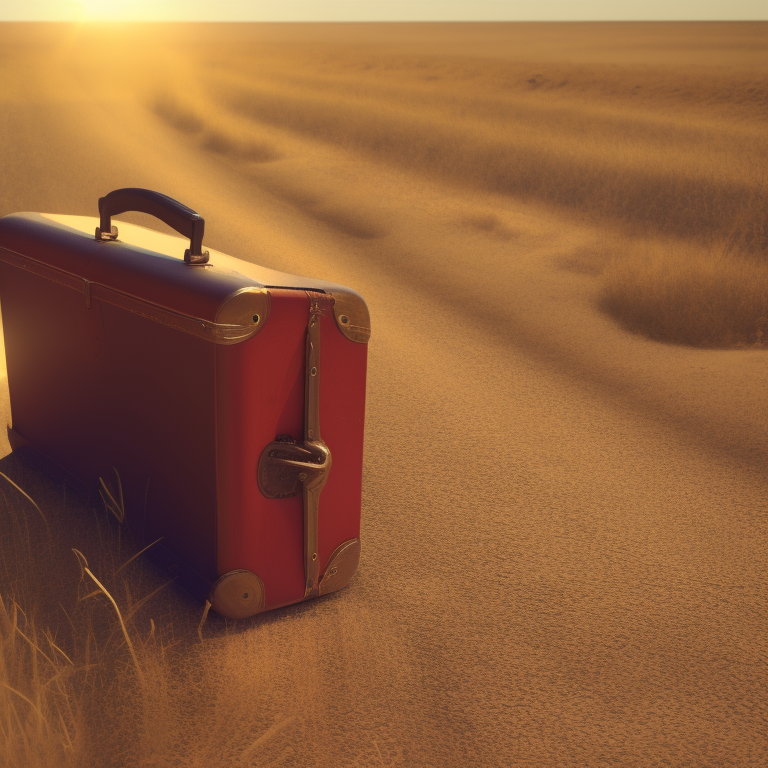}%
\includegraphics[width=0.094\linewidth]{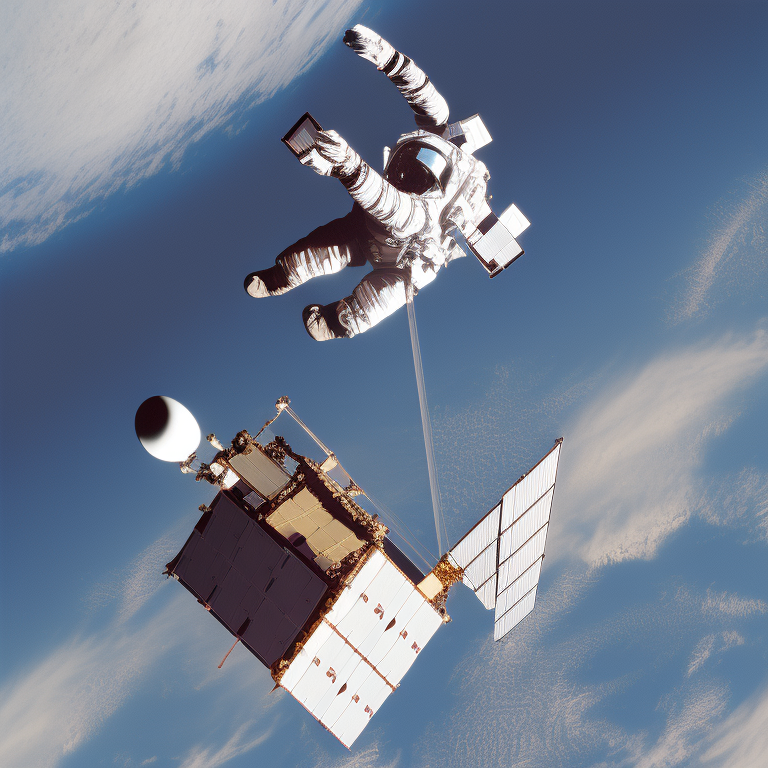}%
\includegraphics[width=0.094\linewidth]{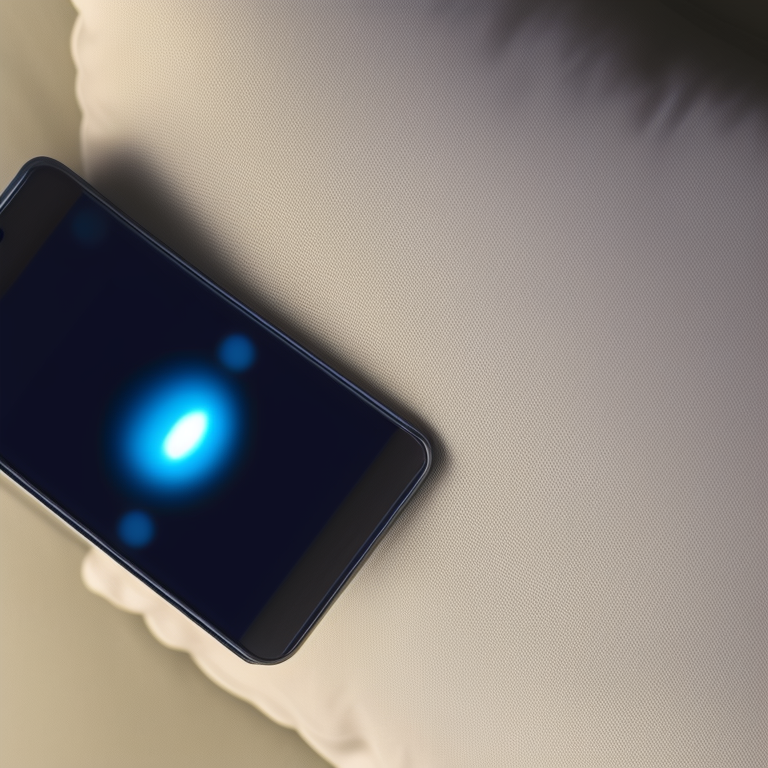}%
\includegraphics[width=0.094\linewidth]{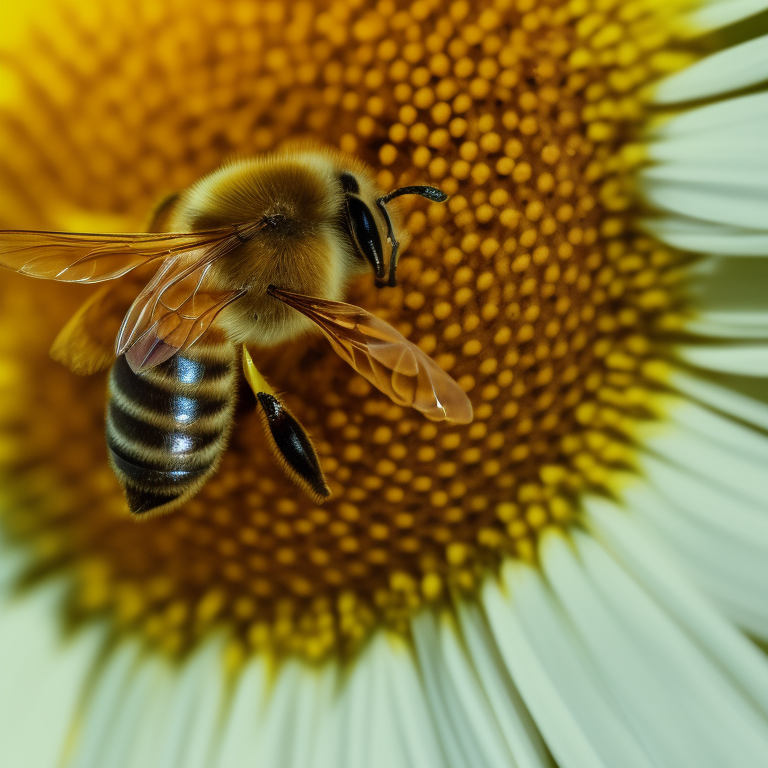}%
\includegraphics[width=0.094\linewidth]{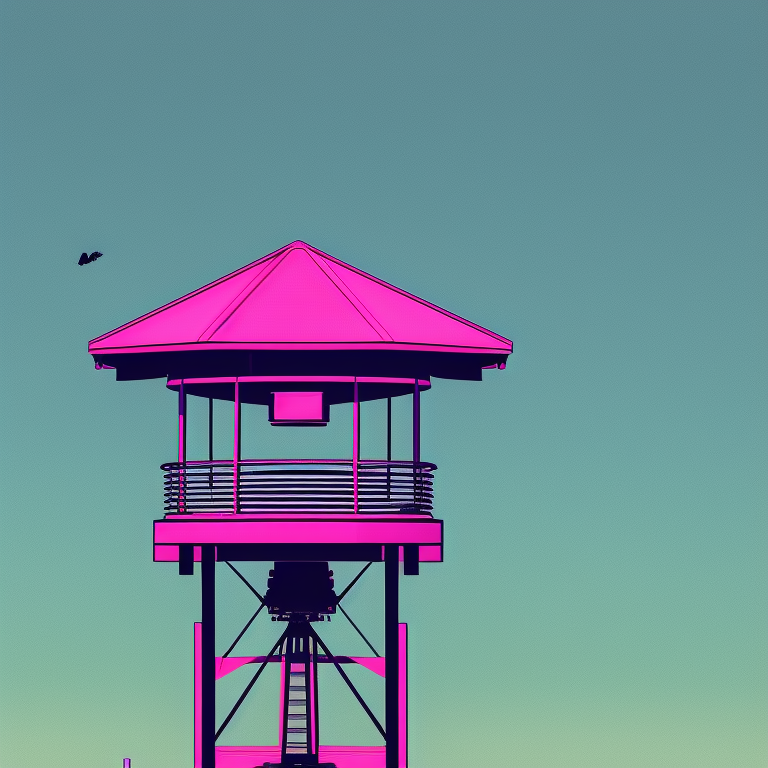}%
\includegraphics[width=0.094\linewidth]{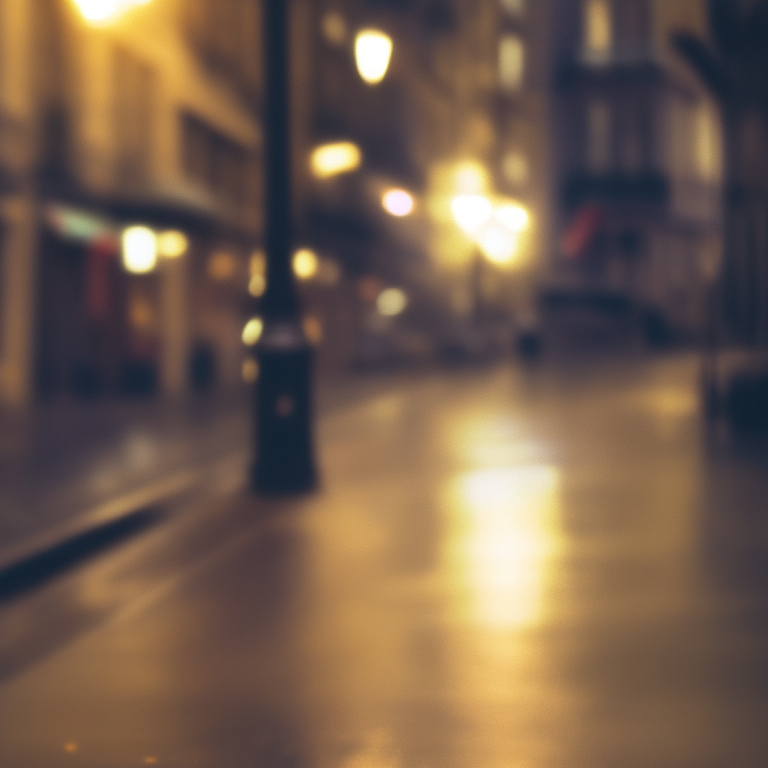}%
\includegraphics[width=0.094\linewidth]{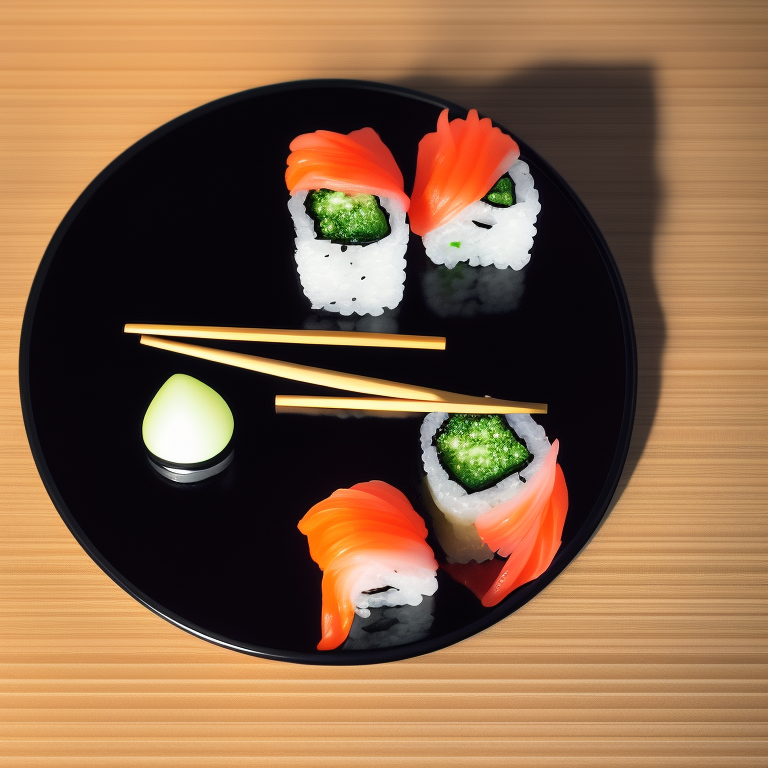}
    \end{minipage}
\vspace{-0.3cm}
    \caption{\label{fig:CMP_10_images_SD_DDPM}
    Visual comparison of final generation results on Stable Diffusion v2, using various samplers to expedite a 1000-step DDPM baseline. Each column represents the output for a different prompt, demonstrating fidelity and visual consistency across methods.
    \textbf{Rows (Top to Bottom)}: DDPM (Baseline), ParaDiGMS, ParaSolver, ParaLin, and ParaLin++.
    }
\end{figure*}

\subsection{Generation-Quality Benchmark Protocol}
\textbf{FLUX.1-dev.}
The FLUX study generates $1024\times1024$ images with guidance scale $3.5$ on eight GPUs. The reference is a 30-step Euler sampler. UniPC and DPMSolver++ are evaluated with their standard sequential few-step settings at 16, 20, 24, 26, and 30 discrete steps. ParaLin++ keeps all 30 discrete diffusion steps and uses tolerance $\delta=0.01$, nonlinear-transition parameter $\varrho=1$, parallel window size $p=100$, and a solver-step cap of 1{,}000. LongTextBench is the average of the English and Chinese text scores. The complete CFG sweep uses guidance values $0$, $3.5$, $7$, $10.5$, and $14$; speedup is always measured against the same-CFG 30-step Euler reference on the GenEval generation pass, excluding model loading and scoring.

\textbf{Ming-Flash-Omni 2.0.}
The Ming study generates $1024\times1024$ images with guidance scale $5.0$ on eight GPUs. Euler, UniPC, and DPMSolver++ use their standard sequential samplers at the listed step settings. ParaLin++ again keeps 30 discrete steps and uses $\delta=0.01$, $\varrho=1$, $p=100$, and a solver-step cap of 1{,}000. LongTextBench uses 320 prompts with four images per prompt, seed 42, hybrid OCR, and a bounding-box score threshold of $0.85$; the reported score averages the English and Chinese subsets. DPG-Bench uses 1{,}065 images with the mPLUG visual-question-answering model. GenEval uses 553 prompts with four images per prompt, bf16 generation, and eight evaluation workers. All speedup values use the CFG-5 GenEval generation pass against the 30-step Euler reference, excluding model loading and scoring.

\subsection{Consistency with the Main Findings}
The key distinction is between \emph{few-step} step-shortening and \emph{step-parity} parallelization. Few-step UniPC and DPMSolver++ change the trajectory discretization. In the FLUX audit, their 16-step configurations lose $21\%$--$24\%$ of the relative GenEval score, $6.5$--$8.2$ DPG-Bench points, and $67\%$--$70\%$ of the LongTextBench score relative to the 30-step Euler reference. ParaLin++ keeps the same 30 discrete steps, reports $16.37$ average parallel solver iterations, and retains an Euler-level quality profile while achieving a $1.68\times$ speedup. On Ming-Flash-Omni 2.0, ParaLin++ uses $24.69$ average solver iterations and obtains a $1.21\times$ speedup with GenEval, DPG-Bench, and LongTextBench scores matching the 30-step Euler reference within the reported measurement setting. The Stable Diffusion-v2 and LSUN Church experiments show comparable FID/CLIP scores with speedups of $2.6\times$--$4.8\times$ across the main multi-GPU settings. These supplementary details therefore use the same terminology and protocol as the main manuscript: discrete steps, solver iterations, distributional/alignment metrics, and generation-quality metrics are reported separately rather than conflated.

\section{More Visual Results}
This section presents additional qualitative outputs for ParaLin++ wrappers of DDPM, DDIM, and DPM-Solver on Stable Diffusion-v2. We randomly generated 10 prompts with different styles by DeepSeek-v3 for each method. These conventional Stable Diffusion-v2 comparisons complement the benchmark-specific generation-quality visual comparisons in the main manuscript.
The results for DDPM, DDIM and DPMSolver are shown in
Figures~\ref{fig:CMP_10_images_SD_DDPM},~\ref{fig:CMP_10_images_SD_DDIM}, and~\ref{fig:CMP_10_images_SD_DPMSolver}, respectively.

\begin{figure*}[htbp]
    \centering

    \begin{minipage}{\linewidth}
        \centering
        \parbox[b][0.094\linewidth][t]{0.05\linewidth}{\rotatebox{90}{\small\textbf{DDIM}}}%
        \includegraphics[width=0.094\linewidth]{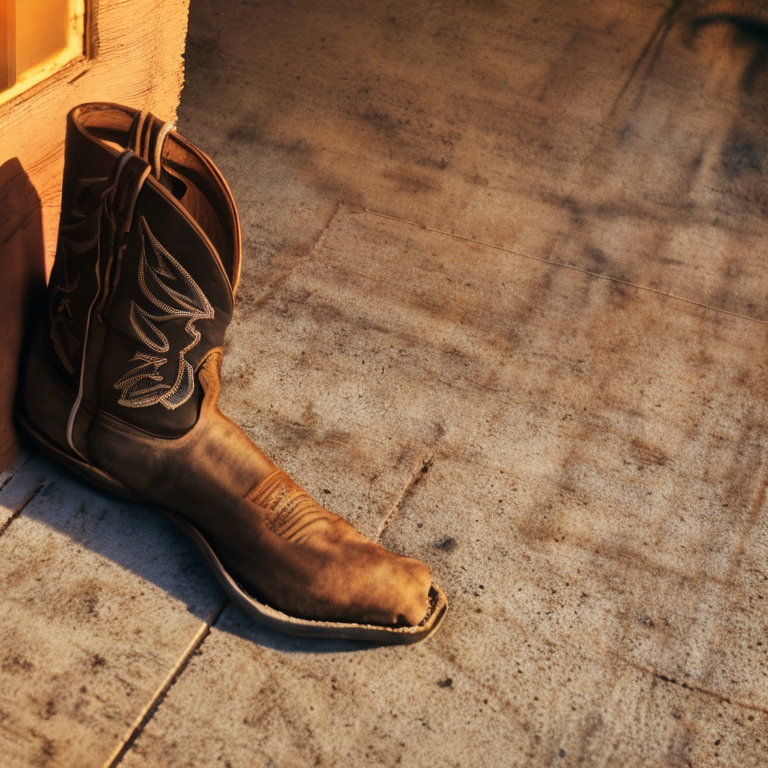}
        \includegraphics[width=0.094\linewidth]{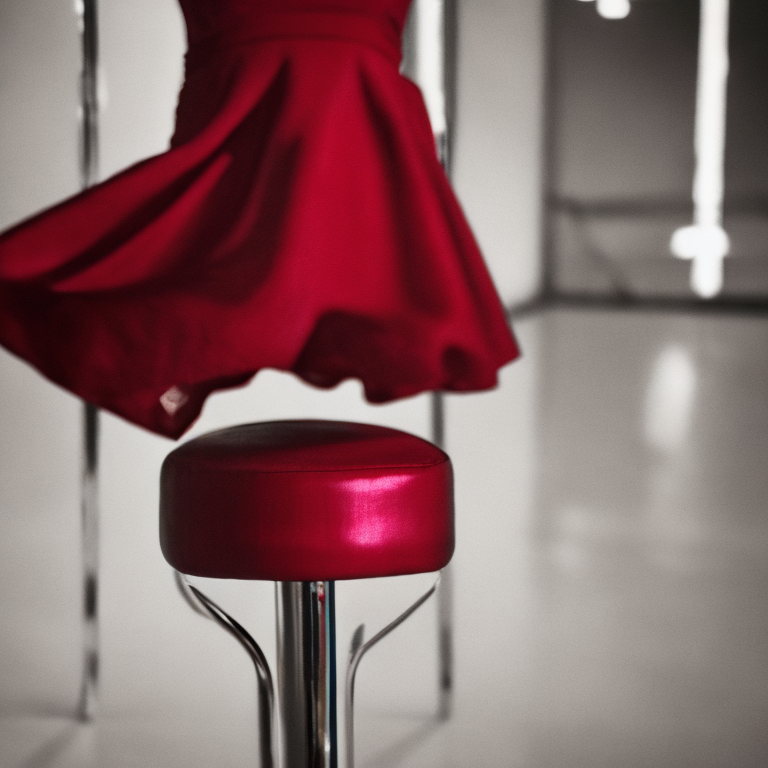}%
        \includegraphics[width=0.094\linewidth]{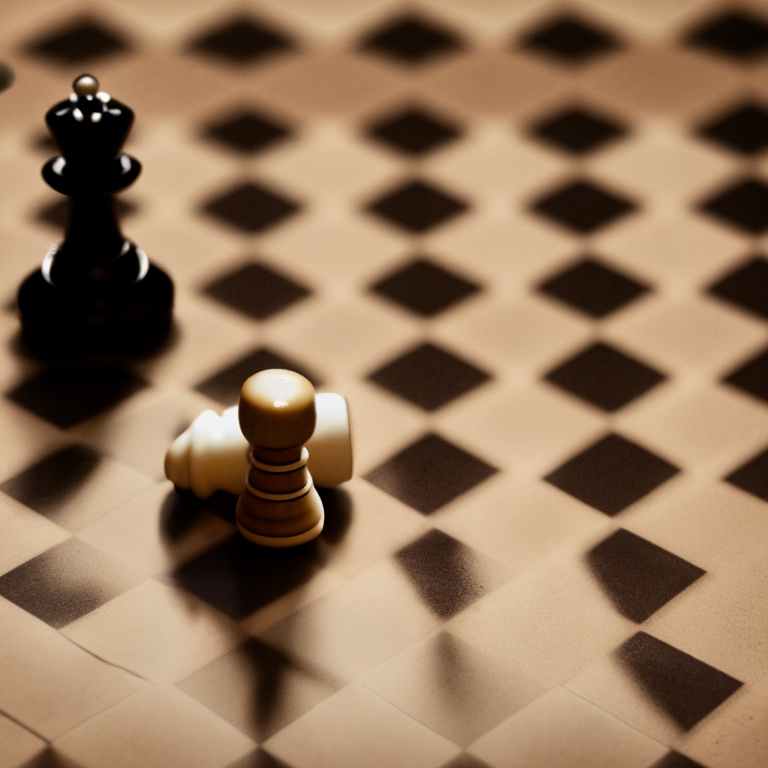}%
        \includegraphics[width=0.094\linewidth]{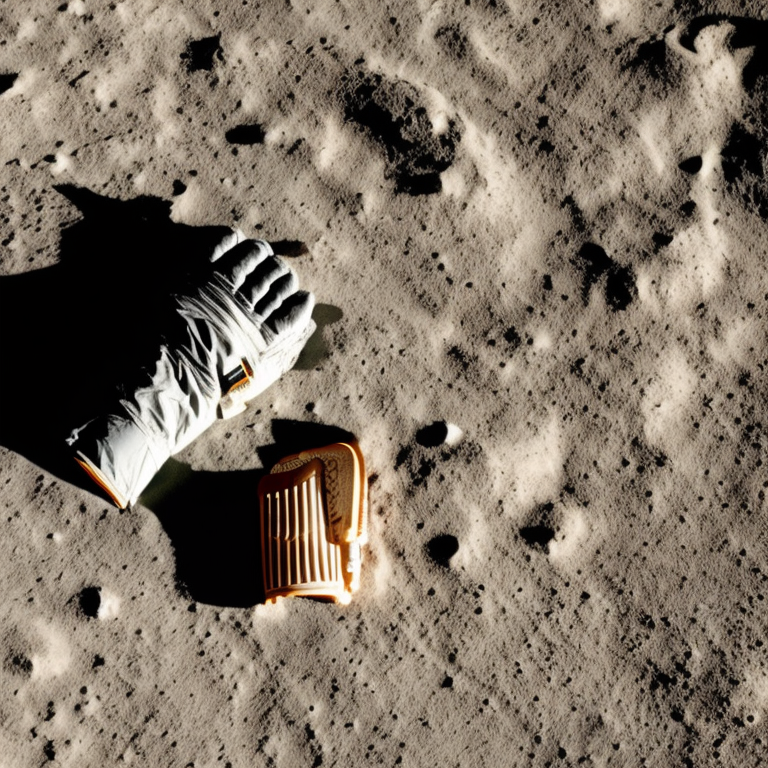}%
        \includegraphics[width=0.094\linewidth]{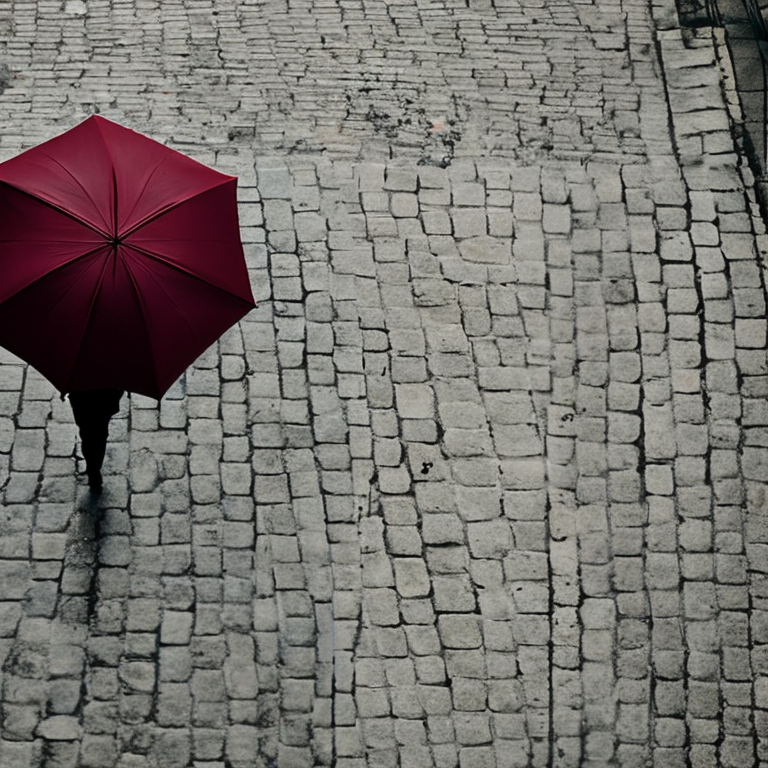}%
        \includegraphics[width=0.094\linewidth]{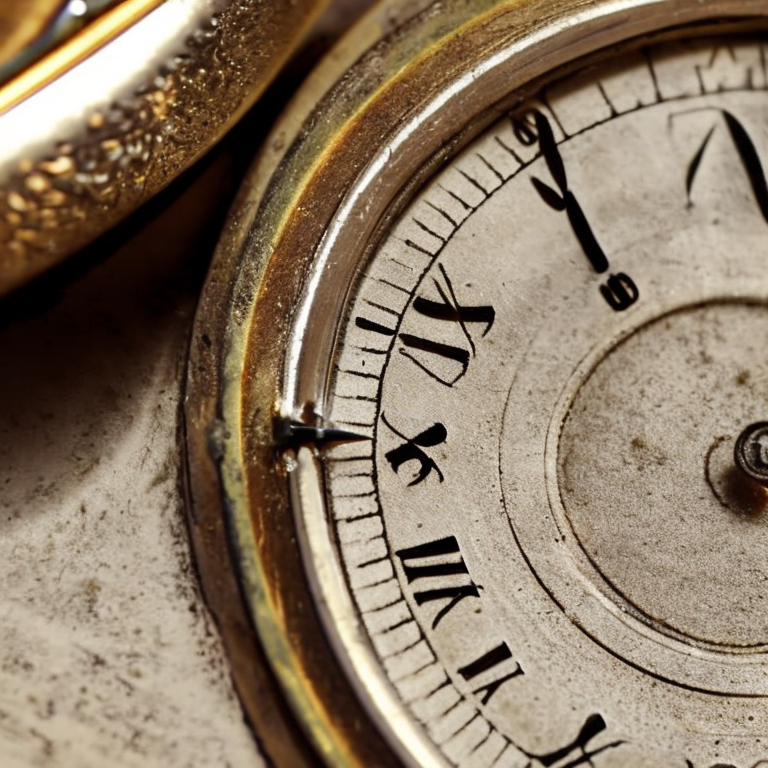}%
        \includegraphics[width=0.094\linewidth]{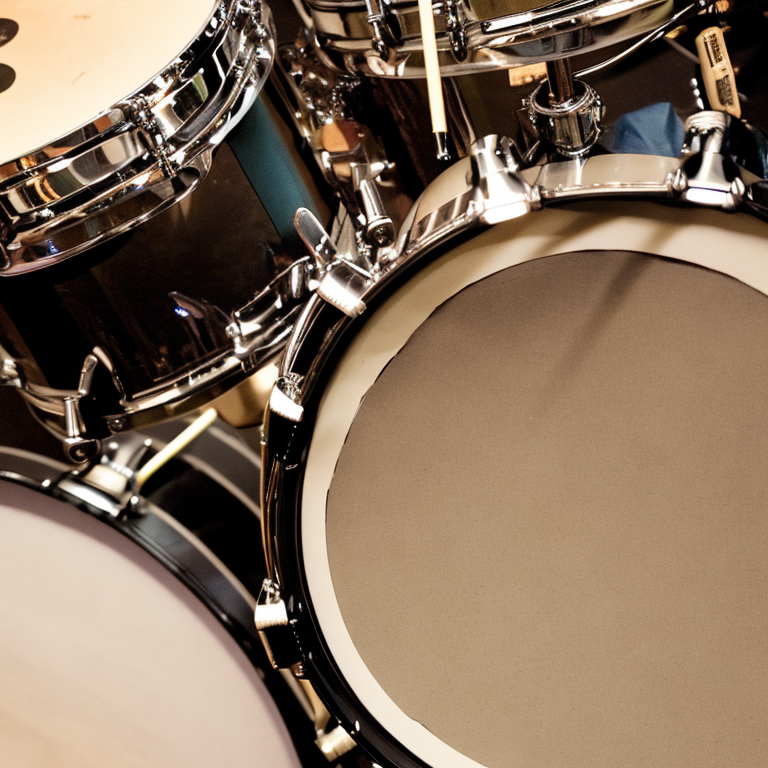}%
        \includegraphics[width=0.094\linewidth]{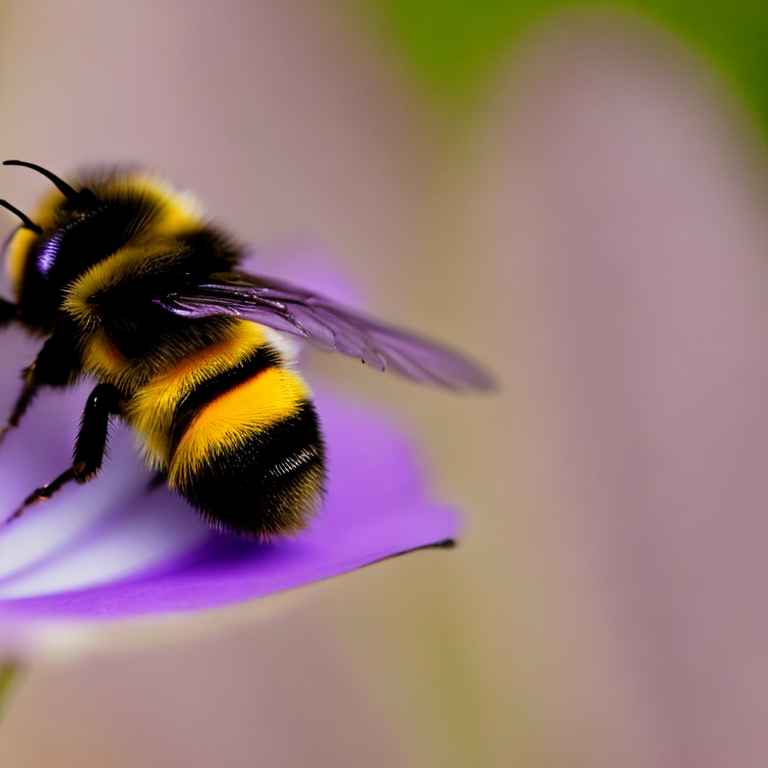}%
        \includegraphics[width=0.094\linewidth]{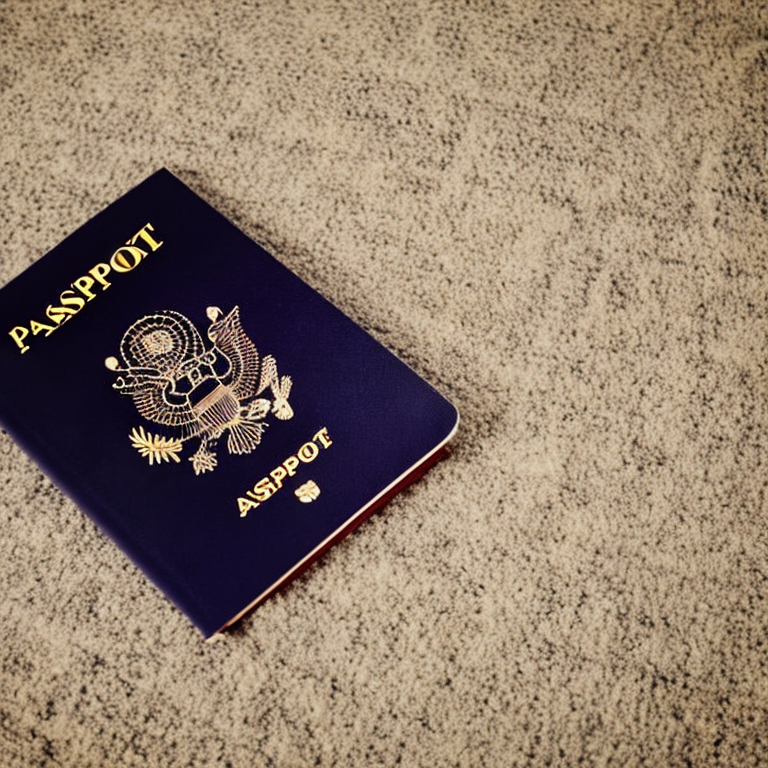}%
        \includegraphics[width=0.094\linewidth]{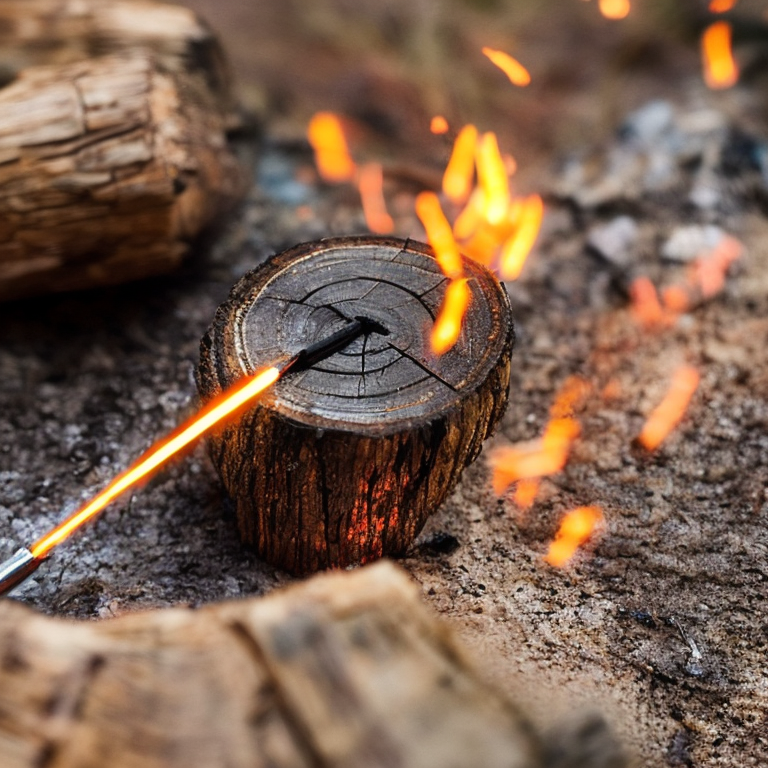}
    \end{minipage}
    \vspace{1mm} 

    \begin{minipage}{\linewidth}
        \centering
        \parbox[b][0.094\linewidth][t]{0.05\linewidth}{\rotatebox{90}{\small\textbf{ParaDiGMS}}}%
        \includegraphics[width=0.094\linewidth]{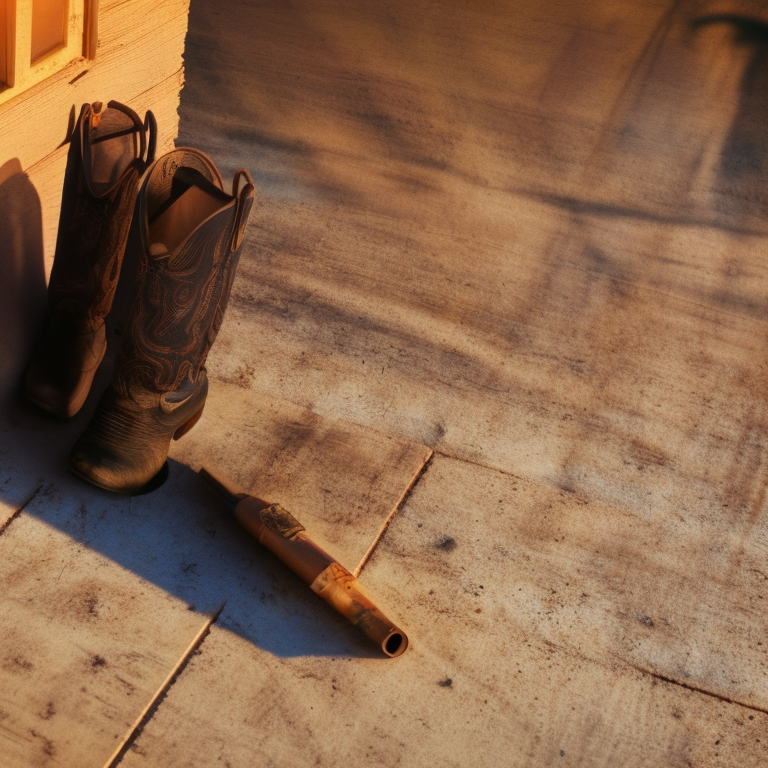}%
        \includegraphics[width=0.094\linewidth]{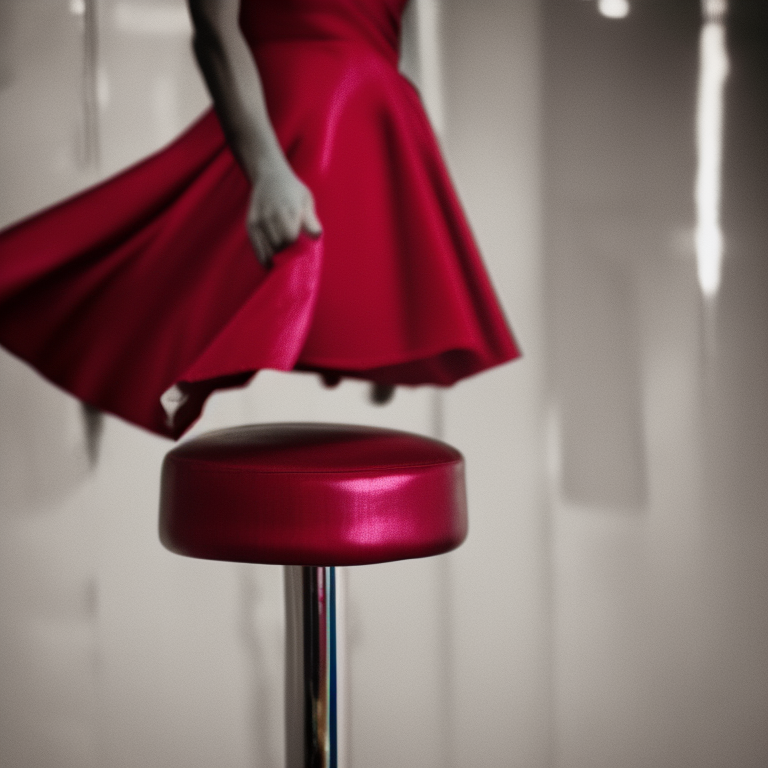}%
        \includegraphics[width=0.094\linewidth]{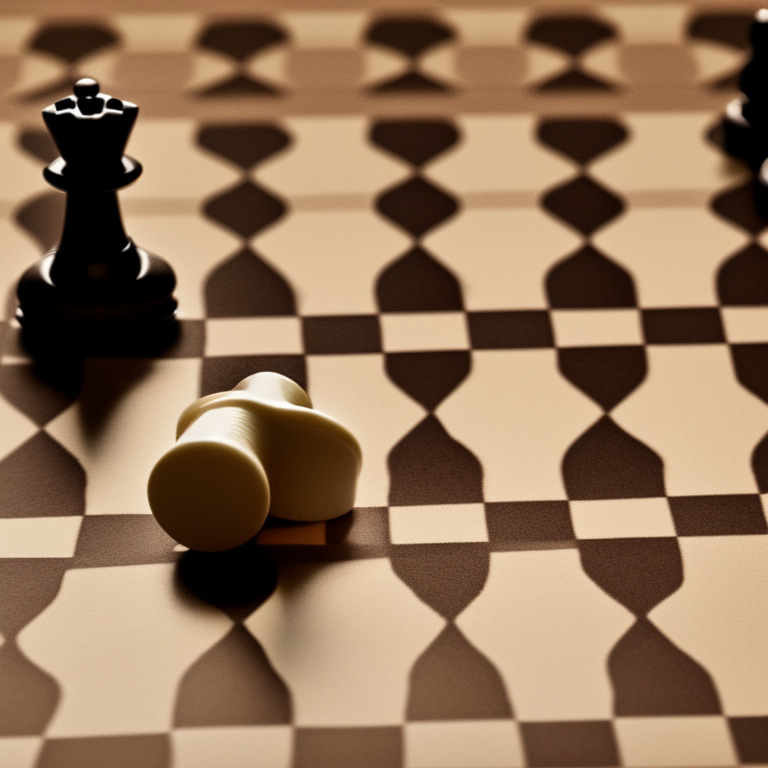}%
        \includegraphics[width=0.094\linewidth]{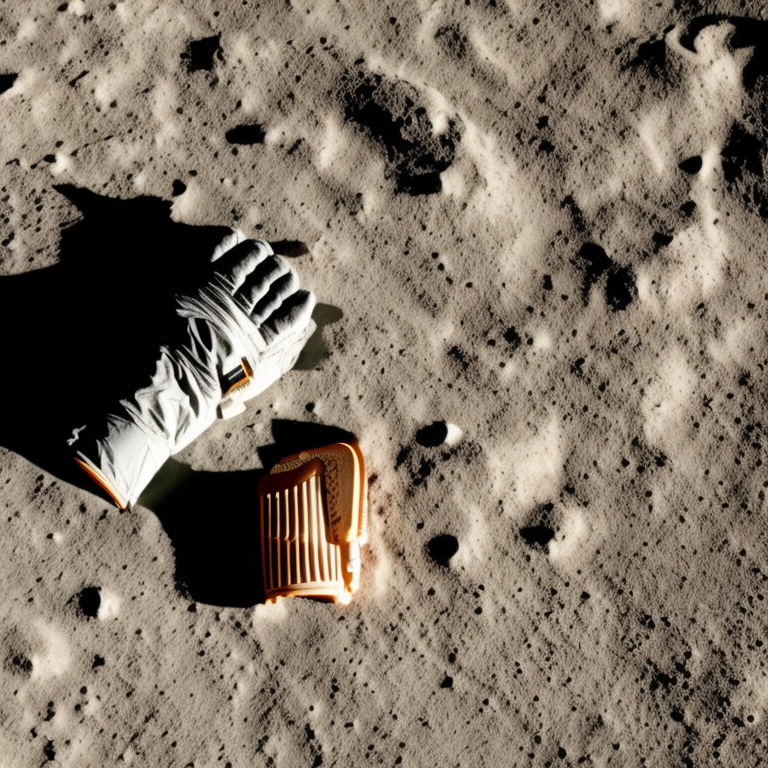}%
        \includegraphics[width=0.094\linewidth]{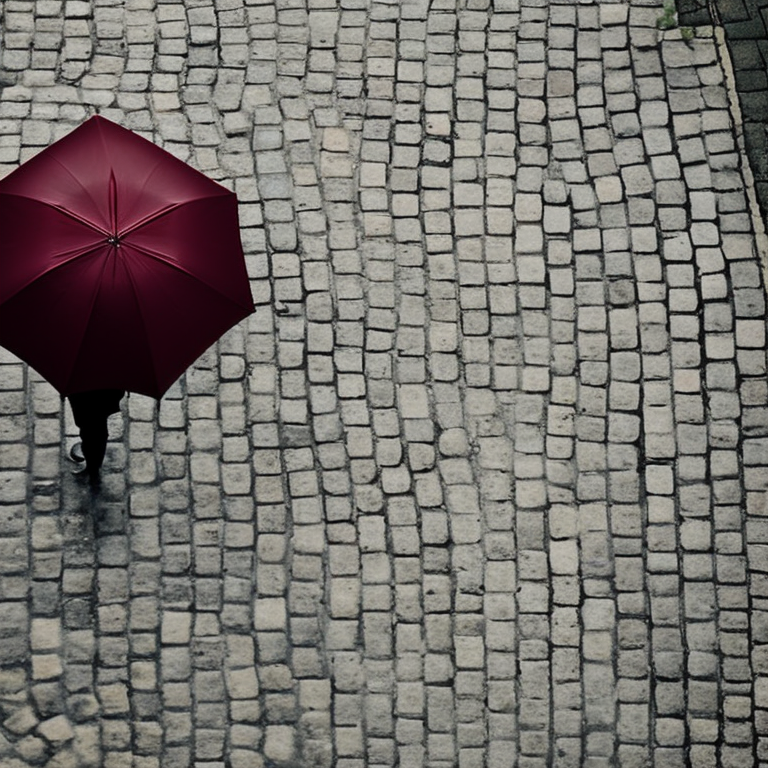}%
        \includegraphics[width=0.094\linewidth]{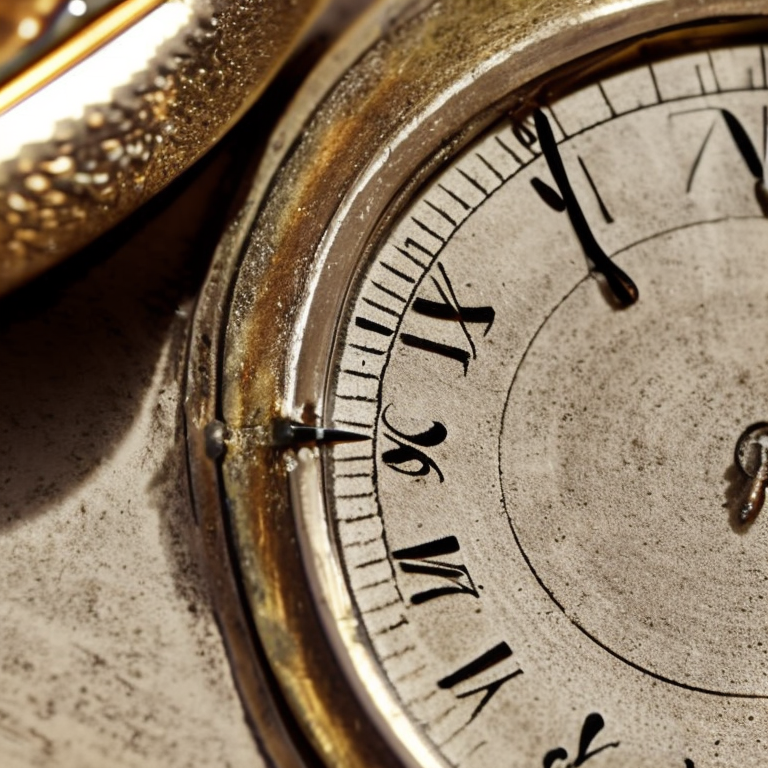}%
        \includegraphics[width=0.094\linewidth]{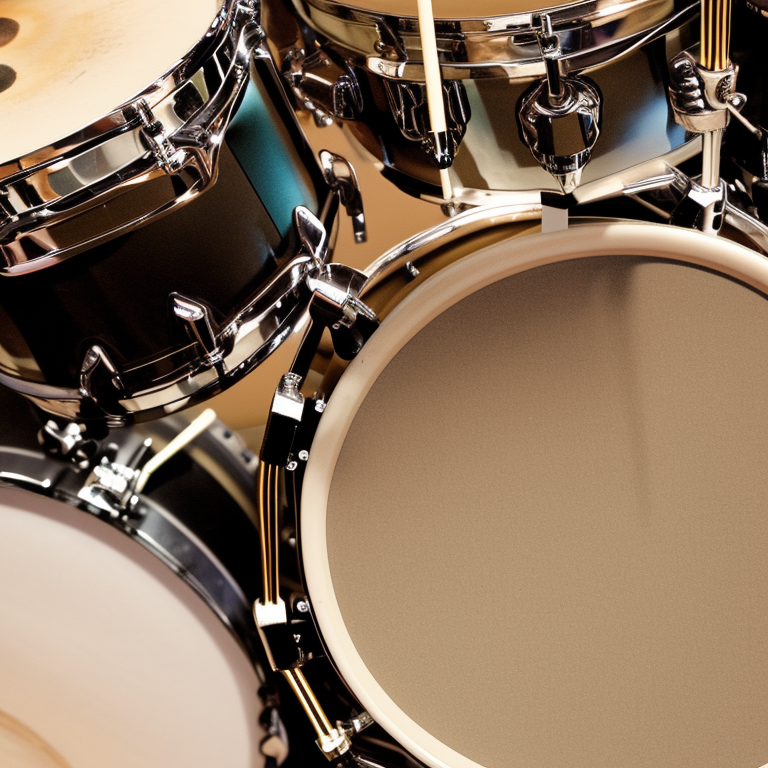}%
        \includegraphics[width=0.094\linewidth]{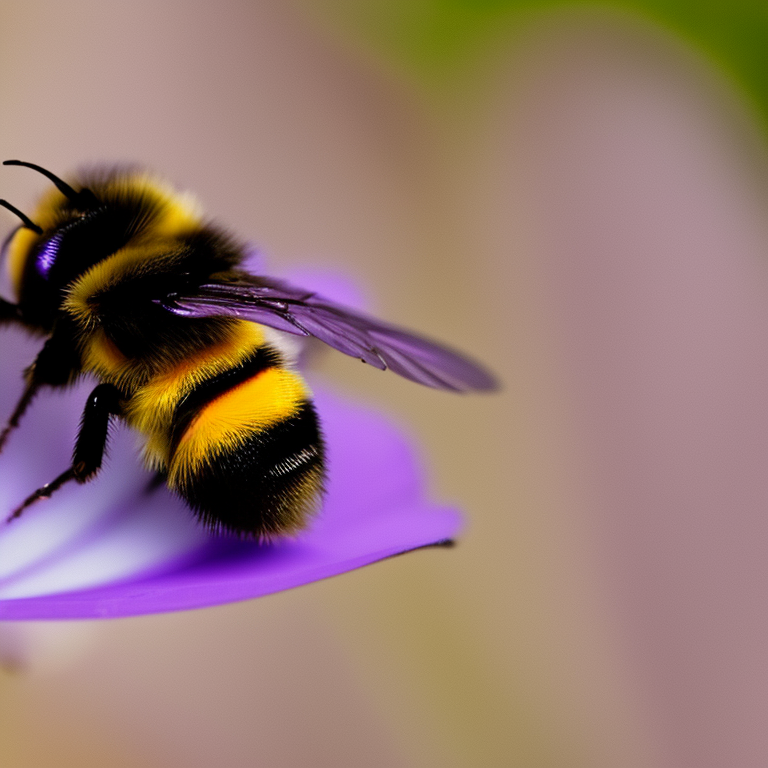}%
        \includegraphics[width=0.094\linewidth]{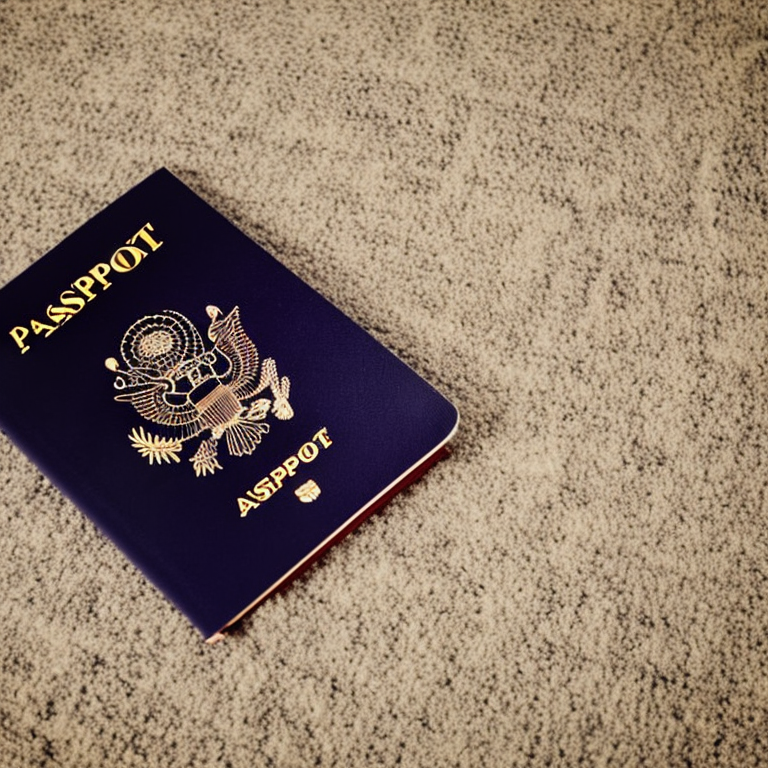}%
        \includegraphics[width=0.094\linewidth]{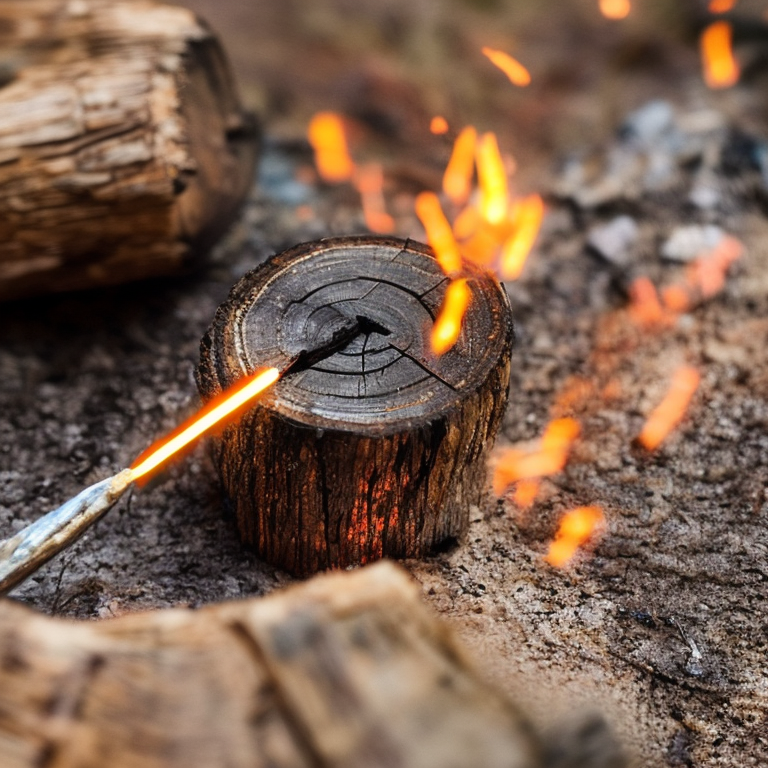}
    \end{minipage}
    \vspace{1mm} 

    \begin{minipage}{\linewidth}
        \centering
        \parbox[b][0.094\linewidth][t]{0.05\linewidth}{\rotatebox{90}{\small\textbf{ParaSolver}}}%
        \includegraphics[width=0.094\linewidth]{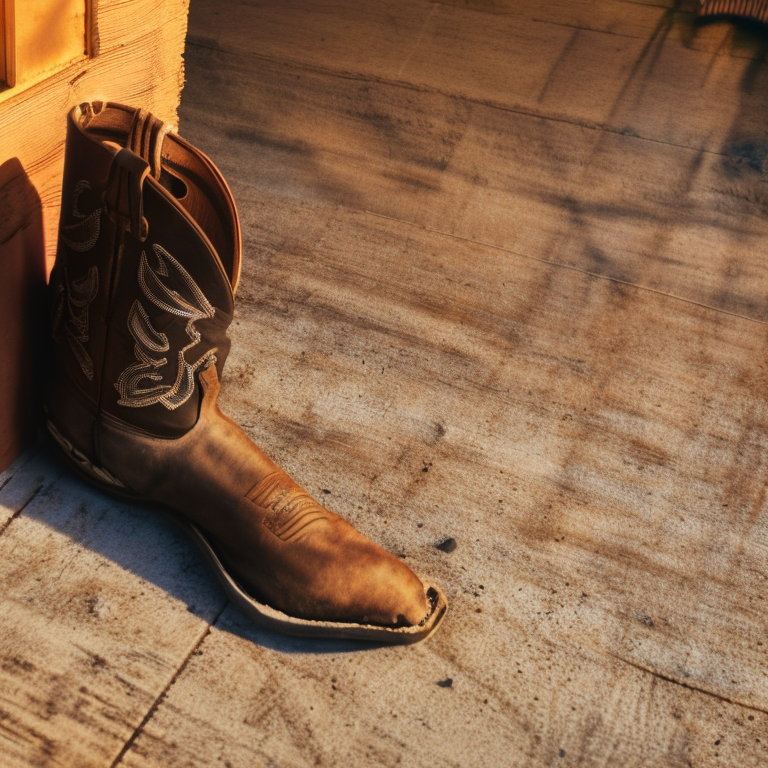}%
        \includegraphics[width=0.094\linewidth]{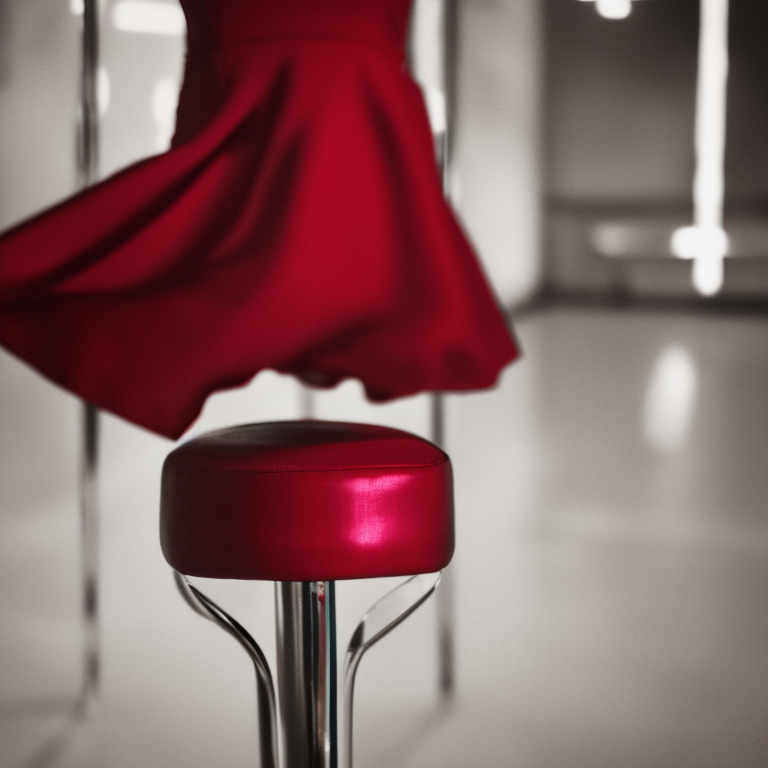}%
        \includegraphics[width=0.094\linewidth]{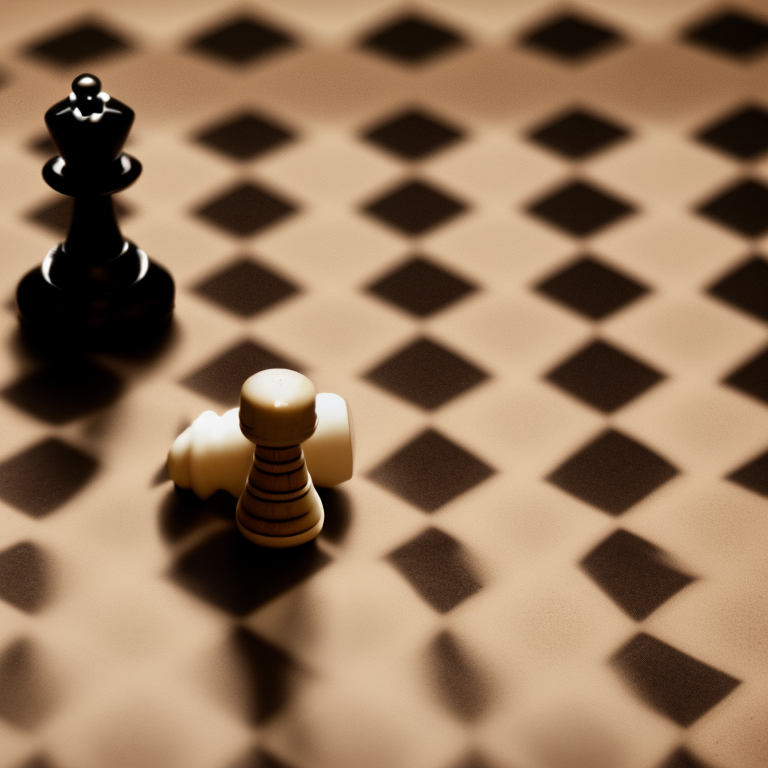}%
        \includegraphics[width=0.094\linewidth]{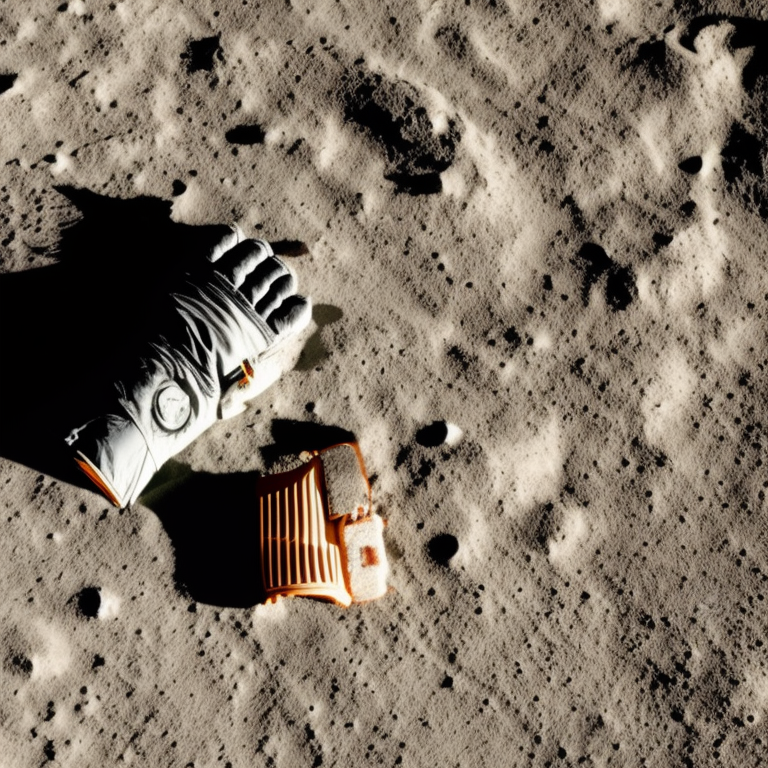}%
        \includegraphics[width=0.094\linewidth]{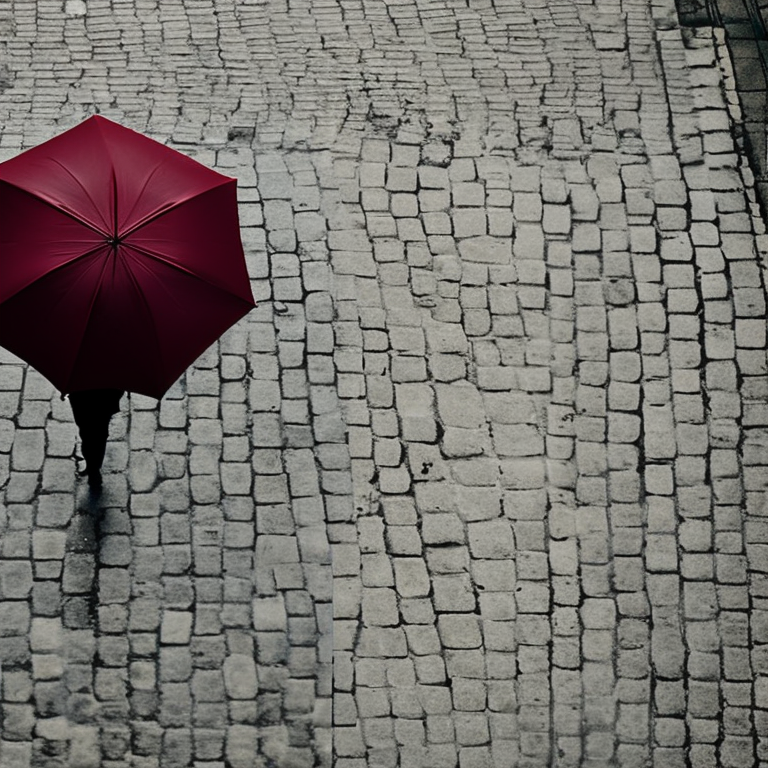}%
        \includegraphics[width=0.094\linewidth]{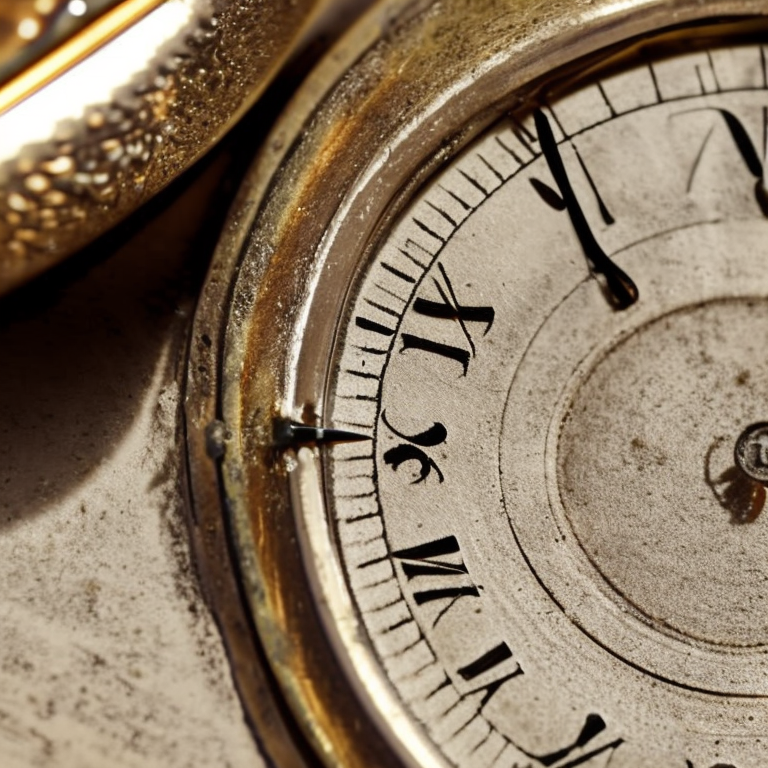}%
        \includegraphics[width=0.094\linewidth]{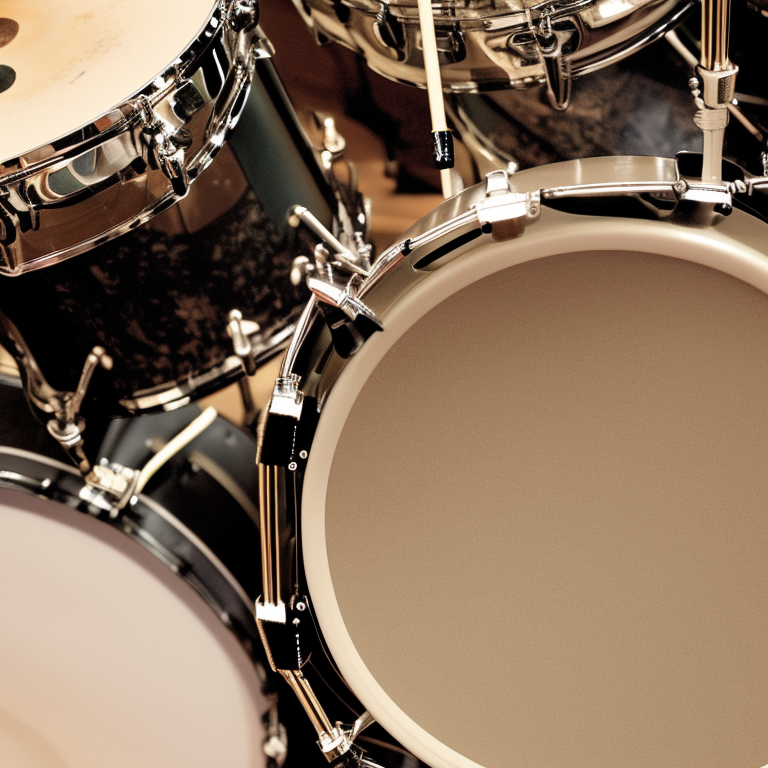}%
        \includegraphics[width=0.094\linewidth]{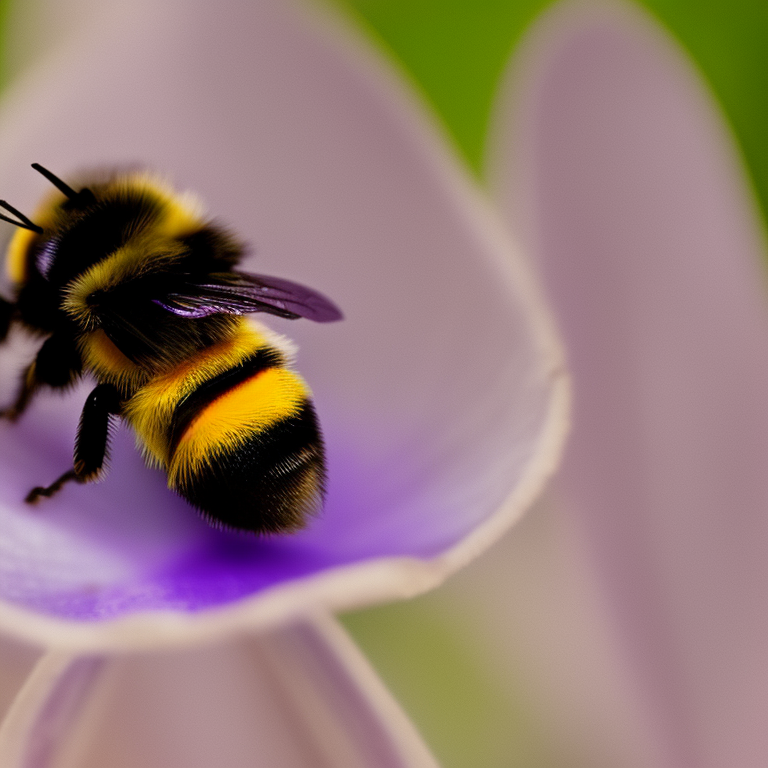}%
        \includegraphics[width=0.094\linewidth]{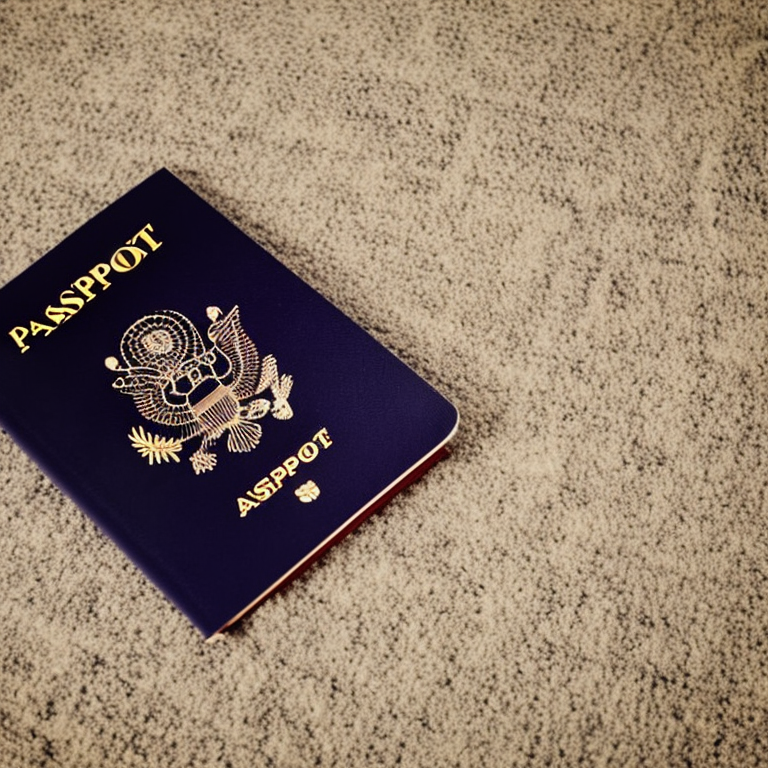}%
        \includegraphics[width=0.094\linewidth]{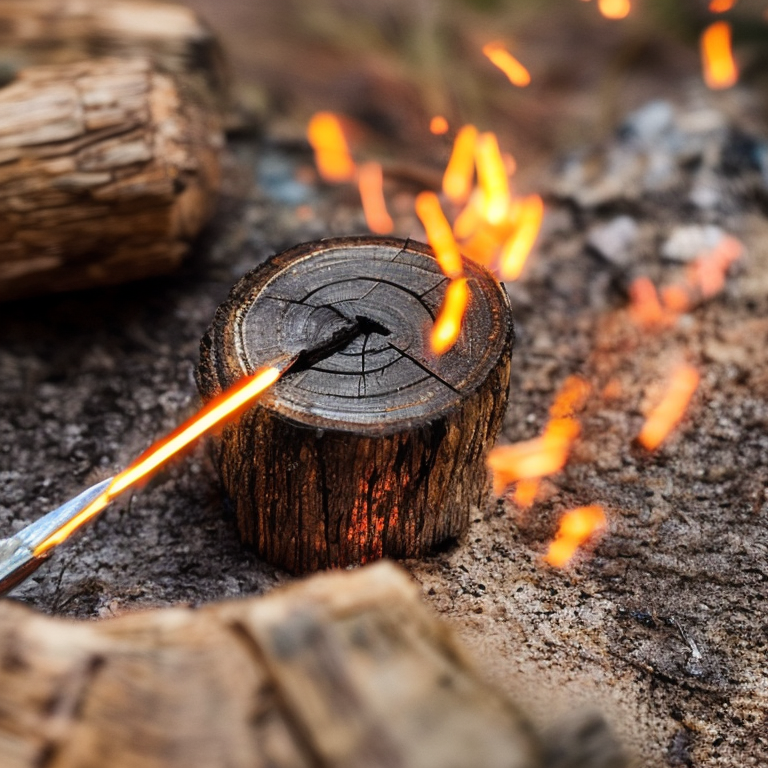}
    \end{minipage}
    \vspace{1mm} 

    \begin{minipage}{\linewidth}
        \centering
        \parbox[b][0.094\linewidth][t]{0.05\linewidth}{\rotatebox{90}{\small\textbf{ParaLin}}}%
        \includegraphics[width=0.094\linewidth]{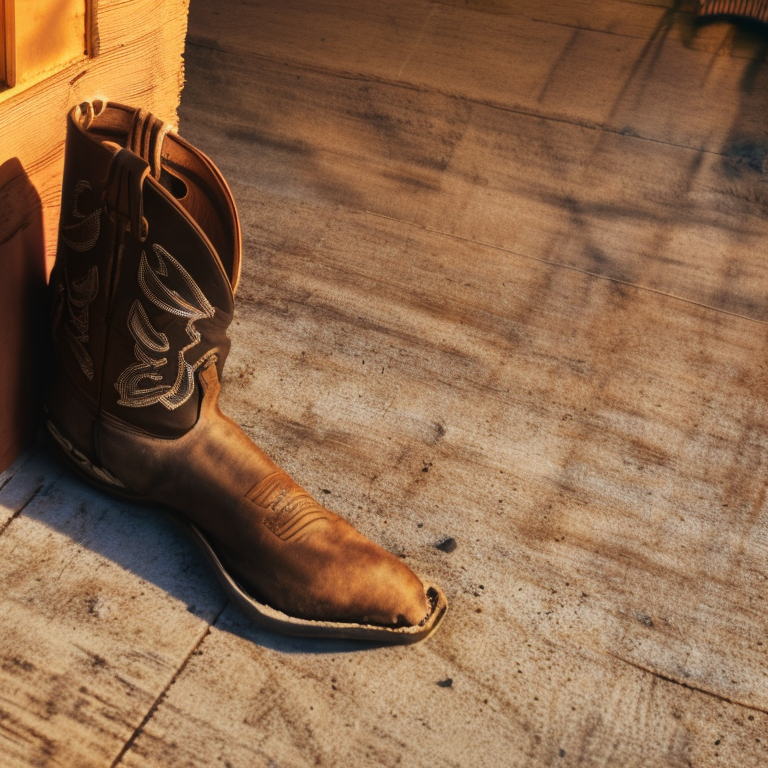}%
        \includegraphics[width=0.094\linewidth]{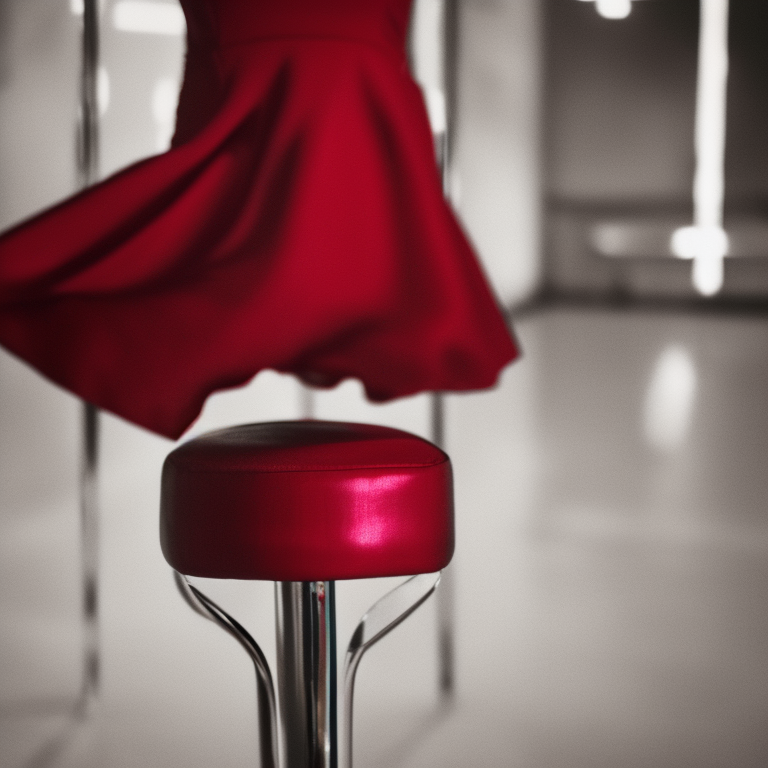}%
        \includegraphics[width=0.094\linewidth]{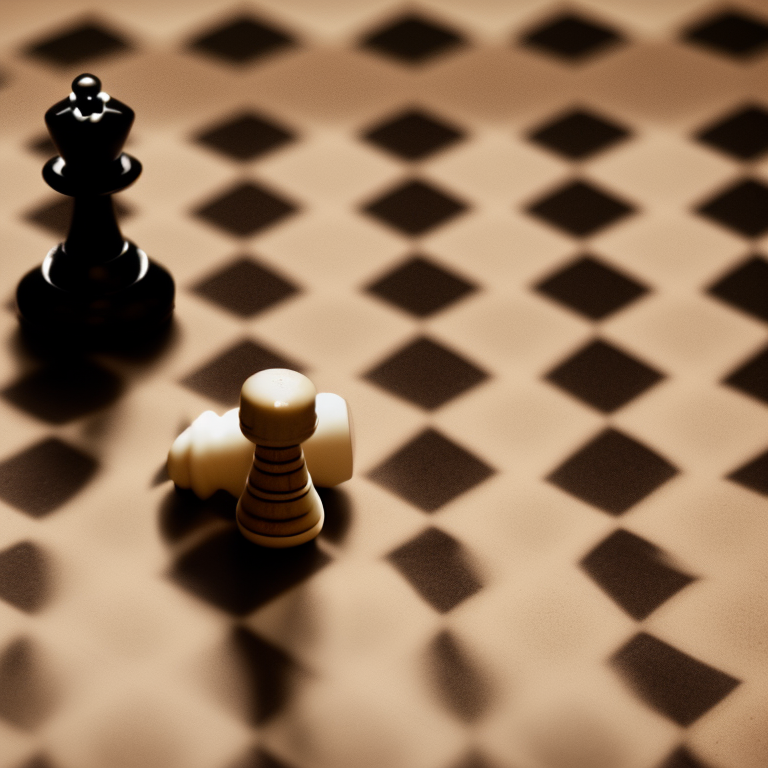}%
        \includegraphics[width=0.094\linewidth]{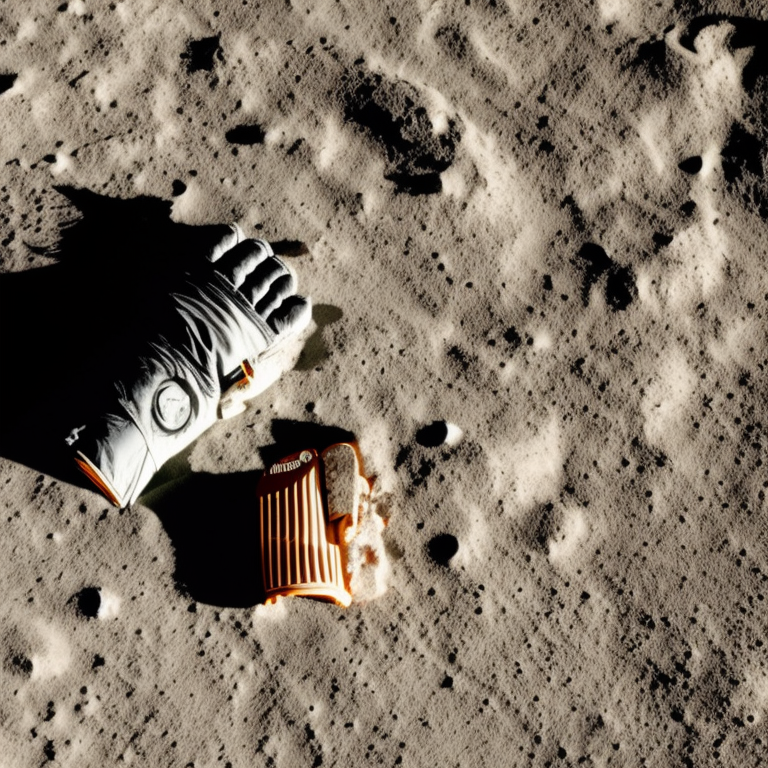}%
        \includegraphics[width=0.094\linewidth]{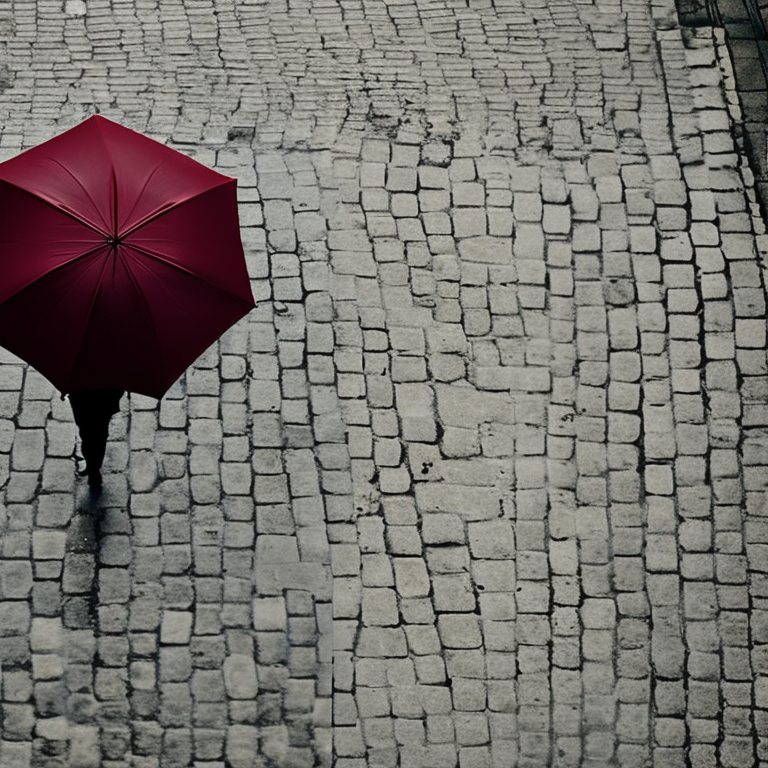}%
        \includegraphics[width=0.094\linewidth]{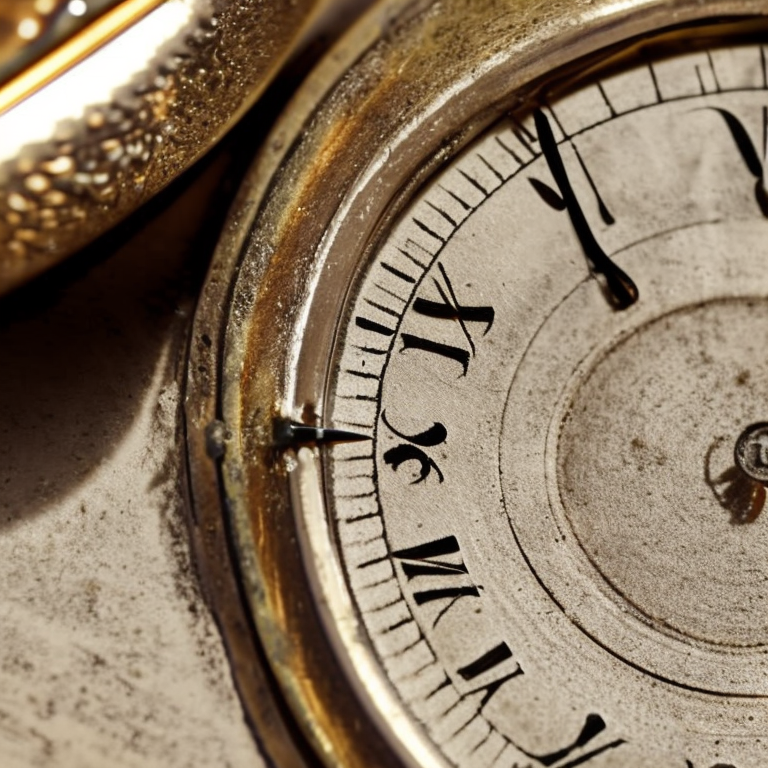}%
        \includegraphics[width=0.094\linewidth]{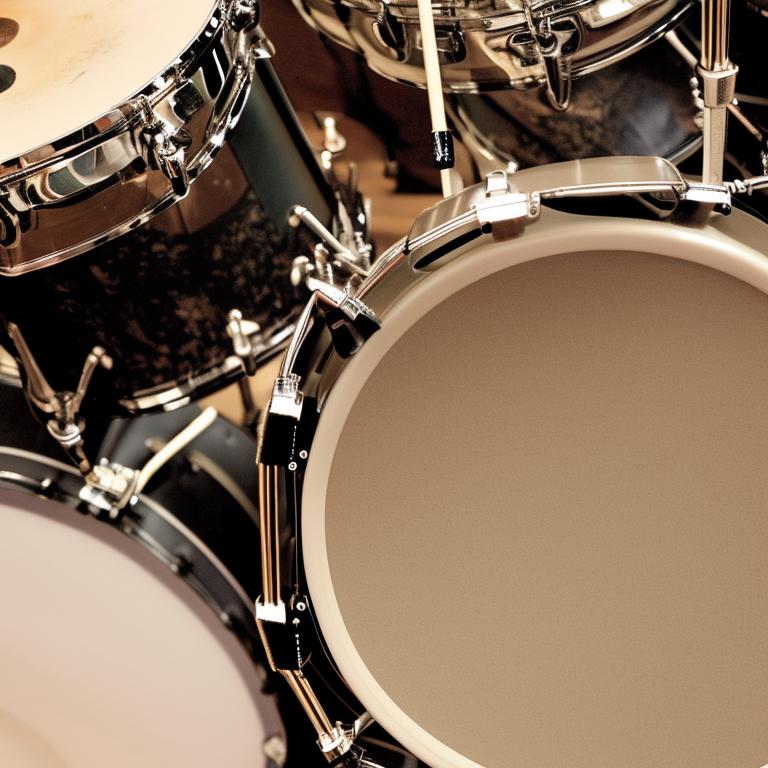}%
        \includegraphics[width=0.094\linewidth]{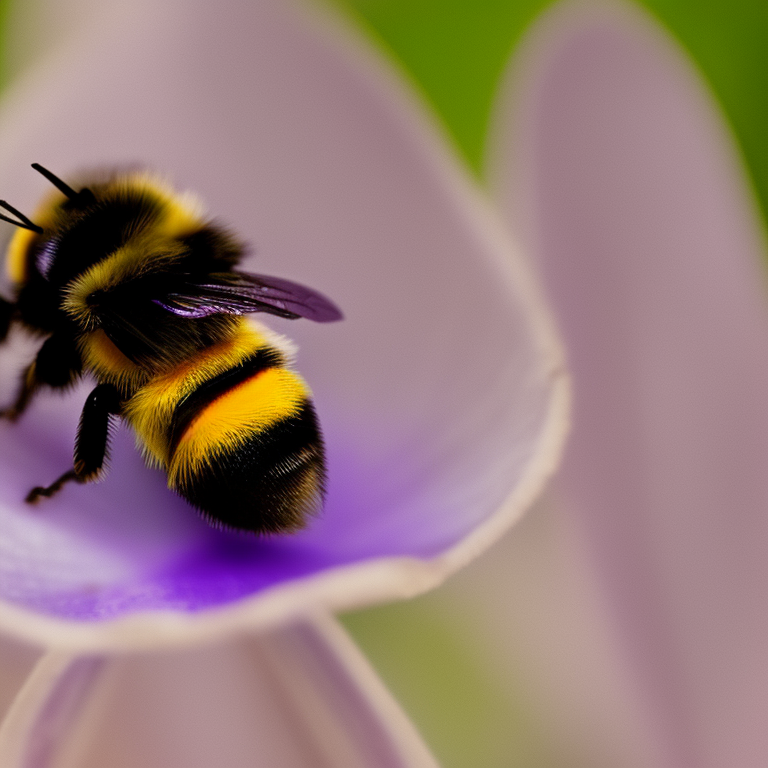}%
        \includegraphics[width=0.094\linewidth]{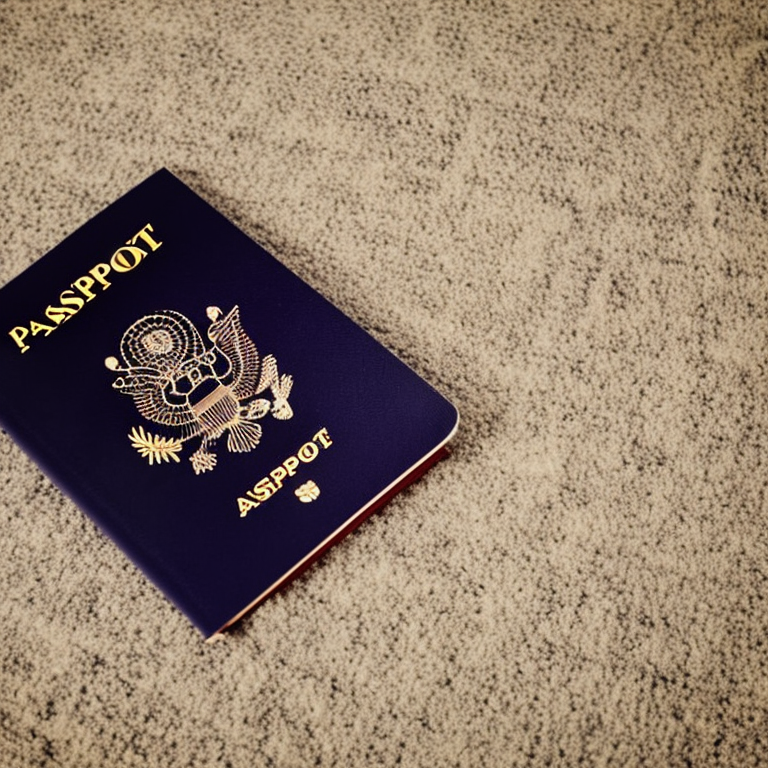}%
        \includegraphics[width=0.094\linewidth]{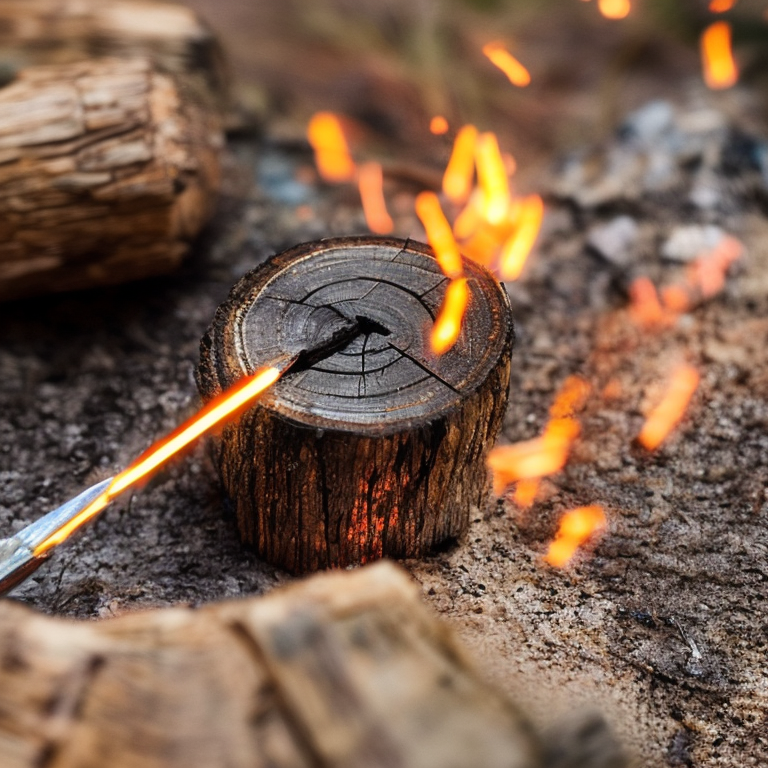}
    \end{minipage}
    \vspace{1mm} 

    \begin{minipage}{\linewidth}
        \centering
        \parbox[c][0.094\linewidth][t]{0.05\linewidth}{\rotatebox{90}{\small\textbf{Ours}}}%
        \includegraphics[width=0.094\linewidth]{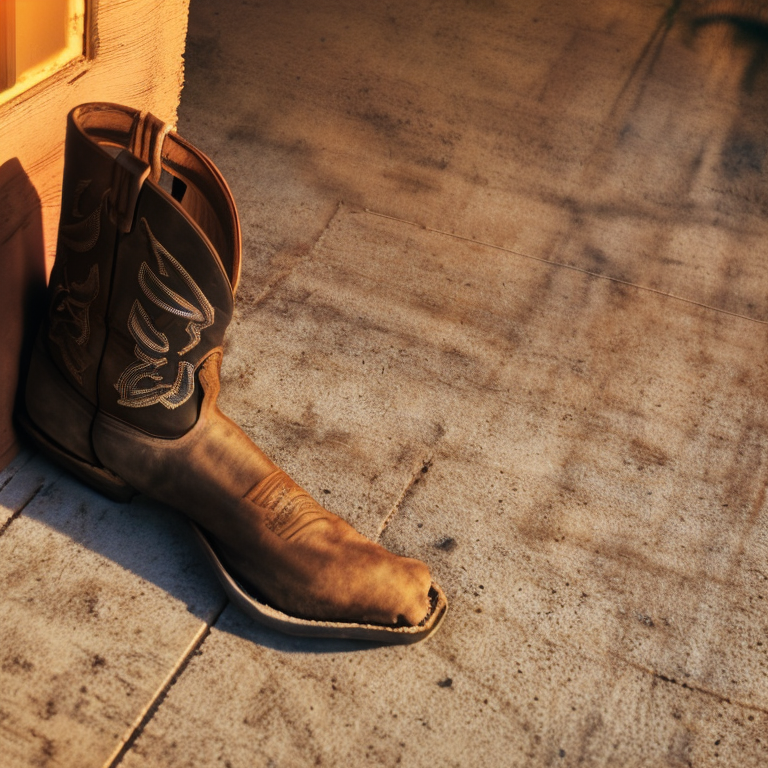}%
        \includegraphics[width=0.094\linewidth]{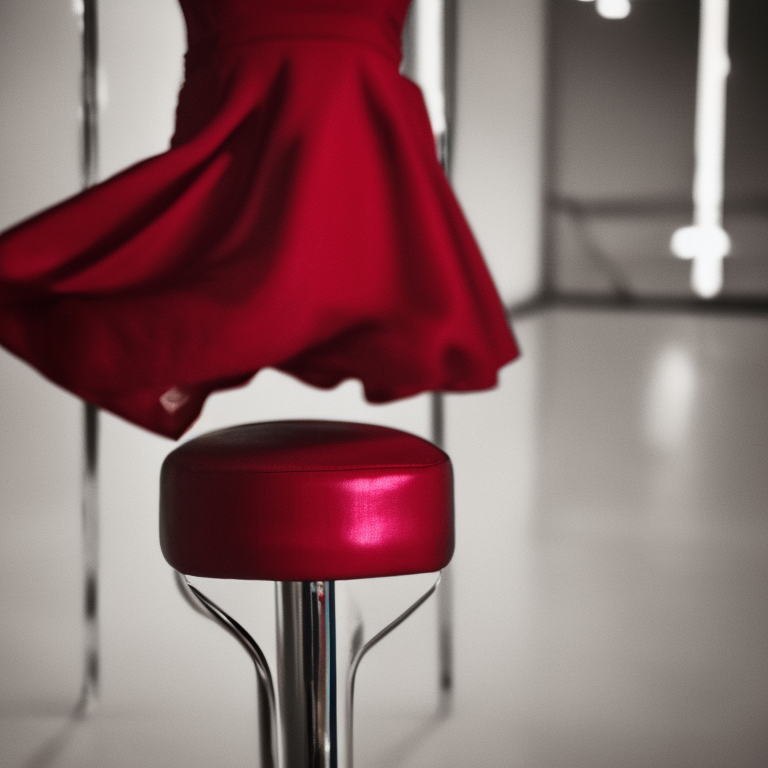}%
        \includegraphics[width=0.094\linewidth]{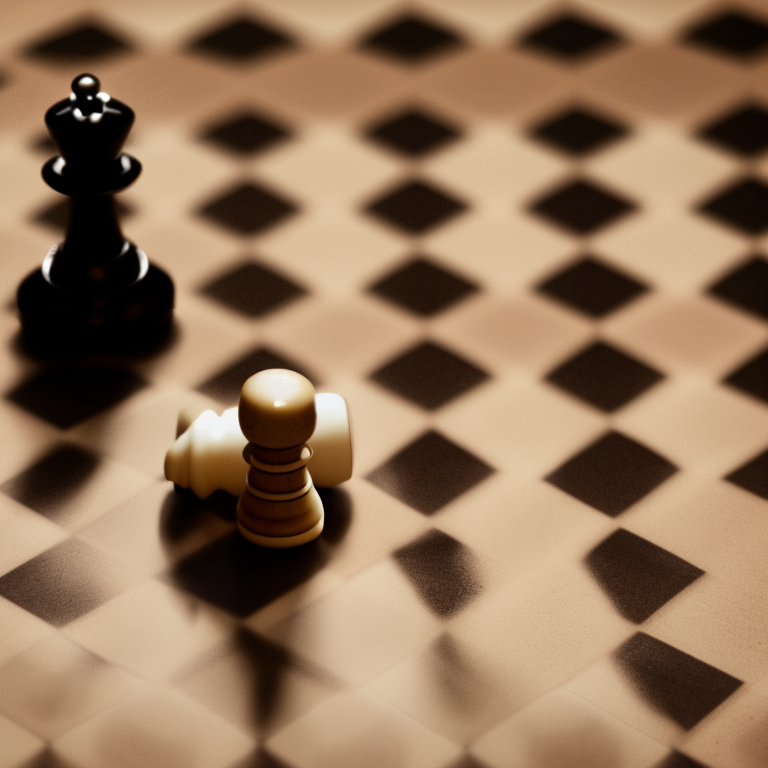}%
        \includegraphics[width=0.094\linewidth]{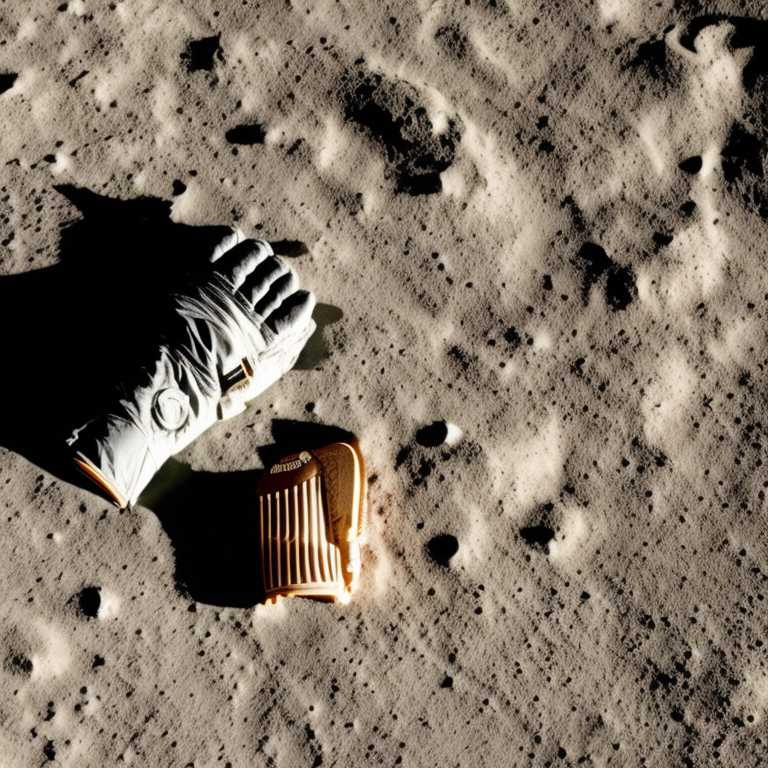}%
        \includegraphics[width=0.094\linewidth]{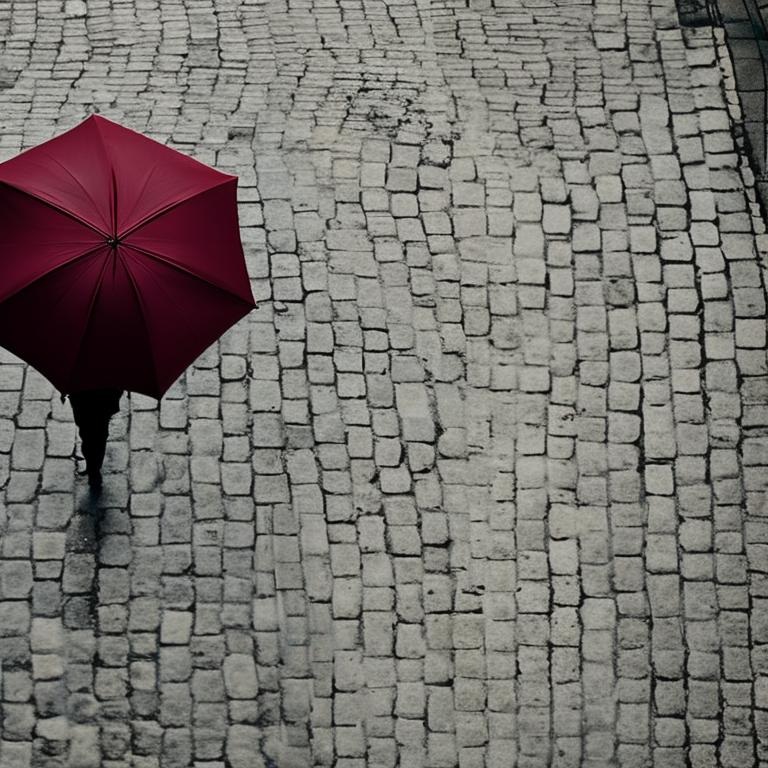}%
        \includegraphics[width=0.094\linewidth]{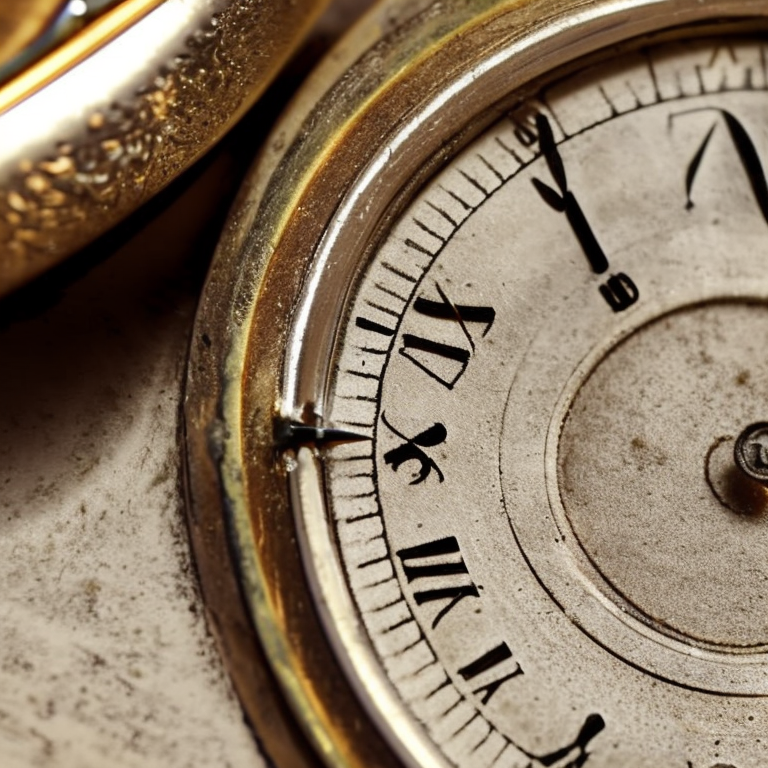}%
        \includegraphics[width=0.094\linewidth]{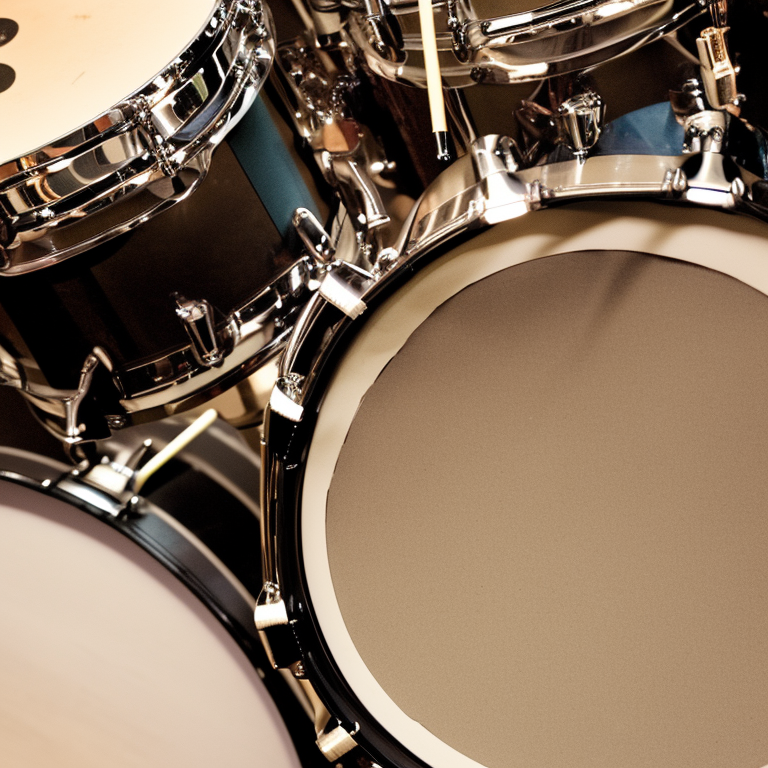}%
        \includegraphics[width=0.094\linewidth]{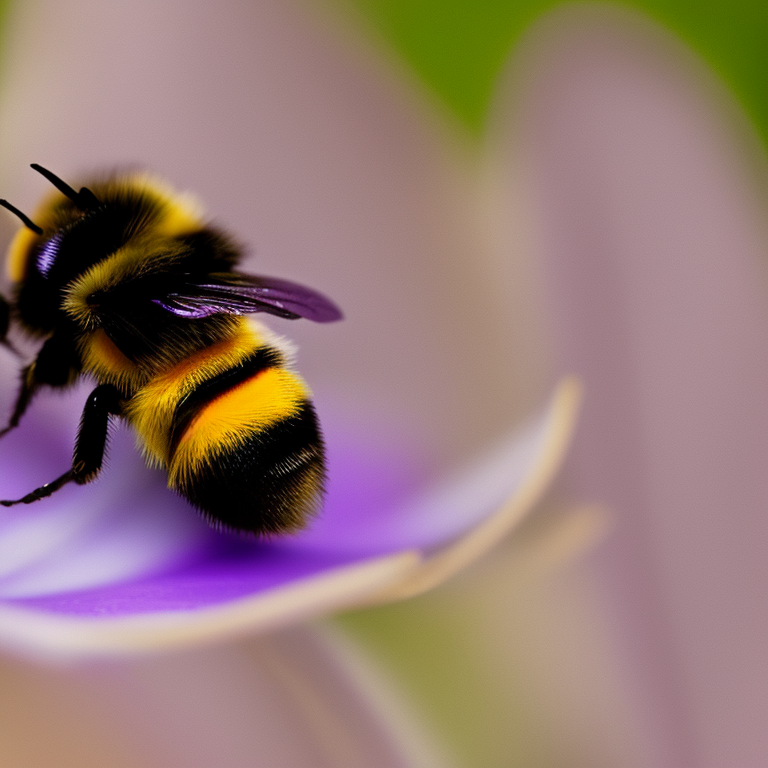}%
        \includegraphics[width=0.094\linewidth]{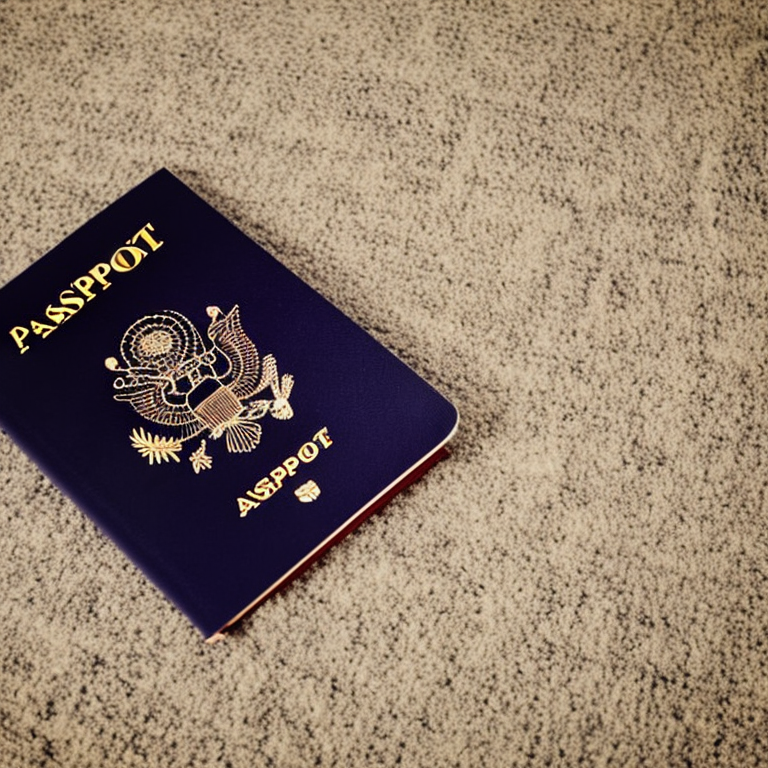}%
        \includegraphics[width=0.094\linewidth]{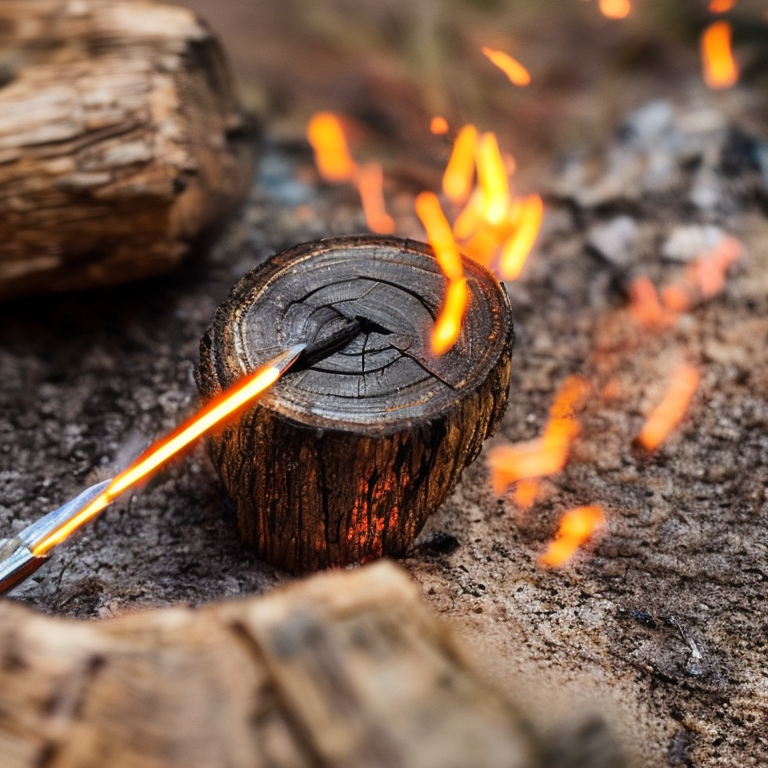}
    \end{minipage}
\vspace{-0.3cm}
    \caption{\label{fig:CMP_10_images_SD_DDIM}
    Visual comparison of final generation results on Stable Diffusion v2, using various samplers to expedite a 50-step DDIM baseline. Each column represents the output for a different prompt, demonstrating fidelity and visual consistency across methods.
    \textbf{Rows (Top to Bottom)}: DDIM (Baseline), ParaDiGMS, ParaSolver, ParaLin, and ParaLin++.
    }
\end{figure*}

\begin{figure*}[t]
    \centering

    \begin{minipage}{\linewidth}
        \centering
        \parbox[b][0.094\linewidth][t]{0.05\linewidth}{\rotatebox{90}{\small\textbf{DPMSolver}}}%
        \includegraphics[width=0.094\linewidth]{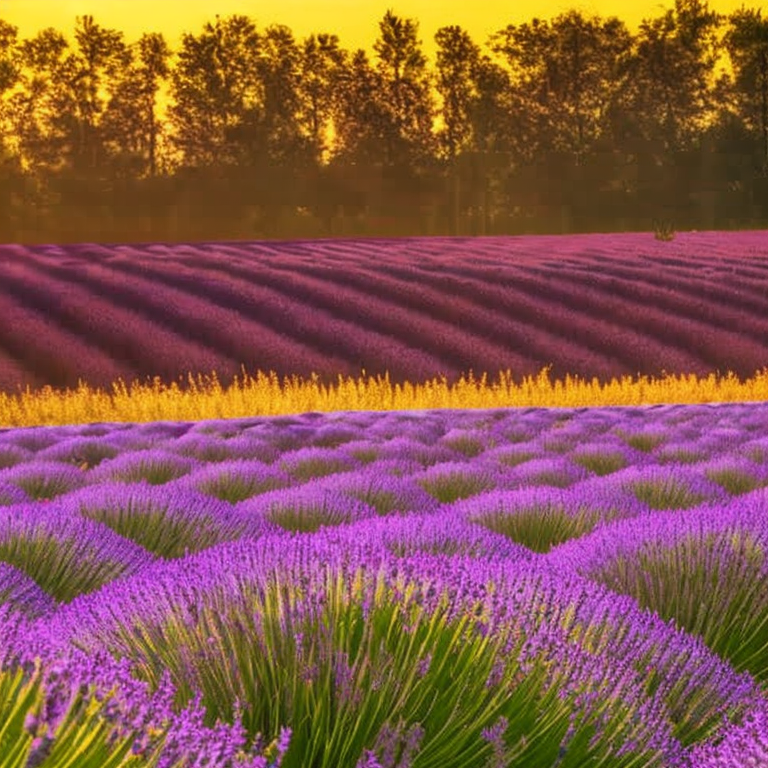}%
        \includegraphics[width=0.094\linewidth]{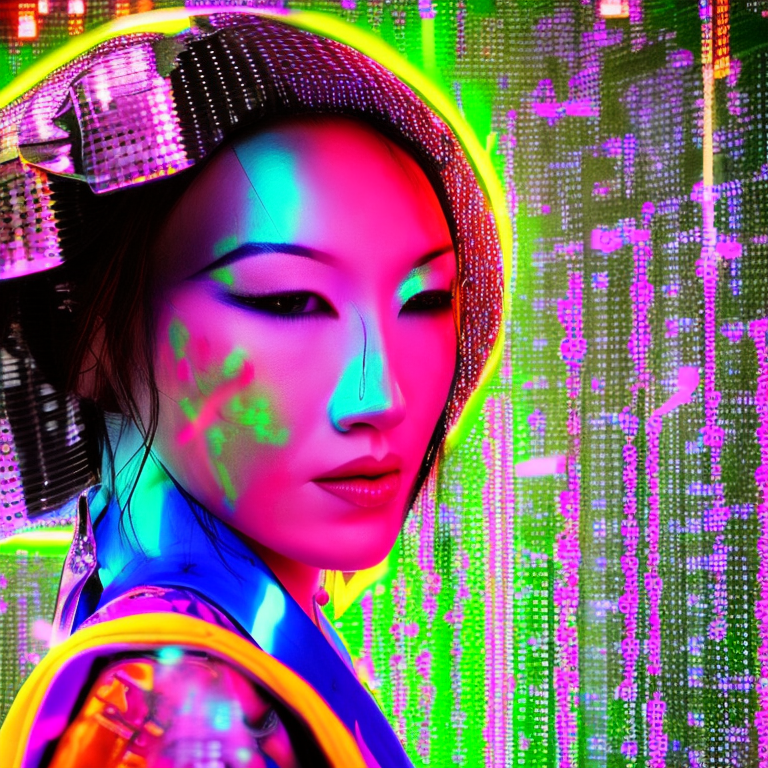}%
        \includegraphics[width=0.094\linewidth]{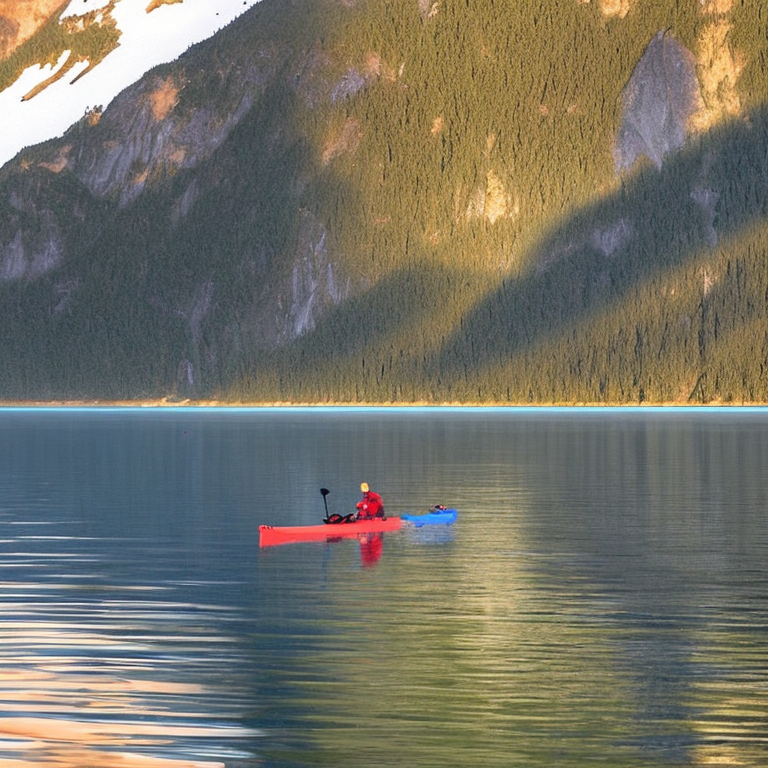}%
        \includegraphics[width=0.094\linewidth]{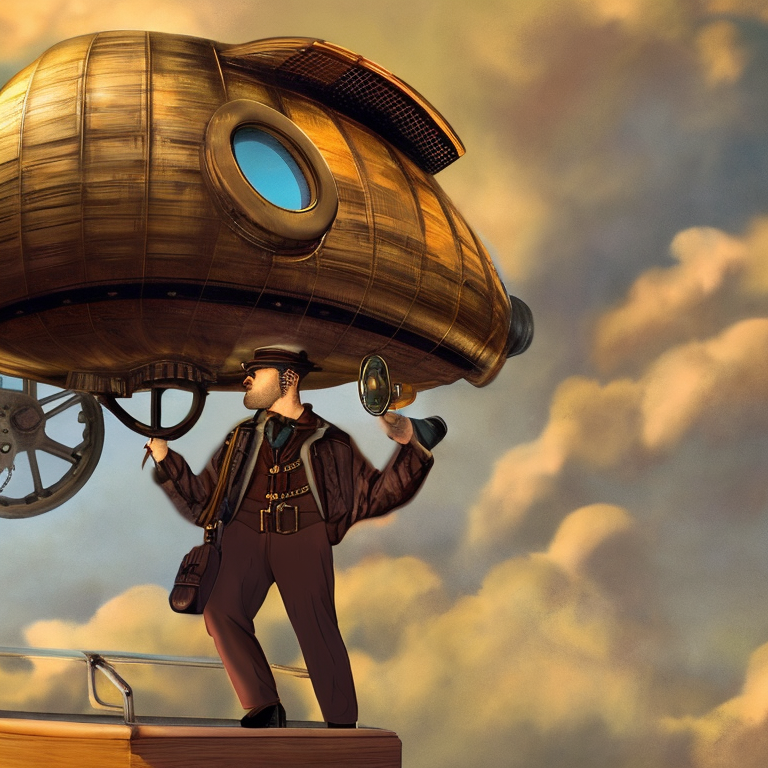}%
        \includegraphics[width=0.094\linewidth]{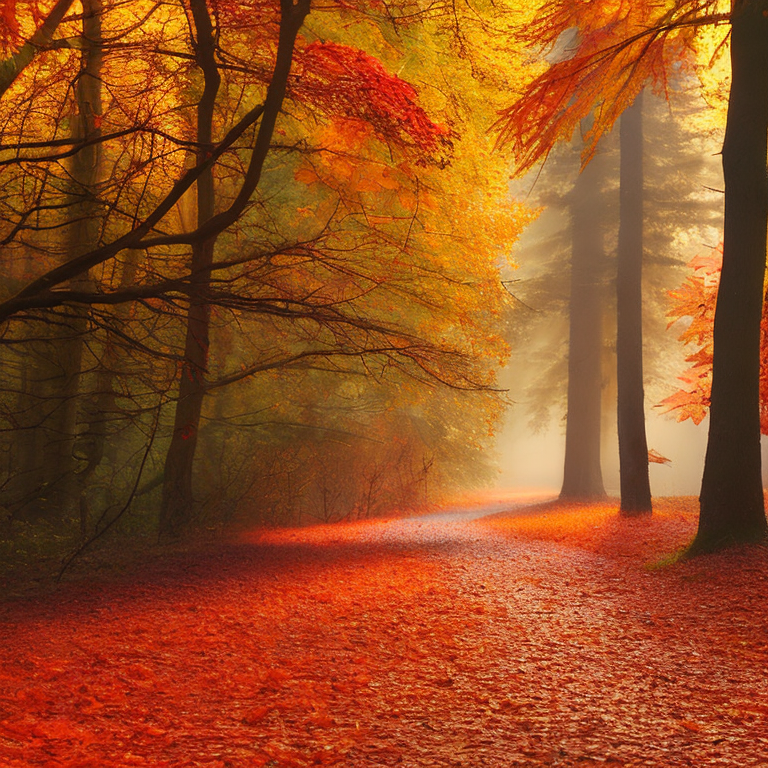}%
        \includegraphics[width=0.094\linewidth]{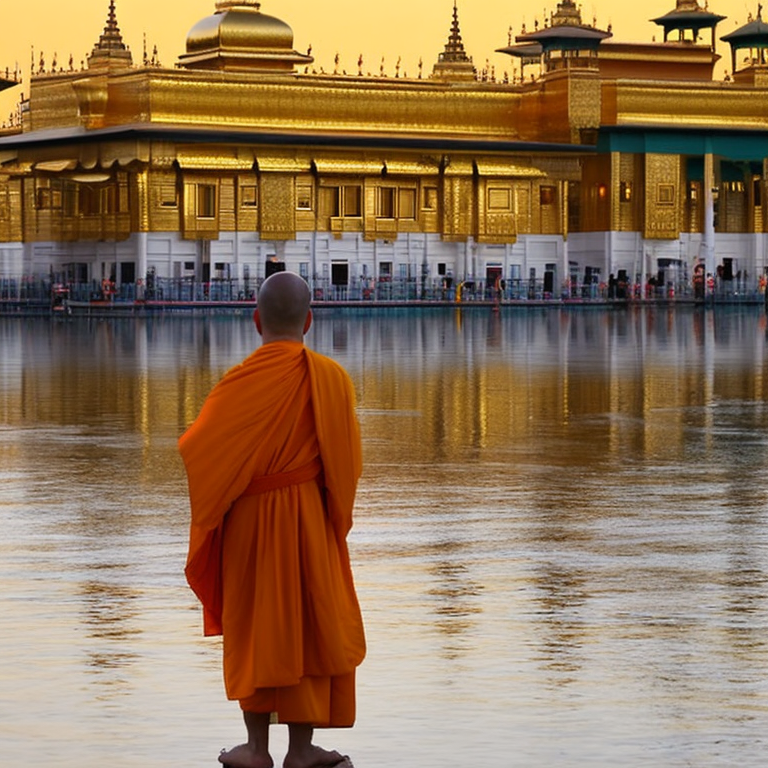}%
        \includegraphics[width=0.094\linewidth]{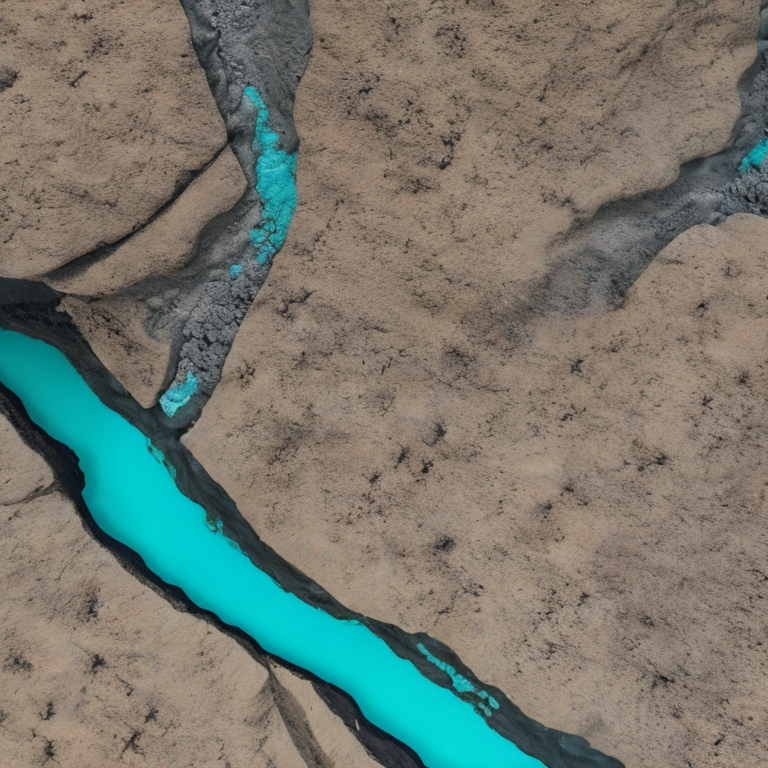}%
        \includegraphics[width=0.094\linewidth]{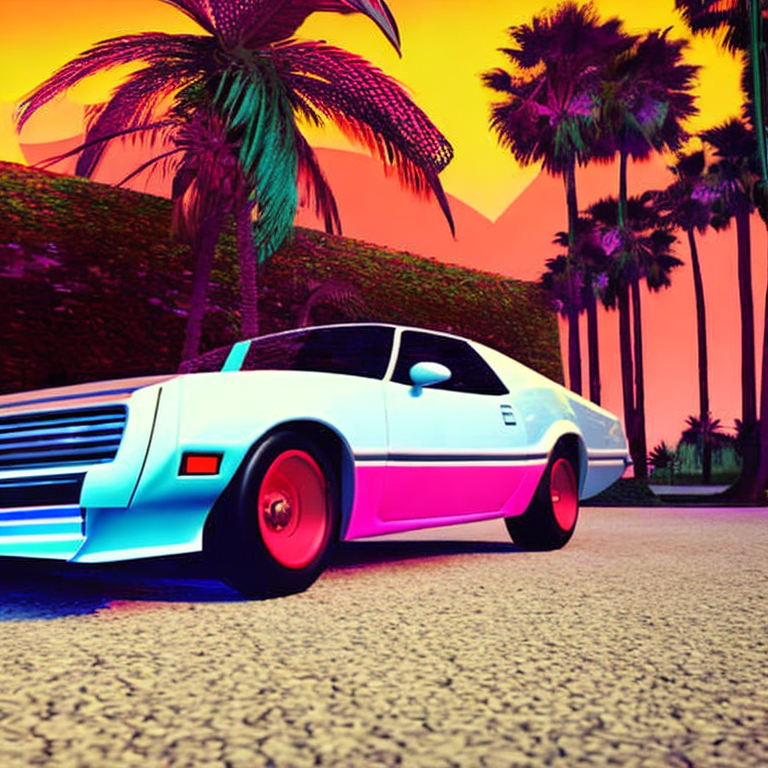}%
        \includegraphics[width=0.094\linewidth]{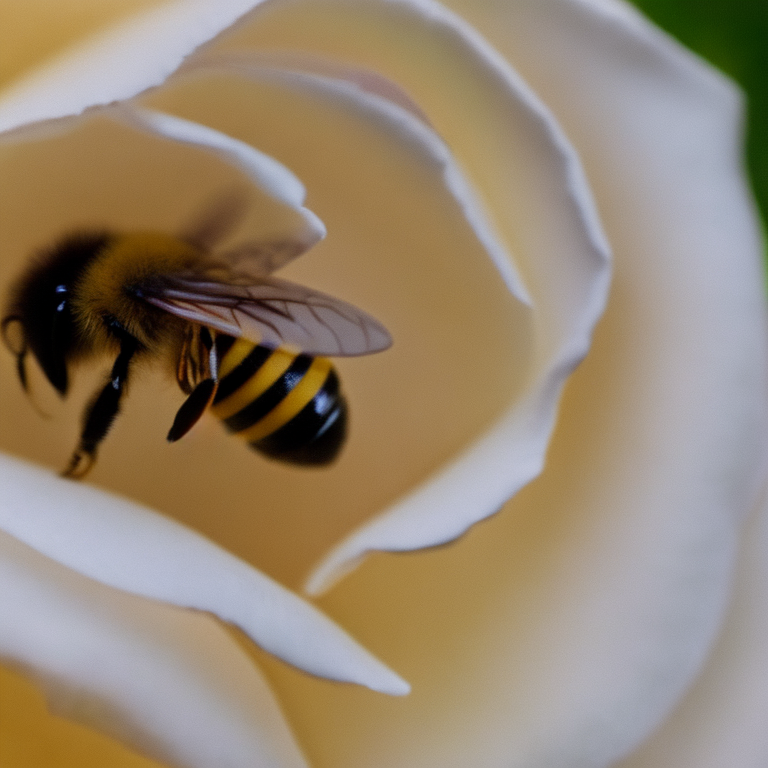}%
        \includegraphics[width=0.094\linewidth]{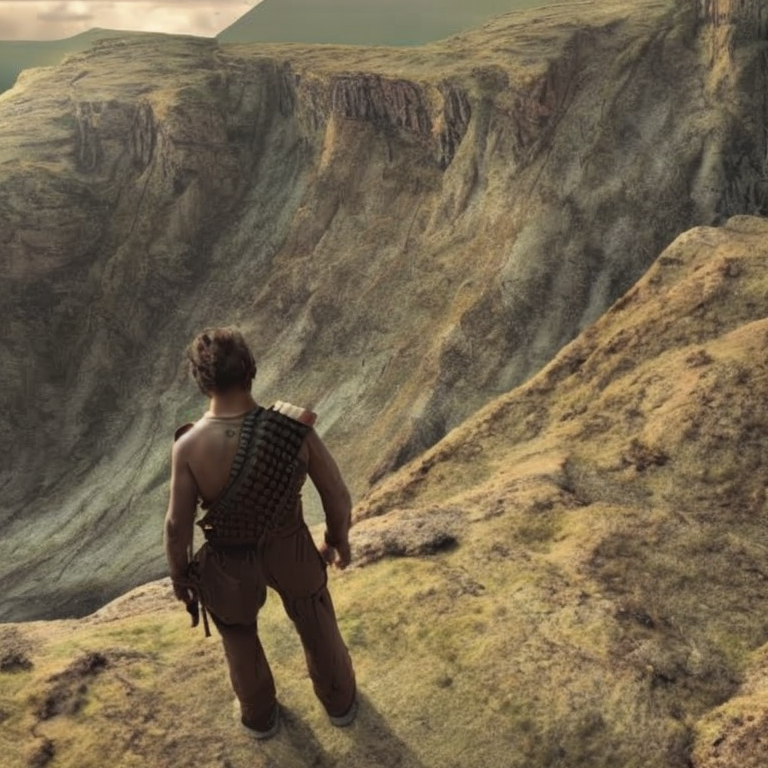}
    \end{minipage}
    \vspace{1mm} 

    \begin{minipage}{\linewidth}
        \centering
        \parbox[b][0.094\linewidth][t]{0.05\linewidth}{\rotatebox{90}{\small\textbf{ParaDiGMS}}}%
\includegraphics[width=0.094\linewidth]{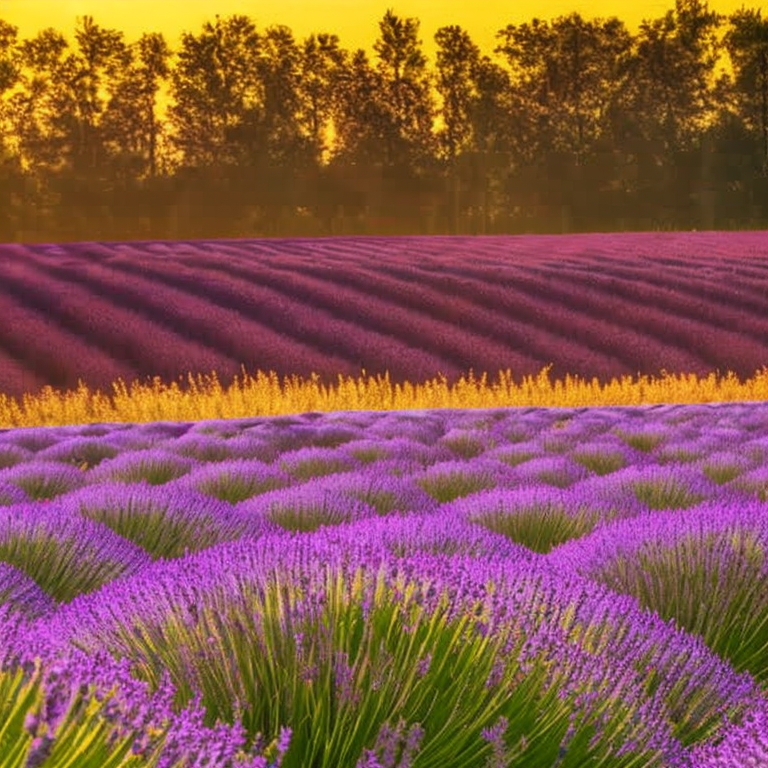}%
\includegraphics[width=0.094\linewidth]{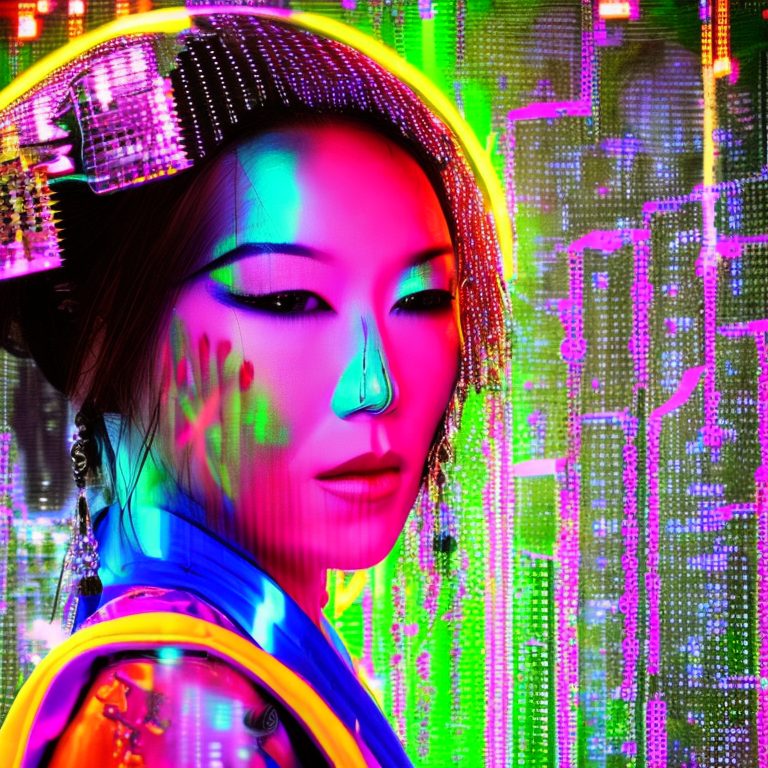}%
\includegraphics[width=0.094\linewidth]{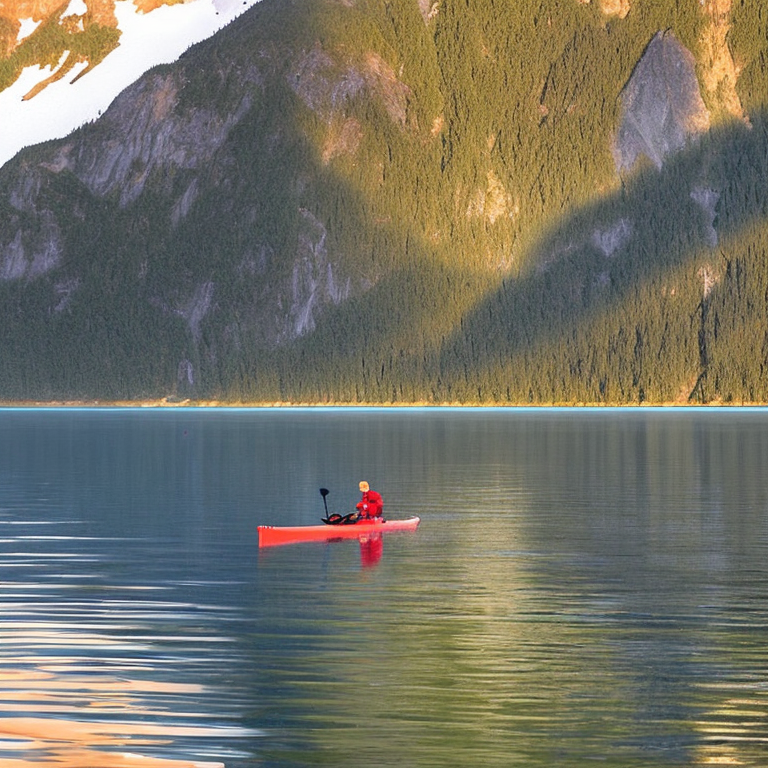}%
\includegraphics[width=0.094\linewidth]{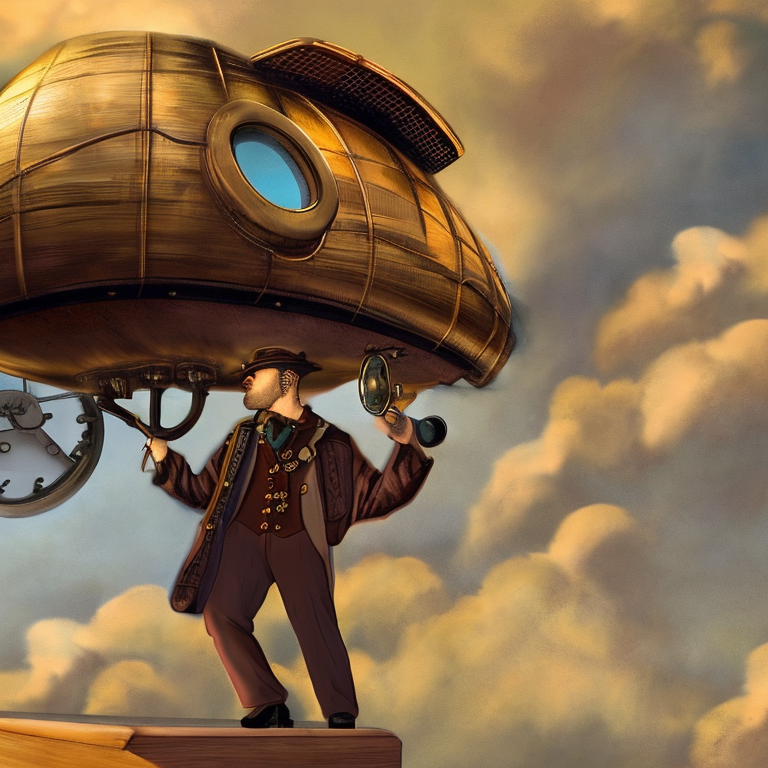}%
\includegraphics[width=0.094\linewidth]{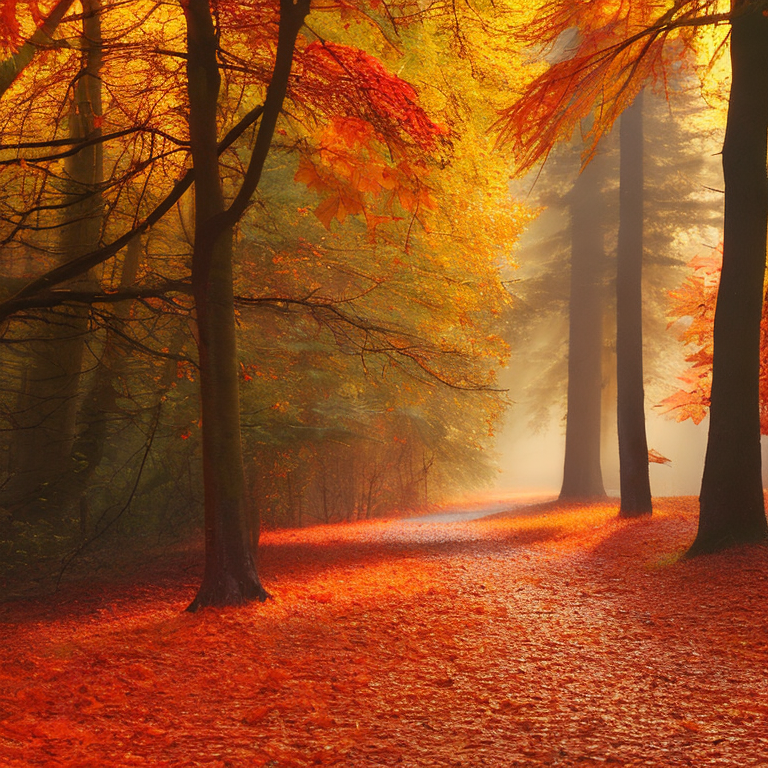}%
\includegraphics[width=0.094\linewidth]{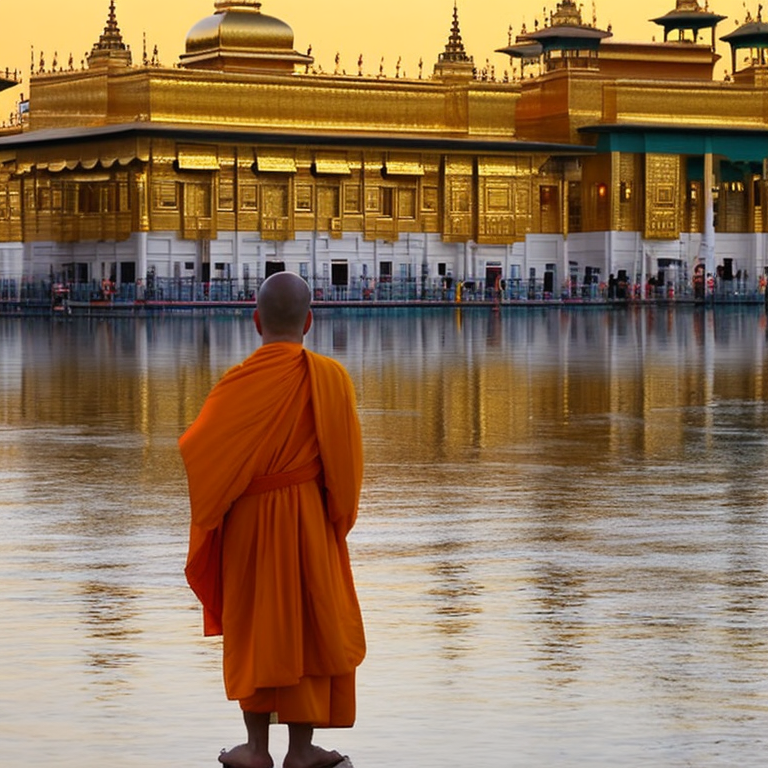}%
\includegraphics[width=0.094\linewidth]{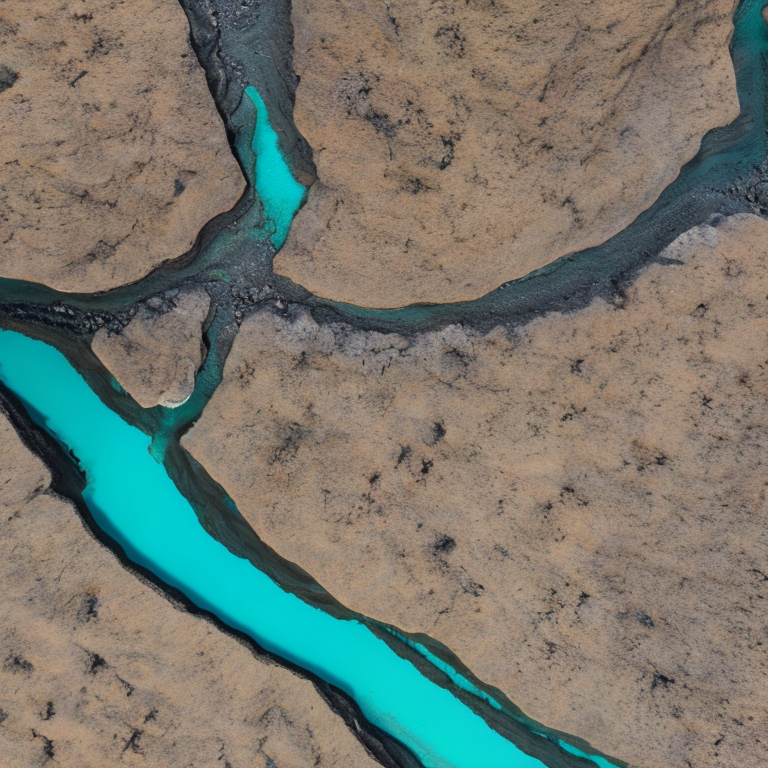}%
\includegraphics[width=0.094\linewidth]{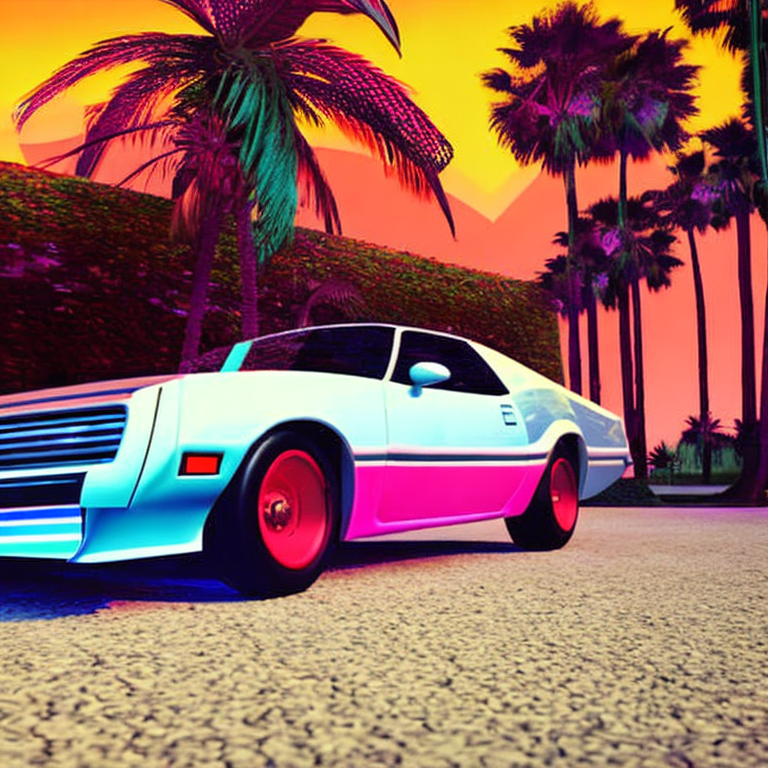}%
\includegraphics[width=0.094\linewidth]{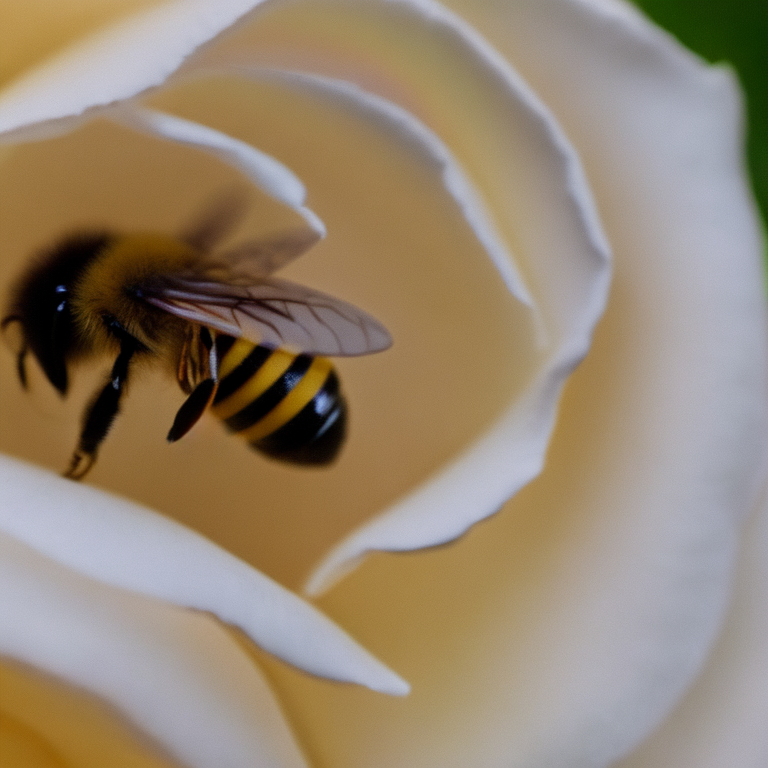}%
\includegraphics[width=0.094\linewidth]{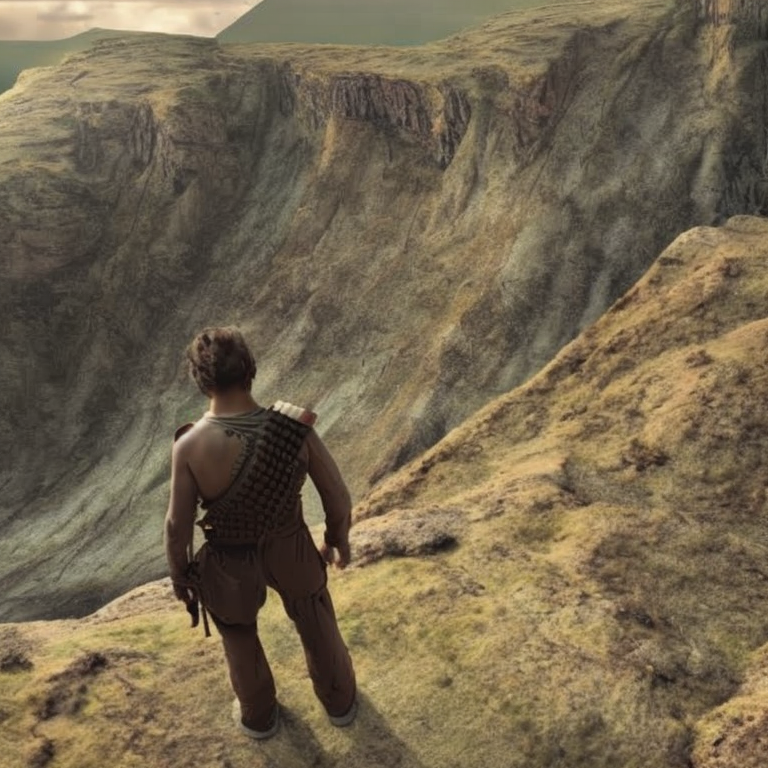}
    \end{minipage}
    \vspace{1mm} 

    \begin{minipage}{\linewidth}
        \centering
        \parbox[b][0.094\linewidth][t]{0.05\linewidth}{\rotatebox{90}{\small\textbf{ParaSolver}}}%
\includegraphics[width=0.094\linewidth]{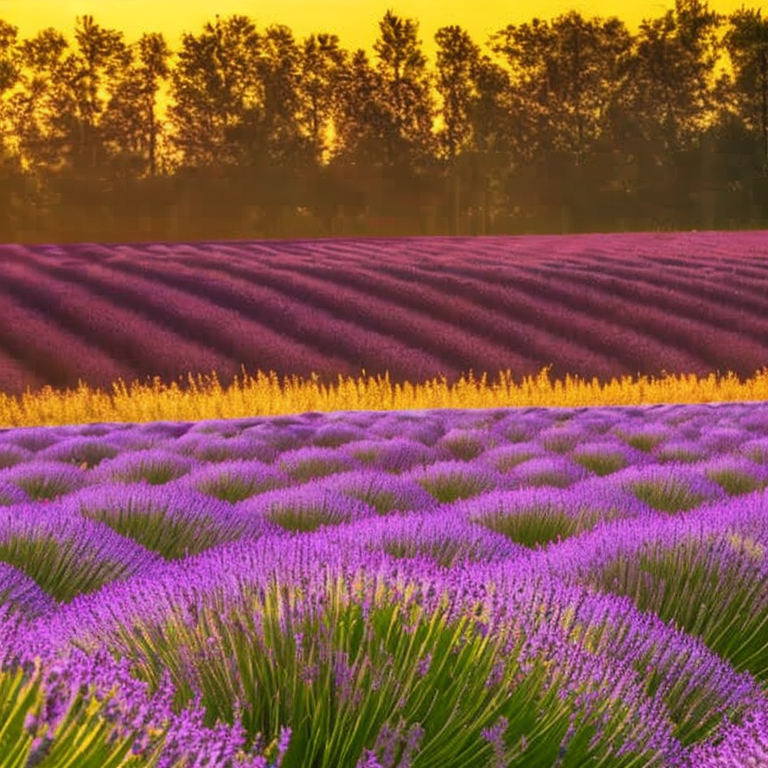}%
\includegraphics[width=0.094\linewidth]{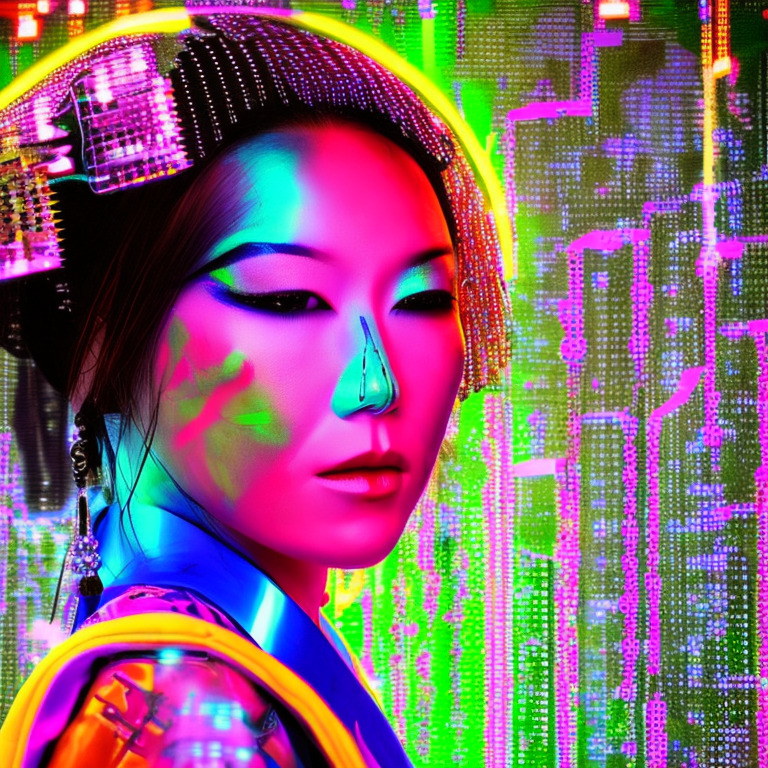}%
\includegraphics[width=0.094\linewidth]{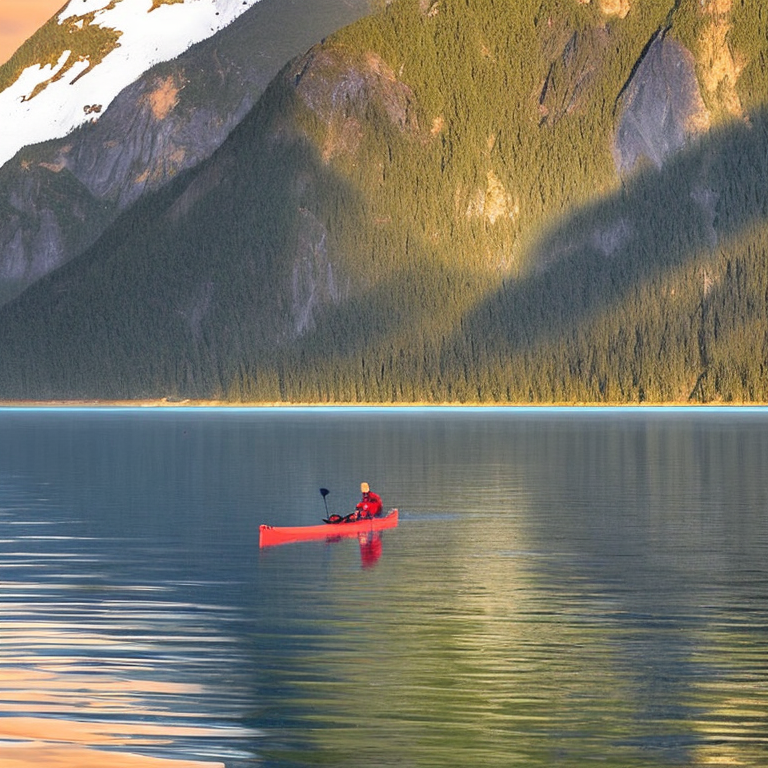}%
\includegraphics[width=0.094\linewidth]{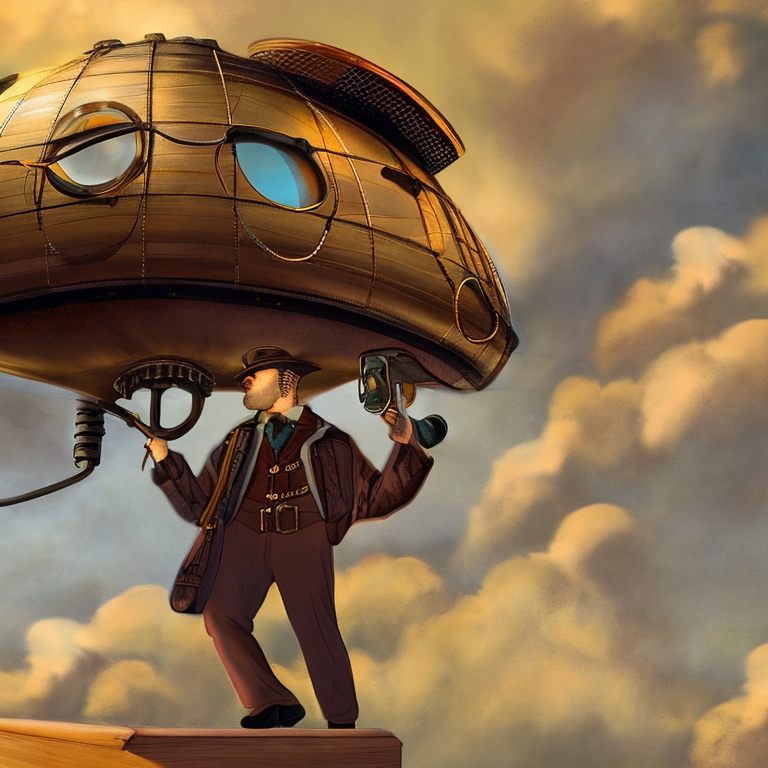}%
\includegraphics[width=0.094\linewidth]{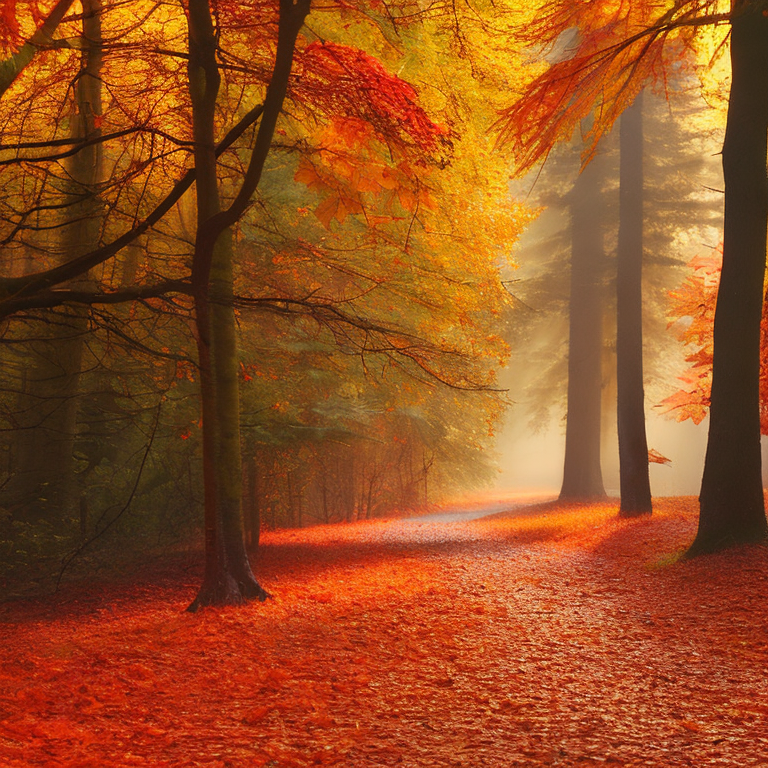}%
\includegraphics[width=0.094\linewidth]{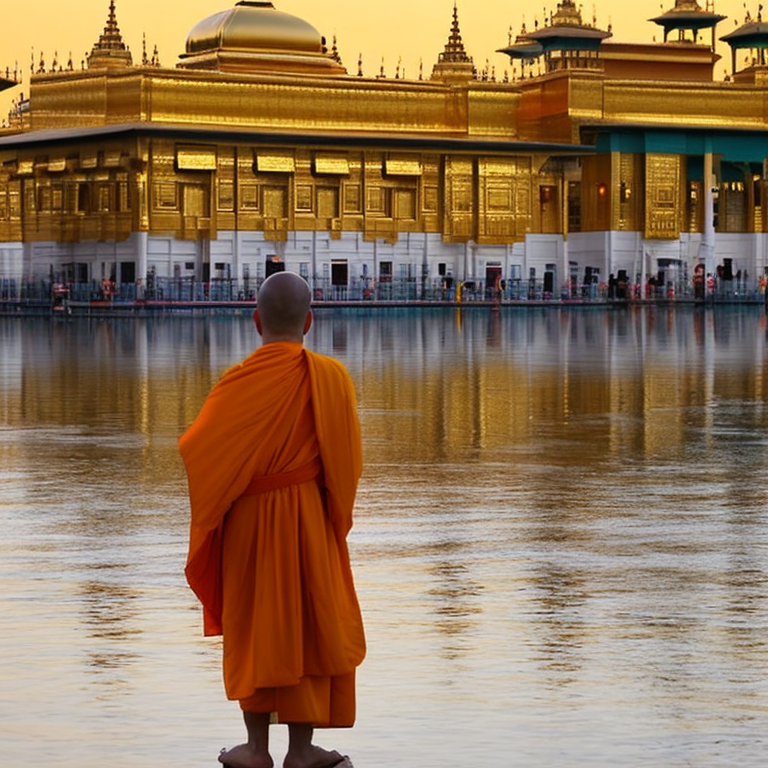}%
\includegraphics[width=0.094\linewidth]{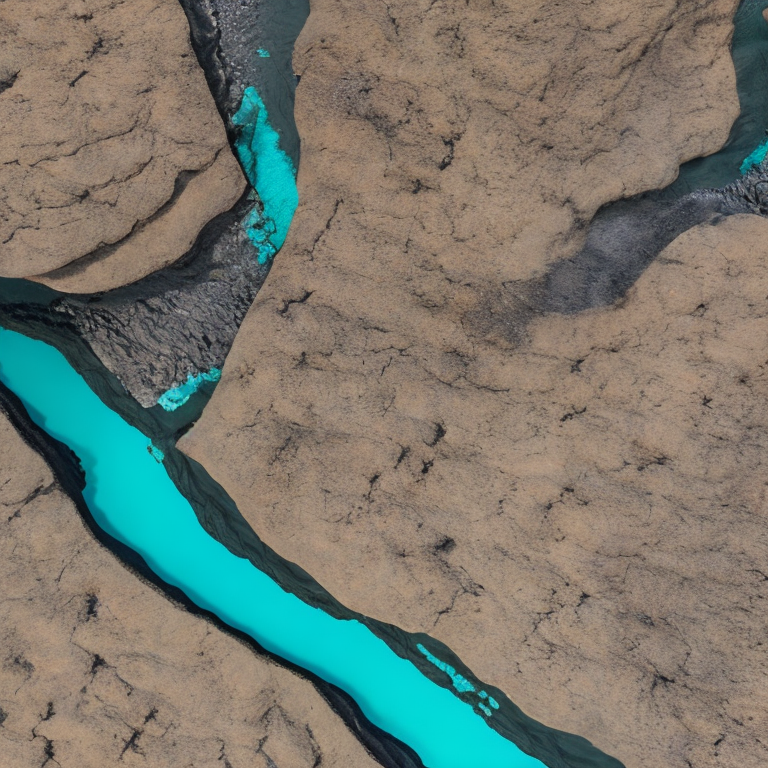}%
\includegraphics[width=0.094\linewidth]{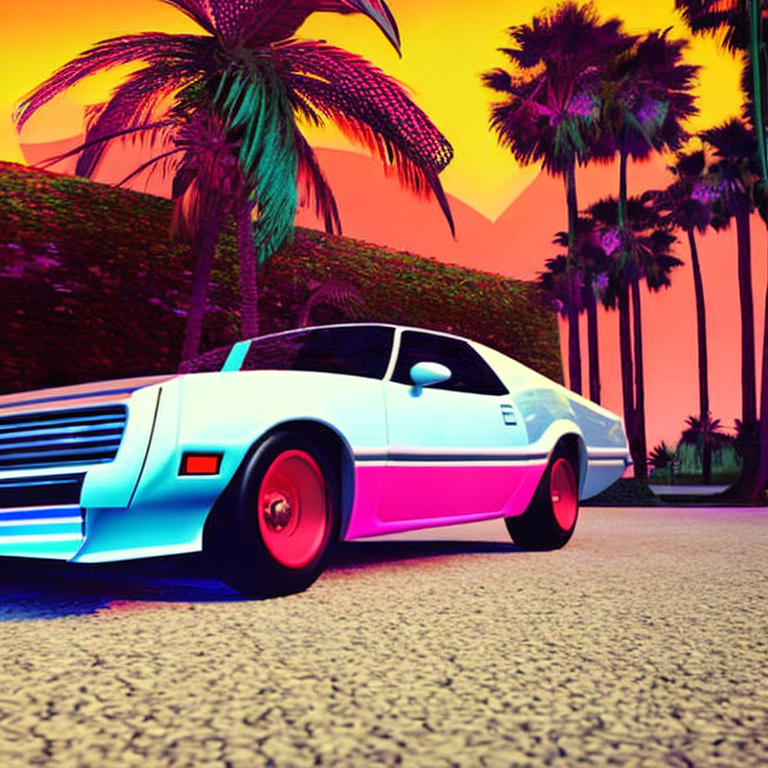}%
\includegraphics[width=0.094\linewidth]{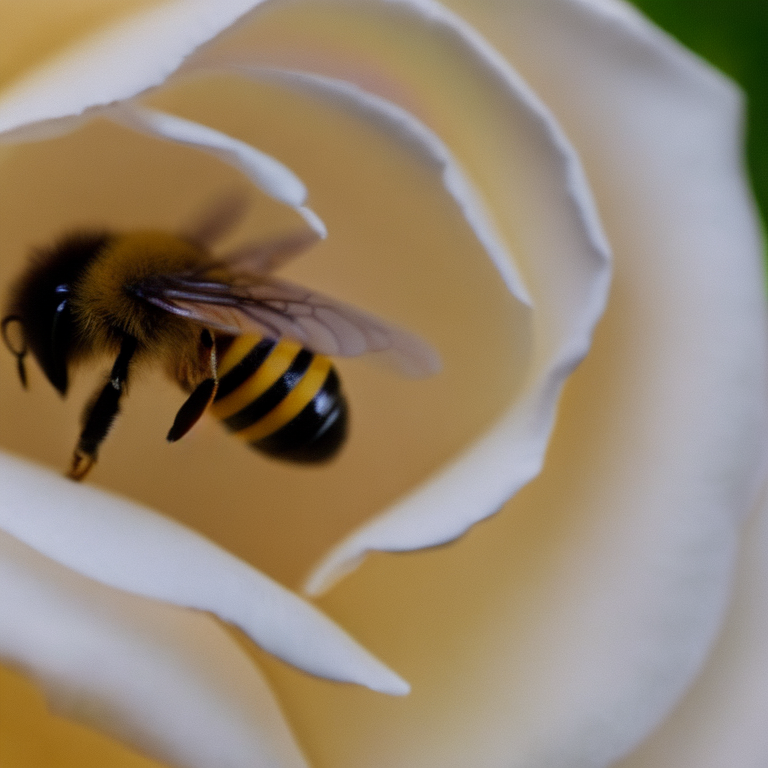}%
\includegraphics[width=0.094\linewidth]{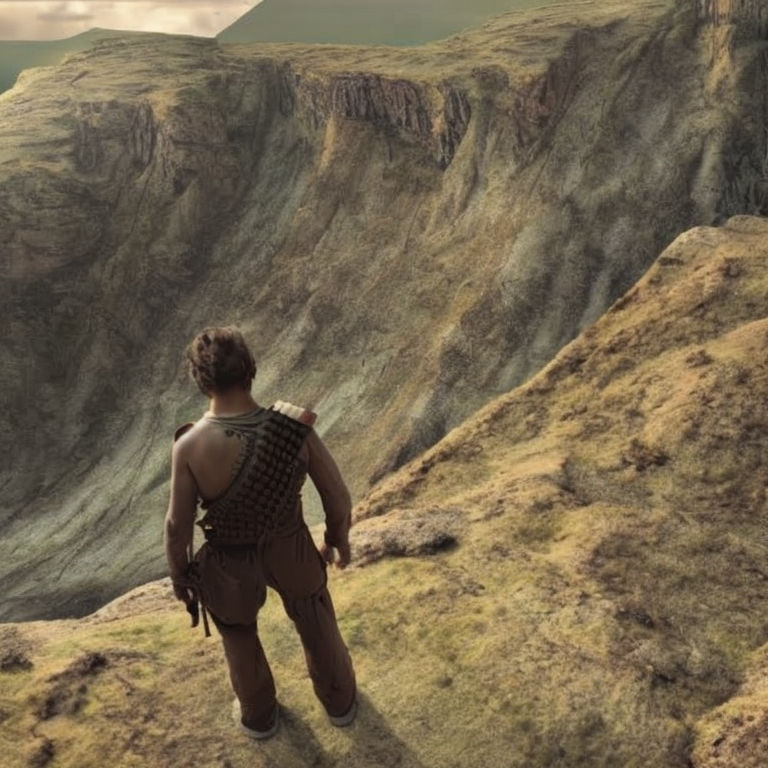}
    \end{minipage}
    \vspace{1mm} 

    \begin{minipage}{\linewidth}
        \centering
        \parbox[b][0.094\linewidth][t]{0.05\linewidth}{\rotatebox{90}{\small\textbf{ParaLin}}}%
\includegraphics[width=0.094\linewidth]{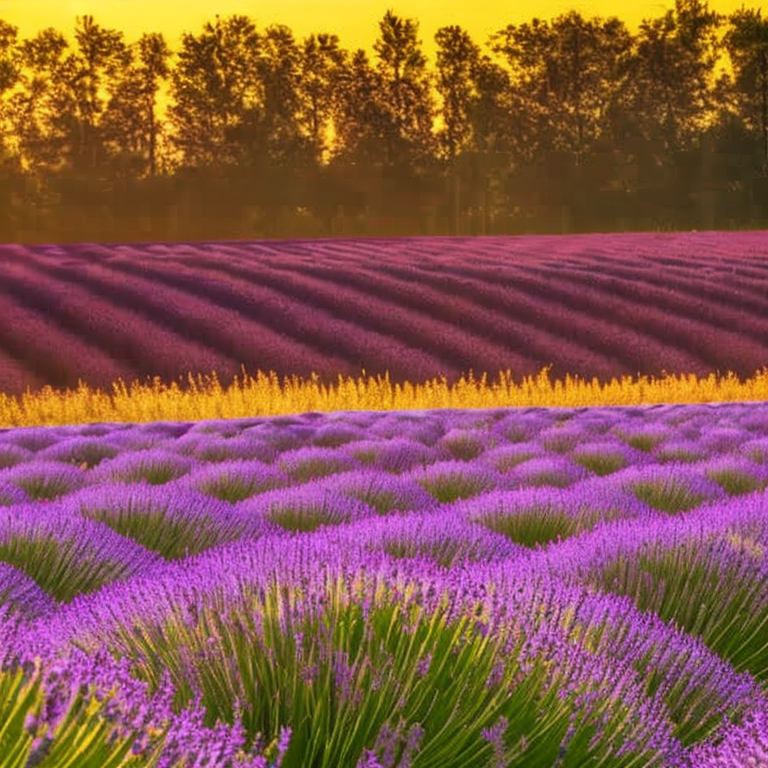}%
\includegraphics[width=0.094\linewidth]{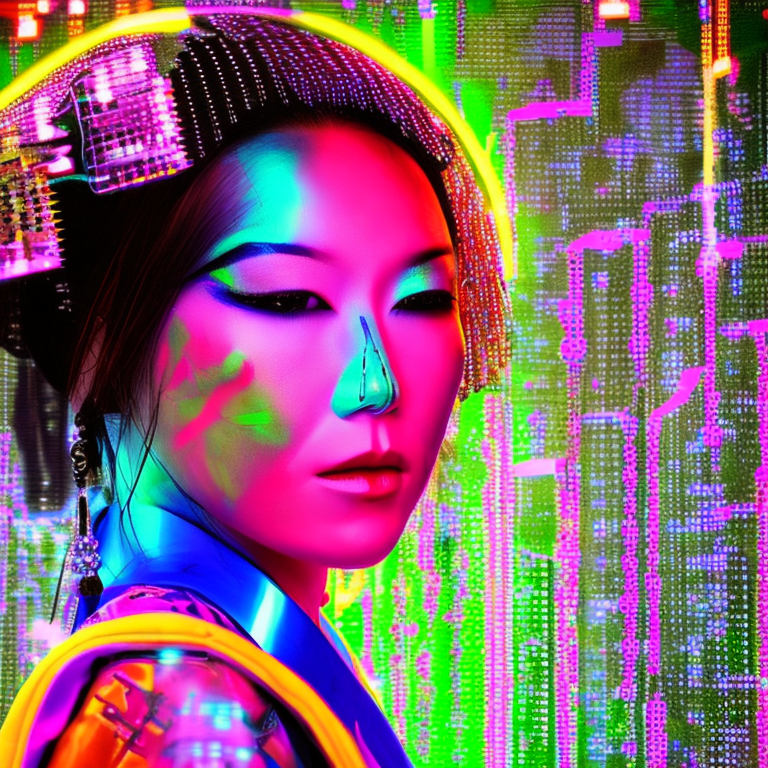}%
\includegraphics[width=0.094\linewidth]{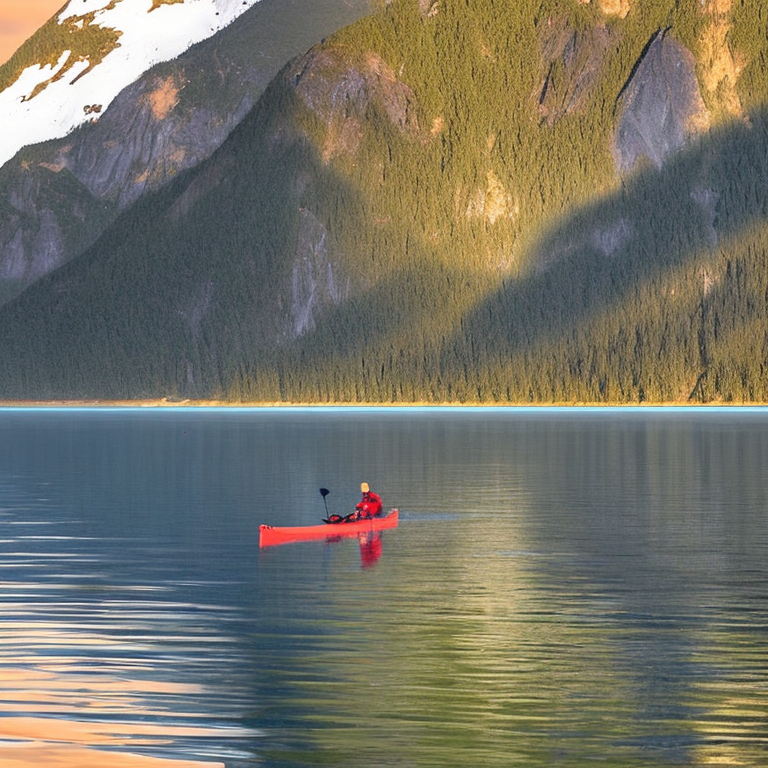}%
\includegraphics[width=0.094\linewidth]{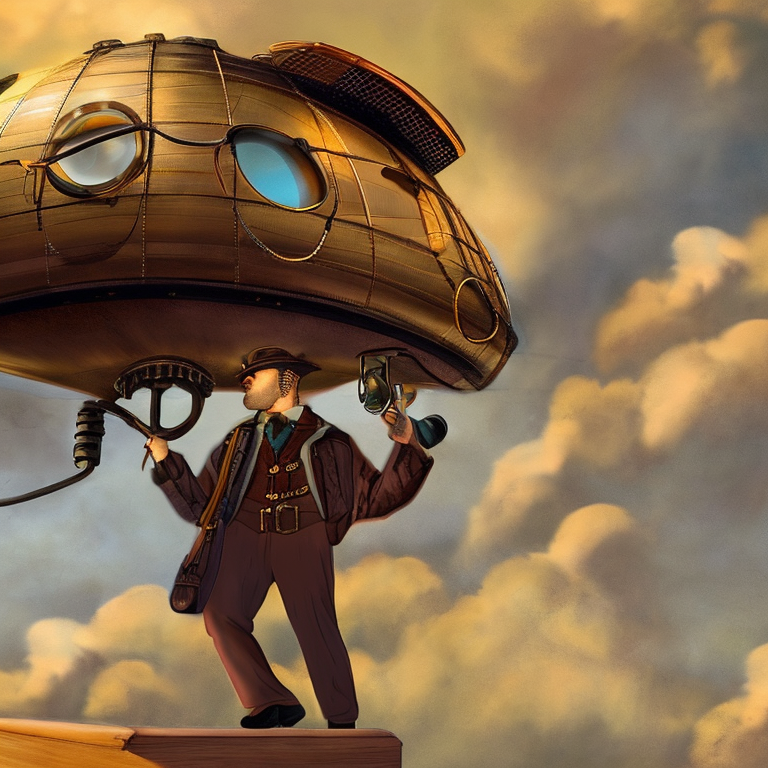}%
\includegraphics[width=0.094\linewidth]{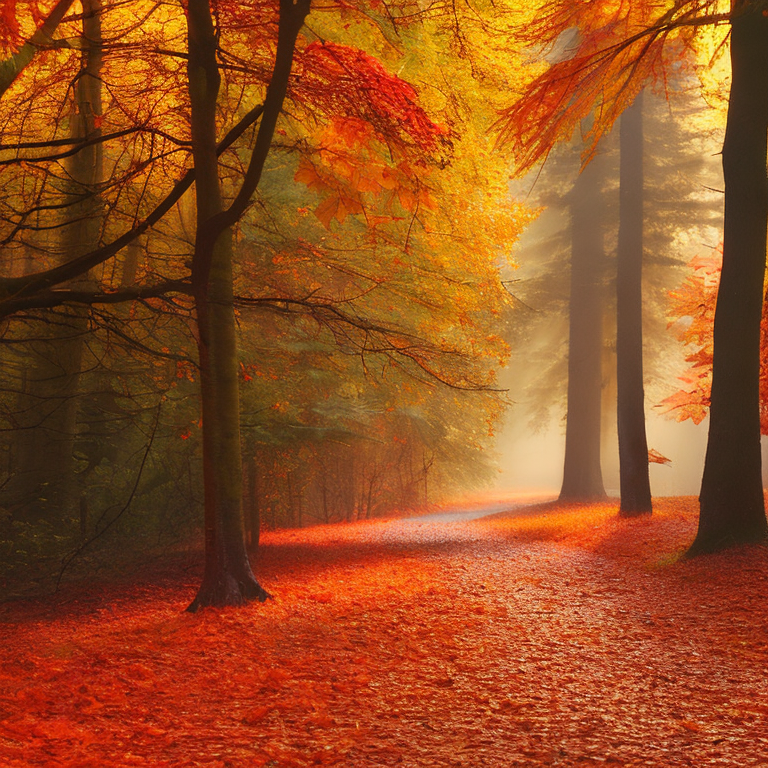}%
\includegraphics[width=0.094\linewidth]{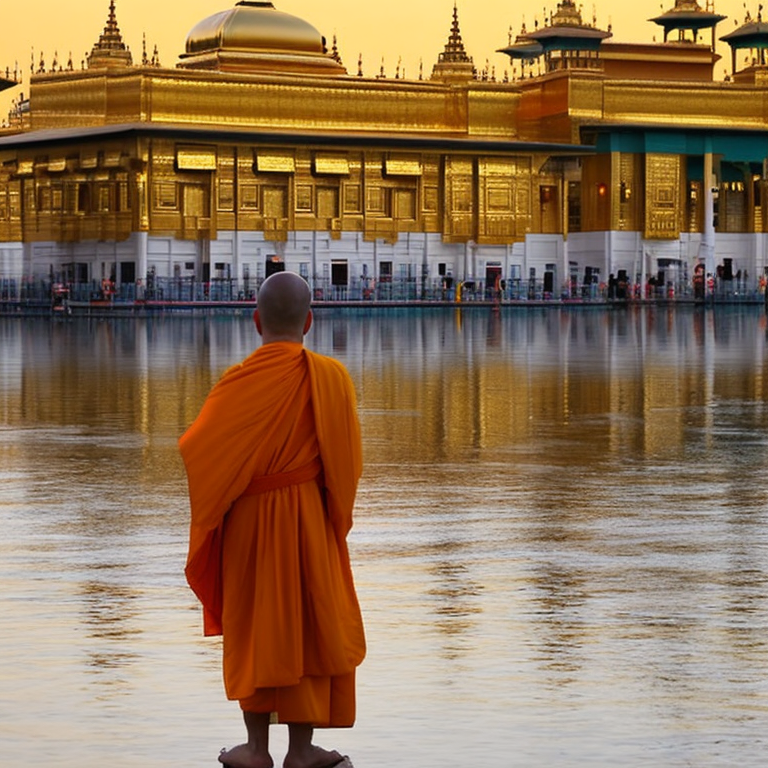}%
\includegraphics[width=0.094\linewidth]{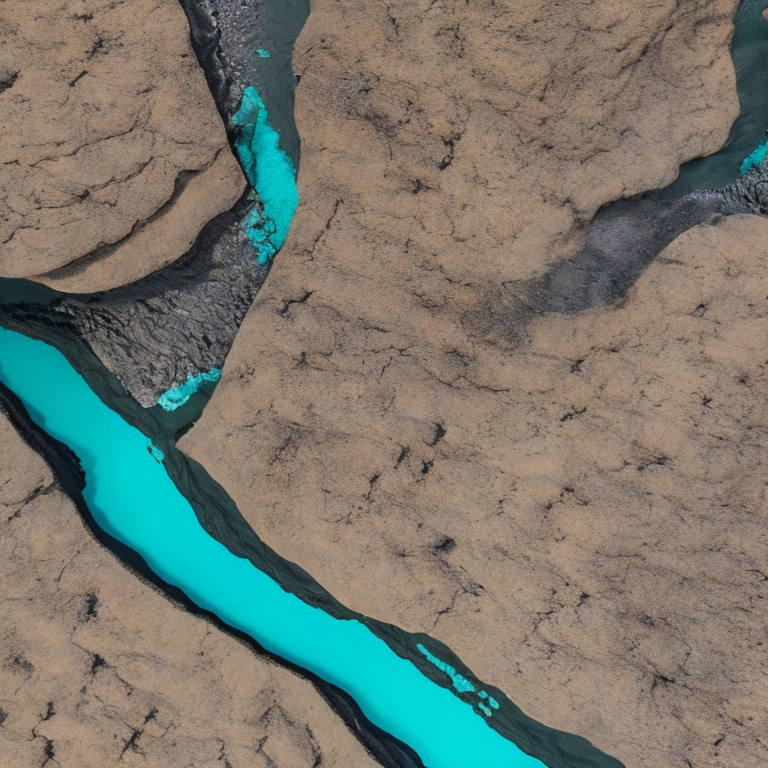}%
\includegraphics[width=0.094\linewidth]{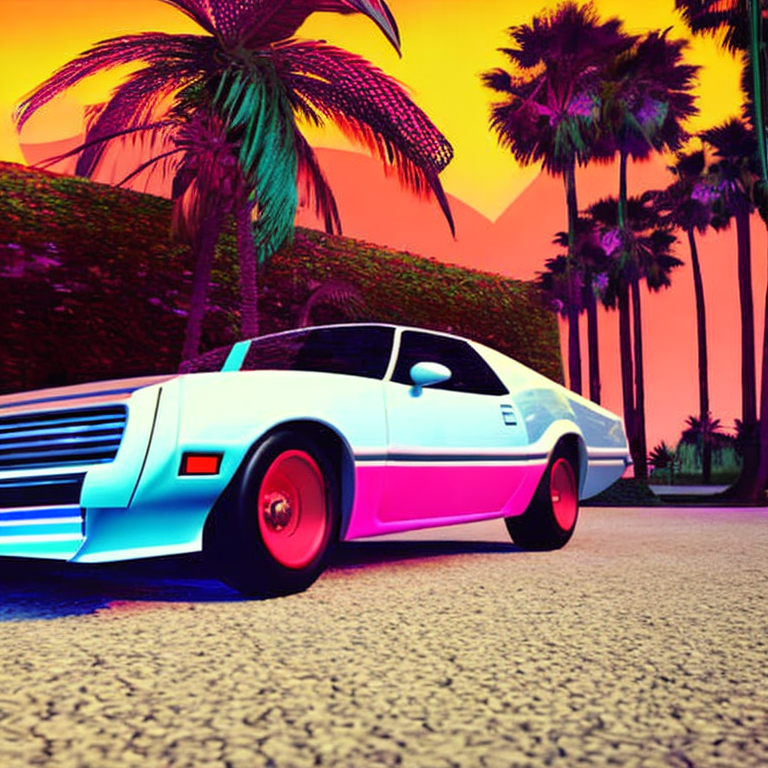}%
\includegraphics[width=0.094\linewidth]{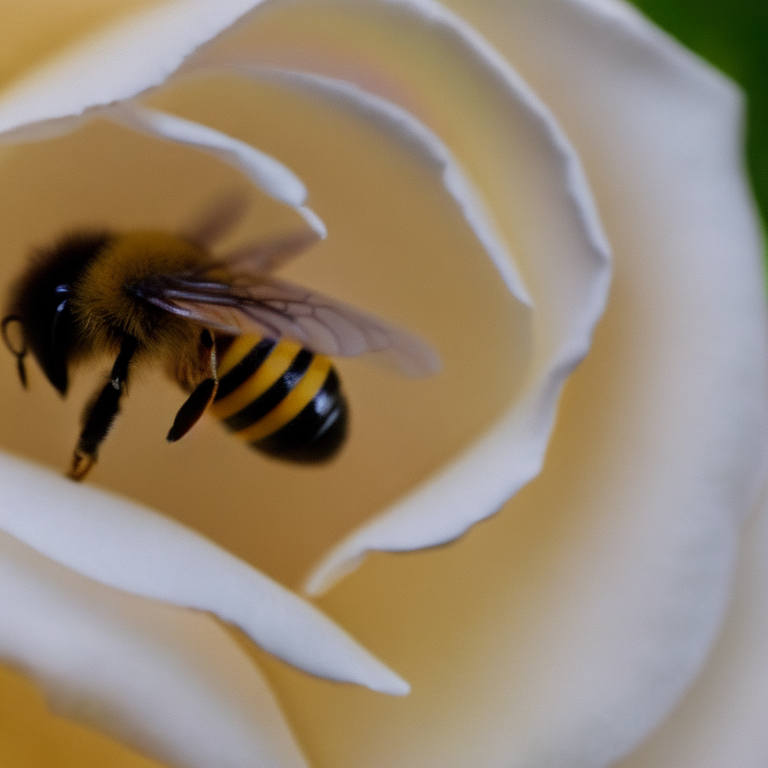}%
\includegraphics[width=0.094\linewidth]{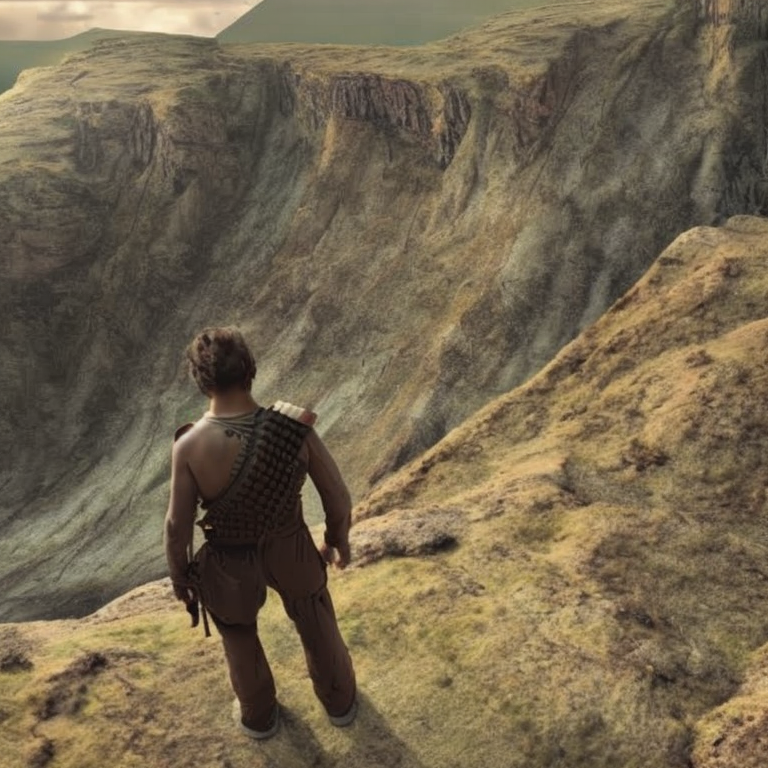}%
    \end{minipage}
    \vspace{1mm} 

    \begin{minipage}{\linewidth}
        \centering
        \parbox[c][0.094\linewidth][t]{0.05\linewidth}{\rotatebox{90}{\small\textbf{Ours}}}%
\includegraphics[width=0.094\linewidth]{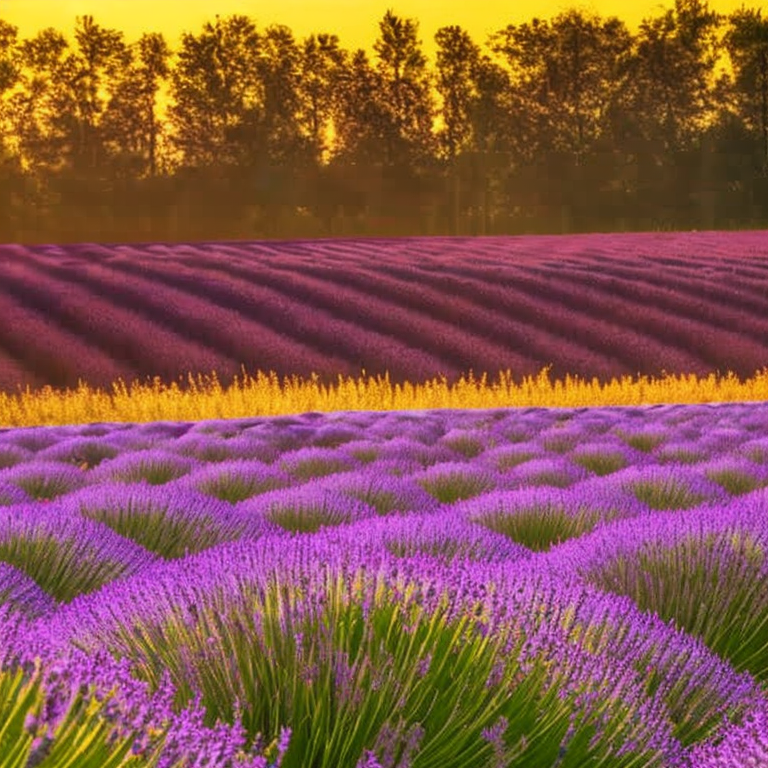}%
\includegraphics[width=0.094\linewidth]{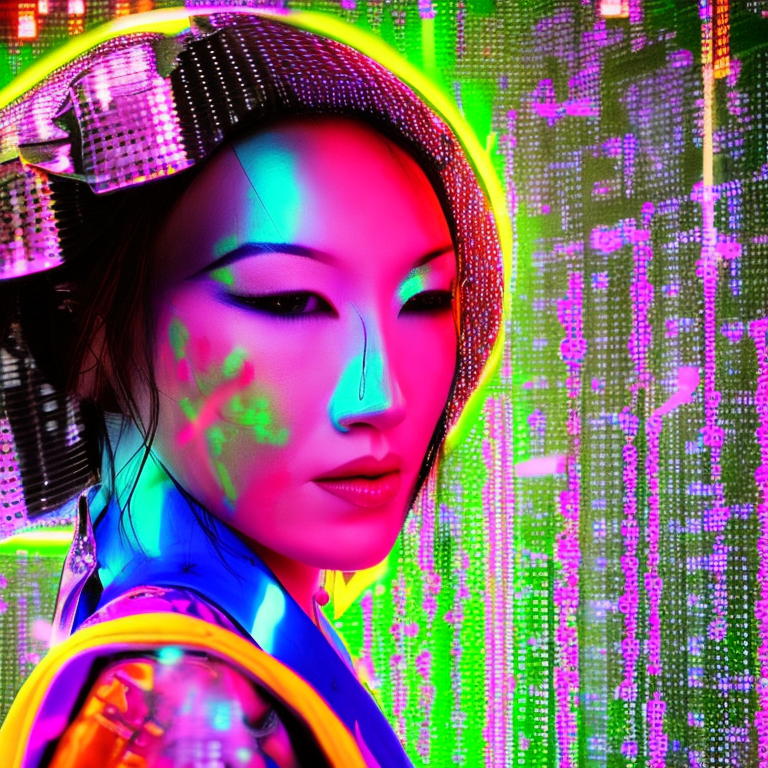}%
\includegraphics[width=0.094\linewidth]{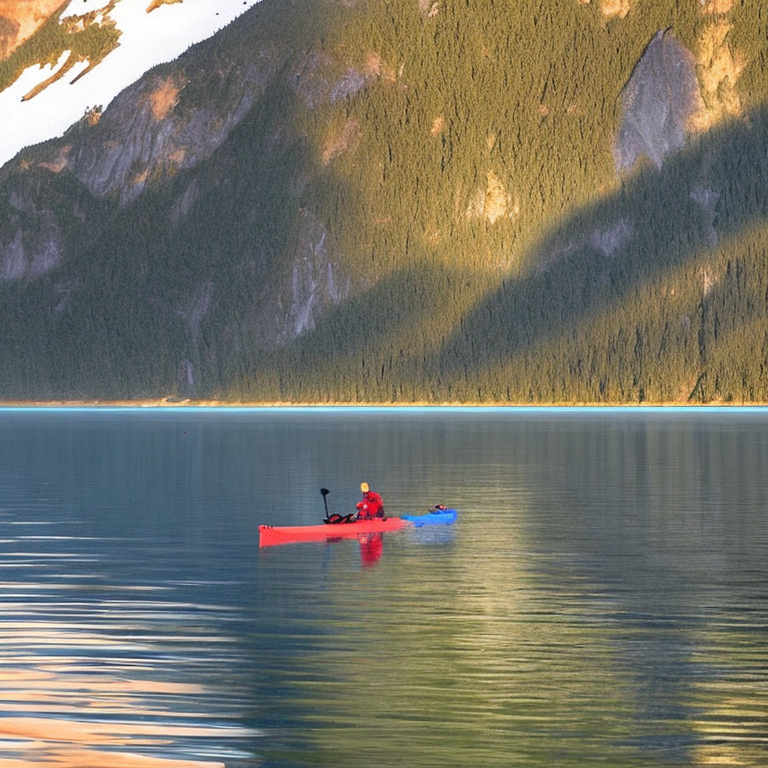}%
\includegraphics[width=0.094\linewidth]{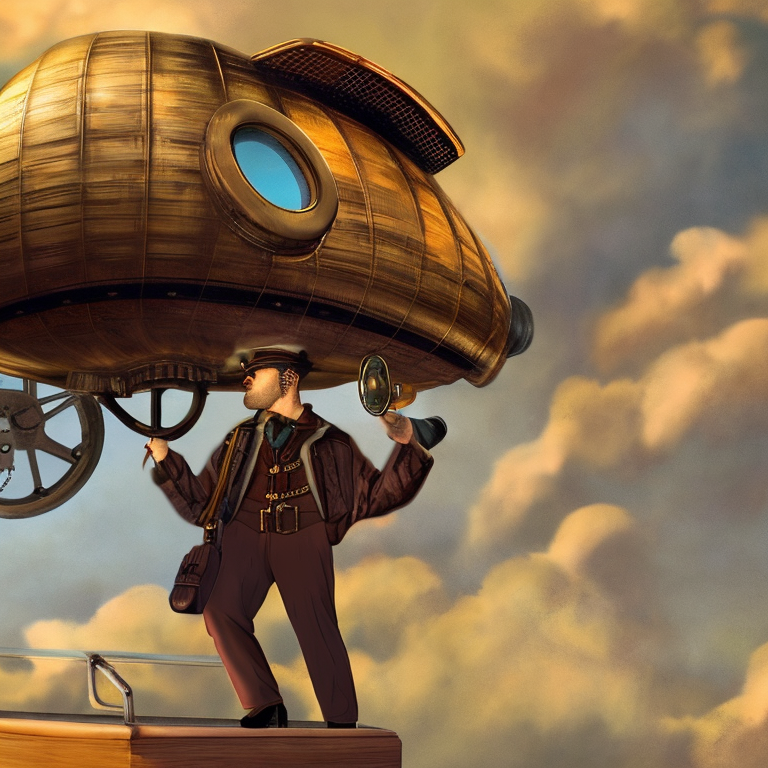}%
\includegraphics[width=0.094\linewidth]{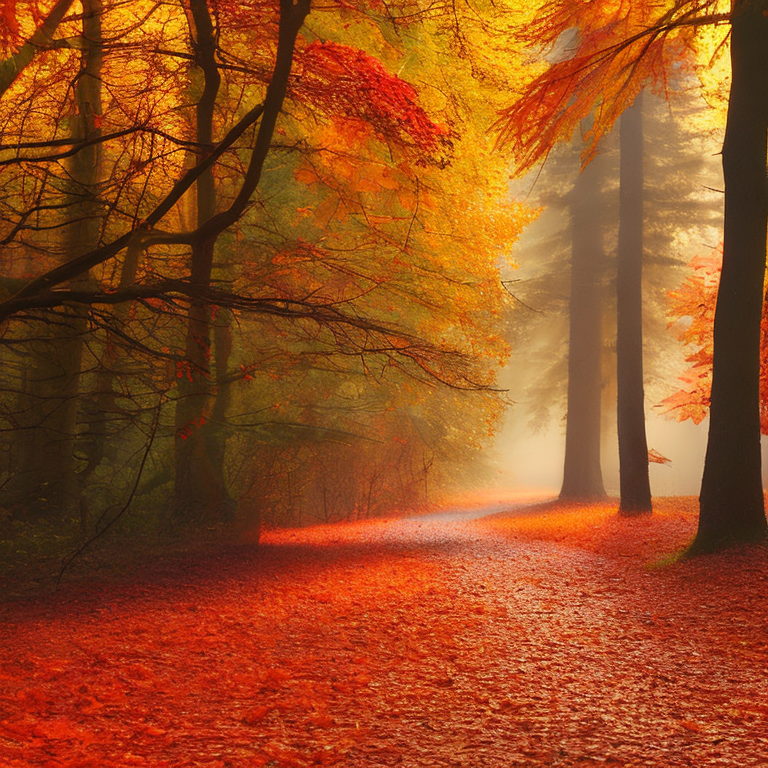}%
\includegraphics[width=0.094\linewidth]{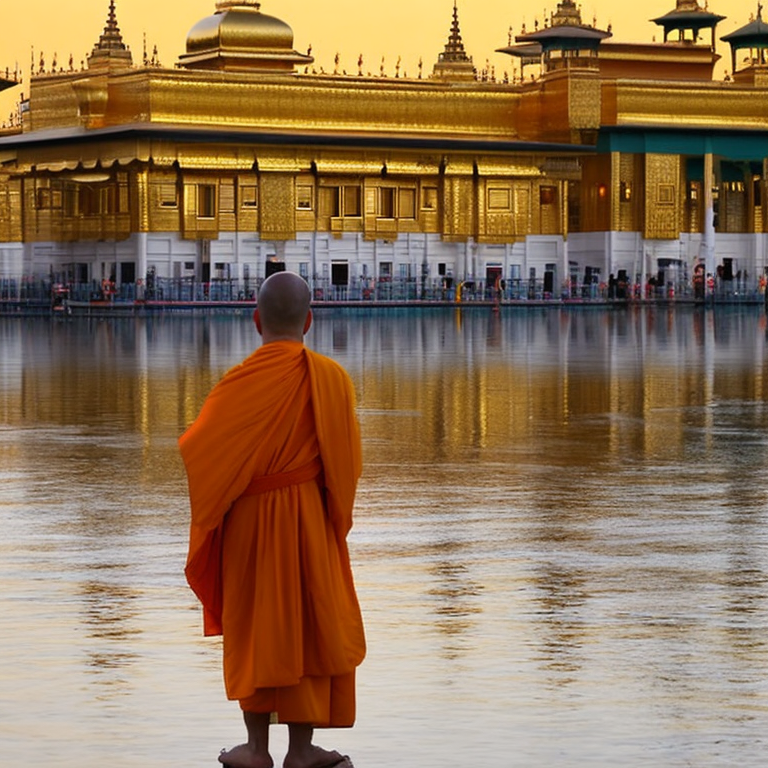}%
\includegraphics[width=0.094\linewidth]{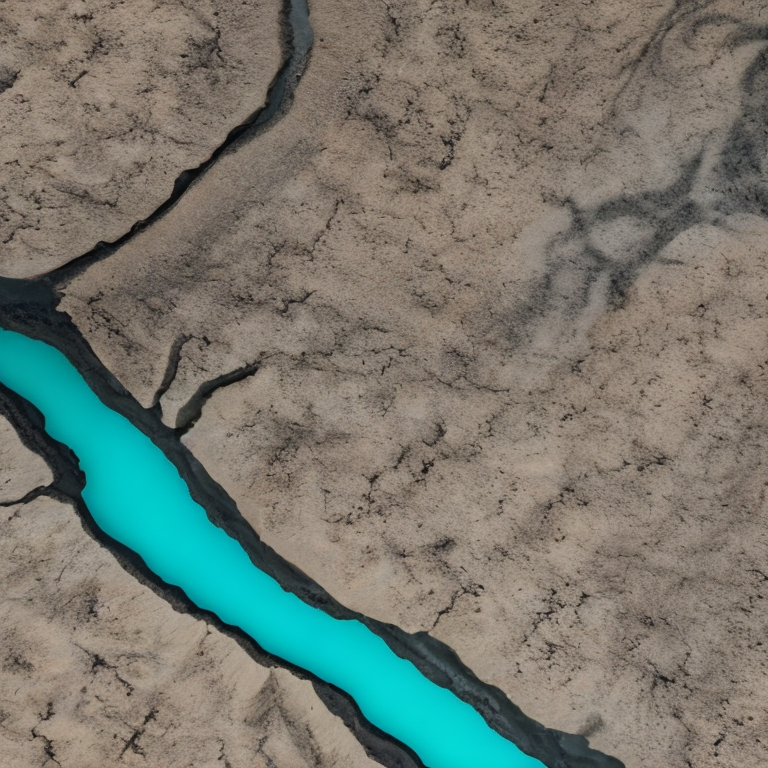}%
\includegraphics[width=0.094\linewidth]{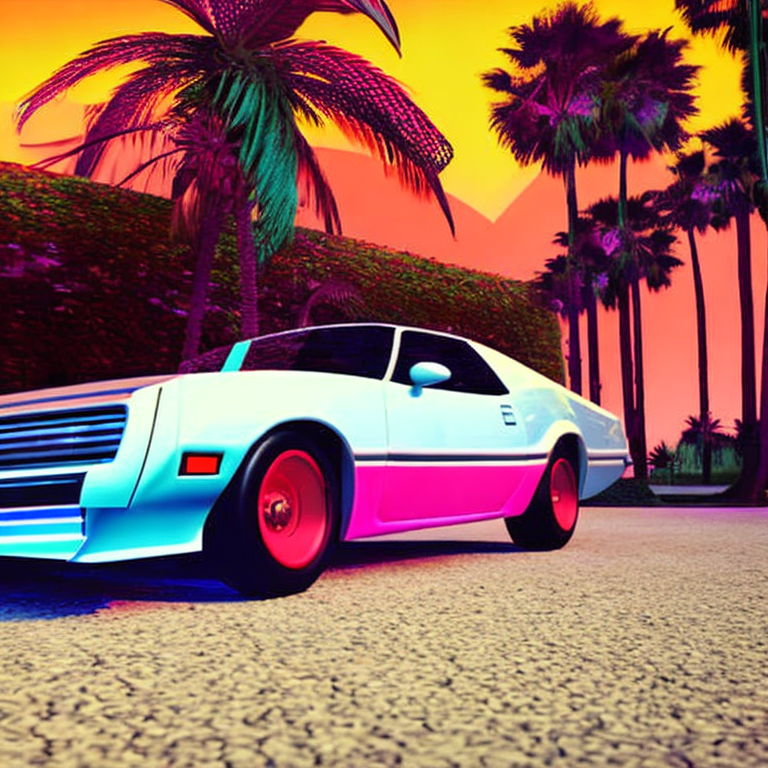}%
\includegraphics[width=0.094\linewidth]{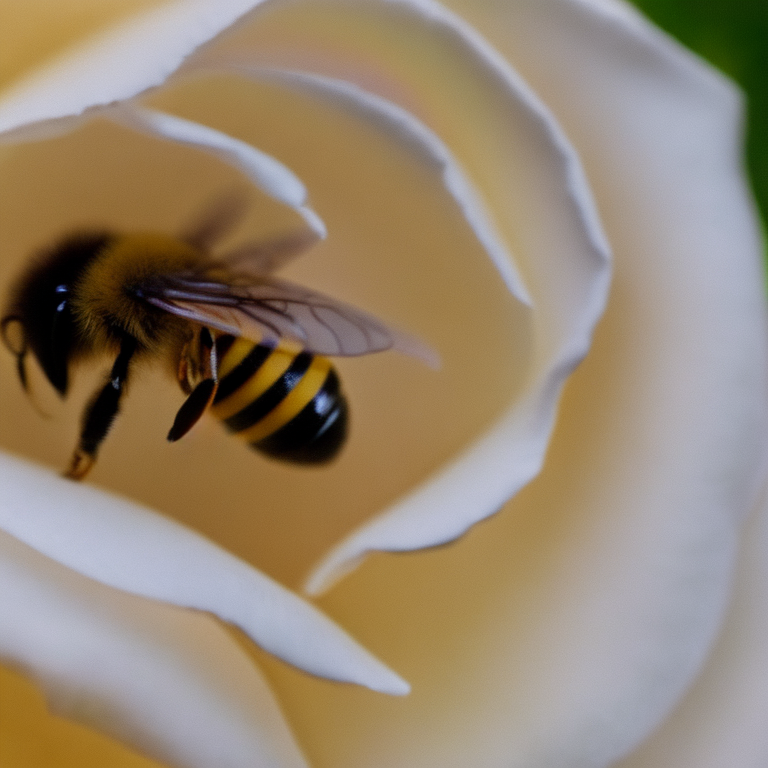}%
\includegraphics[width=0.094\linewidth]{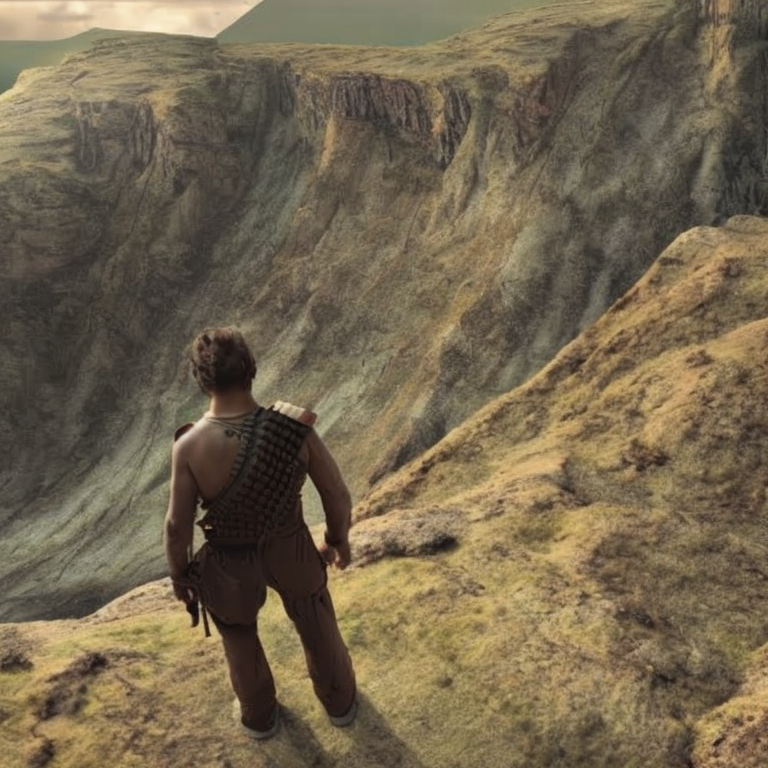}
    \end{minipage}
\vspace{-0.3cm}
    \caption{\label{fig:CMP_10_images_SD_DPMSolver}
    Visual comparison of final generation results on Stable Diffusion v2, using various samplers to expedite a 50-step DPMSolver baseline. Each column represents the output for a different prompt, demonstrating fidelity and visual consistency across methods.
    \textbf{Rows (Top to Bottom)}: DPMSolver (Baseline), ParaDiGMS, ParaSolver, ParaLin, and ParaLin++.
    }
\end{figure*}

\clearpage
\appendices
\numberwithin{equation}{section}

\noindent\emph{Equations numbered A.1, B.1, etc.\ are local to these appendices. References with plain numbers, such as Eq.~(7), as well as references to definitions and propositions, refer to the main manuscript.}

\section*{Appendix Contents}
\noindent
\begin{tabular}{@{}p{0.14\linewidth}p{0.76\linewidth}@{}}
\toprule
\textbf{Appendix} & \textbf{Contents and page}\\
\midrule
Appendix~\ref{apx:proof_Linear_SQ_Solver} & Proof of Proposition~1, p.~\pageref{apx:proof_Linear_SQ_Solver}\\
Appendix~\ref{apx:proof_unique_solution} & Proof of Proposition~2, p.~\pageref{apx:proof_unique_solution}\\
Appendix~\ref{apx:proof_convergence} & Proof of Proposition~3, p.~\pageref{apx:proof_convergence}\\
Appendix~\ref{apx:proof_Optimal_Relaxation_Factor} & Proof of Proposition~4, p.~\pageref{apx:proof_Optimal_Relaxation_Factor}\\
\bottomrule
\end{tabular}

\section{Proof of Proposition~1}
\label{apx:proof_Linear_SQ_Solver}
\subsection{Derivation of the Error Bound}
Before proceeding with the proof, we first state the following practical assumption regarding the denoising model.

\begin{assumption}\label{asp:well_trained_model}
The denoising network $\vec{S}_{\theta^*}(\vec{X}_t, t)$ is a well-trained noise predictor. Specifically, for any noisy sample $\vec{X}_t = \alpha_t\vec{X}_{T} + \sigma_t \vec{\epsilon}_t$ (where $\vec{X}_T$ is the clean sample and $\vec{\epsilon}_t \sim \mathcal{N}(0, \vec{I})$ is the specific noise instance), there exists a  upper bound $\zeta > 0$ such that the model's prediction error is bounded:
$$
\norm{\vec{S}_{\theta^*}(\vec{X}_t, t) - \vec{\epsilon}_t} < \zeta
$$
\end{assumption}

We recall the forward process $p(\vec{X}_t|\vec{X}_T) = \mathcal{N} (\alpha_{t}\vec{X}_T, \sigma_{t}^2\vec{I})$, where $\alpha_t$ and $\sigma_t$ are the noise schedule coefficients. We define the log-SNR as $\lambda_t = \log(\alpha_t / \sigma_t)$. It follows that $e^{-\lambda_t} = \sigma_t / \alpha_t$ and  $\lambda_t$ is a strictly decreasing function of $t$.

Following Proposition 3.1 in DPM-Solver~\cite{dpmsolver}, the exact solution of the corresponding SDE from time $s$ to $t$  is given by:
\begin{small}
\begin{equation}
\label{eq:ExactIntegral}
\begin{aligned}
\Phi(t,s,\vec{X}_s)
&=  \frac{\alpha_{t}}{\alpha_{s}}\vec{X}_s + \alpha_{t}\int_{\lambda_s}^{\lambda_t}e^{-\lambda}\sigma_\lambda\nabla_{\vec{X}_\lambda} \log p(\vec{X}_\lambda) d\lambda\\
&\quad + \int_{\lambda_s}^{\lambda_t} g(\lambda) d\vec{W},
\end{aligned}
\end{equation}
\end{small}
where $p(\vec{X}_\lambda)$ is the marginal distribution. In a practical solver, the true score $\nabla_{\vec{X}_\lambda} \log p(\vec{X}_\lambda) = -\mathbb{E}[\vec{\epsilon} | \vec{X}_\lambda] / \sigma_\lambda$ is unavailable and is replaced by the model's prediction. Specifically, the solver approximates the score using the noise prediction model $\vec{S}_{\theta^*}$:
$$
\hat{\nabla}_{\vec{X}_\lambda} \log p(\vec{X}_\lambda) = -\frac{\vec{S}_{\theta^*}(\vec{X}_\lambda, \lambda)}{\sigma_\lambda}
$$
By substituting this approximation into~\eqref{eq:ExactIntegral}, we define the \textbf{practical sequential solver} $\hat{\Phi}$ as:
\begin{small}
\begin{equation}
\label{eq:PracticalIntegral}
\begin{aligned}
\hat{\Phi}(t,s,\vec{X}_s)
&=  \frac{\alpha_{t}}{\alpha_{s}}\vec{X}_s + \alpha_{t}\int_{\lambda_s}^{\lambda_t}e^{-\lambda}\sigma_\lambda\left( -\frac{\vec{S}_{\theta^*}(\vec{X}_\lambda, \lambda)}{\sigma_\lambda} \right) d\lambda\\
&\quad + \int_{\lambda_s}^{\lambda_t} g(\lambda) d\vec{W} \\
&=  \frac{\alpha_{t}}{\alpha_{s}}\vec{X}_s - \alpha_{t}\int_{\lambda_s}^{\lambda_t}e^{-\lambda}\vec{S}_{\theta^*}(\vec{X}_\lambda, \lambda) d\lambda + \int_{\lambda_s}^{\lambda_t} g(\lambda) d\vec{W}.
\end{aligned}
\end{equation}
\end{small}
Our goal is to show that this practical solver $\hat{\Phi}$ can be precisely approximated by a linear solver. We use Assumption~\ref{asp:well_trained_model} to decompose the model's prediction $\vec{S}_{\theta^*}$ into the ground-truth noise $\vec{\epsilon}_\lambda$ and a bounded error term $\vec{\delta}(\vec{X}_\lambda, \lambda)$:
$$
\vec{S}_{\theta^*}(\vec{X}_\lambda, \lambda) = \vec{\epsilon}_\lambda + \vec{\delta}(\vec{X}_\lambda, \lambda),
$$
where $\norm{\vec{\delta}(\vec{X}_\lambda, \lambda)} < \zeta$.

Substituting this decomposition back into our practical solver~\eqref{eq:PracticalIntegral}:
\begin{small}
\begin{equation}
\begin{aligned}
\hat{\Phi}(t,s,\vec{X}_s)
&= \frac{\alpha_{t}}{\alpha_{s}}\vec{X}_s - \alpha_{t}\int_{\lambda_s}^{\lambda_t}e^{-\lambda} \left( \vec{\epsilon}_\lambda + \vec{\delta}(\vec{X}_\lambda, \lambda) \right) d\lambda \\
&\quad + \int_{\lambda_s}^{\lambda_t} g(\lambda) d\vec{W} \\
&= \underbrace{\frac{\alpha_{t}}{\alpha_{s}}\vec{X}_s - \alpha_{t}\int_{\lambda_s}^{\lambda_t}e^{-\lambda} \vec{\epsilon}_\lambda d\lambda + \int_{\lambda_s}^{\lambda_t} g(\lambda) d\vec{W}}_{\mathcal{H}(t,s,\vec{X}_s,\vec{X}_T)} \\
&\quad \underbrace{- \alpha_{t}\int_{\lambda_s}^{\lambda_t}e^{-\lambda} \vec{\delta}(\vec{X}_\lambda, \lambda) d\lambda}_{\mathcal{Z}}.
\end{aligned}
\end{equation}
\end{small}
We can identify the first part, $\mathcal{H}$, as the ideal linear solver. By substituting the definition of the ground-truth noise, $\vec{\epsilon}_\lambda = (\vec{X}_\lambda - \alpha_\lambda \vec{X}_T(\vec{X}_\lambda)) / \sigma_\lambda$, this term becomes:
\begin{small}
\begin{equation}
\label{eq:linear_solver_ideal}
\begin{aligned}
\mathcal{H}(t,s,\vec{X}_s,\vec{X}_T)
&=  \frac{\alpha_{t}}{\alpha_{s}}\vec{X}_s -
\alpha_{t}\int_{\lambda_s}^{\lambda_t}e^{-\lambda}
\frac{\vec{X}_\lambda - \alpha_\lambda\vec{X}_T(\vec{X}_\lambda)}{\sigma_\lambda}
d\lambda \\
&\quad + \int_{\lambda_s}^{\lambda_t} g(\lambda) d\vec{W},
\end{aligned}
\end{equation}
\end{small}
which is the linear solver structure presented in the proposition.

The second part, $\mathcal{Z}$, represents the cumulative error introduced by the non-ideal denoising model:
$$
\mathcal{Z} = - \alpha_{t}\int_{\lambda_s}^{\lambda_t}e^{-\lambda} \vec{\delta}(\vec{X}_\lambda, \lambda) d\lambda
$$
We can bound the norm of this error term using the triangle inequality for integrals and Assumption~\ref{asp:well_trained_model}:
\begin{small}
\begin{equation}
\begin{aligned}
\label{eq:error_term_final}
\norm{\mathcal{Z}} &=  \norm{- \alpha_{t}\int_{\lambda_s}^{\lambda_t}e^{-\lambda} \vec{\delta}(\vec{X}_\lambda, \lambda) d\lambda} \\
&\leq \alpha_{t}\int_{\lambda_s}^{\lambda_t} e^{-\lambda} \norm{\vec{\delta}(\vec{X}_\lambda, \lambda)} d\lambda \\
&\leq \alpha_{t} \zeta \int_{\lambda_s}^{\lambda_t} e^{-\lambda} d\lambda \\
&= \alpha_{t} \zeta \left[ -e^{-\lambda} \right]_{\lambda_s}^{\lambda_t} \\
&= \alpha_{t} \zeta \left( -e^{-\lambda_t} - (-e^{-\lambda_s}) \right) \\
&= \alpha_{t} \zeta \left( e^{-\lambda_s} - e^{-\lambda_t} \right).
\end{aligned}
\end{equation}
\end{small}
By substituting the definition $e^{-\lambda} = \sigma / \alpha$, we get the final bound:
\begin{equation}
\label{eq:VE_bound_raw}
\norm{\mathcal{Z}} \leq \alpha_{t} \zeta \left( \frac{\sigma_s}{\alpha_s} - \frac{\sigma_t}{\alpha_t} \right) = \zeta\sigma_t \left( \frac{\alpha_t \sigma_s}{\alpha_s\sigma_t} - 1 \right).
\end{equation}

\subsection{Analysis of Convergence and Step Size}

To demonstrate that the term $\left( \frac{\sigma_s}{\alpha_s} - \frac{\sigma_t}{\alpha_t} \right)$ is negligible in practice, we rely on the necessary conditions for high-quality sampling in diffusion probabilistic models.

\textbf{First}, as established in \cite{chen2023sampling} (Theorem 2), the convergence of the sampling process is strictly dependent on the discretization step size $h = s-t$. Specifically, the total variation distance between the sampled distribution and the true distribution is strictly bounded only when the maximum step size satisfies:
\begin{equation}
h \leq 1.
\end{equation}
 If $h$ is to exceed $1$, the local linearization errors inherent in the discretization (e.g., Euler-Maruyama or DPM-Solver) would dominate, rendering the solver mathematically unstable regardless of the model's accuracy. Therefore, the  practical deployment of a solver operates within the regime $h \in (0, 1]$.


\textbf{Second}, in standard large-scale diffusion models (e.g., Stable Diffusion, IDDPM), the continuous forward process is strictly defined over a normalized time interval $t \in [0, 1]$.
\begin{itemize}
    \item For high-fidelity generation, it is standard practice to use $N \in [20, 50]$ discretization steps with advanced solvers (e.g., DPM-Solver, UniPC). The average step size corresponds to $h \approx \frac{1}{N}$, resulting in $h \in [0.02, 0.05]$.
    \item Even in extreme few-step sampling scenarios (e.g.,  LCM methods with $N \approx 4$), the step size $h \approx 0.25$ remains well within the theoretical convergence bound of $1$.
\end{itemize}
Consequently, the operational regime $h \in (0, 1)$ covers the vast majority of practical use cases.

Under this condition ($h$ is small), we apply the Mean Value Theorem to the noise-to-signal ratio function $f(\tau) = \frac{\sigma_\tau}{\alpha_\tau}$. There exists a $\xi \in [s, t]$ such that:
\begin{equation}
\frac{\sigma_s}{\alpha_s} - \frac{\sigma_t}{\alpha_t} = f(s) - f(t) = f'(\xi) \cdot (s - t).
\end{equation}
Taking the absolute value and substituting $h = |s-t|$:
\begin{equation}
\left| \frac{\sigma_s}{\alpha_s} - \frac{\sigma_t}{\alpha_t} \right| \approx \left| \frac{d}{dt}\left(\frac{\sigma_t}{\alpha_t}\right) \right| h.
\end{equation}
Since the noise schedules $\sigma_t$ and $\alpha_t$ are smooth monotonic functions in practical SDEs, their derivative is bounded. Consequently, as $h \to 0$, the difference term approaches 0 linearly. Thus, the error bound in~\eqref{eq:VE_bound_raw} is minimized solely by enforcing the standard high-quality sampling constraint $h \leq 1$.







\subsection{Derivation of the Specific Error Bounds}

Following the fundamental error bound derived in~\eqref{eq:VE_bound_raw}:
\begin{equation}
\norm{\mathcal{Z}_s} \leq \alpha_{t} \zeta \left( \frac{\sigma_s}{\alpha_s} - \frac{\sigma_t}{\alpha_t} \right)
\end{equation}
we substitute the specific parameters for $\alpha_t$ (signal scaling) and $\sigma_t$ (noise schedule) from Table 1 of the EDM framework. This leads to Table~\ref{tab:error_comparison_edm}.

\begin{table*}[h]
\centering
\caption{Practical error bounds for diffusion frameworks defined in the EDM design space. Parameters are mapped from Table 1 of \cite{EDM} where $\beta_{\min} =0.1,\beta_{d}=19.9,$}.
\label{tab:error_comparison_edm}
\renewcommand{\arraystretch}{2.2}
\begin{tabular}{l l c c}
\toprule
\textbf{Method} & \textbf{Schedule Settings from Table 1} & \textbf{Substitution} & \textbf{Final Error Bound} \\
\midrule
\textbf{EDM~\cite{EDM}} & $\begin{aligned}\alpha_t &= 1 \\ \sigma_t &= \alpha_t \cdot t \end{aligned}$ & $\frac{\sigma_t}{\alpha_t} = t$ & $\norm{\mathcal{Z}_s} \leq \zeta (s - t) = \zeta h$ \\
\midrule
\textbf{VE~\cite{ScoreBase}} & $\begin{aligned}\alpha_t &= 1 \\ \sigma_t &=  \alpha_t \cdot\sqrt{t} \end{aligned}$ & $\frac{\sigma_t}{\alpha_t} = \sqrt{t}$ & $\norm{\mathcal{Z}_s} \leq \zeta (\sqrt{s} - \sqrt{t})$ \\
\midrule
\textbf{VP~\cite{ScoreBase}} & $\begin{aligned} \alpha_t &= 1/\sqrt{e^{\frac{1}{2}\beta_d t^2 +\beta_{\min}t}} \\ \sigma_t &=  \alpha_t \cdot \sqrt{e^{\frac{1}{2}\beta_d t^2 + \beta_{\min}t}-1}\end{aligned}$ & $\frac{\sigma_t}{\alpha_t} = \sqrt{e^{\frac{1}{2}\beta_d t^2 + \beta_{\min}t}-1}$ & $\norm{\mathcal{Z}_s} \leq \zeta \alpha_t \left( \frac{\sigma_s}{\alpha_s} - \frac{\sigma_t}{\alpha_t} \right)$ \\
\bottomrule
\end{tabular}
\end{table*}







\section{Proof of Proposition~2}
\label{apx:proof_unique_solution}


\begin{proof}
Let $\{\mathbf{X}_{t_n}\}_{n=0}^N$ be the exact trajectory generated by the sequential integral solver. From Proposition~1, the step transition is given by:
\begin{equation}
    \label{eq:proof_true_dynamics}
    \mathbf{X}_{t_{n+1}} = \mathcal{H}(t_{n+1}, t_n, \mathbf{X}_{t_n}, \mathbf{X}_T) + \mathcal{Z}_n,
\end{equation}
where $\mathbf{X}_T$ is the ground-truth data and $\|\mathcal{Z}_n\| \approx 0$ is the linearization error. Let $\{\hat{\mathbf{X}}_{t_n}\}_{n=0}^N$ be the solution to the $\varrho$-nonlinear system with the update rule:
\begin{equation}
    \label{eq:proof_approx_dynamics}
    \hat{\mathbf{X}}_{t_{n+1}} = \mathcal{H}(t_{n+1}, t_n, \hat{\mathbf{X}}_{t_n}, \mathbf{E}_{t_n}).
\end{equation}
Here, $\mathbf{E}_{t_n}$ denotes the estimated clean sample. Based on Definition~2, this estimator is explicitly defined as:
\begin{equation}
    \label{eq:estimator_def}
    \mathbf{E}_{t_n} =
    \begin{cases}
        \hat{\mathbf{X}}_{t_N}(\hat{\mathbf{X}}_{t_n}, \theta) & \text{if } n < \varrho, \\
        \hat{\mathbf{X}}_{t_N}(\hat{\mathbf{X}}_{t_{\varrho-1}}, \theta) & \text{if } n \geq \varrho.
    \end{cases}
\end{equation}

We define the cumulative state error as $\boldsymbol{\Delta}_{t_n} = \hat{\mathbf{X}}_{t_n} - \mathbf{X}_{t_n}$ and the clean sample estimation error as $\boldsymbol{\xi}_n = \mathbf{E}_{t_n} - \mathbf{X}_T$. To derive the error recurrence, we subtract~\eqref{eq:proof_true_dynamics} from~\eqref{eq:proof_approx_dynamics}. Using the integral definition of $\mathcal{H}$ from~\textup{Eq.~(7)}, we obtain:
\begin{equation}
\label{eq:delta_simplified}
\begin{aligned}
    \boldsymbol{\Delta}_{t_{n+1}}
    &= \frac{\alpha_{t_{n+1}}}{\alpha_{t_n}} \boldsymbol{\Delta}_{t_n}
       - \alpha_{t_{n+1}}
       \int_{\lambda_{t_n}}^{\lambda_{t_{n+1}}} e^{-\lambda}
       \mathcal{I}_{\lambda,n}\, d\lambda - \mathcal{Z}_n,
\end{aligned}
\end{equation}
where the integrand is
\begin{equation}
    \mathcal{I}_{\lambda,n}
       = \frac{\hat{\mathbf{X}}_\lambda - \alpha_\lambda \mathbf{E}_{t_n}}
              {\alpha_\lambda}
       - \frac{\mathbf{X}_\lambda - \alpha_\lambda \mathbf{X}_T}{\alpha_\lambda}.
\end{equation}
Using the Log-SNR property that $e^{-\lambda} = \sigma_\lambda / \alpha_\lambda$, this integrand simplifies as follows:
\begin{equation}
\begin{aligned}
    \mathcal{I}_{\lambda,n}
    &= \frac{\sigma_\lambda}{\alpha_\lambda}
       \left(
         \frac{\hat{\mathbf{X}}_\lambda - \mathbf{X}_\lambda}{\sigma_\lambda}
       - \frac{\alpha_\lambda(\mathbf{E}_{t_n} - \mathbf{X}_T)}{\sigma_\lambda}
       \right) \\
    &= \frac{\boldsymbol{\Delta}_\lambda}{\alpha_\lambda} - \boldsymbol{\xi}_n.
\end{aligned}
\end{equation}

To evaluate the intermediate error $\boldsymbol{\Delta}_\lambda$ inside the integral, we invoke the \textbf{Superposition Principle} for linear differential equations~\cite{boyce2021elementary,chen1999linear}. The total error $\boldsymbol{\Delta}_\lambda$ can be rigorously decomposed into the sum of the \textit{Zero-Input Response} (Homogeneous) and the \textit{Zero-State Response} (Forced):
\begin{itemize}
    \item \textbf{Homogeneous Response}: This component represents the natural evolution of the initial state error $\boldsymbol{\Delta}_{t_n}$ through the system dynamics, independent of the external estimation error. Due to the signal scaling inherent in the diffusion process $\mathcal{H}$, this is given by:
    \begin{equation}
        \boldsymbol{\Delta}_\lambda^{\text{homo}} = \frac{\alpha_\lambda}{\alpha_{t_n}} \boldsymbol{\Delta}_{t_n}.
    \end{equation}
    \item \textbf{Forced Response}: This component captures the error accumulated over the interval $[\lambda_{t_n}, \lambda]$ solely due to the  input source $\boldsymbol{\xi}_n$. Based on the integral structure of the linear solver in~\textup{Eq.~(7)}, the contribution accumulates as:
    \begin{equation}
        \boldsymbol{\Delta}_\lambda^{\text{force}} = \alpha_\lambda \left[ \int_{\lambda_{t_n}}^{\lambda} e^{-\tau} \frac{\alpha_\tau}{\sigma_\tau} d\tau \right] \boldsymbol{\xi}_n.
    \end{equation}
    Substituting $e^{-\tau} = \sigma_\tau / \alpha_\tau$, the integrand simplifies perfectly to unity:
    \begin{equation}
        \boldsymbol{\Delta}_\lambda^{\text{force}} = \alpha_\lambda \left[ \int_{\lambda_{t_n}}^{\lambda} 1 d\tau \right] \boldsymbol{\xi}_n = \alpha_\lambda (\lambda - \lambda_{t_n}) \boldsymbol{\xi}_n.
    \end{equation}
\end{itemize}
Combining these components via superposition, we obtain the explicit expression for the intermediate error:
\begin{equation}
    \boldsymbol{\Delta}_\lambda = \underbrace{\frac{\alpha_\lambda}{\alpha_{t_n}} \boldsymbol{\Delta}_{t_n}}_{\text{Homogeneous Response}} + \underbrace{\alpha_\lambda (\lambda - \lambda_{t_n}) \boldsymbol{\xi}_n}_{\text{Forced Response}}.
\end{equation}

We substitute this expression for $\boldsymbol{\Delta}_\lambda$ into the term $\frac{\boldsymbol{\Delta}_\lambda}{\alpha_\lambda}$. This leads to an algebraic cancellation of the time-dependent coefficient $\alpha_\lambda$ in the homogeneous term:
\begin{equation}
\begin{aligned}
\frac{\boldsymbol{\Delta}_\lambda}{\alpha_\lambda}
&= \frac{1}{\alpha_\lambda} \left( \frac{\alpha_\lambda}{\alpha_{t_n}} \boldsymbol{\Delta}_{t_n}
   + \alpha_\lambda (\lambda - \lambda_{t_n}) \boldsymbol{\xi}_n \right)\\
&= \frac{\boldsymbol{\Delta}_{t_n}}{\alpha_{t_n}}  + (\lambda - \lambda_{t_n}) \boldsymbol{\xi}_n.
\end{aligned}
\end{equation}
Now we can analytically evaluate the integral in~\eqref{eq:delta_simplified}:
\begin{equation}
    \begin{aligned}
    &\int_{\lambda_{t_n}}^{\lambda_{t_{n+1}}} \left( \frac{\boldsymbol{\Delta}_\lambda}{\alpha_\lambda} - \boldsymbol{\xi}_n \right) d\lambda \\
    &= \int_{\lambda_{t_n}}^{\lambda_{t_{n+1}}} \left( \frac{\boldsymbol{\Delta}_{t_n}}{\alpha_{t_n}} + (\lambda - \lambda_{t_n}) \boldsymbol{\xi}_n - \boldsymbol{\xi}_n \right) d\lambda \\
    &= \frac{\Delta \lambda_n}{\alpha_{t_n}} \boldsymbol{\Delta}_{t_n} + \left[ \frac{(\lambda - \lambda_{t_n})^2}{2} - (\lambda - \lambda_{t_n}) \right]_{\lambda_{t_n}}^{\lambda_{t_{n+1}}} \boldsymbol{\xi}_n \\
    &= \frac{\Delta \lambda_n}{\alpha_{t_n}} \boldsymbol{\Delta}_{t_n} + \left( \frac{(\Delta \lambda_n)^2}{2} - \Delta \lambda_n \right) \boldsymbol{\xi}_n,
    \end{aligned}
\end{equation}
where $\Delta \lambda_n = \lambda_{t_{n+1}} - \lambda_{t_n}$.

Plugging this integral result back into the expression for $\boldsymbol{\Delta}_{t_{n+1}}$ and grouping the terms by error source:
\begin{equation}
\begin{aligned}
    \label{eq:recurrence_raw}
    \boldsymbol{\Delta}_{t_{n+1}} =& \underbrace{\frac{\alpha_{t_{n+1}}}{\alpha_{t_n}} \Big( 1 - \Delta \lambda_n \Big)}_{\mathcal{A}_n} \boldsymbol{\Delta}_{t_n}\\
&+ \underbrace{\alpha_{t_{n+1}} \Delta \lambda_n \left( 1 - \frac{\Delta \lambda_n}{2} \right)}_{\mathcal{B}_n} \boldsymbol{\xi}_n\\
&- \mathcal{Z}_n.
\end{aligned}
\end{equation}

\textbf{Analysis of Coefficients}:
The convergence of the solver relies on the discretization step size $h = |t_{n+1} - t_n|$. As established in Theorem 2 of \cite{chen2023sampling}, high-quality generation strictly requires $h \leq 1$. Under this condition, we analyze the bounds of $\mathcal{A}_n$ and $\mathcal{B}_n$.

First, the coefficient $\mathcal{A}_n$ determines the propagation of historical errors. We evaluate its magnitude $|\mathcal{A}_n|$ across mainstream diffusion configurations:
\begin{itemize}
    \item \textbf{EDM and VE-SDE Schedules ($\alpha_t \equiv 1$):}
    The signal scale ratio is unity ($\frac{\alpha_{t_{n+1}}}{\alpha_{t_n}} = 1$). The coefficient simplifies to:
    \begin{equation}
        |\mathcal{A}_n| = |1 - \Delta \lambda_n|.
    \end{equation}
    Since $\Delta \lambda_n$ is a small positive value determined by the small step size, we have $|\mathcal{A}_n| < 1$. This indicates strictly contractive dynamics, where historical errors naturally decay over time.

    \item \textbf{VP-SDE Schedule ($\alpha_t$ varies):}
    Although $\alpha_t$ increases during the reverse process, the decay term $(1 - \Delta \lambda_n)$ counteracts this growth. Using the first-order approximation $1 - \Delta \lambda_n \approx e^{-\Delta \lambda_n}$ and the definition $e^{-\lambda} = \sigma/\alpha$:
    \begin{equation}
    \begin{aligned}
        \mathcal{A}_n
        &\approx \frac{\alpha_{t_{n+1}}}{\alpha_{t_n}}
                  e^{-(\lambda_{t_{n+1}} - \lambda_{t_n})} \\
        &= \frac{\alpha_{t_{n+1}}}{\alpha_{t_n}}
           \frac{\sigma_{t_{n+1}}/\alpha_{t_{n+1}}}{\sigma_{t_n}/\alpha_{t_n}} \\
        &= \frac{\sigma_{t_{n+1}}}{\sigma_{t_n}}.
    \end{aligned}
    \end{equation}
    Since the noise level $\sigma_t$ strictly decreases in the generative direction, the ratio satisfies $\frac{\sigma_{t_{n+1}}}{\sigma_{t_n}} < 1$. Thus, $|\mathcal{A}_n| < 1$ holds for VP schedules as well.
\end{itemize}
The error propagation is non-expansive ($|\mathcal{A}_n| \leq 1$), ensuring that prior errors $\boldsymbol{\Delta}_{t_n}$ do not explode during the sequential process.

Second, the coefficient $\mathcal{B}_n$ governs the sensitivity of the trajectory to the non-ideal estimator error $\boldsymbol{\xi}_n$.
Using the Mean Value Theorem, the log-SNR increment is proportional to the step size: $\Delta \lambda_n \approx |\dot{\lambda}| h$. Since the noise schedule derivatives are bounded, a small step size $h \leq 1$ implies $\Delta \lambda_n \ll 1$.

We bound $|\mathcal{B}_n|$ as follows:
\begin{itemize}
    \item Since $|\alpha_{t_{n+1}}| \leq 1$ (strictly 1 for EDM/VE, $\leq 1$ for VP) and $\Delta \lambda_n > 0$:
    \begin{equation}
        |\mathcal{B}_n| \leq \Delta \lambda_n \left| 1 - \frac{\Delta \lambda_n}{2} \right|.
    \end{equation}
    \item In the practical regime where $h \leq 1$, we maintain $\Delta \lambda_n < 2$, ensuring the term $(1 - \Delta \lambda_n/2)$ is strictly less than 1. Thus:
    \begin{equation}
        |\mathcal{B}_n| < \Delta \lambda_n \approx \mathcal{O}(h).
    \end{equation}
\end{itemize}
Even in few-step generation scenarios (e.g., $N=4, h \approx 0.25$), $|\mathcal{B}_n|$ remains a small fractional scalar. As $h \to 0$, $|\mathcal{B}_n|$ approaches zero linearly. This proves that the solver \textit{suppresses} the injection of the estimation error $\boldsymbol{\xi}_n$ rather than accumulating it linearly.

\textbf{Accumulated bound.} Combining the established bounds with \eqref{eq:recurrence_raw}, the per-step error inequality is
\begin{equation}
    \|\boldsymbol{\Delta}_{t_{n+1}}\| \leq \underbrace{|\mathcal{A}_n|}_{\leq 1} \|\boldsymbol{\Delta}_{t_n}\| + \underbrace{|\mathcal{B}_n|}_{\mathcal{O}(h)} \|\boldsymbol{\xi}_n\| + \underbrace{\|\mathcal{Z}_n\|}_{\zeta\,\mathcal{O}(h)}.
\end{equation}
Starting from the exact initial condition $\boldsymbol{\Delta}_{t_0}=\mathbf{0}$, unrolling the recurrence over the $N$ discretization steps yields the \emph{accumulated error bound}:
\begin{equation}
\label{eq:accumulated_bound}
\begin{aligned}
\|\boldsymbol{\Delta}_{t_N}\|
&\leq \sum_{n=0}^{N-1}\Bigl(\prod_{j=n+1}^{N-1}|\mathcal{A}_j|\Bigr)\bigl(|\mathcal{B}_n|\|\boldsymbol{\xi}_n\|+\|\mathcal{Z}_n\|\bigr)\\
&\leq \sum_{n=0}^{N-1}\bigl(|\mathcal{B}_n|\|\boldsymbol{\xi}_n\|+\|\mathcal{Z}_n\|\bigr) \leq N\,\mathcal{O}(h)\bigl(\|\boldsymbol{\xi}\|_{\max}+\zeta\bigr),
\end{aligned}
\end{equation}
where the second inequality uses $|\mathcal{A}_j|\leq 1$ and the third substitutes $|\mathcal{B}_n|=\mathcal{O}(h)$ and $\|\mathcal{Z}_n\|=\zeta\,\mathcal{O}(h)$ from Proposition~1. With $h=\mathcal{O}(1/N)$, this reduces to $\mathcal{O}(\|\boldsymbol{\xi}\|_{\max}+\zeta)$; for a 50-step schedule with $h=1/50$ and $\zeta\sim 10^{-2}$ the bound is on the order of $10^{-2}$, negligible relative to an $O(1)$ signal scale.

\textbf{Dependence on $\varrho$, SOR, and adaptive stopping.} The bound is \emph{independent of $\varrho$ in structure}: $\varrho$ only changes the source of the estimator $\mathbf{E}_{t_n}$ (and hence the magnitudes $\|\boldsymbol{\xi}_n\|$), not the propagation/sensitivity coefficients $\mathcal{A}_n$ and $\mathcal{B}_n$. The successive-over-relaxation update of ParaLin++ shares the same fixed point and is stable for $0<\omega<2/(1+L)$ (Proposition~4), so it obeys the same per-step recurrence and hence the same accumulated bound; SOR only accelerates the local contraction. The adaptive stopping criterion (Section~IV-B of the main manuscript) replaces the exact fixed point by a $\delta$-neighborhood, adding an additional $O(\delta)$ truncation, so the realized deviation satisfies
\begin{equation}
    \|\boldsymbol{\Delta}_{t_N}^{(\mathrm{ParaLin}++)}\| \leq N\,\mathcal{O}(h)\bigl(\|\boldsymbol{\xi}\|_{\max}+\zeta\bigr) + \mathcal{O}(\delta).
\end{equation}

\textbf{Conclusion.} Hence the $\varrho$-nonlinear surrogate trajectory $\{\hat{\mathbf{X}}_{t_n}\}$ deviates from the exact sequential trajectory $\{\mathbf{X}_{t_n}\}$ by a bounded, small quantity, \emph{not} by zero. ParaLin converges to the surrogate system rather than exactly to the nonlinear sequential sampler; this is the gap bounded in Proposition~2 and the corrected convergence target formalized in Proposition~3.
\end{proof}


\section{Proof of Convergence in Proposition~3}
\label{apx:proof_convergence}

\noindent Let $\{\hat{\mathbf{X}}^{\star}_{t_n}\}_{n=0}^N$ denote the unique solution of the $\varrho$-nonlinear surrogate system in~\textup{Eq.~(11)} (the partially linear surrogate trajectory), which by Proposition~2 deviates from the exact sequential trajectory $\{\mathbf{X}_{t_n}\}$ by the accumulated error bound~\eqref{eq:accumulated_bound}. We show that the iteration in~\textup{Eq.~(18)} converges to $\{\hat{\mathbf{X}}^{\star}_{t_n}\}$, \emph{not} to $\{\mathbf{X}_{t_n}\}$.

\noindent Assume by induction that for iteration $k$, the first $n$ components have converged to the surrogate, $\{\hat{\mathbf{X}}_{t_i}^{(k)}\}_{i=0}^{n-1}=\{\hat{\mathbf{X}}^{\star}_{t_i}\}_{i=0}^{n-1}$ (and, when $n\geq\varrho$, also $\hat{\mathbf{X}}_{t_{\varrho-1}}^{(k)}=\hat{\mathbf{X}}^{\star}_{t_{\varrho-1}}$, since $\varrho-1<n$). For the next component $\hat{\mathbf{X}}_{t_n}^{(k+1)}$, define the iterate and surrogate maps
\begin{equation}
\begin{aligned}
\mathcal{B}_n^{(k)}
  &= \mathcal{H}\Bigl(t_n, t_{n-1}, \hat{\mathbf{X}}_{t_{n-1}}^{(k)}, \mathbf{E}_{t_{n-1}}^{(k)}\Bigr),\\
\mathcal{B}_n^{\star}
  &= \mathcal{H}\Bigl(t_n, t_{n-1}, \hat{\mathbf{X}}^{\star}_{t_{n-1}}, \mathbf{E}^{\star}_{t_{n-1}}\Bigr),
\end{aligned}
\end{equation}
where $\mathbf{E}_{t_{n-1}}^{(k)}=\hat{\mathbf{X}}_{t_N}(\hat{\mathbf{X}}_{t_{n-1}}^{(k)},\theta)$ for $n<\varrho$ and $\mathbf{E}_{t_{n-1}}^{(k)}=\hat{\mathbf{X}}_{t_N}(\hat{\mathbf{X}}_{t_{\varrho-1}}^{(k)},\theta)$ for $n\geq\varrho$ (Definition~2), and $\mathbf{E}^{\star}_{t_{n-1}}$ is its surrogate analogue. By the surrogate equation in~\textup{Eq.~(11)}, $\hat{\mathbf{X}}^{\star}_{t_n}=\mathcal{B}_n^{\star}$. Therefore
\begin{equation}
\begin{aligned}
\hat{\mathbf{X}}_{t_n}^{(k+1)}
  &= \hat{\mathbf{X}}_{t_n}^{(k)} - \mathcal{G}^{(k)} \mathcal{R}_{t_n}^{(k)}
   = \hat{\mathbf{X}}_{t_n}^{(k)} - \mathcal{G}^{(k)}\bigl(\hat{\mathbf{X}}_{t_n}^{(k)} - \mathcal{B}_n^{(k)}\bigr)\\
  &\overset{(a)}{=} \hat{\mathbf{X}}_{t_n}^{(k)} - \mathcal{G}^{(k)}\bigl(\hat{\mathbf{X}}_{t_n}^{(k)} - \mathcal{B}_n^{\star}\bigr)\\
  &\overset{(b)}{=}\; \mathcal{B}_n^{\star} = \hat{\mathbf{X}}^{\star}_{t_n},
\end{aligned}
\end{equation}
with
\begin{itemize}[leftmargin=*]
\item Eq.~(a) from the induction hypothesis: $\hat{\mathbf{X}}_{t_{n-1}}^{(k)}=\hat{\mathbf{X}}^{\star}_{t_{n-1}}$ (and $\hat{\mathbf{X}}_{t_{\varrho-1}}^{(k)}=\hat{\mathbf{X}}^{\star}_{t_{\varrho-1}}$ for $n\geq\varrho$) gives $\mathcal{B}_n^{(k)}=\mathcal{B}_n^{\star}$;
\item Eq.~(b) since $\mathcal{G}^{(k)}=\mathbf{I}$ or $\mathcal{G}^{(k)}=\mathcal{J}^{-1}=\mathbf{I}$, where $\mathcal{J}=\partial(\hat{\mathbf{X}}_{t_n}^{(k)}-\mathcal{B}_n^{\star})/\partial\hat{\mathbf{X}}_{t_n}^{(k)}=\mathbf{I}$, because $\mathcal{B}_n^{\star}$ depends on $\hat{\mathbf{X}}_{t_n}^{(k)}$ only through the already-converged predecessors ($\hat{\mathbf{X}}_{t_{n-1}}^{\star}$ and, for $n\geq\varrho$, $\hat{\mathbf{X}}_{t_{\varrho-1}}^{\star}$), not through $\hat{\mathbf{X}}_{t_n}^{(k)}$ itself.
\end{itemize}
Thus each iteration advances the convergence front by at least one index, giving $K\leq N$ in the worst case (and $K\ll N$ in practice via immediate propagation of newly converged components).

\noindent\textbf{Convergence target.} The argument establishes convergence of the iterative solve to $\{\hat{\mathbf{X}}^{\star}_{t_n}\}$, i.e., to the $\varrho$-nonlinear surrogate, within $K\leq T$ iterations. It does \emph{not} reach the exact sequential trajectory: $\{\hat{\mathbf{X}}^{\star}_{t_n}\}\neq\{\mathbf{X}_{t_n}\}$ in general, and the residual gap $\|\hat{\mathbf{X}}^{\star}_{t_n}-\mathbf{X}_{t_n}\|$ is the accumulated error bounded in Proposition~2 (\eqref{eq:accumulated_bound}). The SOR update keeps this same fixed point (Proposition~4), and the adaptive stopping reintroduces only an $O(\delta)$ deviation.

\section{Proof of Proposition~4}
\label{apx:proof_Optimal_Relaxation_Factor}
\begin{proof}
Let the iterative solver described by the immediate update strategy in~\textup{Eq.~(18)} be represented by a  mapping $\mathcal{F}: \mathbb{R}^{d \times N} \to \mathbb{R}^{d \times N}$, where $d$ is the data dimension and $N$ is the number of discretization steps. The update rule can be written as:
$$
\hat{\mathbf{X}}^{(k+1)} = \mathcal{F}(\hat{\mathbf{X}}^{(k)}).
$$
For the iterative method to converge, the mapping $\mathcal{F}$ must be a contraction in a neighborhood of the unique solution $\mathbf{X}^*$. This means there exists a constant $L < 1$ such that for any two trajectories $\mathbf{X}$ and $\mathbf{Y}$ in this neighborhood, the following holds:
$$
\|\mathcal{F}(\mathbf{X}) - \mathcal{F}(\mathbf{Y})\| \le L \|\mathbf{X} - \mathbf{Y}\|.
$$
The SOR update modifies this iteration. Let the operator for the SOR-enhanced update be $T_{SOR}$. From~\textup{Eq.~(21)}, we can write the SOR update as:
$$
\hat{\mathbf{X}}^{(k+1)} = T_{SOR}(\hat{\mathbf{X}}^{(k)}) = (1-\omega)\hat{\mathbf{X}}^{(k)} + \omega \mathcal{F}(\hat{\mathbf{X}}^{(k)}).
$$
To analyze the convergence of the SOR method, we must show that $T_{SOR}$ is also a contraction mapping. We analyze its contraction factor, $L_{SOR}$:
\begin{align*}
\|T_{SOR}(\mathbf{X}) - T_{SOR}(\mathbf{Y})\|
  &= \bigl\| (1-\omega)(\mathbf{X} - \mathbf{Y})\\
  &\qquad + \omega (\mathcal{F}(\mathbf{X}) - \mathcal{F}(\mathbf{Y})) \bigr\|.
\end{align*}
Using the triangle inequality, we can bound this expression:
\begin{align*}
\|T_{SOR}(\mathbf{X}) - T_{SOR}(\mathbf{Y})\|
  &\le |1-\omega| \|\mathbf{X} - \mathbf{Y}\|\\
  &\qquad + \omega \|\mathcal{F}(\mathbf{X}) - \mathcal{F}(\mathbf{Y})\|\\
  &= \bigl(|1-\omega|+\omega L\bigr)\|\mathbf{X} - \mathbf{Y}\|.
\end{align*}
Since $\mathcal{F}$ is a contraction with factor $L$, we have:
$$
\|T_{SOR}(\mathbf{X}) - T_{SOR}(\mathbf{Y})\| \le (|1-\omega| + \omega L) \|\mathbf{X} - \mathbf{Y}\|.
$$
Thus, the contraction factor for the SOR iteration is $L_{SOR} = |1-\omega| + \omega L$. For convergence, we require $L_{SOR} < 1$. We analyze this condition in two cases based on the value of $\omega$.

\textbf{Case 1: Under-relaxation ($0 < \omega \le 1$)}
In this range, $|1-\omega| = 1-\omega$. The contraction factor becomes:
$$
L_{SOR} = (1-\omega) + \omega L = 1 - \omega(1-L).
$$
Since the base method is assumed to converge, we have $L < 1$, which implies $1-L > 0$. As $\omega > 0$, the term $\omega(1-L)$ is strictly positive. Therefore, $L_{SOR} = 1 - (\text{positive value}) < 1$.
Convergence is guaranteed for any $\omega \in (0, 1]$.

\textbf{Case 2: Over-relaxation ($\omega > 1$)}
In this range, $|1-\omega| = \omega-1$. The contraction factor becomes:
$$
L_{SOR} = (\omega-1) + \omega L = \omega(1+L) - 1.
$$
For convergence, we require $L_{SOR} < 1$:
$$
\omega(1+L) - 1 < 1 \implies \omega(1+L) < 2 \implies \omega < \frac{2}{1+L}.
$$
Combining the condition $\omega > 1$ with this derived upper bound, convergence for over-relaxation is guaranteed when $1 < \omega < \frac{2}{1+L}$.

Combining both cases, the SOR-enhanced solver is guaranteed to converge if the relaxation factor $\omega$ is chosen in the range:
$$
0 < \omega < \frac{2}{1+L}.
$$
Since $0 \le L < 1$, the upper bound $2/(1+L)$ is always in the range $(1, 2]$.
This analysis provides the effective range for $\omega$ and shows that over-relaxation can maintain convergence under a computable condition related to the contractiveness of the underlying solver.
\end{proof}

\vfill

\vfill

\bibliographystyle{IEEEtranS}
\bibliography{paralin_plus}